\newcommand{\headernodot}[1]{\vspace{1mm}\noindent\textbf{#1}}
\newcommand{\header}[1]{\headernodot{#1.}}
\renewcommand\tableofcontents{%
    \if@twocolumn
      \@restonecoltrue\onecolumn
    \else
      \@restonecolfalse
    \fi
    \chapter*{\contentsname
        \@mkboth{%
           \contentsname}{\contentsname}}%
    \@starttoc{toc}%
    \if@restonecol\twocolumn\fi
    }
\renewcommand*{\backref}[1]{} 
\renewcommand*{\backrefalt}[4]{%
\ifcase #1
\or (Cited on page~#2.)  %
\else 
(Cited on pages~#2.)  
\fi
}
\let\svthefootnote\thefootnote
\newcommand\blankfootnote[1]{%
  \let\thefootnote\relax\footnotetext{#1}%
  \let\thefootnote\svthefootnote%
}
\let\svfootnote\footnote
\renewcommand\footnote[2][?]{%
  \if\relax#1\relax%
    \blankfootnote{#2}%
  \else%
    \if?#1\svfootnote{#2}\else\svfootnote[#1]{#2}\fi%
  \fi
}
\acrodef{IR}{information retrieval}
\acrodef{CF}{collaborative filtering}
\acrodef{LTR}{learning to rank}
\acrodef{NDCG}{normalized discounted cumulative gain}
\acrodef{DCG}{discounted cumulative gain}
\acrodef{VAE}{variational autoencoder}
\acrodef{VAE}{variational autoencoder}
\acrodef{ELBO}{evidence lower bound objective}
\acrodef{IPS}{inverse propensity scoring}
\acrodef{BPR}{bayesian personalized ranking}
\acrodef{MF}{matrix factorization}
\acrodef{MNAR}{missing-not-at-random}
\acrodef{ULTR}{unbiased learning-to-rank}
\acrodef{CLTR}{counterfactual learning to rank}
\acrodef{LOLN}{law of large numbers}
\acrodef{CRM}{counterfactual risk minimization}
\acrodef{IS}{importance sampling}
\acrodef{i.i.d}{independent and identically distributed}
\acrodef{CRM}{counterfactual risk minimization}
\acrodef{PL}{Plackett-Luce}
\acrodef{CTR}{click through rate}
\acrodef{SEA}{safe exploration algorithm}
\acrodef{GENSPEC}{generalization and specialization }
\acrodef{VCRM}{variational counterfactual risk minimization}
\acrodef{SGD}{stochastic gradient descent}
\acrodef{DR}{doubly robust}
\acrodef{DM}{direct method}
\acrodef{SNIPS}{self-normalized importance sampling}
\acrodef{ERC}{exposure ratio clipping}
\acrodef{PRPO}{proximal ranking policy optimization}
\acrodef{PPO}{proximal policy optimization}
\acrodef{RL}{reinforcement learning}
\acrodef{PPO}{proximal policy optimization}
\acrodef{MDP}{Markov decision process}
\acrodef{LOOP}{leave-one-out PPO}
\acrodef{IS}{importance sampling}
\acrodef{RLOO}{reinforce leave-one-out}
\acrodef{SF}{score function}
\acrodef{CB}{contextual bandit}
\acrodef{SERP}{search engine result page}
\acrodef{OPE}{off-policy evaluation}
\acrodef{OPL}{off-policy learning}
\acrodef{LLM}{large language models}
\acrodef{LLMs}{large language models}
\DeclareMathOperator*{\argmax}{arg\,max}
\DeclareMathOperator*{\argmin}{arg\,min}
\begin{document}

\theoremstyle{plain}
\newtheorem{theorem}{Theorem}[section]
\newtheorem{proposition}[theorem]{Proposition}
\newtheorem{lemma}[theorem]{Lemma}
\newtheorem{corollary}[theorem]{Corollary}
\theoremstyle{definition}
\newtheorem{definition}[theorem]{Definition}
\newtheorem{assumption}[theorem]{Assumption}
\theoremstyle{remark}
\newtheorem{remark}[theorem]{Remark}

\frontmatter

{\pagestyle{empty}
\newcommand{\printtitle}{%
{\Huge\bf Safe, Efficient, and Robust Reinforcement Learning for Ranking and Diffusion Models \\[0.8cm]
}}

\begin{titlepage}
\par\vskip 2cm
\begin{center}
\printtitle
\vfill
{\LARGE\bf Shashank Gupta}
\vskip 2cm
\end{center}
\end{titlepage}

\mbox{}\newpage
\setcounter{page}{1}

\clearpage
\par\vskip 2cm
\begin{center}
\printtitle
\par\vspace {4cm}
{\large \sc Academisch Proefschrift}
\par\vspace {1cm}
{\large ter verkrijging van de graad van doctor aan de \\
Universiteit van Amsterdam\\
op gezag van de Rector Magnificus\\
prof.\ dr.\ ir.\ P.P.C.C.\ Verbeek\\
ten overstaan van een door het College voor Promoties ingestelde \\
commissie, in het openbaar te verdedigen in \\
de Agnietenkapel\\
op maandag 13 oktober 2025, te 16:00 uur \\ } %
\par\vspace {1cm} {\large door}
\par \vspace {1cm}
{\Large Shashank Gupta}
\par\vspace {1cm}
{\large geboren te Jaipur} 
\end{center}

\clearpage
\noindent%
\textbf{Promotiecommissie} \\\\
\begin{tabular}{@{}l l l}
Promotor: 
& prof.\ dr.\ M. de Rijke & Universiteit van Amsterdam %
\\
Co-promotor: 
& dr. H. Oosterhuis & Radboud Universiteit 
\\[1.2ex]
Overige leden: 
& dr. O. Jeunen & Aampe, Belgium \\
& prof. dr. T. Joachims & Cornell University \\
& prof. dr. E. Kanoulas & Universiteit van Amsterdam \\
& dr. S. Magliacane & Universiteit van Amsterdam \\
& prof. dr. C. Snoek & Universiteit van Amsterdam \\
\end{tabular}

\bigskip\noindent%
Faculteit der Natuurwetenschappen, Wiskunde en Informatica\\

\vfill

\noindent
The research was carried out at the Information Retrieval Lab at the University of Amsterdam and in part during an internship at Meta AI.
The research carried out at the University of Amsterdam was funded by DreamsLab.  \\
\bigskip

\noindent
Copyright \copyright~2025 Shashank Gupta, Amsterdam, The Netherlands\\
Cover by Shashank Gupta\\
Printed by Proefschriftspecialist, Zaandam\\
\\
ISBN: 978-94-93483-03-3\\

\clearpage
}

{\pagestyle{empty}

{
\begin{center}

\noindent
\textbf{Acknowledgements} \\ \vskip .5cm
\end{center}
}
\addcontentsline{toc}{chapter}{Acknowledgements}

\noindent
Looking back eight years, right after I finished my master's degree, I thought I was done with academia. I planned to stay in India and in industry for good. Then the pandemic happened, and suddenly I had more time on my hands than ever before. 
Things changed, and here I am, writing my PhD thesis.
There are many people I want to thank for their support and kindness throughout this journey. But before that, I'd like to leave a small reflection—for anyone reading this (especially junior PhD students), or maybe just as a note to my future self.

Every PhD has its highs and lows. The highs: papers, internships, conferences etc. are documented in my CV. 
The lows are less visible, hidden in my own memory and in the patience of family and friends who shared them with me.
Looking back, the biggest lessons I've learned are:
Lesson $\#1$: Don't put too much pressure on yourself. For a long time, I treated the PhD as if it had to define my entire worth -- learning everything, publishing endlessly, building networks, chasing milestones etc. 
That pressure came at the cost of my mental and physical health, and it left little room to celebrate small wins along the way. 
A PhD is just another degree, not your whole identity.
Lesson $\#2$: Don't ignore your health. 
For most of my PhD years, I neglected exercise and let work consume me. 
Now, I feel the toll of that neglect; I am at my heaviest weight of my life as I am writing this thesis. 
Work can move slowly, but regaining lost health is far harder.
Lesson $\#3$: Have a life outside of work. I never stuck with hobbies or creative pursuits, and that made me dependent on academic success for happiness, a shaky foundation, given how uncertain academia can be in general. 
A hobby gives balance and even helps professionally in the long run.

If these reflections are useful to someone else, that would make me glad. 
At the very least, I hope they'll remind the future me of what really matters.

Now I want to thank the people who helped me on this journey.
First and foremost, I want to thank Maarten for agreeing to take me as his PhD student.
Thank you, Maarten, for being so patient with me, thank you for supporting me initially when I was struggling with the move.
I am also thankful to him for being so kind and caring, qualities I admire more than academic prowess.
Thank you for always pushing me positively whenever needed and being patient when I was a bit down.

I would then like to thank my second advisor, Harrie. I am really glad I got to work with him eventually, given that I applied for a PhD position with him but eventually accepted one with Maarten.
I learned academic rigor, discipline, and attention to detail by working with Harrie.
I also learned how to do more theoretically grounded research and write theoretically oriented papers from Harrie.
Thank you for being supportive, understanding, and optimistic outside of academics as well.
Thank you for helping me become a better researcher.

I am honored to have Olivier, Thorsten, Sara, Cees, and Evangelos on my committee and for taking the time to read and critique my thesis.

Next, I would like to thank some people who helped me on my journey so far (in no particular order).
I want to thank Maria for being a positive anchor and a great friend during my PhD. I tend to lean towards pessimism in general, so thank you for always being so positive.
Thank you for always being there to listen to me and give very practical advice in life in general.
Seeing Ida grow up and playing with her always brought joy to me. 
I am also glad I met Mathijs through Maria, who is one of the most fun and funny people I know.
I want to thank Philipp for being such a great friend throughout these four years. 
I am so glad you decided to pursue a PhD and move to Amsterdam.
I don't know how I would have managed my PhD (and life in Amsterdam in general) if it were not for you.
Thank you for always being there to talk, for always listening to me rant/complain, and just always being there for me.
Thank you for always making the time to accompany me to restaurants, bike trips, and conference travels.
I also want to thank Vaishali (and Prateek by extension) for always being there to listen to me, advising me, and accompanying me to Indian restaurants.
Thank you also for bringing familiarity from India. Also thank you for always supporting me emotionally whenever I felt down. 
Thank you, Samarth, for bringing a calm energy, and thank you for visiting me in NYC.
Thank you, Mariya, for being around in London. My time in London was much more fun because of you. And thank you for always taking the time to call me and meet me, despite your ever-busy schedule.
Thank you, Norman, for inviting me whenever you visited Amsterdam.
Thank you, Thilina and Clem, for being solid DreamsLab colleagues and for the fun chats at Thilina's dinners.
Thank you, Clara, for the fun work-from-café days in Amsterdam.
Thank you, Sharvaree, for cooking all the delicious Indian food and for inviting me for dinners.

I had the pleasure of visiting London and New York during my internships and working with the Meta recommender systems org.
I want to thank Eric, Hanchao, and Aga for my internship in London. Outside my internship, I want to thank Palak for being a fun friend in London.
For my internship in NYC, I want to thank my teammates: Satya, Tsung-Yu, Chaitanya, and Sreya for your mentorship during the internship.
I also want to thank François, Paris, and Jun for your support during the internship.
Outside my internship, I want to thank Krishna, Kinnari, Shiwangi, Sujal, and Pulkit for making my stay in NYC more fun.
Last but not least, I want to especially thank Yiming for all your support during (and outside) my time at Meta.
Thank you for your mentorship and guidance in navigating the internship during my first time in London, and thank you for helping me with getting the second internship and for always going out of your way (even when you did not have to) to help me with Meta-related stuff and for talking to me.
You have been very kind, thank you.

I would also like to thank all my colleagues at IRLab with whom I have had the pleasure to learn so much during my PhD:
Evangelos, Petra (thank you for supporting me initially during my move to Amsterdam), Pablo (thank you for your help during my PhD), Ivana (thank you for your help during my PhD), Ali (thank you for helping out with the tutorial), Amin, Amy, Ana, Andrew, Antonis, Arezoo,
Barrie, Chang, Chen, Chuan, Dan, Daniel, David, Fen, Gabriel, Gabrielle, Georgios,
Hongyi, Ilias, Jia-Hong, Jie, Jin (thank you for helping out with the tutorial), Jingwei (thank you for being an awesome officemate), Julien, Kidist (thank you for being an awesome officemate), Lu, Maarten Marx, Maartje,
Mohammad, Maryam, Maurits, Maxime, Ming (thanks for being an awesome officemate), Mohanna, Mounia,
Mozhdeh, Olivier, Panagiotis, Pooya (thank you for inspiring me with your insane fitness levels, though I never followed up on that), Romain, Roxana, Sami, Shaojie,
Simon, Svitlana, Teng, Thilina, Thong, Siddharth Mehrotra, Siddharth Singh (thank you for the fun talks and your attempt to speak Hindi with me), Vera,
Yangjun, Yibin, Yixing, Yuanna, Yongkang, Yougang, Yubao, Yuyue, Zahra, Zhirui,
Zihan, and Ziming. My apologies if I missed anyone here. 

I also want to thank my friends from India for always being there and for always supporting me since our college days: Mitesh, Vivek, Subhadeep, Rohit, Siddharth, Pooja, Kirti, Nayan, and Sri Vidhya. 
I also want to thank my family for being my pillar of strength: my mom, dad, my sisters, my nephew, and everyone else.

Finally, I want thank you to the original scholar in my family: my grandfather -- Gyan Swarop Gupta. 
I guess I learned how to do research by watching you do the same while growing up -- on the history of Indus Valley civilization.
I miss you dearly, Nana, I know you are watching me always. 

\vskip .5cm
\noindent
\begin{flushright}
Shashank Gupta\\
Jaipur\\
August 2025
\end{flushright}

\clearpage
}

\tableofcontents

\mainmatter

\chapter{Introduction}
\label{chapter:introduction}

Reinforcement learning (RL) has established itself as a robust framework for enhancing decision-making systems across diverse application domains~\cite{luong2019applications,wang2020deep}. 
In conventional reinforcement learning configurations, an agent aims to maximize cumulative rewards through sequential action selection while interacting with an environment. 
Traditionally, the concept of a ``reinforcement learning agent" has been linked with robotics or strategic games like chess or Go. 
However, in recent years, reinforcement learning methodologies have increasingly been applied to user-facing applications, including web search engines, recommender systems, and fine-tuning foundation models for interactive use cases~\cite{kuhnle2021reinforcement,zhang2020deep,afsar2022reinforcement}.

Among the most widely implemented forms of reinforcement learning in user-facing applications is the contextual bandit framework~\cite{barraza2020introduction,vandenAkker2024}. 
Contextual bandits represent a simplified reinforcement learning scenario involving a single state (context) for each interaction with the environment. 
Within recommender systems, the context typically encompasses personalized user features, the action constitutes the recommended item, and the reward derives from user engagement metrics such as clicks or purchases~\cite{vandenAkker2024}. 
For web search applications, the context is the user query (potentially enhanced with personalized features), actions comprise ranked document lists, and rewards stem from search engine result page (SERP) interaction metrics, including normalized discounted cumulative gain (NDCG) or click counts~\cite{kuhnle2021reinforcement}. 
In foundation model fine-tuning, the context corresponds to an input prompt, the action is a generated sequence of words, and rewards typically originate from learned reward models approximating human feedback~\cite{bai2022training}.

When compared to comprehensive multi-step reinforcement learning scenarios, contextual bandits offer computational simplicity and facilitate easier deployment, frequently using existing offline logged data in an efficient manner~\cite{saito2021counterfactual,vandenAkker2024}. 
Each context-action pair directly yields a reward, thus eliminating complexities associated with sequential decision-making. 
Given these practical advantages, this thesis specifically concentrates on contextual bandit methodologies.

Despite these benefits, implementing contextual bandits in ranking problems introduces notable challenges, particularly regarding biases in user feedback. Ranking policies gather data under specific display conditions, resulting in biased user interaction signals -- such as position bias or trust bias -- which inadequately represent true item relevance~\cite{joachims2017unbiased}. Items positioned lower in rankings receive fewer interactions, potentially leading to their misclassification as irrelevant.

To address this bias, counterfactual learning-to-rank (LTR) approaches employ inverse propensity scoring techniques, adjusting observed interaction signals to approximate unbiased relevance estimates~\cite{joachims2017unbiased,agarwal2019addressing,vardasbi2020inverse,gupta2024unbiased}. 
However, inverse propensity scoring methods typically suffer from high variance, especially when working with limited data, resulting in unstable or suboptimal deployment~\cite{oosterhuis2022reaching,gupta2024unbiased}. 
The first component of this thesis addresses safety concerns in counterfactual LTR. 
In this context, ``safety” describes how the new ranking policy performs compared with the current production policy. When the new policy performs worse than the production policy, it is considered \textit{unsafe}; when it performs better, it is considered safe.
We propose a safe counterfactual LTR method that theoretically ensures a new ranking policy that performs at least as well as the currently deployed policy. 
While existing safe methods provide guarantees under assumed click behavior models, these guarantees fail if actual user behavior diverges~\cite{oosterhuis2021robust,jagerman2020safe}. 
To tackle this issue, we introduce a robust safe counterfactual LTR approach that provides reliable guarantees even when user behavior deviates from assumptions.

In the second part of this thesis, we focus on enhancing sample efficiency in contextual bandit learning and evaluation, specifically, achieving lower error rate with limited data. 
In this setting, off-policy evaluation estimates how a new policy would perform using data collected under a different policy, while off-policy learning uses that same logged data to optimize the new policy itself. 
Both off-policy evaluation and off-policy learning typically exhibit high variance, causing instability in performance estimates. 
To reduce variance and improve sample efficiency, we propose an optimal baseline-correction method that significantly decreases the error in off-policy estimates while requiring fewer data points.

Beyond ranking and recommendation systems, diffusion models have recently achieved state-of-the-art results in generative tasks like text-to-image synthesis~\cite{yang2023diffusion}. 
Denoising diffusion probabilistic models iteratively refine random noise into meaningful outputs guided by learned distributions~\cite{ho2020denoising}. 
However, these models do not inherently optimize custom objectives such as aesthetic quality or prompt alignment after training.
By interpreting the denoising process as an RL action, diffusion models can incorporate user-defined reward functions~\cite{black2023training}.
While proximal policy optimization (PPO) is commonly used for reinforcement learning fine-tuning, it involves substantial computational costs and high variance, requiring multiple networks loaded simultaneously. 
In contrast, REINFORCE offers computational efficiency but suffers from high variance and poor sample efficiency~\cite{williams1992simple}. We propose an efficient reinforcement learning fine-tuning method for text-to-image diffusion models, combining REINFORCE's computational efficiency with PPO's improved sample efficiency in a novel method -- leave-one-out PPO (LOOP).

Collectively, the contributions made in this thesis highlight the shared challenges of safety, efficiency, and robustness in contextual bandit methods across ranking and diffusion modeling contexts within the reinforcement learning paradigm.

\section{Research Outline and Questions}
\label{section:introduction:rqs}

Counterfactual LTR corrects user interaction bias primarily using \ac{IPS}, weighting clicks inversely proportional to their selection probability. 
As we pointed out above, while unbiased in expectation, IPS-based estimators are known to suffer from high variance, especially with limited logged interaction data, potentially yielding suboptimal ranking policies~\cite{oosterhuis2022reaching,gupta2024unbiased}. 
Deploying such suboptimal policies poses significant risks to user experience and business metrics. Therefore, it is crucial to incorporate mechanisms ensuring safe deployment. This leads to the first research question:

\acrodef{rq:safe1}[\ref{rq:safe1}]{Can safety guarantees be provided for counterfactual LTR policies to ensure that the new policy is at least as good as the production policy?}
\acrodef{rq:safe2}[\ref{rq:safe2}]{Can we provide robust safety guarantees for counterfactual \ac{LTR} policies even under adversarial user behavior settings?}
\acrodef{rq:recsys1}[\ref{rq:recsys1}]{Can we unify variance reduction techniques using baseline corrections and a doubly robust estimator under a common framework?}
\acrodef{rq:recsys}[\ref{rq:recsys}]{Given a unified framework for variance reduction techniques under baseline corrections, can we derive a variance-optimal unbiased estimator?}
\acrodef{rq:loop}[\ref{rq:loop}]{Can we improve the sample efficiency of proximal policy optimization for fine-tuning text-to-image diffusion?}

\begin{enumerate}[label=\textbf{RQ\arabic*},ref={RQ\arabic*},resume]
\item \acl{rq:safe1}\label{rq:safe1}
\end{enumerate}

\noindent To address \ref{rq:safe1}, we derive a lower confidence bound for the counterfactual \ac{LTR} estimator, establishing a lower bound on the true ranking utility, the ideal target metric for optimization. 
In Chapter 2, we demonstrate that optimizing this lower bound ensures a ranking policy that is no worse than the current production policy.
This property proves particularly valuable when click data is scarce, mitigating the substantial risk of deploying potentially harmful policies.

While \ref{rq:safe1} provides a probabilistic safety guarantee by optimizing the lower bound on the utility, these guarantees depend critically on assumptions regarding user behavior (click model). 
Deviations from these assumptions invalidate the guarantees, motivating the second research question:

\begin{enumerate}[label=\textbf{RQ\arabic*},ref={RQ\arabic*},resume]
\item \acl{rq:safe2}\label{rq:safe2}
\end{enumerate}

\noindent In Chapter 3, we introduce \ac{PRPO}, a method ensuring safety for counterfactual \ac{LTR} without reliance on user behavior assumptions, guaranteeing robust safety even under adversarial conditions.

Thus far, the discussion has focused on contextual bandits within ranking scenarios involving combinatorial action spaces. 
Next, we examine contextual bandits generating single actions, such as top-1 recommendations, with a focus on improving the sample efficiency in off-policy evaluation and learning. 
Standard methods like IPS are unbiased in expectation, but suffer from high variance. 
Alternative methods, including doubly robust (DR) estimators and self-normalized IPS (SNIPS), reduce variance using additive and multiplicative baseline corrections, respectively~\cite{Swaminathan2015,joachims2018deep}, yet lack a unifying framework. 
This motivates our third research question:

\begin{enumerate}[label=\textbf{RQ\arabic*},ref={RQ\arabic*},resume]
\item \acl{rq:recsys1}\label{rq:recsys1}
\end{enumerate}

\noindent Chapter 4 proposes the $\beta$-IPS estimator, integrating inverse propensity scoring (IPS), doubly robust methods, and self-normalized IPS under a unified baseline correction framework.

\begin{enumerate}[label=\textbf{RQ\arabic*},ref={RQ\arabic*},resume]
\item \acl{rq:recsys}\label{rq:recsys}
\end{enumerate}

\noindent Using the unified $\beta$-IPS estimator framework, we investigate whether a variance-optimal baseline correction ($\beta^{*}$) can be analytically derived. 
In Chapter 4, we confirm this possibility, presenting a closed-form solution for $\beta^{*}$ that minimizes variance for both off-policy learning and evaluation tasks.

Contextual bandit theory as previously discussed emphasizes user interactions within ranking or recommendation systems. 
However, the framework has also been effectively employed in fine-tuning foundation models, such as large language models (LLMs) and diffusion models, typically using proximal policy optimization (PPO). 
Recent research highlights computational advantages of REINFORCE (policy gradient methods) over PPO for LLMs~\cite{ahmadian2024back}. 
Given PPO's challenges with variance and sample inefficiency, we consider improvements through our final research question:

\begin{enumerate}[label=\textbf{RQ\arabic*},ref={RQ\arabic*},resume]
\item \acl{rq:loop}\label{rq:loop}
\end{enumerate}

\noindent In Chapter 5, we systematically compare PPO and REINFORCE for diffusion model fine-tuning. We first demonstrate that REINFORCE exhibits inferior sample efficiency compared to PPO. Subsequently, we propose \ac{LOOP}, an enhancement to PPO achieving superior performance with the same number of input prompts by generating multiple actions per prompt.

\section{Main Contributions}
\label{section:introduction:contributions}

The main contributions of this thesis are categorized into algorithmic and theoretical components, summarized as follows.

\subsection{Algorithmic contributions}

\begin{itemize}
\item A safe counterfactual \ac{LTR} optimization framework that uses the REINFORCE policy gradient, guided by a derived generalization bound for the counterfactual \ac{LTR} estimator (Chapter~\ref{chapter:01-online-evaluation1}).
\item A robust safe counterfactual \ac{LTR} algorithm extending the REINFORCE policy gradient method with a clipping mechanism inspired by PPO from reinforcement learning literature (Chapter~\ref{chapter:01-online-evaluation2}).
\item A closed-form solution for the variance-optimal baseline correction term in off-policy evaluation and off-policy learning for contextual bandit methods, substantially reducing variance in practical applications (Chapter~\ref{chapter:01-online-evaluation3}).
\item An efficient reinforcement learning approach for fine-tuning text-to-image diffusion models that enhances sample efficiency by generating multiple diffusion trajectories per input prompt, thereby effectively reducing variance (Chapter~\ref{chapter:01-online-evaluation4}).
\end{itemize}

\subsection{Theoretical contributions}

\begin{itemize}
\item A generalization bound for the counterfactual \ac{LTR} estimator using a position-based click model combined with the inverse propensity scoring estimator (Theorem~\ref{CLTR-bound1}, Chapter~\ref{chapter:01-online-evaluation1}).
\item A generalization bound for the counterfactual \ac{LTR} estimator employing a trust-bias click model with the doubly robust estimator (Theorem~\ref{CLTR-bound}, Chapter~\ref{chapter:01-online-evaluation2}).
\item A proof extending the position-based inverse propensity scoring counterfactual LTR generalization bound to the trust-bias based counterfactual LTR and doubly robust estimator (Appendix~\ref{appendix-cikm}, Chapter~\ref{chapter:01-online-evaluation2}).
\item A proof establishing a closed-form, variance-optimal baseline correction term applicable to off-policy evaluation estimates and learning gradients in contextual bandits (Section~\ref{sec:estimator_var}, Chapter~\ref{chapter:01-online-evaluation3}).
\item A theoretical demonstration of the sub-optimality of the REINFORCE estimator when samples are reused across iterations in reinforcement learning-based diffusion model fine-tuning (Theorem~\ref{theorem:1}, Chapter~\ref{chapter:01-online-evaluation4}).
\item A proof demonstrating that the proposed \ac{LOOP} algorithm achieves lower variance compared to traditional proximal policy optimization methods in diffusion model fine-tuning (Theorem~\ref{prop:1}, Chapter~\ref{chapter:01-online-evaluation4}).
\end{itemize}

\subsection{Empirical contributions}
\begin{itemize}
  \item \textbf{Safe counterfactual learning to rank.}  
        Extensive simulations on three public learning to rank benchmarks (Yahoo!\ Webscope, MSLR-WEB30k, and Istella) show that the proposed exposure-based CRM method eliminates the long “unsafe” warm-up period of IPS, matching the production policy after \(\sim\)400 interactions and converging to the IPS performance asymptotically.

  \item \textbf{Robust safety guarantees in advanced CLTR.}  
        Across the same public learning to rank datasets -- even under an adversarial click model, the safe DR and PRPO algorithms reach logging policy performance within the first few hundred queries, more than three orders of magnitude earlier than doubly robust method, while still converging to the optimal ranking performance asymptotically.

  \item \textbf{Variance-optimal off-policy bandit learning.}  
        On a synthetic benchmark that sweeps action-space sizes and logging-policy optimality, the proposed \(\beta\text{-IPS}\) estimator reduces gradient variance by up to two orders of magnitude and yields consistently higher policy value and lower MSE than IPS, SNIPS, and DR in both full-batch and mini-batch training settings.

  \item \textbf{Efficient RL fine-tuning of diffusion models.}  
        On the T2I-CompBench, aesthetic, and the image-text-alignment tasks, the LOOP algorithm surpasses PPO and REINFORCE baselines: with \(k{=}4\) trajectories per prompt it improves attribute-binding scores by 10–15 points and raises aesthetic quality by \(+\!1.0\) reward points, while simultaneously lowering reward-variance throughout training.

\end{itemize}

\subsection{Resource contributions}
\begin{itemize}
    \item \textbf{Safe Counterfactual LTR implementation.}  
    All code and experiment scripts for the safe counterfactual LTR methods described in Chapter~\ref{chapter:01-online-evaluation1} and \ref{chapter:01-online-evaluation2} are released at:  
    \href{https://github.com/shashankg7/cikm-safeultr}{\texttt{safe-cltr}}.

    \item \textbf{Optimal baseline corrections for off-policy bandits.}  
    The PyTorch implementation that accompanies Chapter~\ref{chapter:01-online-evaluation3} is released at:  
    \href{https://github.com/shashankg7/recsys2024_optimal_baseline}{\texttt{optimal-baseline-cb}}.

\end{itemize}

\section{Thesis Overview}
\label{section:introduction:overview}

The dissertation begins with the introduction, which is the current chapter the reader is engaged with. 
The research chapters in this thesis are structured into two distinct parts, with the first part predominantly addressing the safety aspect of counterfactual \ac{LTR} methods in Chapters 2 and 3. 
Chapter 3 should preferably be read following Chapter 2, as it extends the safety framework established in Chapter 2. 
Chapter 4 can be approached independently of other chapters in the thesis, as it primarily examines the top-1 action setting in contextual bandits. 
Similarly, Chapter 5 stands as a self-contained unit that can be read separately, focusing mainly on the post-training refinement of text-to-image foundation models.

\section{Origins}
\label{section:introduction:origins}

In this section, we list the origins and the list of contributions for each chapter in the thesis.

\begin{enumerate}[label=\textbf{Chapter~\arabic*},align=left]
\setcounter{enumi}{1}

\item is based on the following paper:
\begin{itemize}
\item \bibentry{gupta2023safe}.
\item[] SG: Conceptualization, Formal Analysis, Investigation, Methodology, Resources, Software, Validation, Writing – Original Draft Preparation. HO: Conceptualization, Formal Analysis, Investigation, Methodology, Supervision, Writing – Review \& Editing. MdR: Conceptualization, Methodology, Supervision, Validation, Funding Acquisition,  Writing – Review \& Editing.
\end{itemize}

\item is based on the following paper:
\begin{itemize}
\item \bibentry{gupta-2024-practical}.
\item[] SG: Conceptualization, Formal Analysis, Investigation, Methodology, Resources, Software, Validation, Writing – Original Draft Preparation. HO: Conceptualization, Formal Analysis, Investigation, Methodology, Supervision, Writing – Review \& Editing. MdR: Conceptualization, Methodology, Supervision, Validation, Funding Acquisition,  Writing – Review \& Editing.
\end{itemize}

\item is based on the following paper:
\begin{itemize}
\item \bibentry{gupta-2024-optimal}.
\item[] SG: Conceptualization, Formal Analysis, Investigation, Methodology, Resources, Software, Validation, Writing – Original Draft Preparation. OJ: Conceptualization, Formal Analysis, Investigation, Methodology, Resources, Software, Validation, Writing – Original Draft Preparation.  HO: Conceptualization, Formal Analysis, Investigation, Methodology, Supervision, Writing – Review \& Editing. MdR: Conceptualization, Methodology, Supervision, Validation, Funding Acquisition,  Writing – Review \& Editing.
\end{itemize}

\item is based on the following paper:
\begin{itemize}
\item \bibentry{gupta2025simple}.
\item[] SG: Conceptualization, Formal Analysis, Investigation, Methodology, Resources, Software, Validation, Writing – Original Draft Preparation. CA: Formal Analysis, Writing – Review \& Editing.  TYL: Formal Analysis, Writing – Review \& Editing. SDR: Formal Analysis, Writing – Review. HO: Formal Analysis, Writing – Review \& Editing. MdR: Formal Analysis, Writing – Review \& Editing. SNS: Formal Analysis, Funding Acquisition, Writing – Review \& Editing
\end{itemize}

\noindent \!\!\!\!\!\!\!\!\!\!\!\!\!\!\! The writing of the thesis also benefited from work on the following publications:
\begin{itemize}
\item \bibentry{gupta2023ictir}.
\item \bibentry{gupta-2023-first-abstract}
\item \bibentry{gupta2024unbiased}
\item \bibentry{gupta2023recent}
\item \bibentry{gupta2023recentfire}
\item \bibentry{bakker2024simpler}
\item \bibentry{gupta2024twostaged}
\end{itemize}
\end{enumerate}

\part{Safe Deployment in Learning-to-rank}

\chapter{Safe Deployment for Counterfactual Learning-to-Rank via Exposure-based Risk Minimization}
\chaptermark{Safe Deployment for Counterfactual Learning-to-Rank}
\label{chapter:01-online-evaluation1}

\footnote[]{This chapter was published as~\citep{gupta2023safe}.}
The goal of \ac{CLTR} is to correct for the selection bias in user interaction data (clicks).
It does so by weighting click interactions by the inverse of the estimated effect of the selection bias. 
Mathematically, \ac{CLTR} produces unbiased estimates of the ``true" relevance signal from biased click signals, in expectation. 
However, it is well-known that \ac{CLTR} suffers from high-variance problems, which is exacerbated in the limited logged interaction data size setting~\cite{gupta2024unbiased,oosterhuis2022reaching}.
This can lead to unsafe behavior during deployment, as the deployment of a sub-optimal ranking policy can have negative effects on the user experience, and subsequently on the business metrics.
To avoid such scenarios, we need a safety mechanism to avoid such detrimental behavior. This brings us to the first research question:

\begin{enumerate}[label=\textbf{RQ1},ref={RQ\arabic*}]
\item \acl{rq:safe1}
\end{enumerate}

\noindent In this chapter, we aim to answer this question by first deriving a generalization bound of the \ac{CLTR} estimator for the position bias. 
We show that optimizing for the generalization bound results in a guarantee that the new ranking policy will be at least as good as the production/behavior policy.

\section{Introduction}
\label{sec:intro}
\Ac{LTR} methods optimize ranking systems so that the resulting ranking behavior maximizes a given ranking metric~\cite{liu2009learning}. 
Traditionally, most \ac{LTR} methods applied a supervised learning procedure based on manually-created relevance judgements.
However, obtaining such judgements is time-consuming, expensive and does not scale~\citep{chapelle2011yahoo, qin2010letor}. 
As an alternative, \ac{LTR} methods have been developed that rely on clicks, as they are much cheaper to obtain in abundance in the form of user interaction logs~\citep{joachims2002optimizing}.

Despite its low costs, click data is generally strongly affected by different forms of interaction bias.
Interactions with rankings often suffer from \textit{position bias}~\cite{craswell2008experimental}: the position at which an item was shown often affects its \ac{CTR} more than its relevance.
As a result, the clicks observed in interaction logs are often more reflective of where items were displayed during logging than how relevant users find them.
Thus, naively using this data for \ac{LTR}, without corrections, can result in heavily \emph{biased} models with suboptimal ranking performance~\citep{joachims2017unbiased, wang2016learning}.

To mitigate the bias problem in interaction data, the field of \ac{CLTR} has proposed methods to mitigate bias with unbiased estimation~\citep{joachims2017unbiased}.
\Ac{CLTR} mainly relies on exposure-based \ac{IPS}~\cite{oosterhuis2020unbiased,wang2018position}, a \ac{LTR} specific adaptation of the \ac{IPS} counterfactual estimation method~\cite{swaminathan2015batch, joachims2016counterfactual, horvitz1952generalization}.
Standard exposure-\ac{IPS} weights clicks by the inverse effect of position-bias on the clicked item.
This procedure thus gives more weight to clicks on items that are underrepresented due to position-bias, and vice versa.
In expectation, this removes the effect of position-bias from the loss that is optimized.
 
\header{Unsafe \ac{CLTR}}
Despite enabling unbiased optimization, \ac{IPS} is also known to suffer from high variance~\cite{joachims2017unbiased, oosterhuis2022doubly}. 
Specifically, in cases with a lack of click data or with large amounts of noise, high variance can make IPS-based \ac{CLTR} unreliable and lead to very sub-optimal ranking models~\citep{jagerman2020safe, oosterhuis2021robust}.
This problem can be so severe that the learned ranking models can be worse than the model used to log the interaction data.
Deploying such a learned model could thus result in a substantially degraded user experience. 
In other words, despite the improvements that IPS-based \ac{CLTR} can bring, it is also an \emph{unsafe} approach since it can lead to considerable deteriorations, under certain circumstances.

This (un)safety issue is not unique to \ac{IPS}-based \ac{CLTR}. 
\citet{swaminathan2015batch} address this issue for contextual bandit problems by applying a generalization bound.
Such a bound can provide a high-confidence upper limit on the difference between the true and estimated performance of a bandit policy~\cite{shalev2014understanding, thomas2015high}.
This allows for safer \emph{conservative} optimization.
For instance, \citet{wu2018variance} introduce a bound based on the divergence between the new policy and the logging policy.
This bound avoids policies that stray away from the logging policy, unless there is strong evidence that they are actual improvements.
This method might appear to be a great fit for \ac{CLTR}, but, unfortunately, it is based on action propensities that do not generalize well to the very large action spaces in \ac{CLTR}.
Therefore, there is a need for a conservative generalization bound that is practical and effective in the \ac{CLTR} setting.

\header{Safe \ac{CLTR}}
To address this gap, in this chapter we propose an exposure-based \ac{CRM} method that is specifically designed for safe \ac{CLTR}.
Similar to how exposure-based \ac{IPS} deals with the large action spaces in ranking settings, our method is based on an exposure-based alternative to action-based generalization bounds.
We first introduce a divergence measure based on differences between the distributions of exposure of a new policy and a safe logging policy.
Then we provide a novel generalization bound and prove that it is a high-confidence lower-bound on the performance of a learned policy.
When uncertain, this bound defaults to preferring the logging policy and thus avoids decreases in performance due to variance.
In other words, with high-confidence, ranking models optimized with this bound are guaranteed to never deteriorate the user experience, even when little data is available.

\header{Main contributions}
We are the first to address \ac{CRM} for \ac{CLTR} and contribute a novel exposure-based \ac{CRM} method for safe \ac{CLTR}.
Our experimental results show that our proposed method is effective at avoiding initial periods of bad performance when little date is available, while also maintaining high performance at convergence.
Our novel exposure-based \ac{CRM} method thus enables safe \ac{CLTR} that can mitigate many of risks attached to previous methods.

Accordingly, we hope that our contribution makes the adoption of \ac{CLTR} methods more attractive to practitioners working on real-world search and recommendation systems.

\section{Related Work}
In this section, we review related work on \ac{CLTR} and \ac{CRM} in off-policy learning. 

\subsection{Counterfactual learning to rank}
\Ac{LTR} is a well-established area of research that deals with learning optimal rankings to maximize a pre-defined notion of utility~\citep{liu2009learning}.
Traditionally, \ac{LTR} systems were optimized using supervised learning on manually-created relevance judgements~\citep{chapelle2011yahoo}.
However, the manual curation of relevance judgements is a time-consuming and costly process~\citep{chapelle2011yahoo,qin2010letor}.
Moreover, manually-graded relevance signals do not always align well with actual user preferences~\citep{sanderson2010user}. 
Due to these shortcomings, \ac{LTR} from user interactions has become a popular alternative to supervised \ac{LTR}~\citep{speretta2005personalized,jiang2016learning,chapelle2009dynamic,joachims2017unbiased}.

Learning from user interactions/click logs was introduced in the pioneering work of~\citet*{joachims2002optimizing}.
Click data is relatively cheap to collect and indicative of actual user preferences~\cite{radlinski2008does}.
In spite of these advantages, click data is known to be a noisy and biased estimate of the true user preferences~\cite{craswell2008experimental,oosterhuis2020unbiased}.
Some of the common biases identified in the \ac{LTR} literature are position bias~\cite{craswell2008experimental}:
trust bias~\cite{agarwal2019addressing}, and item-selection bias~\cite{oosterhuis2020policy}.
 
To counter the effect of bias, \citet{joachims2017unbiased} introduced counterfactual learning in the context of \ac{LTR}. 
They proposed the application of \acf{IPS}, a causal inference technique that has prevalence in the offline bandit learning literature~\citep{joachims2016counterfactual}.
\ac{IPS} models the probability of the user examining a document at a given displayed rank.
The inverse of the examination probability, i.e., the inverse propensity, is used to correct for the position bias. 
    As a result of the inverse weighing scheme, \ac{IPS}-based \ac{LTR} optimization is unaffected by position bias, in expectation~\citep{joachims2017unbiased}.
Since its introduction, there has been an increasing interest in the area, with several application of \ac{IPS} in the context of ranking~\citep{agarwal2019addressing,vardasbi2020inverse,oosterhuis2020policy,wang2018position}.
Recent work has also explored \ac{CLTR} under a stochastic logging policy, where some exploration is introduced, as opposed to pure exploitation~\citep{oosterhuis2021unifying,oosterhuis2020policy,yadav2021policy}.

With regard to safety in learning from user interactions, \citet{jagerman2020safe} introduced the notation of safe exploration for offline contextual bandit algorithms. 
The authors introduced \ac{SEA}, which applies high-confidence performance bounds to \emph{safely} choose between the deployment of a logging policy and a learned policy.
\citet{oosterhuis2021robust} applied this context to \ac{LTR} and introduced a generalization and specialization framework to safely choose between a generalized feature-based \ac{LTR} model, and a specialized tabular \ac{LTR} model.
The important difference between prior work and our work is that existing methods safely \emph{choose} between policies, whereas our method safely \emph{optimizes} a policy.
To the best of our knowledge, we are the first to consider notion of safety for the \emph{optimization} of \ac{LTR} models.

\subsection{Counterfactual risk minimization for offline learning from logs}
A relevant area closely related to \ac{CLTR} is off-policy learning, or offline learning from bandit feedback data~\citep{joachims2016counterfactual,saito2021counterfactual,swaminathan2015batch,he2019off}.
Off-policy learning tries to bridge the mismatch between the action distributions of a new policy and the logging policy~\cite{joachims2016counterfactual}.
The most common techniques used to achieve that goal are \ac{IPS} and importance sampling~\cite{horvitz1952generalization}.
However, as noted by~\citet{cortes2010learning}, the \ac{IPS} estimator can have unbounded variance, which can lead to large errors in its estimation.
Consequently, optimization with IPS can result in convergence problems and severely suboptimal policies. 

To account for this high-variance problem, \citet{swaminathan2015batch} introduced \acf{CRM}, an off-policy method that explicitly controls for the variance during off-policy learning from bandit feedback data. 
Specifically, their learning objective consists of both the \ac{IPS} loss and a variance regularization term, which minimizes the dissimilarity between the two policies.
This variance regularization term represents the \emph{risk} that stems from the variance of the IPS estimation, however,
computing it requires a pass over the entire data which does not scale well. 
As a scalable alternative, \citet{wu2018variance} introduced \ac{VCRM}, where the authors estimate the \emph{risk} of the new policy by random sampling from the logged data. 
The objective function to be optimized in the \ac{VCRM} method is derived from a generic theoretical analysis of learning from importance sampling~\cite{cortes2010learning}. 
The risk term in the \ac{VCRM} method is defined in terms of a specific divergence between the logging policy and the new policy, known as the R\'enyi divergence~\citep{renyi1961measures}. 
To the best of our knowledge, there is no existing work on \ac{CRM} in a \ac{LTR} setting, making the work in this chapter the first to propose a \ac{CRM} approach for the \ac{LTR} task.

\section{Background}

\subsection{Learning to rank}
The objective of learning to rank methods is to optimize a ranking policy ($\pi$), so that for user-issued queries ($q$) it provides the optimal ranking of their pre-selected candidate document sets ($D_q$)~\citep{liu2009learning}.
Formally, this objective can be expressed as the maximization of the following utility function:
\begin{equation}
    U(\pi) =  \mathbb{E}_{q} \mleft[ \sum_{d \in D_q} \rho(d \mid q, \pi) P(R=1 \mid d, q) \mright]. \label{true-utility1}
\end{equation}
where $\rho(d \mid q, \pi)$ is the weight $\pi$ gives to document $d$ for query $q$.
The choice of $\rho$ determines what metric is optimized, for instance, the well-known \ac{NDCG} metric~\citep{jarvelin2002cumulated}:
\begin{equation}
    \rho_{\text{DCG}}(d \mid q, \pi) = \mathbb{E}_{y \sim \pi( \cdot \mid q)} \mleft[ (\log_2(\textrm{rank}(d \mid y) + 1))^{-1} \mright]. \label{rho}
\end{equation}
where $y$ is a ranking sampled from the policy $\pi$. For this chapter, the aim is to optimize the expected number of clicks, the next subsection will explain how we choose $\rho$ accordingly.

\subsection{Counterfactual learning to rank}
\label{sec:background:cltr}

\textbf{Position bias in clicks}.
Optimizing the \ac{LTR} objective in Eq.~\ref{true-utility1} requires access to the true relevance labels ($P(R=1 \mid d, q)$), which is often impossible in real-world ranking settings. 
As an alternative, \ac{CLTR} uses clicks, since they are present in abundance as logged user interactions. 
However, clicks are a biased indicator of relevance; for this chapter, we will assume the relation between clicks and relevance is determined by a position-based click model~\citep{joachims2017unbiased, chuklin-click-2015}.
For a document $d$ displayed in ranking $y$ for query $q$, this means the click probability can be decomposed into a rank-based examination probability and a document-based relevance probability:
\begin{equation}
    P(C=1 \mid d, q, y) = P(E=1 \mid \text{rank}(d \mid y))  P(R=1 \mid d, q).   
    \label{click-model}
\end{equation}
The key characteristic of the position-based click model is that the probability of examination only depends on the rank at which a document is displayed: $P(E=1 \mid d, q, y) = P(E=1 \mid \text{rank}(d \mid y))$.
Furthermore, this model assumes that clicks only take place when a document is both relevant to a user and examined by them.
Consequently, the click signal is an indication of both the relevance and examination of documents.
Thus, the position at which a document is displayed can have a stronger effect on its click probability than its actual relevance~\citep{craswell2008experimental}.

\header{Inverse-propensity-scoring for \ac{CLTR}}
We assume a setting where $N$ interactions have been logged using the logging policy $\pi_0$, for each interaction $i$ the query $q_i$, the displayed ranking $y_i$, and the clicks $c_i$ are logged:
\begin{equation}
    \mathcal{D} = \big\{q_i, y_i, c_i \big\}^N_{i=1}. \label{logs}
\end{equation}
We will use $c_i(d) \in \{0,1\}$ to denote whether document $d$ was clicked at interaction $i$.
Furthermore, we choose $\rho$ to match the examination probabilities under $\pi$:
\begin{equation}
    \rho(d \mid q, \pi) =  \mathbb{E}_{y \sim \pi( \cdot \mid q)} \big[ P(E=1 \mid \text{rank}(d \mid y))  \big] = \rho(d).
    \label{eq:exposurenew}
\end{equation}
Hence, our optimization objective $U(\pi)$ is equal to the expected number of clicks (cf.\ Eq.~\ref{true-utility1} and~\ref{click-model}).

In order to apply \ac{IPS}, we need the propensity of each document~\citep{joachims2017unbiased}, following \citet{oosterhuis2021unifying} we use:
\begin{equation}
\begin{split}
    \rho(d \mid q, \pi_0) &=  P(E =1  \mid \pi_{0}, d, q)\\
        &=  \mathbb{E}_{y \sim \pi_{0}( \cdot \mid q)} \big[ P(E=1 \mid \text{rank}(d \mid y))  \big]\\ \nonumber
        & = \rho_0(d).
\label{policy-aware-exposure1}
\end{split}
\end{equation}
Thus, the exposure of $d$ represents how likely it is examined when using $\pi_0$ for logging.
Thereby, it indicates how much the clicks on $d$ underrepresent its relevance.
For the sake of brevity, we drop $q$, $\pi$ and $\pi_0$ from our notation when their values are clear from the context: i.e., $\rho(d \mid q, \pi) = \rho(d)$ and $\rho(d \mid q, \pi_0) = \rho_0(d)$.

The exposure-based \ac{IPS} estimator takes each click in $\mathcal{D}$ and weights it inversely to $\rho_{0}(d)$ to correct for position-bias~\citep{joachims2017unbiased, oosterhuis2021unifying}:
\begin{equation}
    \hat{U}(\pi) = \frac{1}{N} \sum_{i=1}^{N} \sum_{d \in D_{q_i}}  \frac{\rho(d)}{\rho_{0}(d)} c_i(d).
    \label{cltr-obj}
\end{equation}
In other words, to compensate that position bias lowers the click probability a document by a factor of $\rho_{0}(d)$, clicks are weighted by $1/\rho_{0}(d)$ to correct for this effect in expectation.
As a result, clicks on documents that $\pi_0$ is likely to show at positions with low examination probabilities (i.e., the bottom of a ranking) receive a higher \ac{IPS} weight to compensate.

\header{Statistical properties of the IPS estimator}
The \ac{IPS} estimator $\hat{U}(\pi)$ (Eq.~\ref{cltr-obj}) is an unbiased and consistent estimate of our \ac{LTR} objective $U(\pi)$ (Eq.~\ref{true-utility1})~\cite{oosterhuis2022reaching}. 
It is \emph{unbiased} since its expected value is equal to our objective:
\begin{equation}
\mathbb{E}_{q,y,c} \mleft[ \hat{U}(\pi) \mright] = U(\pi),
\label{eq:unbiased_prop}
\end{equation}
and it is \emph{consistent} because this equivalence also holds in the limit of infinite data:
\begin{equation}
\lim\limits_{N \to \infty} \hat{U}(\pi) = U(\pi).
\end{equation}
For proofs of these properties, we refer to previous work~\cite{oosterhuis2020policy,joachims2017unbiased,oosterhuis2020learning}.

It is important to note that the unbiasedness and consistency properties do not indicate that the actual IPS estimates will be reliable.
The reason for this is that the estimates produced by IPS are also affected by its variance:
\begin{equation}
    \mathrm{Var}_{y,c}\mleft[\hat{U}(\pi) \mid q\mright] = \sum_{d \in D_q}  \frac{\rho(d)^2}{\rho_{0}(d)^2} \mathrm{Var}_{y,c}\mleft[ c(d) \mid \pi_0, q\mright].
\end{equation}
As we can see, its variance is very large when some propensities are small, due to the $\rho_{0}(d)^{-2}$ term.
As a result, the actual estimates that IPS produces can contain very large errors, especially when $N$ is relatively small or clicks are very noisy.
In other words, $\hat{U}(\pi)$ can be far removed from the true $U(\pi)$, and therefore, optimization with IPS can be very unsafe and lead to unpredictable results.

\begin{figure*}[t]
\includegraphics[width=\linewidth,trim= 0 -1cm 0 0,clip]{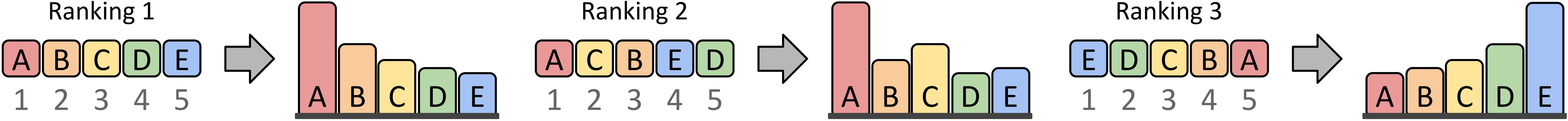}
\caption{
Three rankings and their normalized expected exposure distributions (Eq.~\ref{norm-exposure}) based on DCG weights (Eq.~\ref{rho}).
According to our exposure-based divergence, ranking 1 and ranking 2 are quite similar despite only agreeing on the placing of document A.
In contrast, ranking 1 and ranking 3 also agree on the placement of a single document (C) but have the highest possible dissimilarity, due to their highly mismatched exposure distributions.
}
\label{fig:exposure}
\end{figure*}

\subsection{Counterfactual risk minimization for offline bandit learning}\label{crm-bandit}

The foundational work by~\citet{swaminathan2015batch}  introduced the idea of \acf{CRM} for off-policy learning in a contextual bandit setup.
To avoid the negative effects of high-variance with \ac{IPS} estimation during bandit optimization, they use a generalization bound through the addition of a risk term~\citep{maurer2009empirical}.
With a probability of $1-\delta$, the \ac{IPS} estimate minus the risk term is a lower bound on the true utility of the policy: 
\begin{equation}
    P\big( U(\pi) \geq \hat{U}(\pi) - \text{Risk}(\delta) \big) > 1-\delta.
    \label{eq:generalizationbound}
\end{equation}
Therefore, optimization of the lower bound can be more reliable than solely optimizing the \ac{IPS} estimate ($\hat{U}(\pi)$), since it provides a high-confidence guarantee that a lower bound on the \emph{true} utility of the policy is maximized.

In particular, \citet{swaminathan2015batch} propose using the sample variance as the risk factor:
\begin{equation}
    \hat{U}_\text{action-CRM}(\pi) = \hat{U}_\text{action}(\pi) - \lambda \sqrt{\frac{1}{N} \mathrm{Var}\big[\hat{U}_\text{action}(\pi)\big]}, \label{crm-loss}
\end{equation}
where $\lambda \in \mathbb{R}^{>0}$ is an alternative to the $\delta$ parameter that also determines how probable it provides a bound on the true utility.
Importantly, this bound is based on an action-based \ac{IPS} estimator. 
For our \ac{LTR} setting this would translate to:
\begin{equation}
\hat{U}_\text{action}(\pi)
=
\frac{1}{N}\sum^{N}_{i=1} \frac{\pi(y_i \mid q_i)}{\pi_0(y_i \mid q_i)}
\sum_{d \in D_{q_i}} c_i(d).
\label{eq:actionips}
\end{equation}
However, action-based \ac{IPS} estimation does not work well in the \ac{LTR} setting because the very large number of possible rankings result in extremely small action propensities: $\pi_0(y_i \mid q_i)$, which creates a high-variance problem.
As discussed in Section~\ref{sec:background:cltr}, for this reason \ac{CLTR} uses exposure-based propensities instead (Eq.~\ref{policy-aware-exposure1} and~\ref{cltr-obj}), as they effectively avoid extremely small values.
As a result, the \ac{CRM} approach by \citet{swaminathan2015batch} is not effective for \ac{CLTR}, since the high-variance of its action-based \ac{IPS} make the method impractical in the ranking setting.

Another downside of the \ac{CRM} approach is that the computation of the sample-variance requires a full-pass over the training dataset, which is computationally costly for large-scale datasets. 
As a solution, \citet{wu2018variance} introduce variational \ac{CRM} (VCRM) which uses an upper bound on the variance term based on the R\'enyi divergence between the new policy and the logging policy~\citep{renyi1961measures}.
This R\'enyi divergence is approximated via random sampling, thus making the VCRM method suitable for stochastic gradient descent-based training methods~\cite{nowozin2016f}. 
Nevertheless, this \ac{CRM} approach still relies on action-based propensities, and therefore, does not provide an effective solution for the high-variance problem in \ac{CLTR}.

\section{A Novel Exposure-Based Generalization Bound for CLTR}

In order to develop a \ac{CRM} method for \ac{CLTR} with safety gaurantees, we aim to find a risk term that gives us a generalization bound as in Eq.~\ref{eq:generalizationbound}.
Importantly, this bound has to be effective in the \ac{LTR} setting, therefore, our approach should avoid action-based propensities.
 
We take inspiration from previous work by \citet{wu2018variance}, who use the fact that the R\'enyi divergence is an upper bound on the variance of an \ac{IPS} estimator:
\begin{equation}
    \mathrm{Var} \mleft[ \hat{U}_\text{action}(\pi) \mright] \leq d_2(\pi \,\Vert\, \pi_0),
    \label{divergence-bandit}
\end{equation}
 where $d_2$ is the exponentiated R\'enyi divergence between the new policy and the logging policy~\citep{renyi1961measures}:
 \begin{equation}
    d_2(\pi \,\Vert\, \pi_0) = \mathbb{E}_{q}\mleft[ \sum_{y} \mleft(\frac{\pi(y \mid q)}{\pi_0(y \mid q)}\mright)^2 \pi_0(y \mid q)  \mright].
    \label{eq:actionbaseddiv1}
\end{equation}
In other words, the dissimilarity between the logging policy and a new policy can be used to bound the variance of the IPS estimate of the new policy's performance.
 However, because this divergence is based on action propensities, it is not effective in the \ac{LTR} setting.
 
This section introduces a novel exposure-based measure of divergence that can produce a desired generalization bound for \ac{LTR} optimization.
Section~\ref{sec:normexposure} below introduces the concept of normalized exposure that treats rankings as exposure distributions.
Subsequently, Section~\ref{sec:varbound} proves that R\'enyi divergence based on normalized exposure can bound the variance of an exposure-based \ac{IPS} estimator.
Finally, Section~\ref{sec:perfbound1} uses this novel variance bound to construct a proven generalization bound for \ac{CLTR}.

\subsection{Normalized expected exposure}
\label{sec:normexposure}

R\'enyi divergence is only valid for probability distributions, e.g., $d_2(\pi \,\Vert\, \pi_0)$ with $\pi(y \mid q)$ and $\pi_0(y \mid q)$.
However, expected exposure is not a probability distribution, i.e., the values of $\rho(d)$ (Eq.~\ref{eq:exposurenew}) or $\rho_0(d)$ (Eq.~\ref{policy-aware-exposure1}) do not necessarily sum up to one, over all documents to be ranked.
This is because users generally examine more than a single item in a single displayed ranking~\citep{craswell2008experimental}, as a result, expected exposure can be seen as a distribution of multiple examinations.
Our insight is that a valid probability distribution can be obtained by normalizing the expected exposure:
\begin{equation}
        \rho'(d) = \frac{ \rho(d)}{\sum_{d' \in D} \rho(d')}
        = \frac{\rho(d)}{\mathrm{Z}},
    \label{norm-exposure}
\end{equation}
where the normalization factor is a constant that only depends on $K$, the (truncated) ranking length:
\begin{equation}
\begin{split}
    \mathrm{Z} = \sum_{d \in D}^{} \rho(d) & = \sum_{d \in D} \mathbb{E}_{y \sim \pi } \big[ P\big(E=1 \mid \text{rank}(d \mid y)\big)  \big]   \\
    & = \mathbb{E}_{y \sim \pi} \Big[  \sum_{d \in D}  P\big(E=1 \mid \text{rank}(d \mid y)\big)  \Big]    \\
    & = \mathbb{E}_{y \sim \pi} \Big[  \sum_{k=1}^K  P\big(E=1 \mid k \big)  \Big] \\
    & =  \sum_{k=1}^K  P\big(E=1 \mid k \big).  
    \end{split} 
    \label{total-exposure1}
\end{equation} 
In this way, $\mathrm{Z}$ can be seen as the expected amount of examination that any ranking will receive, and $\rho'$ as the probability distribution that indicates how it is expected to spread over documents.

An important property is that the ratio between two propensities is always equal to the ratio between their normalized counterparts:
\begin{equation}
    \frac{\rho(d)}{\rho_0(d)} = \frac{\rho'(d)}{\rho'_{0}(d)}.
    \label{eq:ratio}
\end{equation}
This is relevant to \ac{IPS} estimation since it only requires the ratios between propensities, the proofs in the remainder of this chapter make use of this property.

Finally, using the normalized expected exposure, we can introduce the exponentiated exposure-based R\'enyi divergence:
\begin{align}
    d_2(\rho \,\Vert\, \rho_0) &=  \mathbb{E}_{q} \bigg[ \sum_{d \in D_{q}}^{} \rho'_{0}(d) \mleft( \frac{\rho'(d)}{\rho'_{0}(d)} \mright)^{2}  \bigg].
    \label{eq:exposuredivergence}
\end{align}
The key difference between our exposure-based divergence and action-based divergence is that it allows policies to be very similar, even when they have no overlap in the rankings  they produce.
As an intuitive example, Figure~\ref{fig:exposure} displays three rankings and their associated normalized expected exposure distributions; these are the distributions for deterministic policies that give 100\% probability to one of the rankings.
Under action-based divergence, these policies would have the highest possible dissimilarity since they have no overlap in their possible actions, i.e., the rankings they give non-zero probability.
In contrast, exposure-based divergence gives high similarity between ranking 1 and ranking 2, since the differences in their exposure distribution are minor.
We note that these rankings still disagree on the placement of all documents except one.
Conversely, for ranking 1 and ranking 3, which also only agree on a single document placement, exposure-based divergence gives the lowest possible similarity score because their exposure distributions are highly mismatched.
Importantly, by solely considering differences in exposure distributions, exposure-based divergence naturally weighs differences at the bottom of rankings as less impactful than changes that affect the top.
As a result, exposure-based divergence more closely corresponds with common ranking metrics (Eq.~\ref{true-utility1}) than existing action-based divergences.

\subsection{Exposure-divergence bound on variance}
\label{sec:varbound}

We now provide proof that exposure-based divergence is an upper bound on the variance of \ac{IPS} estimators for \ac{CLTR}.\footnote{The following proof differs slightly from the original proof published in~\cite{gupta2023safe}, as we overlooked an additional constant term in the original paper.}
\begin{theorem}
\label{var-theorm}
    Given a ranking policy $\pi$ and logging policy $\pi_{0}$, with the expected exposures $\rho(d)$ and $\rho_{0}(d)$ respectively, the variance of the exposure-based \ac{IPS} estimate $\hat{U}(\pi)$ is upper-bounded by exposure-based divergence:
    \begin{equation}
        \begin{aligned}
            \mathrm{Var}_{q,y,c}\mleft[\hat{U}(\pi)\mright] \leq  \frac{\mathrm{Z} }{N} d_2(\rho \,\Vert\, \rho_0) + \frac{1}{N}.  \label{variance-full}
    \end{aligned}
    \end{equation} 
\end{theorem}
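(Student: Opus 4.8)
The plan is to start from the exact variance decomposition already stated in the excerpt,
\[
\mathrm{Var}_{y,c}\mleft[\hat{U}(\pi)\mid q\mright] = \sum_{d \in D_q} \frac{\rho(d)^2}{\rho_0(d)^2}\,\mathrm{Var}_{y,c}\mleft[c(d)\mid \pi_0, q\mright],
\]
and then handle the outer expectation over $q$ via the law of total variance, $\mathrm{Var}_{q,y,c}[\hat U(\pi)] = \mathbb{E}_q\mleft[\mathrm{Var}_{y,c}[\hat U(\pi)\mid q]\mright] + \mathrm{Var}_q\mleft[\mathbb{E}_{y,c}[\hat U(\pi)\mid q]\mright]$. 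The second term is where the extra additive $\tfrac{1}{N}$ comes from (this is the constant that was overlooked in the original paper): bounding it requires noting that the conditional expectation $\mathbb{E}_{y,c}[\hat U(\pi)\mid q] = \sum_{d}\rho(d)P(R{=}1\mid d,q)$ lies in $[0,1]$ after the appropriate normalization of $\rho$, so its variance is at most $\tfrac14$ — or, more cleanly, one bounds it by $\tfrac1N$ times a second-moment expression and folds it in. I would first dispatch the first (dominant) term and then return to tie off this constant.

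\textbf{Key steps for the dominant term.}
First I would observe that the $1/N$ prefactor appears because $\hat U(\pi)$ is an average of $N$ i.i.d.\ per-interaction terms, so $\mathrm{Var}_{y,c}[\hat U(\pi)\mid q] = \tfrac1N \mathrm{Var}_{y,c}[\,\sum_d \tfrac{\rho(d)}{\rho_0(d)}c(d)\mid q]$ for a single interaction. Next, I would bound this single-interaction variance by the corresponding second moment, $\mathrm{Var}[X]\le \mathbb{E}[X^2]$, and then use the fact that the clicks $c(d)$ are Bernoulli with $\mathbb{E}[c(d)\mid \pi_0,q] = \rho_0(d)P(R{=}1\mid d,q) \le \rho_0(d)$. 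The cross terms between distinct documents are the delicate part: either one expands $\mathbb{E}[(\sum_d \tfrac{\rho(d)}{\rho_0(d)}c(d))^2]$ and uses $c(d)c(d')\le c(d)$ pointwise (since clicks are in $\{0,1\}$, so $\sum_{d,d'}\ldots \le \big(\sum_d \tfrac{\rho(d)}{\rho_0(d)}\big)\cdot\max$-type bounds), or — and this is the route I expect the authors take — one uses $(\sum_d a_d c(d))^2 \le (\sum_d a_d c(d))\cdot(\sum_{d'} a_{d'})$ again exploiting $c(d)\in\{0,1\}$, then takes expectations to get $\le \sum_d \tfrac{\rho(d)^2}{\rho_0(d)^2}\rho_0(d)\cdot(\text{something})$. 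Whichever bound is used, after substituting $\mathbb{E}[c(d)]\le\rho_0(d)$ one arrives at $\sum_d \tfrac{\rho(d)^2}{\rho_0(d)}$, and then I would rewrite this using the normalized exposures via the identity $\rho(d)/\rho_0(d) = \rho'(d)/\rho'_0(d)$ (Eq.~\ref{eq:ratio}) and $\rho(d) = \mathrm{Z}\rho'(d)$, giving $\mathrm{Z}\sum_d \rho'_0(d)\big(\tfrac{\rho'(d)}{\rho'_0(d)}\big)^2$. Taking $\mathbb{E}_q$ of this is exactly $\mathrm{Z}\,d_2(\rho\Vert\rho_0)$ by definition (Eq.~\ref{eq:exposuredivergence}), yielding the $\tfrac{\mathrm{Z}}{N}d_2(\rho\Vert\rho_0)$ term.

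\textbf{Main obstacle.}
The hard part is the bookkeeping on the cross-document terms in the single-interaction second moment and getting the constant exactly right — this is precisely the step the original published proof got slightly wrong. Specifically, one must be careful that $\sum_{d,d'} \tfrac{\rho(d)\rho(d')}{\rho_0(d)\rho_0(d')}\mathbb{E}[c(d)c(d')]$ is genuinely controlled: using $c(d)c(d')\le c(d)$ is lossy but valid and collapses the double sum, at the cost of introducing the factor $\sum_{d'}\rho(d')/\rho_0(d')$ which is not obviously $\le \mathrm{Z}$ in general — so the cleaner move is to bound the whole single-interaction random variable $S = \sum_d \tfrac{\rho(d)}{\rho_0(d)}c(d)$ directly, noting $0 \le S$ and $\mathbb{E}[S] = \sum_d \rho(d)P(R{=}1\mid d,q) \le \mathrm{Z}$, then $\mathrm{Var}[S] \le \mathbb{E}[S^2]$ and split $S^2$ as needed. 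I would expect to spend most of the effort making this inequality tight enough to land on the clean right-hand side $\tfrac{\mathrm{Z}}{N}d_2(\rho\Vert\rho_0) + \tfrac1N$ rather than a looser bound, and on confirming the extra $\tfrac1N$ accounts for the inter-query variance term that the original argument dropped.
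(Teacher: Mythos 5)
Your overall skeleton matches the paper's proof: the law of total variance splits $\mathrm{Var}_{q,y,c}[\hat U(\pi)]$ into an expected within-query variance and a between-query variance, the latter is bounded by $1$ via the second moment together with unbiasedness and the assumption that the normalized utility satisfies $0\le U(\pi)\mid q\le 1$ (this is exactly where the extra $\tfrac{1}{N}$ comes from), and the former is converted into $\mathrm{Z}\,d_2(\rho\Vert\rho_0)$ using $c(d)^2=c(d)$, the position-based click factorization, $P(R{=}1\mid d,q)\le 1$, and the normalization identities $\rho(d)=\mathrm{Z}\rho'(d)$ and $\rho(d)/\rho_0(d)=\rho'(d)/\rho'_0(d)$. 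All of that is consistent with the paper.

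The genuine gap is precisely at the point you flag as the main obstacle: the cross-document terms. The paper never fights them. It invokes the assumption that, under the rank-based examination model, the per-document examinations (and hence the per-document click terms) are independent, so that
\[
\mathrm{Var}_{y,c}\Big[\sum_{d\in D_q}\tfrac{\rho(d)}{\rho_0(d)}c(d)\Big]
=\sum_{d\in D_q}\mathrm{Var}_{y,c}\Big[\tfrac{\rho(d)}{\rho_0(d)}c(d)\Big]
\le \sum_{d\in D_q}\mathbb{E}_{y,c}\Big[\tfrac{\rho(d)^2}{\rho_0(d)^2}c(d)\Big],
\]
and no cross terms ever arise. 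Without this step your proposed workarounds do not close: using $c(d)c(d')\le c(d)$ introduces the factor $\sum_{d'}\rho(d')/\rho_0(d')$, which, as you concede, is not bounded by $\mathrm{Z}$ (it can blow up when $\pi$ concentrates exposure where $\pi_0$ gives little), and bounding $\mathbb{E}[S^2]$ via $\mathbb{E}[S]\cdot\sup S$ fails for the same reason. The missing ingredient is therefore not a sharper second-moment inequality but a modelling assumption -- independence of examinations across documents -- that kills the double sum outright. Once you import that assumption, the remainder of your argument proceeds exactly as you describe and lands on $\tfrac{\mathrm{Z}}{N}d_2(\rho\Vert\rho_0)+\tfrac{1}{N}$.
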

\begin{proof}
    From the definition of $\hat{U}(\pi)$ (Eq.~\ref{cltr-obj}) and the assumption that queries $q$ are \ac{i.i.d}, the variance of the counterfactual estimator can be expanded by applying the law of total variance as follows:
    \begin{equation}
        \mathrm{Var}_{q,y,c}\mleft[\hat{U}(\pi )\mright] =  \frac{1}{N} \Big( \mathbb{E}_{q} \mleft[ \mathrm{Var}_{y,c}\mleft[ \hat{U}(\pi) \mid q\mright] \mright] + \mathrm{Var}_{q} \mleft[ \mathbb{E}_{y,c}\mleft[ \hat{U}(\pi) \mid q\mright] \mright] \Big) . 
        \label{eq:vardecom1}
    \end{equation}
    The second term (variance over queries) can be expanded as follows:
    \begin{align}
        \mathrm{Var}_{q} \mleft[ \mathbb{E}_{y,c}\mleft[ \hat{U}(\pi) \mid q\mright] \mright]
        & = \mathbb{E}_{q} \mleft[ \mathbb{E}_{y,c}\mleft[ \hat{U}(\pi) \mid q\mright] ^2 \mright] -  \mathbb{E}_{q} \mleft[ \mathbb{E}_{y,c}\mleft[ \hat{U}(\pi) \mid q\mright] \mright] ^2   \\ \nonumber
        & \leq \mathbb{E}_{q} \mleft[ \mathbb{E}_{y,c}\mleft[ \hat{U}(\pi) \mid q\mright] ^2 \mright] \\ \nonumber
        & = \mathbb{E}_{q} \mleft[ \mleft[ U(\pi) \mid q\mright] ^2 \mright] \\ \nonumber
        & \leq  1, \nonumber 
    \end{align}
    where in the second step, we use the unbiasedness property (Eq.~\ref{eq:unbiased_prop}) of the counterfactual estimator, and use the fact that the true utility is non-zero, i.e., $U(\pi) \geq 0$.
    In the last step, we make use of the fact that the true utility is bounded, and is upper bounded by $1$. This is a safe assumption if the utility is normalized, as is the case for: normalized discounted cumulative gain or click-through rate.

    Substituting it back in the counterfactual utility variance (Eq.~\ref{eq:vardecom1}), we get:
    \begin{equation}
        \mathrm{Var}_{q,y,c}\mleft[\hat{U}(\pi )\mright] \leq  \frac{1}{N} \Big( \mathbb{E}_{q} \mleft[ \mathrm{Var}_{y,c}\mleft[ \hat{U}(\pi) \mid q\mright] \mright] + 1 \Big) . 
        \label{eq:var_new}
    \end{equation}
    Next, since we have assumed a rank-based examination model (Section~\ref{sec:background:cltr}), the examinations of documents are independent.
    This allows us to rewrite the variance conditioned on a single query:
    \begin{align}
        \mathrm{Var}_{y,c}\mleft[\hat{U}(\pi \mid q)\mright] 
        & =  \mathrm{Var}_{y,c}\mleft[ \sum_{d \in D_{q}}^{}   \frac{\rho(d)}{\rho_{0}(d)} c(d, q) \mright] \\ \nonumber
        & = \sum_{d \in D_{q}}^{} \mathrm{Var}_{y,c}\mleft[  \frac{\rho(d)}{\rho_{0}(d)} c(d, q)  \mright] \\ \nonumber
        & \leq  \sum_{d \in D_{q}} \mathbb{E}_{c, y} \mleft[ \mleft( \frac{\rho(d)}{\rho_{0}(d)} c(d, q) \mright)^{2} \mright]. \nonumber 
    \end{align}
   Since: $c(d, q)^2 = c(d, q)$, we can further rewrite to:
    \begin{align}
    \sum_{d \in D_{q}} \mathbb{E}_{c, y} \mleft[ \mleft( \frac{\rho(d)}{\rho_{0}(d)} c(d, q) \mright)^{2} \mright]
         &= \sum_{d \in D_q} \mathbb{E}_{c, y} \mleft[ \mleft( \frac{\rho(d)}{\rho_{0}(d)} \mright)^{2} c(d, q)  \mright]   \\
         &=  \sum_{d \in D_q}   \mleft( \frac{\rho(d)}{\rho_{0}(d)} \mright)^{2} P(C=1 \mid d, q, \pi_{0}).    \nonumber
    \end{align}
    Next, we use Eq.~\ref{click-model} and~\ref{policy-aware-exposure1} to substitute the click probability;
    subsequently, we replace the examination propensities with normalized counterparts using Eq.~\ref{norm-exposure} and~\ref{eq:ratio};
    and lastly, we upper bound the result using the fact that $P(R=1 | d, q)  \leq 1$:
    \begin{equation}
    \begin{split}
    \sum_{d \in D_{q}}  \mathbb{E}_{c, y} \mleft[ \mleft( \frac{\rho(d)}{\rho_{0}(d)} c(d, q) \mright)^{2} \mright]
     &=  \sum_{d \in D_q}    \rho_{0}(d) \mleft( \frac{\rho(d)}{\rho_{0}(d)} \mright)^{2}  P(R=1 \,|\, d, q) \\
    & = \sum_{d \in D_q}  \mathrm{Z}\, \rho'_{0}(d) \mleft( \frac{\rho'(d)}{\rho'_{0}(d)} \mright)^{2}  P(R=1 | d, q) \\  
    & \leq \mathrm{Z}   \sum_{d \in D_q}  \rho'_{0}(d) \mleft( \frac{\rho'(d)}{\rho'_{0}(d)} \mright)^{2}. 
   \end{split}
    \end{equation}
   Finally, we place this upper bound for a single query back into the expectation over all queries (Eq.~\ref{eq:vardecom1}):
    \begin{equation}
        \frac{1}{N} \,  \mathbb{E}_{q} \mleft[ \mathrm{Var}_{y,c} \mleft[ \hat{U}(\pi ) \mid q \mright] \mright]   
         \leq \frac{ \mathrm{Z} }{N} \mathbb{E}_{q} \bigg[ \sum_{d \in D_q}  \rho'_{0}(d) \mleft( \frac{\rho'(d)}{\rho'_{0}(d)} \mright)^{2}  \bigg].
         \label{eq:varprooflast}
    \end{equation}
    Therefore, by Eq.~\ref{eq:vardecom1}, \ref{eq:varprooflast}, \ref{eq:var_new}, and the definition of exposure-based divergence in Eq.~\ref{eq:exposuredivergence}, it is a proven upper bound of the variance.
\end{proof}

\subsection{Exposure-divergence bound on performance}
\label{sec:perfbound1}

Using the upper bound on the variance of an \ac{CLTR} \ac{IPS} estimator that was proven in Theorem~\ref{var-theorm}, we can now introduce a generalization bound for the \ac{CLTR} estimator.

\begin{theorem}
\label{CLTR-bound1}
    Given the true utility $U(\pi)$ (Eq.~\ref{true-utility1}) and its exposure-based \ac{IPS} estimate $\hat{U}(\pi)$ (Eq.~\ref{cltr-obj}), for the ranking policy $\pi$ and the logging policy $\pi_{0}$ with expected exposures $\rho(d)$ and $\rho_{0}(d)$, respectively,
    the following generalization bound holds with probability $1 - \delta$:\footnote{The following proof differs slightly from the original proof published in~\cite{gupta2023safe}, as we overlooked an additional constant term in the original paper.}
\begin{equation}
    U(\pi) \geq \hat{U}(\pi) -  \sqrt{ \frac{\mathrm{Z}}{N}  \Big(\frac{1-\delta}{\delta}\Big) d_2(\rho \,\Vert\, \rho_0)} - \sqrt{ \frac{\mathrm{1}}{N}  \Big(\frac{1-\delta}{\delta}\Big)}.
\label{variance}
\end{equation}
\end{theorem}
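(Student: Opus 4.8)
The plan is to derive the bound from the unbiasedness of $\hat{U}(\pi)$ together with the variance bound of Theorem~\ref{var-theorm}, via a one-sided concentration inequality. First I would recall that $\hat{U}(\pi)$ is unbiased (Eq.~\ref{eq:unbiased_prop}), i.e. $\mathbb{E}_{q,y,c}[\hat{U}(\pi)] = U(\pi)$, and that Theorem~\ref{var-theorm} gives $\mathrm{Var}_{q,y,c}[\hat{U}(\pi)] \leq \sigma^2_{\max} := \tfrac{\mathrm{Z}}{N} d_2(\rho \,\Vert\, \rho_0) + \tfrac{1}{N}$.

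Next I would invoke the one-sided Chebyshev (Cantelli) inequality: for a random variable $X$ with mean $\mu$ and variance $\sigma^2$ and any $a > 0$, $P(X - \mu \geq a) \leq \sigma^2/(\sigma^2 + a^2)$. Applying this with $X = \hat{U}(\pi)$ and $\mu = U(\pi)$, and solving $\delta = \sigma^2/(\sigma^2 + a^2)$ for $a$, yields $a = \sigma\sqrt{(1-\delta)/\delta}$. Hence, with probability at least $1-\delta$, $\hat{U}(\pi) - U(\pi) < \sqrt{\mathrm{Var}_{q,y,c}[\hat{U}(\pi)] \cdot (1-\delta)/\delta}$, i.e. $U(\pi) \geq \hat{U}(\pi) - \sqrt{\mathrm{Var}_{q,y,c}[\hat{U}(\pi)] \cdot (1-\delta)/\delta}$. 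Monotonicity of the square root together with the variance bound then gives $U(\pi) \geq \hat{U}(\pi) - \sqrt{\bigl(\tfrac{\mathrm{Z}}{N} d_2(\rho \,\Vert\, \rho_0) + \tfrac{1}{N}\bigr)(1-\delta)/\delta}$.

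Finally I would split the square root using subadditivity, $\sqrt{x+y} \leq \sqrt{x} + \sqrt{y}$ for $x,y \geq 0$, which separates the $d_2$-dependent term from the constant term and produces exactly the claimed bound in Eq.~\ref{variance}.

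The computation is routine; the only genuine modeling choice is to use the one-sided Cantelli inequality rather than the two-sided Chebyshev inequality, since that is what produces the factor $(1-\delta)/\delta$ rather than $1/\delta$, and it is legitimate because we only need a guarantee in one direction (that $U(\pi)$ is not overestimated). A secondary point of care, and the reason the statement differs from the originally published version flagged in the footnote, is that the variance bound of Theorem~\ref{var-theorm} already contains the additive $\tfrac1N$ coming from $\mathrm{Var}_q[\mathbb{E}_{y,c}[\hat{U}(\pi)\mid q]] \leq 1$; this term must be carried through both the concentration step and the square-root split, which is precisely what contributes the extra $\sqrt{\tfrac{1}{N}(1-\delta)/\delta}$ summand.
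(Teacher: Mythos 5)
Your proposal is correct and follows essentially the same route as the paper: Cantelli's one-sided inequality applied to the unbiased estimator, substitution of the variance upper bound from Theorem~\ref{var-theorm}, and then splitting the square root via $\sqrt{x+y}\le\sqrt{x}+\sqrt{y}$ to separate the divergence term from the constant $1/N$ term. The only cosmetic difference is that the paper labels the final square-root split as an application of the Cauchy--Schwarz inequality, whereas you call it subadditivity of the square root, which is the more precise name for the same step.
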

\begin{proof}
    As per Cantelli's inequality~\cite{ghosh2002probability}, given an estimator $\hat{X}$ with the expected value $\mathbb{E}[\hat{X}]$ and variance $\mathrm{Var}[\hat{X}]$, the following tail-bound holds:
    \begin{equation}
        P(\hat{X} - \mathbb{E}[\hat{X}] \geq \lambda ) \leq \frac{\mathrm{Var}[\hat{X}]}{\mathrm{Var}[\hat{X}] + \lambda^2}. 
    \label{cantelli}
    \end{equation} 
    Since $\lambda > 0$ is a free parameter, we can define $\delta$ such that:
    \begin{equation}
        \delta =  \frac{\mathrm{Var}[\hat{X}]}{\mathrm{Var}[\hat{X}] + \lambda^2},
         \qquad
         \lambda = \sqrt{\frac{1-\delta}{\delta} \mathrm{Var}[\hat{X}]}.
    \end{equation} 
    Consequently, the following inequality holds:
    \begin{equation}
        P(\mathbb{E}[\hat{X}] \geq \hat{X} - \lambda ) \geq 1-\delta.
         \label{cantelli-inequality} 
    \end{equation}
    Building on this inequality, the following inequality must hold with probability $1-\delta$:
    \begin{equation}
        U(\pi) \geq \hat{U}(\pi) - \sqrt{ \frac{1-\delta}{\delta} \mathrm{Var}_{q,y,c}\mleft[\hat{U}(\pi)\mright]}.
        \label{inequality} 
    \end{equation}
    Next, we replace the variance with the upper bound from Theorem~\ref{var-theorm}, which results in the following bound:
    \begin{equation}
        U(\pi) \geq \hat{U}(\pi) -  \sqrt{ \frac{\mathrm{Z}}{N}  \Big(\frac{1-\delta}{\delta}\Big) d_2(\rho \,\Vert\, \rho_0) + \Big( \frac{1-\delta}{\delta N} \Big)}.
    \end{equation}
    By applying the Cauchy–Schwarz inequality, we get:
    \begin{equation}
        U(\pi) \geq \hat{U}(\pi) -  \sqrt{ \frac{\mathrm{Z}}{N}  \Big(\frac{1-\delta}{\delta}\Big) d_2(\rho \,\Vert\, \rho_0)} - \sqrt{ \frac{\mathrm{1}}{N}  \Big(\frac{1-\delta}{\delta}\Big)}.
    \end{equation}
    This completes the proof.
\end{proof}

\begin{figure}[t]
\centering
\includegraphics[width=0.75\columnwidth,trim= 0 -0.25cm 0 0,clip]{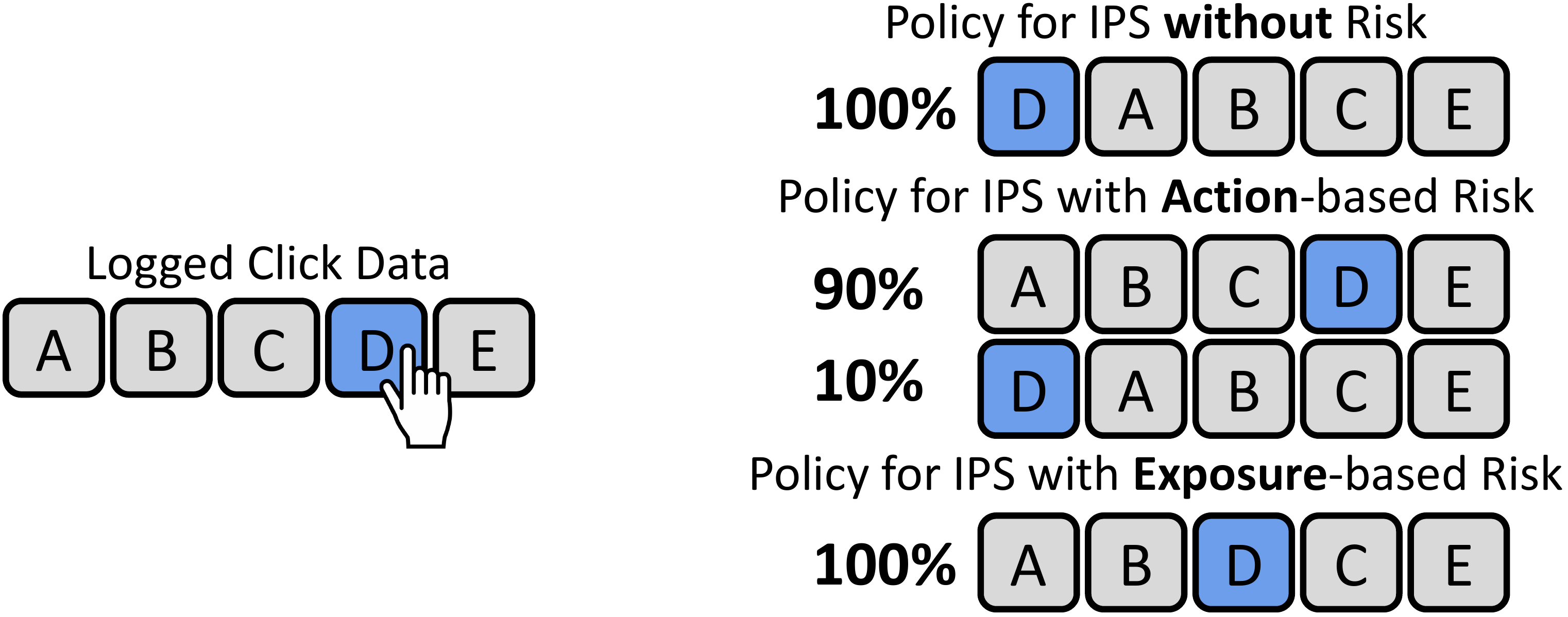}
\caption{
Example comparison of the optimal policy for a single logged click according to three different risk estimators.
}
\label{fig:riskcomparison}
\end{figure}

\header{Risk in \ac{CLTR}}
Based on the generalization bound proposed in Theorem~\ref{CLTR-bound1}, we see that it proposes the following measure of risk:
$\text{Risk}(\delta) = \sqrt{ \frac{\mathrm{Z}}{N}  \big(\frac{1-\delta}{\delta}\big) d_2(\rho \,\Vert\, \rho_0)}$ (cf.\ Eq.~\ref{eq:generalizationbound}).
Clearly, this risk is mostly determined by the exposure-based divergence between the new policy and the logging policy.
Thereby, it states that the greater the difference between how exposure is spread over documents by the logging policy and the new policy, the higher the risk involved.
Therefore, to optimize this lower bound, one has to balance the maximization of the estimated utility $\hat{U}(\pi)$ and the minimization of risk by not letting $\pi$ differ too much from $\pi_0$ in terms of exposure.

Furthermore, we see that our measure of risk diminishes as $N$ increases.
As a result, the risk term will overwhelm the \ac{IPS} term when $N$ is very low, as there is much risk involved when estimating based on a few interactions.
Conversely, when $N$ is very large, the risk term mostly disappears, as the \ac{IPS} estimate is more reliable when based on large numbers of interactions.
Thus, during optimization, the generalization bound is expected to mostly help with avoiding initial decreases in performance, while still converging at the same place as the standard \ac{IPS} estimator.

Lastly, the $\delta$ parameter determines the \emph{safety} that is provided by the risk, where a lower $\delta$ makes it more likely that the generalization bound holds.
Accordingly, as $\delta$ increases the risk term becomes smaller and will thus have less effect on optimization.

To the best of our knowledge, this is the first exposure-based generalization bound, which makes it the first method designed for safe optimization in the \ac{CLTR} setting.

\header{Illustrative comparison}
To emphasize the working and novelty of our exposure-based risk, a comparison of the optimal policies for action-based risk, exposure-based risk, and no risk are shown in Figure~\ref{fig:riskcomparison}.
We see that \ac{IPS} without a risk term places the once-clicked document at the first position, with 100\% probability.
This is very risky, as it greatly impacts the ranking while only being based on a single observation.
The action-based risk tries to mitigate this risk with a probabilistic policy that gives most probability to the logging policy ranking (90\%) and the remainder to the IPS ranking (10\%).
In contrast, with exposure-based risk, the optimal policy makes the risk and utility trade-off in a single ranking, that mostly follows the logging policy but places the clicked document slightly higher.

This example illustrates that because action-based risk does not have a similarity measure between rankings, it can only produce a probabilistic interpolation between the logging policy and IPS rankings.
Alternatively, because exposure-based risk does have such a measure, it produces a ranking that is neither the logging ranking nor the IPS ranking, but one with an exposure distribution that is similar to both.
Thereby, exposure-based risk has a more elegant and natural method of balancing utility maximization and risk minimization in the \ac{CLTR} setting.

\section{A Novel Counterfactual Risk Minimization Method for LTR}
\label{risk-cltr}

Now that we have the proven generalization bound described in Section~\ref{sec:perfbound1} (Theorem~\ref{CLTR-bound1}),
we can propose a novel risk-aware \ac{CLTR} method for optimizing it.
Accordingly, the aim of our method is to find the policy that maximizes this high-confidence lower bound on the true performance.
In formal terms, we have the following optimization problem:
\begin{equation}
    \max_{\pi}  \hat{U}(\pi) -  \sqrt{ \frac{\mathrm{Z}}{N}  \Big(\frac{1-\delta}{\delta}\Big) d_2(\rho \,\Vert\, \rho_0)}.
\label{objgenbound1}
\end{equation}
Note that we ignore the term $\sqrt{ \frac{\mathrm{1}}{N}  \Big(\frac{1-\delta}{\delta}\Big)}$, since it is constant with respect to the policy $\pi$, and therefore can be disregarded for optimization purposes.
We propose to train a stochastic policy $\pi$ via stochastic gradient descent, therefore, we need to derive the gradient and find a method of computing it.
For the computation of the gradient w.r.t.\ the utility $\hat{U}(\pi)$, the first part of Eq.~\ref{objgenbound1}, we refer to several prior work that discusses this topic extensively~\citep{oosterhuis2021computationally, oosterhuis2020policy, yadav2021policy}.
Thus, we can focus our attention on the second part of Eq.~\ref{objgenbound1}:
\begin{equation}
\nabla_{\!\pi} \sqrt{ \frac{\mathrm{Z}}{N}  \Big(\frac{1-\delta}{\delta}\Big) d_2(\rho \,\Vert\, \rho_0)}
         = \sqrt{ \frac{\mathrm{Z}(1-\delta)}{ 4N\delta d_2(\rho \,\Vert\, \rho_0)}} \nabla_{\!\pi} d_2(\rho \,\Vert\, \rho_0).
\end{equation}
To derive the gradient of the exposure-based divergence function, we use the relation between $\rho$ and $\rho'$ from Eq.~\ref{total-exposure1} and \ref{eq:ratio}:
\begin{equation}
\begin{split}
    \nabla_{\!\pi} d_2(\rho \,\Vert\, \rho_0)
    &=  \nabla_{\!\pi} \mathbb{E}_{q} \bigg[ \sum_{d \in D_q} \!  \rho'_{0}(d) \mleft( \frac{\rho'(d)}{\rho'_{0}(d)} \mright)^{2}   \bigg] 
    \\
    &= \frac{2}{\mathrm{Z}} \, \mathbb{E}_{q} \bigg[ \! \sum_{d \in D_q} \!\! \frac{\rho(d)}{\rho_{0}(d)} \nabla_{\!\pi} \rho(d)   \bigg] .
\end{split}
\end{equation}
\noindent Thus, we only need the gradient w.r.t.\ the exposure of a document ($\nabla_{\!\pi} \rho(d)$) to complete our derivation.
If $\pi$ is a \ac{PL} ranking model, one can make use of the specialized gradient computation algorithm from~\citep{oosterhuis2021computationally}.
However, for this chapter, we will not make further assumptions about $\pi$ and apply the more general log-derivate trick from the REINFORCE algorithm~\cite{williams1992simple}:
\begin{equation}
\nabla_{\!\pi} \rho(d) = \mathbb{E}_{y \sim \pi} \big[ P\big(E=1 \mid \text{rank}(d \mid y)\big)  \big]  \nabla_{\!\pi} \log \pi(y).
\end{equation}
Putting all of the previous elements back together, gives us the gradient w.r.t.\ the exposure-based risk function:
\begin{equation}
\mbox{}\hspace*{-2.5mm}
  \sqrt{\!\frac{
    1-\delta
   }{ N \delta \, \mathrm{Z}  \, d_2(\rho \,\Vert\, \rho_0) }} 
      \mathbb{E}_{q, y \sim \pi} \!\mleft[
      \!\mleft( \sum_{k=1}^K \! \frac{\rho(y_k)}{\rho_{0}(y_k)} P(E=1 |\, k)\!\mright) \nabla_{\!\pi}\!\log \pi(y) 
     \!\mright],
     \hspace*{-2.5mm}\mbox{}
\end{equation}
where $y_k$ is the document at rank $k$ in ranking $y$.
For a close approximation of this gradient, we substitute the gradient with the queries from the given dataset, and the rankings sampled from $\pi$ during optimization~\cite{williams1992simple,oosterhuis2021computationally}.

Similarly, since the exact computation of  is $d_2(\rho \,\Vert\, \rho_0)$ infeasible in practice, we introduce a sample-based empirical divergence estimator:
\begin{equation}
        \hat{d}_{2}(\rho \mid\mid \rho_{0}) = \frac{1}{N} \sum_{i=1}^{N} \sum_{d \in D_{q_i}}^{}   \rho_{0}'(d) \mleft( \frac{\rho'(d)}{\rho_{0}'(d)} \mright)^{2} .
    \label{empirical-div}
\end{equation}
This is an unbiased estimate of the true divergence given that the sampling process is truly Monte Carlo~\cite{james1980monte}.

\section{Experimental Setup}

For our experiments, we follow the semi-synthetic experimental setup that is common in the \ac{CLTR} literature~\citep{oosterhuis2021unifying,oosterhuis2021robust,joachims2017unbiased,vardasbi2020inverse}.
We make use of the three largest publicly available \ac{LTR} datasets: Yahoo!\ Webscope~\cite{chapelle2011yahoo}, MSLR-WEB30k~\citep{qin2013introducing}, and Istella~\citep{dato2016fast}.
The datasets consist of queries, a preselected list of documents per query, query-document feature vectors, and manually-graded relevance judgements for each query-document pair.
To generate clicks, we follow previous work~\citep{oosterhuis2021unifying,oosterhuis2021robust,vardasbi2020inverse} and train a logging policy on a $3\%$ fraction of the relevance judgements.
This simulates a real-world setting, where a production ranker trained on manual judgements is used to collect click logs, which can then be used for subsequent click-based optimization. 
Typically, in real-world ranking settings, given that the production ranker is used on live-traffic, it is deemed as a safe policy that can be trusted with real users.

We simulate a top-$K$ ranking setup~\cite{oosterhuis2020policy} where five documents are presented at once.
Clicks are generated with our assumed click model (Eq.~\ref{click-model}) and the following rank-based position-bias:
\begin{equation}
    P(E=1 \mid q, d, y) = 
\begin{cases}
    \left(\frac{1}{\textrm{rank}(d \vert y)}\right)^2& \text{if } \textrm{rank}(d \mid y) \leq 5,\\
    0              & \text{otherwise}.
\end{cases}    
\end{equation}
In real-world click data, the observed \ac{CTR} is typically very low~\citep{saito2020open,chen2019tiangong,li2010contextual};\ 
hence, to simulate such a sparse click settings, we apply the following transformation from relevance judgements to relevance probabilities:
\begin{equation}
    P(R = 1 \mid q, d) = 0.025 * rel(q,d) + 0.2,
    \label{click-model-simul}
\end{equation}
where $rel(q,d) \in \{0,1,2,3,4\}$ is the relevance judgement for the query-document pair and $0.2$ is added as click noise. 
During training, the only available data consists of clicks generated on the training and validation sets, no baseline method has access to the underlying relevance judgements (except the skyline).

Furthermore, we assume a setting where the exact logging policy is not available during training.
As a result, the $\hat{\rho}_0$ propensities have to estimated, we use a simple frequency estimate following~\citep{oosterhuis2021unifying}:
\begin{equation}
\mbox{}\hspace*{-2mm}
    \hat{\rho}_0(d ) = \sum^N_{i=1} \frac{\mathds{1}\big[q = q_i\big]}{\sum^N_{j=1} \mathds{1}\big[q = q_j\big]}  P\big( E= 1 \mid \text{rank}(d \mid y_i)\big). \label{prop-estimate}
\hspace*{-2mm}\mbox{}    
\end{equation}
For the action-based baselines, the action propensities $\hat{\pi}_0(y \mid q)$ are similarly estimated based on observed frequencies:
\begin{equation}
        \hat{\pi}_0(y \,|\, q) = \prod_{k=1}^{K-1} \hat{\pi}_0(y_k \,|\, q), \hspace*{1mm} %
        \hat{\pi}_0(y_k \,|\, q) = \sum^N_{j=1} \frac{ \mathds{1}\big[y_k = y_j] }{ \sum^N_{j=1} \mathds{1}\big[q = q_j\big] },
    \label{action-estimation}
\end{equation}
where $\hat{\pi}_0(y_k \,|\, q)$ is the estimated probability of $d$ appearing at rank $k$ for query $q$. 
As is common in \ac{CLTR}~\citep{oosterhuis2020learning, saito2021counterfactual, joachims2017unbiased}, we clip propensities by $10 / \sqrt{N}$ in the training set, to reduce variance, but not in the validation set.

We optimize neural \ac{PL} ranking models~\citep{oosterhuis2021computationally} with early stopping based on validation clicks to prevent overfitting. 
For the REINFORCE policy-gradient, we follow~\cite{yadav2021policy} and use the average reward per query as a control-variate for variance reduction.

As our evaluation metric, we compute NDCG@5 metric using the relevance judgements on the test split of each dataset~\citep{jarvelin2002cumulated}. 
All reported results are averages over ten independent runs, significant testing is performed with a two-sided student-t test.

Finally, the following methods are included in our comparisons:
\begin{enumerate}[label=(\roman*), leftmargin=*]
    \item \emph{Naive}. As the most basic baseline, we train on the generated clicks without any correction (equivalent to $\forall d, \, \rho_0(d)=1 $). %
     \item  \emph{Skyline.} To compare with the highest possible performance, this baseline is trained on the actual relevance judgements.
    \item  \emph{Action-based IPS.} Standard IPS estimation (Eq.~\ref{eq:actionips}) that is not designed for ranking and thus uses action-based propensities.
    \item  \emph{Action-based \ac{CRM}.} Standard \ac{CRM} (Eq.~\ref{crm-loss}) that is also not designed for ranking, for the risk function we use the action-based divergence function in Eq.~\ref{eq:actionbaseddiv1}.
    \item  \emph{Exposure-based IPS}. The IPS estimator designed for \ac{CLTR} with exposure-based propensities (Eq.~\ref{cltr-obj}).
    The most important baseline, as it is the prevalent approach in the field~\cite{oosterhuis2020policy,oosterhuis2021unifying}.
     \item  \emph{Exposure-based \ac{CRM}.} Our proposed \ac{CRM} method (Eq.~\ref{objgenbound1}) using a risk function based on exposure-based divergence.
\end{enumerate}

\section{Results and Discussion}

\subsection{Comparison with baseline methods}
The main results of our experimental comparison are presented in Figure~\ref{fig:mainresults_part1} and \ref{fig:mainresults_part2}, and Table~\ref{tab:dcgresults_part1} and \ref{tab:dcgresults_part2}.
Figure~\ref{fig:mainresults_part1} and \ref{fig:mainresults_part2} display the performance curves of the different methods as the number of logged interactions ($N$) increases.
Table~\ref{tab:dcgresults_part1} and \ref{tab:dcgresults_part2} present the performance at $N\in\{4 \cdot 10^2, 4 \cdot 10^7, 10^9\}$ and indicate whether the observed differences with our exposure-based \ac{CRM} method are statistically significant.

We start by considering the performance curves in Figure~\ref{fig:mainresults_part1} and \ref{fig:mainresults_part2}.
We see that both the action-based and exposure-based \ac{IPS} baselines have an initial period of very similar performance that is far below the logging policy.
Around $N\approx10^4$ their performance is comparable to the logging policy, and finally at $N=10^9$ the exposure-based IPS has reached optimal performance, while the performance of action-based IPS is still far from optimal.
We can attribute this initial poor performance to the high variance problem of IPS estimation;
when $N$ is small, variance is at its highest, resulting in risky and sub-optimal optimization by the IPS estimators.
However, even when $N=10^9$, the variance of the action-based IPS estimator is too high to reach optimal performance, due to its extremely small propensities.
This illustrates why the introduction of exposure-based propensities was so important to the \ac{CLTR} field, and that even exposure-based IPS produces unsafe optimization when little data is available or variance from interactions is high.
\begin{figure}[!t]
    \centering
    {\renewcommand{\arraystretch}{0.01}%
    \setlength{\tabcolsep}{0.04cm}%
    \begin{tabular}{c r r}
        & 
        \multicolumn{1}{c}{\small Yahoo! Webscope}
        & 
        \multicolumn{1}{c}{\small MSLR-WEB30k}
        \\[2mm]
        \rotatebox[origin=lt]{90}{\hspace{0.77cm}\small NDCG@5} &
        \includegraphics[scale=0.475]{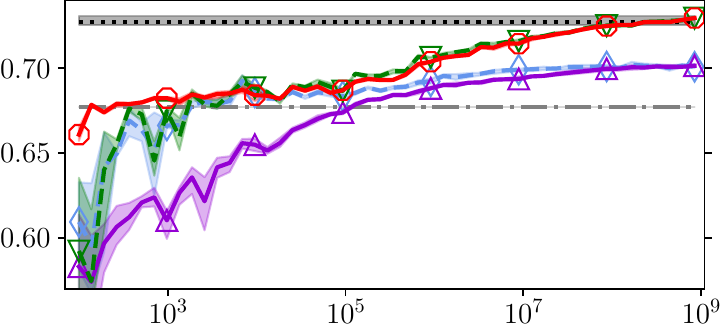} &
        \includegraphics[scale=0.475]{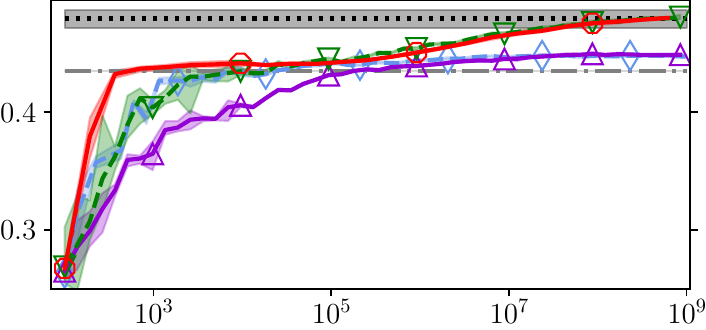}
        \\
        \rotatebox[origin=lt]{90}{\hspace{0.65cm} \small NDCG@5} &
        \includegraphics[scale=0.475]{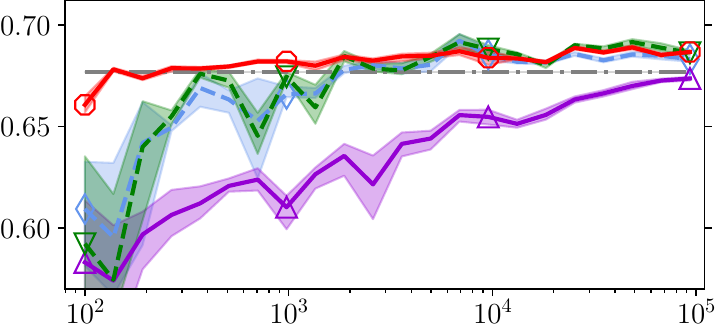} &
        \includegraphics[scale=0.475]{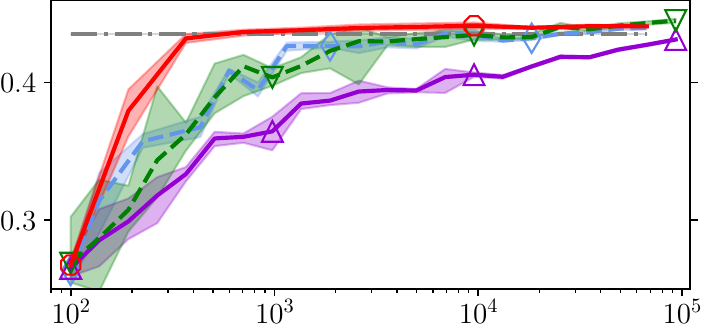}
        \\
        & \multicolumn{1}{c}{\small Number of interactions simulated ($N$)}
        & \multicolumn{1}{c}{\small Number of interactions simulated ($N$)}
        \\[2mm]
    \end{tabular}
    } %
    \vspace{0.3\baselineskip}
    \begin{tabular}{c}
        \multicolumn{1}{c}{\includegraphics[scale=0.51]{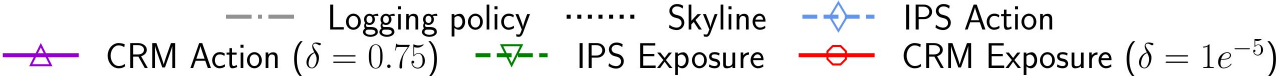}}
    \end{tabular}
    \caption{
        Performance in NDCG@5 of various \ac{IPS} and \ac{CRM} methods for \ac{CLTR} on Yahoo! Webscope and MSLR-WEB30k datasets.
        The top-row presents the results when the size of the training data is varied from $10^2$ to $10^9$.
        The bottom-row is a zoomed-in view, focusing on the low-data region from $10^2$ to $10^5$.
        Results are averages over 10 runs; shaded areas indicate 80\% confidence intervals.
    }
    \label{fig:mainresults_part1}
    \vspace{0.5em}
\end{figure}
\begin{figure}[h]
    \centering
    {\renewcommand{\arraystretch}{0.01}%
    \setlength{\tabcolsep}{0.04cm}%
    \begin{tabular}{c r}
        &
         \multicolumn{1}{c}{\small Istella}
        \\[2mm]
        \rotatebox[origin=lt]{90}{\hspace{0.77cm}\small NDCG@5} &
        \includegraphics[scale=0.475]{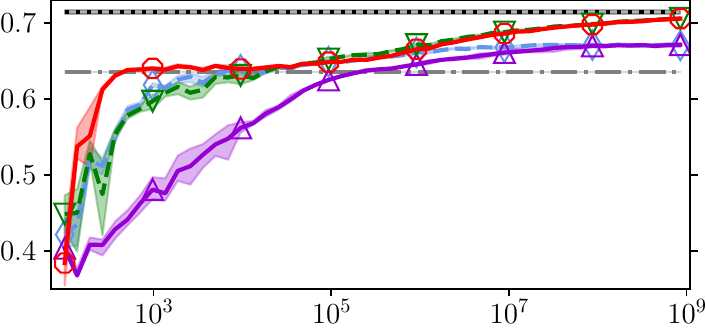}
        \\
        \rotatebox[origin=lt]{90}{\hspace{0.65cm} \small NDCG@5} &
        \includegraphics[scale=0.475]{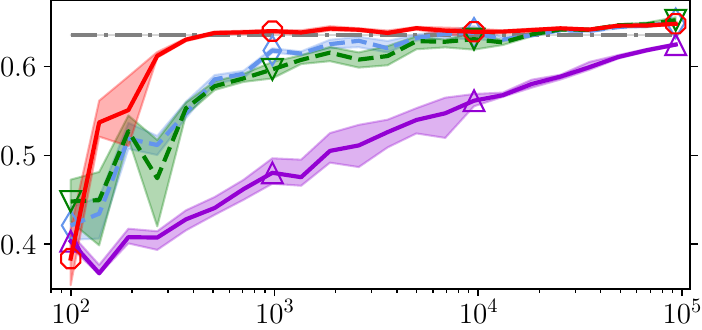}
        \\
        & \multicolumn{1}{c}{\small Number of interactions simulated ($N$)}
        \\[2mm]
    \end{tabular}
    } %
    
    \vspace{0.3\baselineskip}
    \begin{tabular}{c}
        \multicolumn{1}{c}{\includegraphics[scale=0.51]{04-safety_sigir/figures/legend1.pdf}}
    \end{tabular}
    \caption{
        Performance in NDCG@5 of various \ac{IPS} and \ac{CRM} methods for \ac{CLTR} on the Istella dataset.
        The top-row presents the results when the size of the training data is varied from $10^2$ to $10^9$.
        The bottom-row is a zoomed-in view, focusing on the low-data region from $10^2$ to $10^5$.
        Results are averages over 10 runs; shaded areas indicate 80\% confidence intervals.
    }
    \label{fig:mainresults_part2}
\end{figure}

Next, we consider whether action-based \ac{CRM} is able to mitigate the high variance problem of action-based \ac{IPS}.
Despite being a proven generalization bound, Figure~\ref{fig:mainresults_part1} and \ref{fig:mainresults_part2} clearly show us that action-based \ac{CRM} only leads to decreases in performance compared to its IPS counterpart.
It appears that this happens because the logging policy is not available in our setup, and the propensities have to be estimated from logged data.
Consequently, the action-based risk pushes the optimization to mimic the exact rankings that were observed during logging.
Thus, due to the variance introduced from the sampling of rankings from the logging policy, it appears that action-based \ac{CRM} has an even higher variance problem than action-based \ac{IPS}.
As expected, our results thus clearly indicate that action-based \ac{CRM} is also unsuited for the \ac{CLTR} setting, to our surprise; it is substantially worse than its IPS counterpart.

\begin{table}[!t]
    \centering
    \caption{\small
      NDCG@5 performance for Yahoo! Webscope and MSLR-WEB30k datasets under different settings for several values of $N$, the number of logged interactions in the simulated training set.
        Values are averages over 10 independent runs on the held-out test sets; bold figures mark the highest score.
        Differences from the exposure-based \ac{CRM} are assessed with a two-sided Student's \textit{t}-test: $^{\tiny \blacktriangledown}$ denotes significantly lower (p\,$<$\,0.01), while $^{\tiny \star}$ indicates no significant difference.}
    \label{tab:dcgresults_part1}
    \vspace{0.5em}    %
    \setlength\tabcolsep{3pt}
    \resizebox{\columnwidth}{!}{%
      \begin{tabular}{l ccc ccc}
        \toprule
        & \multicolumn{3}{c}{Yahoo! Webscope}
        & \multicolumn{3}{c}{MSLR-WEB30k} \\
        \cmidrule(r){2-4} \cmidrule(r){5-7}
        & $N=4\!\cdot\!10^{2}$ & $N=4\!\cdot\!10^{7}$ & $N=10^{9}$
        & $N=4\!\cdot\!10^{2}$ & $N=4\!\cdot\!10^{7}$ & $N=10^{9}$ \\
        \midrule
        Logging              & 0.677 & 0.677 & 0.677              & 0.435 & 0.435 & 0.435 \\
        Skyline              & 0.727 & 0.727 & 0.727              & 0.479 & 0.479 & 0.479 \\
        \midrule
        Naive              & 0.652 \small (0.021)$^{\tiny \blacktriangledown}$ & 0.694 \small (0.000)$^{\tiny \blacktriangledown}$ & 0.695 \small (0.000)$^{\tiny \blacktriangledown}$             & 0.353 \small (0.003)$^{\tiny \blacktriangledown}$ & 0.448 \small (0.000)$^{\tiny \blacktriangledown}$ & 0.448 \small (0.001)$^{\tiny \blacktriangledown}$ \\
        Action IPS              & 0.656 \small (0.008)$^{\tiny \blacktriangledown}$ & 0.701 \small (0.001)$^{\tiny \blacktriangledown}$ & 0.701 \small (0.001)$^{\tiny \blacktriangledown}$             & 0.359 \small (0.007)$^{\tiny \blacktriangledown}$ & 0.448 \small (0.001)$^{\tiny \blacktriangledown}$ & 0.448 \small (0.001)$^{\tiny \blacktriangledown}$ \\
        Action \ac{CRM}              & 0.617 \small (0.004)$^{\tiny \blacktriangledown}$ & 0.698 \small (0.001)$^{\tiny \blacktriangledown}$ & 0.700 \small (0.001)$^{\tiny \blacktriangledown}$             & 0.359 \small (0.005)$^{\tiny \blacktriangledown}$ & 0.448 \small (0.001)$^{\tiny \blacktriangledown}$ & 0.449 \small (0.001)$^{\tiny \blacktriangledown}$ \\
        Exp.\ IPS              & 0.659 \small (0.010)$^{\tiny \blacktriangledown}$ & \textbf{0.723 \small (0.001)}$^{\tiny \star}$ & \textbf{0.730 \small (0.001)}$^{\tiny \star}$             & 0.389 \small (0.014)$^{\tiny \blacktriangledown}$ & \textbf{0.474 \small (0.001)}$^{\tiny \star}$ & \textbf{0.481 \small (0.001)}$^{\tiny \star}$ \\
        Exp.\ \ac{CRM}             & \textbf{0.677 \small (0.001)\phantom{$^{\tiny \blacktriangledown}$}} & \textbf{0.723 \small (0.001)\phantom{$^{\tiny \star}$}} & \textbf{0.730 \small (0.000)\phantom{$^{\tiny \star}$}}             & \textbf{0.434 \small (0.001)\phantom{$^{\tiny \blacktriangledown}$}} & \textbf{0.473 \small (0.001)\phantom{$^{\tiny \star}$}} & \textbf{0.480 \small (0.001)\phantom{$^{\tiny \star}$}} \\
        \bottomrule
      \end{tabular}%
    } %
\end{table}

\begin{table}[!t]
    \centering
    \caption{
     \small NDCG@5 performance for Istella dataset under different settings for several values of $N$,
      the number of logged interactions in the simulated training set.
      Reported numbers are averages over 10 independent runs evaluated on the held-out test-sets; 
       bold numbers indicate the highest performance.
      Statistical significance for differences with the exposure-based \ac{CRM} 
       are measured via a two-sided student-t test:
      $^{\tiny \blacktriangledown}$ indicates methods with significantly lower NDCG ($p<0.01$), 
       and $^{\tiny \star}$ no significant difference.
    }
    \label{tab:dcgresults_part2}
    \vspace{0.5em}
    \small  %
    \begin{tabular}{l ccc}
        \toprule
        & \multicolumn{3}{c}{Istella}\\
        \cmidrule(r){2-4}
        & $N=4\cdot10^2$ &  $N=4 \cdot 10^7$ &  $N=10^9$ \\
        \midrule
        Logging              & 0.635 & 0.635 & 0.635 \\
        Skyline              & 0.714 & 0.714 & 0.714 \\
        \midrule
        Naive              & 0.583 \small (0.007)$^{\tiny \blacktriangledown}$ & 0.661 \small (0.001)$^{\tiny \blacktriangledown}$ & 0.661 \small (0.001)$^{\tiny \blacktriangledown}$ \\
        Action IPS              & 0.578 \small (0.004)$^{\tiny \blacktriangledown}$ & 0.671 \small (0.001)$^{\tiny \blacktriangledown}$ & 0.671 \small (0.002)$^{\tiny \blacktriangledown}$ \\
        Action \ac{CRM}              & 0.449 \small (0.013)$^{\tiny \blacktriangledown}$ & 0.668 \small (0.002)$^{\tiny \blacktriangledown}$ & 0.672 \small (0.001)$^{\tiny \blacktriangledown}$ \\
        Exp. IPS              & 0.576 \small (0.010)$^{\tiny \blacktriangledown}$ & \textbf{0.696 \small (0.001)}$^{\tiny \star}$ & \textbf{0.706 \small (0.001)}$^{\tiny \star}$ \\
        Exp. \ac{CRM}             & \textbf{0.635 \small (0.001)\phantom{$^{\tiny \blacktriangledown}$}} & \textbf{0.695 \small (0.001)\phantom{$^{\tiny \star}$}} & \textbf{0.706 \small (0.001)\phantom{$^{\tiny \star}$}} \\
        \bottomrule
    \end{tabular}
\end{table}

Finally, we examine the performance of our novel exposure-based \ac{CRM} method.
Similar to the other methods, there is an initial period of low performance, but in stark contrast, this period ends very quickly;
on Yahoo!\ logging policy performance is reached when $N \approx 125$, on MSLR-WEB30k when $N\approx350$ and on Istella when $N\approx400$.
For comparison, exposure-based IPS needs $N\approx1100$ on Yahoo!, $N\approx10^4$ on MLSR-WEB30k and $N\approx1.1\cdot10^4$ on Istella to do the same; meaning that our \ac{CRM} method needs roughly $89\%$, $97\%$ and $97\%$ fewer interactions, respectively.
In addition, Table~\ref{tab:dcgresults_part1} and \ref{tab:dcgresults_part2} indicate that the logging policy performance is matched on all datasets when $N=400$ by exposure-based \ac{CRM}, where it also outperforms all baseline methods.
We note that there is still an initial period of low performance, because the logging policy is unavailable at training, and thus, its behavior still has to be estimated from logged interactions.
It is possible that in settings where the logging policy is fully known during training, this initial period is eliminated entirely.
Nevertheless, our results show that exposure-based \ac{CRM} reduces the initial periods of poor performance due to variance by an enormous magnitude.

\begin{figure}[!t]
    \centering
    \renewcommand{\arraystretch}{0.01}
    \setlength{\tabcolsep}{0.015cm}
    \begin{tabular}{c r r}
    &
     \multicolumn{1}{c}{\small Yahoo! Webscope}
    &
     \multicolumn{1}{c}{\small MSLR-WEB30k}
    \\
    \rotatebox[origin=lt]{90}{\hspace{0.2cm} \small \it Action-based \ac{CRM}}
    \rotatebox[origin=lt]{90}{\hspace{0.65cm} \small NDCG@5} &
    \includegraphics[scale=0.475]{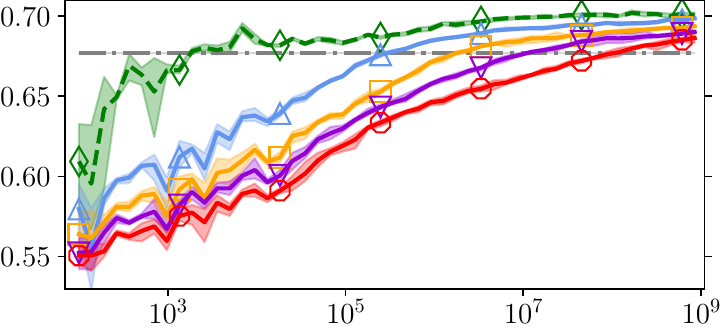} &
    \includegraphics[scale=0.475]{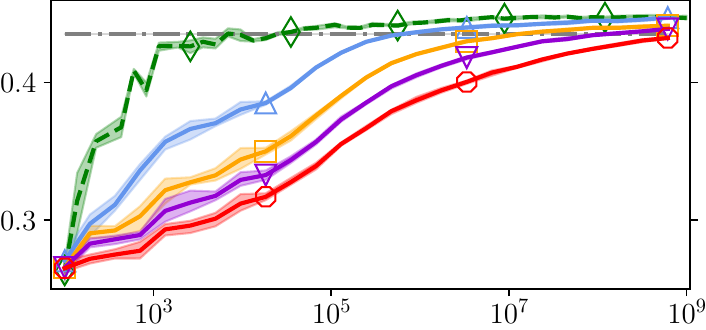}
    \\
    \rotatebox[origin=lt]{90}{\hspace{0.1cm}\small\it Exposure-based \ac{CRM}}
    \rotatebox[origin=lt]{90}{\hspace{0.65cm} \small NDCG@5} &
    \includegraphics[scale=0.475]{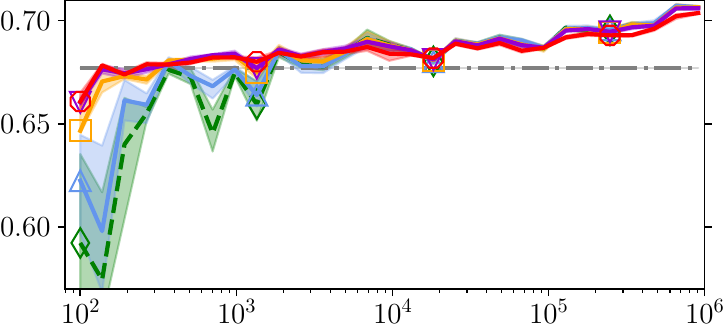} &
    \includegraphics[scale=0.475]{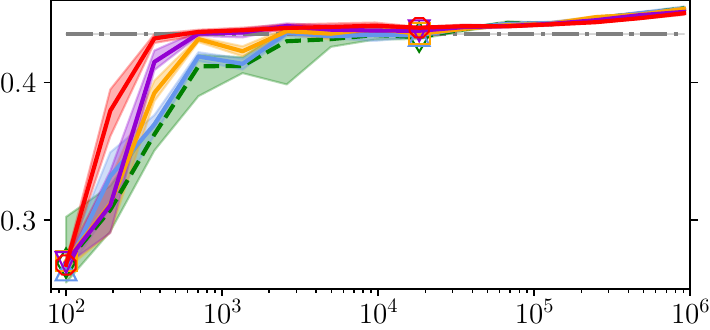}
    \\
    & \multicolumn{1}{c}{\small Number of interactions simulated ($N$)}
    & \multicolumn{1}{c}{\small Number of interactions simulated ($N$)}
    \\[2mm]
    \end{tabular}
    
    \vspace{0.3\baselineskip}
    \includegraphics[scale=0.45]{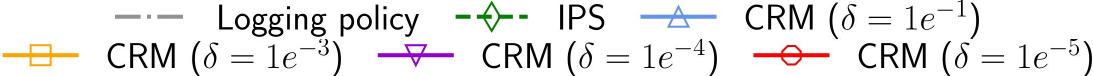} 
    \caption{
        Performance of \ac{CRM} methods with varying confidence parameters ($\delta$) on Yahoo! Webscope and MSLR-WEB30k datasets.
        Top-row: action-based \ac{CRM} baseline; bottom-row: our exposure-based \ac{CRM} method.
        Results are averages of 10 runs; shaded areas indicate 80\% confidence intervals.
    }
    \label{fig:ablationresults_part1}
\end{figure}

Furthermore, while the initial period is clearly improved, we should also consider whether there is a trade-off with the rate of convergence.
Surprisingly, Figure~\ref{fig:mainresults_part1} and \ref{fig:mainresults_part2} do not display any noticeable decrease in performance when compared with exposure-based IPS.
Moreover, Table~\ref{tab:dcgresults_part1} and \ref{tab:dcgresults_part2} show the differences between exposure-based IPS and \ac{CRM} are barely measurable and not statistically significant when $N \in \{4\cdot 10^7,10^9\}$.
We know from the risk formulation in Eq.~\ref{objgenbound1} that the weight of the risk term decreases as $N$ increases at a rate of $1/\sqrt{N}$.
In other words, the more data is available, the more optimization is able to diverge from the logging policy.
It appears that this balances utility maximization and risk minimization so well that we are unable to observe any downside of applying exposure-based \ac{CRM} instead of IPS.
Therefore, we conclude that, compared to all baseline methods and across all datasets, exposure-based \ac{CRM} drastically reduces the initial period of low performance, matches the best rate of convergence of all baseline, and has optimal performance at convergence.

\subsection{Ablation study on the confidence parameter}

To gain insights into how the confidence parameter $\delta$ affects the trade-off between safety and utility, an ablation study over various $\delta$ values was performed for both \ac{CRM} methods. 

The top-rows of Figure~\ref{fig:ablationresults_part1} and \ref{fig:ablationresults_part2} show us the performance of action-based \ac{CRM}, and contrary to expectation, a decrease in $\delta$ corresponds to a considerably worse performance.
For the sake of clarity, in theory, $\delta$ is inversely tied to safety, a lower $\delta$ should result in less divergence from the safe logging policy~\citep{oosterhuis2021robust}.
Conversely, we see that action-based \ac{CRM} displays the opposite trend.
We think this further confirms our hypothesis that a frequency estimate of action-based divergence has an even higher variance problem than action-based IPS.
Consequently, a higher weight to the risk function results in worse performance.
This further confirms our previous conclusion that action-based \ac{CRM} is unsuited for the \ac{CLTR} setting, regardless of how the $\delta$ parameter is tuned.

In contrast, the bottom-rows of Figure~\ref{fig:ablationresults_part1} and \ref{fig:ablationresults_part2} display the expected trend for exposure-based \ac{CRM};
as $\delta$ decreases the resulting performance gets closer to the logging policy.
With $\delta=0.1$, \ac{CRM} performs extremely close to its IPS counterpart, as optimization is less constrained to mimic the logging policy here.
Decreasing $\delta$ appears to have diminishing returns, as the difference between $\delta=10^{-4}$ and $\delta=10^{-5}$ is marginal.
Importantly, we do not observe any downsides to setting $\delta=10^{-5}$, thus we have not reached a point in our experiments where $\delta$ is set too conservatively.
This suggests that exposure-based \ac{CRM} is very robust to the setting of the $\delta$ parameter, and that a sufficiently low $\delta$ does not require fine-tuning.
Therefore, this shows that the improvements we observed when comparing with baseline methods, did not stem from a fine-tuning of $\delta$.
Thus, we can conclude that this robustness further increases the safety that is provided by exposure-based \ac{CRM}, as there is also little risk involved in the tuning of the $\delta$ parameter.

\begin{figure}[!t]
    \centering
    \renewcommand{\arraystretch}{0.01}
    \setlength{\tabcolsep}{0.015cm}
    \begin{tabular}{c r}
    &
     \multicolumn{1}{c}{\small Istella}
    \\
    \rotatebox[origin=lt]{90}{\hspace{0.2cm} \small \it Action-based \ac{CRM}}
    \rotatebox[origin=lt]{90}{\hspace{0.65cm} \small NDCG@5} &
    \includegraphics[scale=0.475]{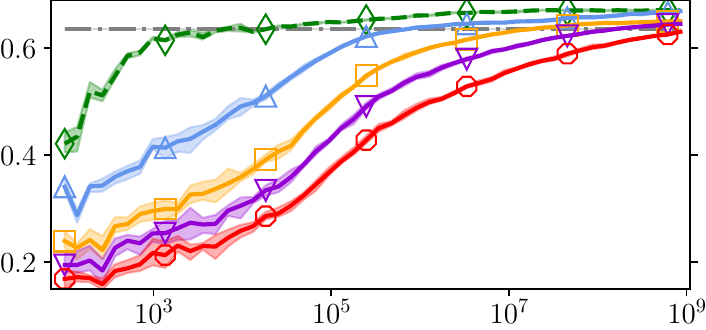}
    \\
    \rotatebox[origin=lt]{90}{\hspace{0.1cm}\small\it Exposure-based \ac{CRM}}
    \rotatebox[origin=lt]{90}{\hspace{0.65cm} \small NDCG@5} &
    \includegraphics[scale=0.475]{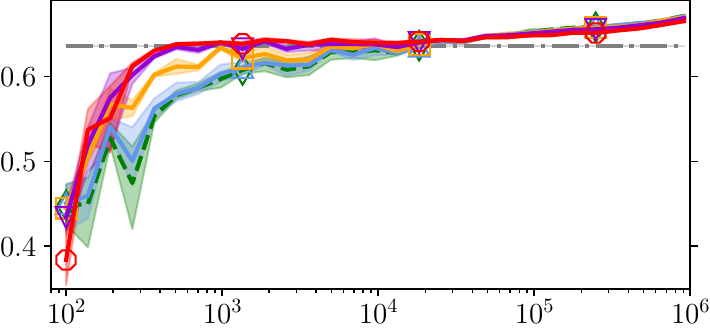}
    \\
    & \multicolumn{1}{c}{\small Number of interactions simulated ($N$)}
    \\[2mm]
    \end{tabular}
    
    \vspace{0.3\baselineskip}
    \includegraphics[scale=0.45]{04-safety_sigir/figures/legend_abl_action.pdf}
    
    \caption{
        Performance of \ac{CRM} methods with varying confidence parameters ($\delta$) on the Istella dataset.
        Top-row: action-based \ac{CRM} baseline; bottom-row: our exposure-based \ac{CRM} method.
        Results are averages of 10 runs; shaded areas indicate 80\% confidence intervals.
    }
    \label{fig:ablationresults_part2}
\end{figure}

\section{Conclusion}

In this chapter, we introduced the first \acf{CRM} method designed for \ac{CLTR}, that relies on a novel exposure-based divergence function.
In contrast with existing action-based \ac{CRM} methods, exposure-based divergence avoids the problem of the enormous combinatorial action space when ranking, by measuring the dissimilarity between policies based on how they distribute exposure to documents.
As a result, exposure-based \ac{CRM} optimization produces policies that rank similar to the logging policy when it is risky to follow \ac{IPS}, i.e., when little data is available or variance is very high.
Consequently, our experimental results show that it almost completely removes initial periods of detrimental performance;
to be precise, our method needed 89\% to 97\% fewer interactions than state-of-the-art \ac{IPS} to match production system performance.
Importantly, we observed no downsides in its application, as it maintained the same rate and point of convergence as \ac{IPS}, in all tested experimental settings.
Therefore, we conclude that our exposure-based \ac{CRM} method provides the safest \ac{CLTR} methods so far, as it almost completely alleviates the risk of decreasing the performance of a production system.

These improvements have big implications for practitioners who work on ranking systems in real-world settings, since the almost complete reduction of initial detrimental performance removes the main risks involved in applying \ac{CLTR}.
In other words, when applying our novel exposure-based \ac{CRM}, practitioners can have significantly less worry that the resulting policy will perform worse than their production system and hurt user experience.

In this chapter, we answer the broad research question (\ref{rq:safe1}) in affirmative. We derived a generalization bound for the counterfactual \ac{LTR} estimator, establishing a lower bound on the true ranking utility. 
We then demonstrate that optimizing this lower bound ensures safety, i.e., the resulting ranking policy after optimizing the lower bound is no worse than the current production policy. 
In practice, this property is useful when click data is scarce, mitigating the risk of deploying potentially harmful policies, thereby ensuring safe deployment.

The safety method presented in this chapter depends on the assumed user model (position-based click model), and relies on the assumption that user interaction data will follow the click model.
In settings where user interactions do not follow the assumed user model, the safety guarantees will not hold.
In the next chapter, we will discuss a method that guarantees safety even under adversarial user behavior settings.

\chapter{Practical and Robust Safety Guarantees for Advanced Counterfactual Learning-to-Rank}
\chaptermark{Safety Guarantees for Advanced Counterfactual Learning-to-Rank}
\label{chapter:01-online-evaluation2}

\footnote[]{This chapter was published as~\citep{gupta-2024-practical}.}
In Chapter~\ref{chapter:01-online-evaluation1}, we presented a safe counterfactual \ac{LTR} method that guarantees safe deployment by optimizing a lower confidence bound on the true ranking utility.
By optimizing the lower confidence bound on the true ranking utility via exposure-based risk minimization, is it guaranteed that the new ranking policy will be at least as good as the production/logging ranking policy.
However, these safety guarantees depend critically on assumptions regarding user behavior, i.e., the assumed click model. 
If the user behavior deviates from the assumed click model, the safety guarantees will be invalidated; which motivates the following research question:

\begin{enumerate}[label=\textbf{RQ2},ref={RQ\arabic*}]
\item \acl{rq:safe2}
\end{enumerate}

\noindent In this chapter, we introduce \ac{PRPO}, a novel safe deployment method ensuring safety for counterfactual \ac{LTR} without reliance on user behavior assumptions, guaranteeing robust safety even under adversarial conditions.
Further, we extend the safety guarantees from position-based click model and \ac{IPS} estimator introduced in Chapter~\ref{chapter:01-online-evaluation2} to trust-bias click model~\cite{agarwal2019addressing,vardasbi2020inverse} and doubly robust counterfactual estimator~\cite{oosterhuis2022doubly}.

\section{Introduction}

\Ac{CLTR}~\cite{joachims2017unbiased, wang2016learning, oosterhuis2020learning,gupta2024unbiased} concerns the optimization of ranking systems based on user interaction data using \ac{LTR} methods~\cite{liu2009learning}.
A main advantage of \ac{CLTR} is that it does not require manual relevance labels, which are costly to produce~\citep{qin2010letor,chapelle2011yahoo} and often do not align with actual user preferences~\cite{sanderson2010test}.
Nevertheless, \ac{CLTR} also brings significant challenges since user interactions only provide a heavily biased form of implicit feedback~\citep{gupta2024unbiased}.
User clicks are affected by many different factors, for example, the position at which an item is displayed  in a ranking~\citep{wang2018position, craswell2008experimental}.
Thus, click frequencies provide a biased indication of relevance, that is often more representative of how an item was displayed than actual user preferences~\citep{agarwal2019addressing, wang2016learning}.

To correct for this bias, early \ac{CLTR} applied \ac{IPS}, which weights clicks inversely to the estimated effect of position bias~\cite{joachims2017unbiased, wang2016learning}.
Later work expanded this approach to correct for other forms of bias, e.g., item-selection bias~\citep{oosterhuis2020policy, ovaisi2020correcting} and trust bias~\citep{agarwal2019addressing, vardasbi2020inverse}, and more advanced \ac{DR} estimation~\citep{oosterhuis2022doubly}.
Using these methods, standard \ac{CLTR} aims to create an unbiased estimate of relevance (or user preference) from click frequencies.
In other words, their goal is to output an estimate per document with an expected value that is equal to their relevance.

However, unbiased estimates of \ac{CLTR} have their limitations.
Firstly, they assume a model of user behavior and require an accurate estimate of this model.
If the assumed model is incorrect~\citep{vardasbi2020inverse, oosterhuis2020policy} or its estimated parameters are inaccurate~\citep{joachims2017unbiased, oosterhuis2022doubly}, then their unbiasedness is not guaranteed.
Secondly, even when unbiased, the estimates are subject to variance~\cite{oosterhuis2022reaching}.
As a result, the actual estimated values are often erroneous, especially when the available data is sparse~\citep{oosterhuis2022doubly, gupta2023safe}.
Accordingly, unbiased \ac{CLTR} does not guarantee that the ranking models it produces have optimal performance~\cite{oosterhuis2022reaching}.

\header{Safe \acl{CLTR}}
There are risks involved in applying \ac{CLTR} in practice. 
In particular, there is a substantial risk that a learned ranking model is deployed that degrades performance compared to the previous production system~\citep{gupta2023safe,oosterhuis2021robust,jagerman2020safe}.
This can have negative consequences to important business metrics, making \ac{CLTR} less attractive to practitioners.
To remedy this issue, a \emph{safe} \ac{CLTR} approach was proposed by \citet{gupta2023safe}; see Chapter~\ref{chapter:01-online-evaluation1}.
Their approach builds on IPS-based \ac{CLTR} and adds exposure-based risk regularization, which keeps the learned model from deviating too much from a given safe model.
Thereby, under the assumption of a position-biased user model, the safe \ac{CLTR} approach can guarantee an upper bound on the probability of the model being worse than the safe model.

\header{Limitations of the current safe \ac{CLTR} method}
Whilst safe \ac{CLTR} is an important contribution to the field, it has two severe limitations -- both are addressed by this chapter.
Firstly, the existing approach is only applicable to \ac{IPS} estimation, which is no longer the state-of-the-art in the field~\citep{gupta2024unbiased}, and it assumes a rank-based position bias model~\citep{craswell2008experimental, wang2016learning}, the most basic user behavior model in the field.
Secondly, because its guarantees rely on assumptions about user behavior, it can only provide a conditional notion of safety.
Moreover, since user behavior can be extremely heterogeneous, it is unclear whether a practitioner could even determine whether the safety guarantees would apply to their application.

\header{Main contributions}
Our first contribution in this chapter addresses the mismatch between the existing safe \ac{CLTR} approach and recent advances in \ac{CLTR}.
We propose a novel generalization of the exposure-based regularization term that provides safety guarantees for both \ac{IPS} and \ac{DR} estimation, also under more complex models of user behavior that cover both position and trust bias.
Our experimental results show that our novel method reaches higher levels of performance significantly faster, while avoiding any notable decreases of performance.
This is especially beneficial since \ac{DR} is known to have detrimental performance when very little data is available~\citep{oosterhuis2022doubly}.

Our second contribution in this chapter provides an unconditional notion of safety. We take inspiration from advances in \ac{RL}~\cite{wang2020truly,schulman2017proximal,liu2019neural,wang2019trust,queeney2021generalized} and propose the novel \acfi{PRPO} method.
\ac{PRPO} removes incentives for \ac{LTR} methods to rank documents too much higher than a given safe ranking model would.
Thereby, \ac{PRPO} imposes a limit on the performance difference between a learned model and a safe model, in terms of standard ranking metrics.
Importantly, \ac{PRPO} is easily applicable to \emph{any} gradient-descent-based \ac{LTR} method, and makes \emph{no assumptions} about user behavior.
In our experiments, \ac{PRPO} prevents any notable decrease in performance even under extremely adversarial circumstances, where other methods fail.
Therefore, we believe \ac{PRPO} is the first \emph{unconditionally} safe \ac{LTR} method. %

Together, our contributions in this chapter bring important advances to the theory of safe \ac{CLTR}, by proposing a significant generalization of the existing approach with theoretical guarantees, and the practical appeal of \ac{CLTR}, with the first robustly safe \ac{LTR} method: \ac{PRPO}.
All source code to reproduce our experimental results is available at: 
\url{https://github.com/shashankg7/cikm-safeultr}.

\vspace{-1.1mm}

\section{Related Work}
\label{sec:relatedwork}
\textbf{\Acl{CLTR}.}
\citet{joachims2017unbiased} introduced the first method for \ac{CLTR}, a \ac{LTR} specific adaptation of \ac{IPS} from the bandit literature~\cite{swaminathan2015batch,joachims2016counterfactual,saito2021counterfactual,gupta-2024-optimal,gupta2023ictir, gupta-2023-first-abstract} to correct for position bias.
They weight each user interaction according to the inverse of its examination probability, i.e., its inverse propensity, during learning to correct for the position bias in the logged data. 
This weighting will remove the effect of position bias from the final ranking policy.
\citet{oosterhuis2020policy} extended this method for the top-$K$ ranking setting with item-selection bias, where any item placed outside the top-$K$ positions gets zero exposure probability, i.e., an extreme form of position bias. 
They proposed a policy-aware propensity estimator, where the propensity weights used in \ac{IPS} are conditioned on the logging policy used to collect the data. 

\citet{agarwal2019addressing} introduced an extension of \ac{IPS}, known as Bayes-IPS, to correct for \textit{trust bias}, an extension of position-bias, with false-positive clicks at the higher ranks, because of the users' trust in the search engine. 
\citet{vardasbi2020inverse} proved that Bayes-IPS cannot correct for trust bias and introduced an affine-correction method and unbiased estimator. 
\citet{oosterhuis2021unifying} combined the affine-correction with a policy-aware propensity estimator to correct for trust bias and item-selection bias simultaneously.
Recently, \citet{oosterhuis2022doubly} introduced a \ac{DR}-estimator for \ac{CLTR}, which combines the existing \ac{IPS}-estimator with a regression model to overcome some of the challenges with the \ac{IPS}-estimator. 
The proposed \ac{DR}-estimator corrects for item-selection and trust biases, with lower variance and improved sample complexity. 

\header{Safe policy learning from user interactions}
In the context of offline evaluation for contextual bandits, \citet{thomas2015high} introduced a high-confidence off-policy evaluation framework.
A confidence interval is defined around the empirical off-policy estimates, and there is a high probability that the \textit{true} utility can be found in the interval. 
\citet{jagerman2020safe} extended this framework for safe deployment in the contextual bandit learning setup. 
The authors introduce a \ac{SEA} method that selects with high confidence between a safe behavior policy and the newly learned policy. 
In the context of \ac{LTR}, \citet{oosterhuis2021robust} introduced the \ac{GENSPEC} method, which safely selects between a feature-based and tabular \ac{LTR} model. 
For off-policy learning, \citet{swaminathan2015batch} introduced a \ac{CRM} framework for the contextual bandit setup. 
They modify the \ac{IPS} objective for bandits to include a regularization term, which explicitly controls for the variance of the \ac{IPS}-estimator during learning, thereby overcoming some of the problems with the high-variance of \ac{IPS}.
\citet{wu2018variance} extended the \ac{CRM} framework by using a \textit{risk} regularization, which penalizes mismatches in the action probabilities under the new policy and the behavior policy. 
In the previous chapter (Chapter~\ref{chapter:01-online-evaluation1}) we made this general safe deployment framework effective in the \ac{LTR} setting. 
We proposed an exposure-based risk regularization method where the difference in the document exposure distribution under the new and logging policies is penalized. 
When click data is limited, risk regularization ensures that the performance of the new policy is similar to the logging policy, ensuring safety. 

To the best of our knowledge, the methodology proposed in Chapter~\ref{chapter:01-online-evaluation1} is the only method for safe policy learning in the \ac{LTR} setting. 
While it guarantees safe ranking policy optimization, it has two main limitations:
\begin{enumerate}[label=(\roman*)]
        \item It is only applicable to the \ac{IPS} estimator; and  
        \item It is only applicable under the position-based click model assumption, the most basic click model in the \ac{CLTR} literature~\citep{joachims2017unbiased, oosterhuis2020learning, gupta2024unbiased}. 
\end{enumerate} 

\header{Proximal policy optimization} 
In the broader context of \ac{RL}, \acfi{PPO} was introduced as a policy gradient method for training \ac{RL} agents to maximize long-term rewards~\cite{wang2020truly,schulman2017proximal,liu2019neural,wang2019trust,queeney2021generalized}.
\Ac{PPO} clips the importance sampling ratio of action probability under the new policy and the current behavior policy, and thereby, it prevents the new policy to deviate from the behavior policy by more than a certain margin.
\ac{PPO} is not directly applicable to \ac{LTR}, for the same reasons that the \ac{CRM} framework is not: the combinatorial action space of \ac{LTR} leads to extremely small propensities that \ac{PPO} cannot effectively manage~\cite{gupta2023safe}.

\section{Background}
\subsection{Learning to rank}
The goal in \ac{LTR} is to find a ranking policy ($\pi$) that optimizes a given ranking metric~\cite{liu2009learning}. 
Formally, given a set of documents ($D$), a distribution of queries $Q$, and the true relevance function ($P(R=1 \mid d)$), \ac{LTR} aims to maximize the following utility function:
\begin{equation}
    U(\pi) =  \sum_{q \in Q} P(q \mid Q) \sum_{d \in D} \omega(d \mid \pi) \; P(R=1 \mid d), \label{true-utility}
\end{equation}
where $\omega(d \mid \pi)$ is the weight of the document for a given policy $\pi$. The weight can be set accordingly to optimize for a given ranking objective, for example, setting the weight to:
\begin{equation}
    \omega_{\text{DCG}}(d \mid q, \pi) = \mathbb{E}_{y \sim \pi( \cdot \mid q)} \mleft[ (\log_2(\textrm{rank}(d \mid y) + 1))^{-1} \mright], \label{rho1}
\end{equation}
optimizes \ac{DCG}~\citep{jarvelin2002cumulated}.
For this chapter, we aim to optimize the expected number of clicks, so we set the weight accordingly~\cite{oosterhuis2022doubly,gupta2023safe,yadav2021policy}. 

\subsection{Assumptions about user click behavior}
The optimization of the true utility function (Eq.~\ref{true-utility}) requires access to the document relevance ($P(R=1 \mid d)$). 
In the \ac{CLTR} setting, the relevances of documents are not available, and instead, click interaction data is used to estimate them~\cite{joachims2017unbiased, wang2016learning, oosterhuis2020learning}.
However, naively using clicks to optimize a ranking system can lead to sub-optimal ranking policies, as clicks are a biased indicator of relevance~\cite{chuklin-click-2015, joachims2002optimizing, joachims2017unbiased, craswell2008experimental}.
\ac{CLTR} work with theoretical guarantees starts by assuming a model of user behavior.
The earliest \ac{CLTR} works~\citep{joachims2017unbiased, wang2016learning} assume a basic model originally proposed by \citet{craswell2008experimental}: 
\begin{assumption}[\emph{The rank-based position bias model}]
    \label{assumption:positionbias}
     The probability of a click on document $d$ at position $k$ is the product of the rank-based examination probability and document relevance:
    \begin{equation}    
    P(C = 1 \mid d, k) = P(E=1 \mid k) P(R=1 \mid d) = \alpha_k P(R=1 \mid d).
    \end{equation}
\end{assumption}

\noindent%
Later work has proposed more complex user models to build on~\citep{gupta2024unbiased}.
Relevant to our work is the model proposed by \citet{agarwal2019addressing}, and its re-formulation by \citet{vardasbi2020inverse}; it is a generalization of the above model to include a form of trust bias:

\begin{assumption}[\emph{The trust bias model}]
\label{assumption:trustbias}
The probability of a click on document $d$ at position $k$ is an affine transformation of the relevance probability of $d$ in the form:
\begin{equation}
        P(C = 1 \mid d, k) = \alpha_k P(R=1 \mid d) + \beta_k,  \label{affine-click-model}
\end{equation}
where $\forall k, \alpha_k \in [0,1] \land \beta_k\in [0,1] \land (\alpha_k + \beta_k) \in [0,1]$.
\end{assumption}

\noindent%
Whilst it is named after trust bias, this model actually captures three forms of bias that were traditionally categorized separately: rank-based position bias, item-selection bias, and trust bias.
Position bias was originally approached as the probability that a user would examine an item, which would decrease at lower positions in the ranking~\citep{wang2018position, craswell2008experimental, joachims2017unbiased, wang2016learning}.
In the trust bias model, this effect can be captured by decreasing $\alpha_k + \beta_k$ as $k$ increases.
Additionally, with $\forall k, \beta_k = 0$, the trust bias model is equivalent to the rank-based position bias model.
Item-selection bias refers to users being unable to see documents outside a top-$K$, where they receive zero probability of being examined or interacted with~\citep{oosterhuis2020policy}.
This can be captured by the trust bias model by setting $\alpha_k + \beta_k = 0$ when $k > K$.
Lastly, the key characteristic of trust bias is that users are more likely to click on non-relevant items when they are near the top of the ranking~\citep{agarwal2019addressing}.
This can be captured by the model by making $\beta_k$ larger as $k$ decreases~\citep{vardasbi2020inverse}.
Thereby, the trust bias model is in fact a generalization of most of the user models assumed by earlier work~\citep{oosterhuis2021unifying, gupta2024unbiased}. The following works all assume models that fit Assumption~\ref{assumption:trustbias}: \citep{vardasbi2020inverse, agarwal2019addressing, oosterhuis2021unifying, oosterhuis2020policy, wang2016learning, wang2018position, oosterhuis2022doubly, oosterhuis2021robust, gupta2023safe, agarwal2019general, ovaisi2020correcting}.

\subsection{Counterfactual learning to rank}
This section details the \emph{policy-aware} \acfi{IPS} estimator proposed by \citet{oosterhuis2020policy} and the \acfi{DR} estimator by \citet{oosterhuis2022doubly}.

First, let $\mathcal{D}$ be a set of logged interaction data:
$
    \mathcal{D} = \big\{q_i, y_i, c_i \big\}^N_{i=1}$,
where each of the $N$ interactions consists of a query $q_i$, a displayed ranking $y_i$, and click feedback $c_i(d) \in \{0,1\}$ that indicates whether the user clicked on the document $d$ or not.
Both policies use propensities that are the expected $\alpha$ values for each document:
\begin{equation}
        \rho_{0}(d \mid q_i, \pi_0) =  \mathbb{E}_{y \sim \pi_{0}(q_i)} \big[ \alpha_{k(d)}  \big] = \rho_{i,0}(d).
    \label{policy-aware-exposure}
\end{equation}
Similarly, to keep our notation short, we also use $\omega(d \mid q_i, \pi) = \omega_i(d)$.
Next, the policy-aware \ac{IPS} estimator is defined as:
\begin{equation}
    \hat{U}_{\text{IPS}}(\pi) = \frac{1}{N} \sum_{i=1}^{N} \sum_{d \in D}  \frac{\omega_i(d)}{\rho_{i,0}(d)} c_i(d).
    \label{cltr-obj-ips-positionbias}
\end{equation}
\citet{oosterhuis2020policy} prove that under the rank-based position bias model (Assumption~\ref{assumption:positionbias}) and when $\forall (i,d),  \rho_{i,0}(d) > 0$, this estimator is unbiased: $\mathbb{E}[\hat{U}_{\text{IPS}}(\pi)] = U(\pi)$.

The \ac{DR} estimator improves over the policy-aware \ac{IPS} estimator in terms of assuming the more general trust bias model (Assumption~\ref{assumption:trustbias}) and having lower variance.
\citet{oosterhuis2022doubly} proposes the usage of the following $\omega$ values for the policy $\pi$:
\begin{equation}
\omega(d \mid q_i, \pi) = \mathbb{E}_{y \sim \pi(q_i)} \big[ \alpha_{k(d)}  + \beta_{k(d)} \big] = \omega_i(d),
    \label{eq:omega}
\end{equation}
since with these values $U$ (Eq.~\ref{true-utility}) becomes the number of expected clicks on relevant items under the trust bias model; $U = (\alpha_{k}  + \beta_{k})P(R=1 \mid d,q) = P(C = 1, R = 1 \mid k,d,q)$.
We follow this approach and define the $\omega$ values for the logging policy $\pi_{0}$ as:
\begin{equation}
    \omega_{0}(d \mid q_i, \pi_{0}) = \mathbb{E}_{y \sim \pi_{0}(q_i)} \big[ \alpha_{k(d)}  + \beta_{k(d)} \big] = \omega_{i,0}(d).
    \label{eq:omega_logging}
\end{equation}
The \ac{DR} estimator uses predicted relevances in its estimation, i.e., using predictions from a regression model.
Let $\hat{R}_{i}(d) \approx P(R = 1 \mid d, q_i)$ indicate a predicted relevance; then the utility according to these predictions is:
\begin{equation}
    \hat{U}_{\text{DM}}(\pi) = \frac{1}{N} \sum_{i=1}^{N} \sum_{d \in D} \omega_i(d) \hat{R}_{i}(d).
\end{equation}
The \ac{DR} estimator starts with this predicted utility and adds an \ac{IPS}-based correction to remove its bias:
\begin{align}
    \hat{U}_{\text{DR}}(\pi) = \hat{U}_{\text{DM}}(\pi) +  \frac{1}{N} \sum_{i=1}^{N} \sum_{d \in D}  \frac{\omega_{i}(d)}{\rho_{i,0}(d)} \left(c_i(d) - \alpha_{k_i(d)}\hat{R}_{i}(d) -  \beta_{k_i(d)} \right). 
    \label{cltr-obj-dr}
\end{align}
Thereby, the corrections of the \ac{IPS} part of the \ac{DR} estimator will be smaller if the predicted relevances are more accurate.
\citet{oosterhuis2022doubly} proves that under the assumption of the trust bias model (Assumption~\ref{assumption:trustbias}), the \ac{DR} estimator is unbiased when $\forall (i,d), \rho_{i,0}(d) > 0 \lor \hat{R}_{i}(d) = P(R = 1 \mid d, q_i)$ and has less variance if $0 \leq \hat{R}_{i}(d) \leq 2P(R = 1 \mid d, q_i)$.
The author also shows that the \ac{DR} estimator needs less data to reach the same level of ranking performance as \ac{IPS}, with especially large improvements when applied to top-$K$ rankings~\citep{oosterhuis2022doubly}.

\subsection{Safety in counterfactual learning to rank}
\Ac{IPS}-based \ac{CLTR} methods, despite their unbiasedness and consistency, suffer from the problem of high-variance~\cite{gupta2024unbiased,oosterhuis2022doubly,joachims2017unbiased}.
 Specifically, if the logged click data is limited, training an \ac{IPS}-based method can lead to an unreliable and unsafe ranking policy~\cite{gupta2023safe}.
 The problem of \textit{safe} policy learning is well-studied in the bandit literature~\citep{thomas2015high, jagerman2020safe, swaminathan2015batch, wu2018variance}. \citet{swaminathan2015batch} proposed the first risk-aware off-policy learning method for bandits, with their risk term quantified as the variance of the \ac{IPS}-estimator. 
\citet{wu2018variance} proposed an alternative method for risk-aware off-policy learning, where the risk is quantified using a Renyi divergence between the action distribution of the new policy and the logging policy~\citep{renyi1961measures}.
Thus, both consider it a risk for the new policy to be too dissimilar to the logging policy, which is presumed safe.
Whilst effective at standard bandit problems, these risk-aware methods are not effective for ranking tasks due to their enormous combinatorial action spaces and correspondingly small propensities.

As a solution for \ac{CLTR}, in Chapter~\ref{chapter:01-online-evaluation1} we introduced a risk-aware \ac{CLTR} approach that uses divergence based on the exposure distributions of policies.
They first introduce normalized propensities: $\rho'\!(d) = \rho / Z$, with a normalization factor $Z$ based on $K$:
\begin{equation}
        Z = \!\! \sum_{d \in D}^{}\! \rho(d)  = \!\! \sum_{d \in D} \! \mathbb{E}_{y \sim \pi } \big[ \alpha_{k(d)}  \big]  \! 
         = \mathbb{E}_{y \sim \pi} \left[  \sum_{k=1}^K  \alpha_{k(d)}  \right] \!=  \!\! \sum_{k=1}^K  \alpha_k.
    \label{total-exposure}
\end{equation}
Since $\rho'\!(d) \in [0,1]$ and $\sum_d \rho'\!(d) = 1$, they can be treated as a probability distribution that indicates how exposure is spread over documents.
In Chapter~\ref{chapter:01-online-evaluation1}, we use Renyi divergence to quantify how dissimilar the new policy is from the logging policy:
\begin{equation}
    d_2(\rho \,\Vert\, \rho_0) = \mathbb{E}_{q} \left[ \sum_{d} \mleft(\frac{\rho'(d)}{\rho_{0}'(d)}\mright)^2 \rho_{0}'(d) \right],
    \label{eq:actionbaseddiv}
\end{equation} 
with the corresponding empirical estimate based on the log data ($\mathcal{D}$) defined as:
 \begin{equation}
    \hat{d}_2(\rho \,\Vert\, \rho_0) = \frac{1}{N} \sum_{i=1}^{N} \sum_{d} \mleft(\frac{\rho'_{i}(d)}{\rho_{i,0}'(d)}\mright)^2 \rho_{i,0}'(d).
    \label{eq:actionbaseddiv_emp}
\end{equation} 
Based on this divergence term, they propose the following risk-aware \ac{CLTR} objective, with parameter $\delta$:
\begin{equation}
    \max_{\pi}  \hat{U}_{\text{IPS}}(\pi) -  \sqrt{ \frac{Z}{N}  \Big(\frac{1-\delta}{\delta}\Big) \hat{d}_2(\rho \,\Vert\, \rho_0)}.
\label{objgenbound}
\end{equation}
Thereby, the existing safe \ac{CLTR} approach penalizes the optimization procedure from learning ranking behavior that is too dissimilar from the logging policy in terms of the distribution of exposure.
The weight of this penalty decreases as the number of datapoints $N$ increases, thus it maintains the same point of convergence as standard \ac{IPS}.
Yet, initially when little data is available and the effect of variance is the greatest, it forces the learned policy to be very similar to the safe logging policy.
\citet{gupta2023safe} prove that their objective bounds the real utility with a probability of $1-\delta$:
\begin{equation}
P\mleft( U(\pi) \geq \hat{U}_{\text{IPS}}(\pi)\! -  \sqrt{ \frac{Z}{N}  \Big(\frac{1-\delta}{\delta}\Big) d_2(\rho \,\Vert\, \rho_{0})\!} - \sqrt{ \frac{\mathrm{1}}{N}  \Big(\frac{1-\delta}{\delta}\Big)} \mright) \geq 1 - \delta.
\end{equation}
However, their proof of safety relies on the rank-based position bias model (Assumption~\ref{assumption:positionbias}) and their approach is limited to the basic \ac{IPS} estimator for \ac{CLTR}.

\subsection{Proximal policy optimization}
In the more general \acf{RL} field, \acfi{PPO} was introduced as a method to restrict a new policy $\pi$ from deviating too much from a previously rolled-out policy $\pi_0$~\cite{Schulmanetal_ICLR2016, schulman2017proximal}.
In contrast with the earlier discussed methods, \ac{PPO} does not make use of a divergence term but uses a simple clipping operation in its optimization objective.
Let $s$ indicate a state, $a$ an action and $R$ a reward function, the \ac{PPO} loss is:
\begin{equation}
    U^{PPO}\!(s, a, \pi, \pi_0) = \mathbb{E} \mleft[ \min \mleft( \frac{\pi(a \,|\, s)}{\pi_0(a \,|\, s)} R(a \,|\, s) , g\big(\epsilon, R(a \,|\, s) \big)  \mright) \mright],
    \end{equation}
where $g$ creates a clipping threshold based on the sign of $R(a \,|\, s)$:
\begin{equation}
    g\mleft(\epsilon, R(a \,|\, s)  \mright) = 
\begin{cases}
    \left(1+\epsilon\right) R(a \,|\, s) & \text{if }R(a \,|\, s)  \geq 0,\\
    \left(1-\epsilon\right) R(a \,|\, s) & \text{otherwise}.
\end{cases}    
\end{equation}
The clipping operation removes incentives for the optimization to let $\pi$ deviate too much from $\pi_0$, since there are no further increases in $U^{PPO}$ when $\pi(a \,|\, s) > (1+ \epsilon)\pi_0(a \,|\, s)$ or  $\pi(a \,|\, s) < (1- \epsilon)\pi_0(a \,|\, s)$, depending on the sign of $R(a \,|\, s)$.
Similar to the previously discussed general methods, \ac{PPO} is not effective when directly applied to the \ac{CLTR} setting due to the combinatorial action space and corresponding extremely small propensities (for most $a$ and $s$: $\pi_0(a \,|\, s) \simeq 0$).

\section{Extending Safety to Advanced CLTR}
\label{sec:method:existingCLTR}
In this section, we introduce our first contribution of this chapter: our extension of the safe \ac{CLTR} method to address trust bias and \ac{DR} estimation. 

\subsection{Method: Safe doubly-robust CLTR}
For the safe \ac{DR} \ac{CLTR} method, we extend the generalization bound from the existing \ac{IPS} estimator and position bias~\cite[Eq.~26]{gupta2023safe} to the \ac{DR} estimator and trust bias. 
    \label{sec:perfbound}
    \begin{theorem}
    \label{CLTR-bound}
        Given the true utility $U(\pi)$ (Eq.~\ref{true-utility}) and its exposure-based \ac{DR} estimate $\hat{U}_{\text{DR}}(\pi)$ (Eq.~\ref{cltr-obj-dr}) of the ranking policy $\pi$ with the logging policy $\pi_{0}$ and the metric weights $\omega$ and $\omega_{0}$ (Eq.~\ref{eq:omega} and \ref{eq:omega_logging}), assuming the trust bias click model (Assumption~\ref{assumption:trustbias}),
        the following generalization bound holds with probability $1 - \delta$: \footnote{The following proof differs slightly from the original proof published in~\cite{gupta-2024-practical}, as we overlooked an additional constant term in the original paper.}
        \begin{equation}
            \begin{split}
              P\Bigg(
                  U(\pi)\;\ge\;
                  \hat{U}_{\text{DR}}(\pi)
                  \;-\;
                  \Bigl(1+\max_{k}\tfrac{\beta_k}{\alpha_k}\Bigr)
                  &\Bigl(
                    \sqrt{\frac{2Z}{N}\,
                      \Big(\frac{1-\delta}{\delta}\Big)\,
                             d_2(\omega\Vert\omega_0)}
                      \\
                      &+\sqrt{\frac{1}{N}\,
                      \Big(\frac{1-\delta}{\delta}\Big)}
                  \Bigr)
              \Bigg)
              \;\ge\;
              1-\delta .
            \end{split}
          \end{equation}
    \end{theorem}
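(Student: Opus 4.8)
The plan is to mirror the proof of Theorem~\ref{CLTR-bound1} essentially verbatim: first derive a high-probability tail bound from Cantelli's inequality (Eq.~\ref{cantelli}--\ref{inequality}), and then feed it an upper bound on $\mathrm{Var}_{q,y,c}[\hat{U}_{\text{DR}}(\pi)]$ expressed through the exposure-based divergence $d_2(\omega\,\Vert\,\omega_0)$. The only genuinely new work is the \ac{DR}-specific variance bound; once it is available, the passage from variance to the displayed statement is identical to the \ac{IPS} case (Cantelli, then subadditivity of the square root $\sqrt{a+b}\le\sqrt{a}+\sqrt{b}$ — the step the \ac{IPS} proof calls Cauchy--Schwarz — to separate the $d_2$ term from the additive constant). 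Concretely I would first establish the intermediate lemma
\[
\mathrm{Var}_{q,y,c}\mleft[\hat{U}_{\text{DR}}(\pi)\mright]\;\le\;\mleft(1+\max_{k}\tfrac{\beta_k}{\alpha_k}\mright)^{2}\mleft(\frac{2Z}{N}\,d_2(\omega\,\Vert\,\omega_0)+\frac{1}{N}\mright),
\]
and then substitute it into the tail bound of Eq.~\ref{inequality} with $\hat{X}=\hat{U}_{\text{DR}}(\pi)$, using $\mathbb{E}[\hat{U}_{\text{DR}}(\pi)]=U(\pi)$, i.e.\ unbiasedness of the \ac{DR} estimator under the trust-bias model (Assumption~\ref{assumption:trustbias}), proven by~\citet{oosterhuis2022doubly}.

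For the variance lemma itself I would follow the four steps of Theorem~\ref{var-theorm}. (1) Since queries are \ac{i.i.d}, the law of total variance gives $\mathrm{Var}_{q,y,c}[\hat{U}_{\text{DR}}(\pi)]=\tfrac{1}{N}\big(\mathbb{E}_q[\mathrm{Var}_{y,c}[\hat{U}_{\text{DR}}(\pi)\mid q]]+\mathrm{Var}_q[\mathbb{E}_{y,c}[\hat{U}_{\text{DR}}(\pi)\mid q]]\big)$. (2) For the between-query term, unbiasedness of \ac{DR} turns $\mathbb{E}_{y,c}[\hat{U}_{\text{DR}}(\pi)\mid q]$ into the per-query true utility, which is bounded in $[0,1]$ under the same normalization assumption used in Theorem~\ref{var-theorm}, so this term contributes at most $1$. (3) For the within-query term, the direct-method part $\sum_{d}\omega_i(d)\hat{R}_i(d)$ is deterministic given $q$ and drops out of the conditional variance, leaving the \ac{IPS}-style correction $\sum_{d}\tfrac{\omega_i(d)}{\rho_{i,0}(d)}\big(c_i(d)-\alpha_{k_i(d)}\hat{R}_i(d)-\beta_{k_i(d)}\big)$; using the rank-based examination structure of Assumption~\ref{assumption:trustbias} to decouple documents and bounding the second moment of the residual (this is where $0\le\hat{R}_i(d)\le 1$ is used, and where the factor $2$ enters, from separating the realized-click contribution from the predicted-click contribution in the residual) yields a bound of the shape $2Z\sum_{d}\rho_{i,0}'(d)\big(\omega_i(d)/\rho_{i,0}(d)\big)^2$. (4) Finally I reconcile the $\alpha$-only propensity in the denominator with the $\omega$-based normalized exposure: since $\alpha_k+\beta_k\le(1+\max_j\tfrac{\beta_j}{\alpha_j})\,\alpha_k$ for every $k$, we get $\omega_{i,0}(d)\le(1+\max_k\tfrac{\beta_k}{\alpha_k})\,\rho_{i,0}(d)$, hence $\omega_i(d)/\rho_{i,0}(d)\le(1+\max_k\tfrac{\beta_k}{\alpha_k})\,\omega_i'(d)/\omega_{i,0}'(d)$, and taking the expectation over queries produces the $(1+\max_k\beta_k/\alpha_k)^2\,d_2(\omega\,\Vert\,\omega_0)$ term.

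Assembling, I substitute the lemma into Eq.~\ref{inequality}, pull the constant $(1+\max_k\beta_k/\alpha_k)$ out of the square root, and split with $\sqrt{a+b}\le\sqrt{a}+\sqrt{b}$ to obtain exactly the displayed bound; as in Chapter~\ref{chapter:01-online-evaluation1} the additive $\sqrt{\tfrac{1}{N}(\tfrac{1-\delta}{\delta})}$ term can be discarded for optimization purposes since it is $\pi$-independent.

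The main obstacle is step (3): unlike the pure \ac{IPS} summand in Theorem~\ref{var-theorm}, the \ac{DR} correction term does not have conditional mean zero unless the regression model is exact, so cross-document and cross-term contributions cannot simply be discarded. Handling this cleanly requires either expanding the raw second moment and controlling the cross terms via the rank-disjointness of examinations under Assumption~\ref{assumption:trustbias}, or invoking the sharper condition $0\le\hat{R}_i(d)\le 2P(R=1\mid d,q_i)$ from~\citet{oosterhuis2022doubly}; carrying the resulting constants through (the factor $2$, and the bookkeeping between the normalization $Z=\sum_k\alpha_k$ and the $\omega$-based quantities $\sum_k(\alpha_k+\beta_k)$) is the fiddly part. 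A secondary point to verify is that dropping the direct-method part from the conditional variance is legitimate — i.e.\ that $\hat{R}_i(\cdot)$ is treated as fixed (or estimated on an independent split) and contributes no variance, the standard modelling convention in the \ac{DR} literature. Everything after the variance lemma parallels the proof of Theorem~\ref{CLTR-bound1} and is routine.
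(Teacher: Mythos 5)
Your proposal follows essentially the same route as the paper's proof in Appendix~\ref{appendix-cikm}: Cantelli's inequality, the law-of-total-variance decomposition with the between-query term bounded by $1$, reduction of the conditional variance of $\hat{U}_{\text{DR}}$ to that of the \ac{IPS}-style correction term (the direct-method part being deterministic given $q$), document-wise decoupling via independent examinations, the conversion $\omega_{0}(d)/\rho_{0}(d)\le 1+\max_k \beta_k/\alpha_k$ to pass from $\rho_0$ to the $\omega$-based divergence, and the final $\sqrt{a+b}\le\sqrt{a}+\sqrt{b}$ split. Your intermediate variance lemma is exactly the bound the paper establishes and substitutes into the tail inequality.

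The one step you explicitly leave open --- how to control the residual $c(d)-\beta_{k(d)}-\alpha_{k(d)}\hat{R}(d)$, whose conditional mean is not zero --- is resolved in the paper by a dedicated covariance lemma (Lemma~\ref{cov-lemma}): one writes $\mathrm{Var}[c(d)-\beta_{k(d)}-\alpha_{k(d)}\hat{R}(d)] = \mathrm{Var}[c(d)-\beta_{k(d)}]+\mathrm{Var}[\alpha_{k(d)}\hat{R}(d)]-2\,\mathrm{Cov}[c(d)-\beta_{k(d)},\,\alpha_{k(d)}\hat{R}(d)]$ and shows, under Assumption~\ref{assumption:trustbias}, that the covariance is proportional to $R_d\,\mathrm{Var}_{y}[\alpha_{k(d)}]\ge 0$, so the cross term can simply be discarded. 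Each remaining variance is then bounded by a second moment using $\hat{R}(d)^2\le 1$, $\alpha_k^2\le\alpha_k$, $\beta_k^2\le\beta_k$ and $P(R=1\mid d)\le 1$, which is what yields the combined per-document bound $2\,\omega_0(d)$ and hence the factor $2$ in the theorem. Neither of the two alternatives you suggest is what the paper does, and one of them would be lossy: a generic $\mathrm{Var}(A-B)\le 2\mathrm{Var}(A)+2\mathrm{Var}(B)$ costs an extra factor of $2$, and the condition $0\le\hat{R}(d)\le 2P(R=1\mid d,q)$ from the \ac{DR} variance-reduction result is not invoked anywhere in this proof. With Lemma~\ref{cov-lemma} supplied, your outline goes through exactly as written.
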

    \begin{proof}
        For a proof, we refer to the appendix (Theorem~\ref{sec:perfbound-proof}).
    \end{proof}
    
\noindent%
Given the novel generalization bound from Theorem~\ref{CLTR-bound}, we define the safe \ac{DR} \ac{CLTR} objective as follows:
\begin{equation}
    \max_{\pi}  \hat{U}_{\text{DR}}(\pi) - \mleft( 1 + \max_{k} \frac{ \beta_k}{\alpha_k} \mright) \sqrt{ \frac{2 Z}{N}  \Big(\frac{1-\delta}{\delta}\Big) \hat{d}_2(\omega \,\Vert\, \omega_{0})},
\label{dr-objgenbound}
\end{equation}
where $\hat{d}_2(\omega \,\Vert\, \omega_0)$ is defined analogously to Eq.~\ref{eq:actionbaseddiv}.
Note that we ignore the term $\sqrt{ \frac{\mathrm{1}}{N}  \Big(\frac{1-\delta}{\delta}\Big)}$, since it is constant with respect to the policy $\pi$, and therefore can be disregarded for optimization purposes.
The objective optimizes the lower-bound on the true utility function, through a linear combination of the empirical \ac{DR} estimator ($\hat{U}_{\text{DR}}(\pi)$) and the empirical risk regularization term ($\hat{d}_2(\omega \,\Vert\, \omega_0)$). 
In a setting where click data is limited, our safe \ac{DR} objective will weight the risk regularization term higher, and as a result, the objective ensures that the new policy stays close to the safe logging policy. 
When a sufficiently high volume of click data is collected, and thus we have higher confidence in the \ac{DR} estimate, the objective falls back to its \ac{DR} objective counterpart.

For the choice of the ranking policy ($\pi$), we propose to optimize a stochastic ranking policy $\pi$ with a gradient descent-based method. 
For the gradient calculation, we refer to previous work~\cite{yadav2021policy,gupta2023safe,oosterhuis2022doubly}.

\header{Conditions for safe \ac{DR} \ac{CLTR}}
Finally, we note that besides the explicit assumption that user behavior follows the trust bias model (Assumption~\ref{assumption:trustbias}), there is also an important implicit assumption in this approach.
Namely, the approach assumes that the bias parameters (i.e., $\alpha$ and $\beta$) are known, a common assumption in the \ac{CLTR} literature~\citep{oosterhuis2022doubly, oosterhuis2020learning}.
However, in practice, either of these assumptions could not hold, i.e., user behavior could not follow the trust bias model, or a model's bias parameters could be wrongly estimated.
Additionally, in adversarial settings where clicks are intentionally misleading or incorrectly logged~\cite{radlinski2007addressing,liu2023black,castillo2011adversarial}, the user behavior assumptions do not hold, and, the generalization bound of our \ac{DR} \ac{CLTR} is not guaranteed to hold.
Thus, whilst it is an important advancement over the existing safe \ac{CLTR} method~\cite{gupta2023safe}, our approach is limited to only providing a \emph{conditional} form of safety.

\section{Method: Proximal Ranking Policy Optimization (PRPO)}

Inspired by the limitations of the method introduced in Section~\ref{sec:method:existingCLTR} and the \ac{PPO} method from the \ac{RL} field (Section~\ref{sec:relatedwork}), we propose the first \emph{unconditionally} safe \ac{CLTR} method: \acfi{PRPO}.
Our novel \ac{PRPO} method is designed for practical safety by making \emph{no assumptions} about user behavior.
Thereby, it provides the most robust safety guarantees for \ac{CLTR} yet.

For safety, instead of relying on a high-confidence bound (e.g., Eq.~\ref{objgenbound} and~\ref{dr-objgenbound}), \ac{PRPO} guarantees safety by removing the incentive for the new policy to rank documents too much higher than the safe logging policy. 
This is achieved by directly clipping the ratio of the metric weights for a given query $q_i$ under the new policy $\omega_i(d)$, and the logging policy ($\omega_{i,0}(d)$), i.e., $\frac{\omega_{i}(d)}{\omega_{i,0}(d)}$ to be bounded in a fixed predefined range: $\mleft[\epsilon_{-}, \epsilon_{+}\mright]$. 
As a result, the \ac{PRPO} objective provides no incentive for the new policy to produce weights $\omega_i(d)$ outside of the range: $\epsilon_{-} \cdot \omega_{i,0}(d) \leq \omega_i(d) \leq \epsilon_{+} \cdot \omega_{i,0}(d)$.

Before defining the \ac{PRPO} objective, we first introduce a term $r(d|q)$ that represents an unbiased \ac{DR} relevance estimate, weighted by $\omega_{0}$, for a single document-query pair (cf.\ Eq.~\ref{cltr-obj-dr}):
\begin{equation}
\begin{split}
        r(d | q)  = \omega_{0}(d | q)\hat{R}(d | q) +   \frac{\omega_{0}(d | q)}{\rho_{0}(d | q)} \! \sum_{i \in \mathcal{D} : q_i = q \hspace{-0.7cm}}  \mleft(c_i(d)  - \alpha_{k_i(d)}\hat{R}(d | q)   -  \beta_{k_i(d)}\mright).
\end{split}        
\end{equation}
For the sake of brevity, we drop $\pi$ and $\pi_{0}$ from the notation when their corresponding value is clear from the context.
This enables us to reformulate the \ac{DR} estimator around the ratios between the metric weights $\omega$ and $\omega_0$ (cf.\ Eq.~\ref{cltr-obj-dr}):
\begin{equation}
    \hat{U}_{\text{DR}}(\pi) =  \sum_{q,d \in \mathcal{D}}  \frac{\omega(d\mid q)}{\omega_{0}(d \mid q)} r(d \mid q).
    \label{eq:dr_reformulate}
\end{equation}
Before defining the proposed \ac{PRPO} objective, we first define the following clipping function:
\begin{equation}
    f(x,\epsilon_{-}, \epsilon_{+}, r) = 
\begin{cases}
    \min(x, \epsilon_{+}) \cdot r  & r \geq 0,\\
    \max(x, \epsilon_{-}) \cdot r & \text{otherwise}.
\end{cases}  
\label{eq:clipping_function}  
\end{equation}
Given the reformulated \ac{DR} estimator (Eq.~\ref{eq:dr_reformulate}), and the clipping function (Eq.~\ref{eq:clipping_function}), the \ac{PRPO} objective can be defined as follows:
\begin{equation}
    \hat{U}_{\text{PRPO}}(\pi) =  \sum_{q,d \in \mathcal{D}}  f\mleft(\frac{\omega(d \mid q)}{\omega_{0}(d \mid q)}, \epsilon_{-}, \epsilon_{+}, r(d \mid q)\mright).
    \label{eq:prpo_obj}
\end{equation}
Figure~\ref{fig:prpo} visualizes the effect the clipping of \ac{PRPO} has on the optimization incentives.
We see how the clipped and unclipped weight ratios progress as documents are placed on different ranks.
The unclipped weights keep increasing as documents are moved to the top of the ranking, when $r>1$, or to the bottom, when $r<1$.
Consequently, optimization with unclipped weight ratios aims to place these documents at the absolute top or bottom positions.
Conversely, the clipped weights do not increase beyond their clipping threshold, which for most document is reached before being placed at the very top or bottom position.
As a result, optimization with clipped weight ratios will not push these documents beyond these points in the ranking.
For example, when $r>0$, we see that there is no incentive to place a document at higher than rank 6, if it was placed at rank 8 by the logging policy.
Similarly, placement higher than rank 4 leads to no gain if the original rank was 6, and higher than rank 3 leads to no improvement gain from an original rank of 4.
Vice versa, when $r<0$, each document has a rank, where placing it lower than that rank brings no increase in clipped weight ratio.
Importantly, this behavior only depends on the metric and the logging policy; \ac{PRPO} makes \emph{no further assumptions}.

\begin{figure}[!t]
    \begin{center}
    \includegraphics[scale=0.48]{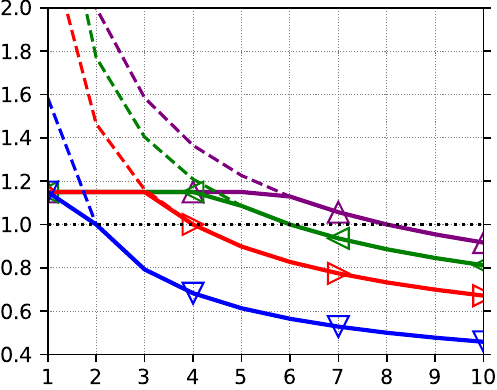}
    \includegraphics[scale=0.48]{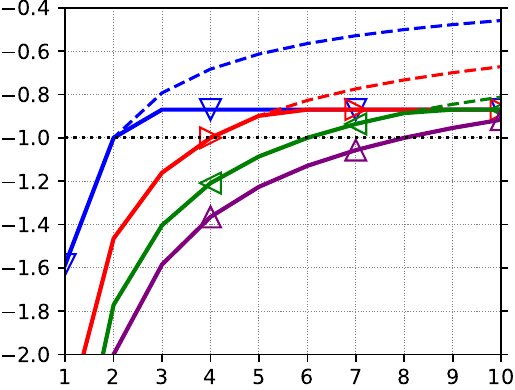}\\
    \includegraphics[width=0.99\columnwidth]{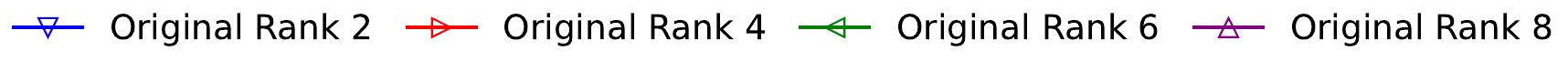}
    \end{center}
        \caption{
        Weight ratios in the clipped \ac{PRPO} objective (solid lines) and the unclipped counterparts (dashed lines), as documents are moved from four different original ranks.
        Left: positive relevance, $r=1$; right: negative relevance, $r=-1$;
        x-axis: new rank for document;
        y-axis: unclipped weight ratios (dashed lines), $r\cdot\omega_i(d)/\omega_{i,0}(d)$;
        and 
        clipped \ac{PRPO} weight ratios (solid lines),
        $f\mleft(\omega_i(d)/\omega_{i,0}(d), \epsilon_{-} = 1.15^{-1}, \epsilon_{+}= 1.15, r=\pm1\mright)$.
        DCG metric weights used: $\omega_i(d) = \log_2(\textmd{rank}(d \mid q_i, \pi) + 1)^{-1}$.
        }
        \label{fig:prpo}
\end{figure}

Whilst the clipping of \ac{PRPO} is intuitive, we can prove that it provides the following formal form of unconditional safety:
\begin{theorem}
\label{PRPO-proof}
Let $q$ be a query, $\omega$ be metric weights, $y_0$ be a logging policy ranking, and $y^*(\epsilon_{-},\epsilon_{+})$ be the ranking that optimizes the \ac{PRPO} objective in Eq.~\ref{eq:prpo_obj}.
Assume that $\forall d, \in \mathcal{D}, r(d \mid q) \not= 0$.
Then, 
for any $\Delta \in \mathbb{R}_{\geq0}$, there exist values for $\epsilon_{-}$ and $\epsilon_{+}$ that guarantee that the difference between the utility of $y_0$ and $y^*(\epsilon_{-},\epsilon_{+})$ is bounded by $\Delta$:
\begin{equation}
\forall \Delta \! \in \mathbb{R}_{\geq0}, \exists \epsilon_{-} \!\!\in \mathbb{R}_{\geq0}, \epsilon_{+}\!\! \in \mathbb{R}_{\geq0} \;\; | U(y_0) -  U(y^*(\epsilon_{-}, \epsilon_{+}))  | \leq \Delta.
\label{eq:prpotheorem}
\end{equation}
\end{theorem}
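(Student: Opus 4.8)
The plan is to treat the clipping window $[\epsilon_-,\epsilon_+]$ as a dial that interpolates between ``reproduce the logging ranking'' and ``ignore it'': I will show that for a window collapsed towards $\{1\}$ the utility gap vanishes, while for an arbitrarily wide window it is at most a fixed constant, so every target $\Delta$ is attained in between. Throughout I restrict to the natural sub-range $0\le\epsilon_-\le1\le\epsilon_+$ (contained in $\mathbb{R}_{\ge0}$), read the statement per query, and write $V(y):=\sum_{d}f\!\left(\tfrac{\omega_y(d)}{\omega_{y_0}(d)},\epsilon_-,\epsilon_+,r(d\mid q)\right)$ for the (per-query restriction of the) \ac{PRPO} objective of Eq.~\eqref{eq:prpo_obj}, so that $y^*$ maximises $V$, and $U(y)=\sum_d\omega_y(d)\,P(R{=}1\mid d,q)$. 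Two elementary facts about $f$ do the work. (i) Since $\min(1,\epsilon_+)=1$ and $\max(1,\epsilon_-)=1$, we have $f(1,\epsilon_-,\epsilon_+,r)=r$ for all $r$, and because $\omega_{y_0}(d)/\omega_{y_0}(d)=1$ this gives $V(y_0)=\sum_d r(d\mid q)$. (ii) Since $\min(x,\epsilon_+)\le\epsilon_+$ and $\max(x,\epsilon_-)\ge\epsilon_-$, termwise $f(x,\epsilon_-,\epsilon_+,r)\le\epsilon_+r$ when $r\ge0$ and $\le\epsilon_-r$ when $r<0$. Summing (ii) and combining with (i) and $V(y^*)\ge V(y_0)$,
\[
0\;\le\;V(y^*)-V(y_0)\;\le\;(\epsilon_+-1)\!\!\!\sum_{d:\,r(d\mid q)\ge0}\!\!\! r(d\mid q)\;+\;(1-\epsilon_-)\!\!\!\sum_{d:\,r(d\mid q)<0}\!\!\!|r(d\mid q)|\;=:\;B(\epsilon_-,\epsilon_+),
\]
and $B(\epsilon_-,\epsilon_+)\to 0$ as $\epsilon_+\downarrow1$, $\epsilon_-\uparrow1$.

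Next I convert this shrinking objective margin into a shrinking change of the ranking weights. Write $V(y^*)-V(y_0)=\sum_d g(d)$ with $g(d)=f\!\left(\tfrac{\omega_{y^*}(d)}{\omega_{y_0}(d)},\epsilon_-,\epsilon_+,r(d\mid q)\right)-r(d\mid q)$. A case check on the signs of $r(d\mid q)$ and of $\omega_{y^*}(d)-\omega_{y_0}(d)$ gives: $g(d)=0$ if $d$ keeps its $y_0$ rank; $0<g(d)\le\max(\epsilon_+-1,\,1-\epsilon_-)\,|r(d\mid q)|$ if $d$ is moved ``with its sign'' (a relevance-positive document upward, or a relevance-negative one downward), where the clip binds; and $g(d)=\left(\tfrac{\omega_{y^*}(d)}{\omega_{y_0}(d)}-1\right)r(d\mid q)<0$, \emph{un-clipped}, if $d$ is moved ``against its sign''. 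Because $\sum_d g(d)\ge0$, the total magnitude of the negative terms is bounded by that of the positive terms, hence by $\max(\epsilon_+-1,\,1-\epsilon_-)\sum_d|r(d\mid q)|$; since every $r(d\mid q)\ne0$ by hypothesis, this forces $\omega_{y^*}(d)/\omega_{y_0}(d)\to1$, i.e.\ $\omega_{y^*}(d)\to\omega_{y_0}(d)$, for every against-sign document as the window closes. For with-sign documents $f$ is flat once the clip is reached, so $V$ is indifferent to how far past that point the document is pushed; adopting the natural convention that $y^*$ is the $V$-maximiser deviating least from $y_0$ (without which $y^*$ need not be unique), and using that the metric weights $\omega_k$ form a strictly decreasing finite sequence so the consecutive ratios $\omega_{k-1}/\omega_k$ are bounded, a sufficiently narrow window confines each with-sign document to its $y_0$ rank or one step away from it, save for a fixed, window-independent number of ``shift'' documents forced by the permutation constraint, whose individual weight change is at most $\omega_1$. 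Altogether $\sum_d|\omega_{y^*}(d)-\omega_{y_0}(d)|$ can be driven below any prescribed positive value by choosing $\epsilon_+$ and $\epsilon_-$ close enough to $1$.

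Finally, since $|U(y_0)-U(y^*)|=\bigl|\sum_d(\omega_{y_0}(d)-\omega_{y^*}(d))P(R{=}1\mid d,q)\bigr|\le\sum_d|\omega_{y_0}(d)-\omega_{y^*}(d)|$ using $P(R{=}1\mid d,q)\le1$, the previous step yields $|U(y_0)-U(y^*)|\le h(\epsilon_-,\epsilon_+)$ for a function $h$ with $h(\epsilon_-,\epsilon_+)\to0$ as $\epsilon_+\downarrow1$, $\epsilon_-\uparrow1$. At the opposite extreme, taking $\epsilon_-=0$ and $\epsilon_+$ large enough that $\min(x,\epsilon_+)=x$ for all realizable ratios makes the clipping vacuous, $y^*$ the unconstrained \ac{DR} optimiser, and the gap at most a fixed $\Delta_{\max}<\infty$. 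Given any $\Delta\in\mathbb{R}_{\ge0}$: if $\Delta\ge\Delta_{\max}$, that wide window works; otherwise pick $(\epsilon_-,\epsilon_+)$ near $(1,1)$ with $h(\epsilon_-,\epsilon_+)\le\Delta$, which exists by the limit. This proves Eq.~\eqref{eq:prpotheorem}.

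The hard part is the middle step: upgrading ``$V(y^*)$ is within $B(\epsilon_-,\epsilon_+)$ of $V(y_0)$'' to ``the ranking $y^*$ is within $\Delta$ of $y_0$ in true utility.'' Because $f$ is flat beyond the clip, objective-optimality alone does not constrain how far a relevance-positive document may be pushed up, and pinning it down needs the $r(d\mid q)\ne0$ hypothesis (so every document's placement is registered by the objective), the discreteness of the metric weights, and the ``least-deviating optimiser'' convention, used together. The remaining steps --- the identity $V(y_0)=\sum_d r(d\mid q)$, the clip-based upper bound on $V$, and the passage from weight differences to a utility difference --- are routine bookkeeping.
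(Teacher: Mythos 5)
Your proposal reaches the right conclusion, but by a far heavier route than the paper takes, and the heavy part is exactly where it is least airtight. The theorem is only an existence claim over $(\epsilon_-,\epsilon_+)$, and the paper's proof is essentially one line: set $\epsilon_-=\epsilon_+=1$; then any deviation from $y_0$ can only decrease (never increase) the clipped weight ratios, so the \ac{PRPO}-optimal ranking is $y_0$ itself, the utility gap is $0$, and $0\le\Delta$ for every $\Delta\in\mathbb{R}_{\ge0}$. Your first step already contains this observation ($f(1,\epsilon_-,\epsilon_+,r)=r$, hence $V(y_0)=\sum_d r(d\mid q)$, and the termwise clip bound), so you could have stopped at the degenerate window. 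What you prove instead is a genuinely stronger, quantitative statement --- that the utility gap shrinks continuously as the window closes and is uniformly bounded when the window is wide --- which is closer in spirit to the paper's informal remark after the proof that intermediate $(\epsilon_-,\epsilon_+)$ ``bound the difference in utility to a lesser degree,'' a claim the paper itself never proves.

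The cost of that stronger statement is the middle step you flag yourself: because $f$ is flat beyond the clip, objective-optimality does not pin down where with-sign documents land, and you have to import a ``least-deviating optimiser'' tie-breaking convention plus a somewhat hand-waved permutation-counting argument to confine them. That convention is not in the theorem statement, so as written your argument proves the continuity claim only for a particular selection of maximiser. To be fair, the paper's proof has the same latent issue at $(1,1)$ --- swapping a positive-$r$ document upward with a negative-$r$ document downward leaves the clipped objective unchanged, so $y_0$ is a maximiser but not necessarily the unique one --- and both proofs implicitly resolve ties in favour of $y_0$. For the existence claim actually being asserted, your proof is valid once you take $(\epsilon_-,\epsilon_+)=(1,1)$ exactly (which your allowed range includes, and which is needed anyway to cover $\Delta=0$, where a limit argument alone does not suffice); everything between your first and last paragraphs is extra machinery the theorem does not require.
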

\begin{proof}
A proof is given in Appendix~\ref{sec:prpo-proof}.
\end{proof}

\header{Adaptive clipping}
Theorem~\ref{PRPO-proof} describes a very robust sense of safety, as it shows \ac{PRPO} can be used to prevent any given decrease in performance without assumptions.
However, it also reveals that this safety comes at a cost; \ac{PRPO} prevents both decreases and increases of performance.
This is very common in safety approaches, as there is a generally a tradeoff between risks and rewards~\citep{gupta2023safe}.
Existing safety methods, such as the safe \ac{CLTR} approach of Section~\ref{sec:method:existingCLTR}, generally, loosen their safety measures as more data becomes available, and the risk is expected to have decreased~\citep{thomas2015high}.

We propose a similar strategy for \ac{PRPO} through adaptive clipping, where the effect of clipping decreases as the number of datapoints $N$ increases.
Specifically, we suggest using a monotonically decreasing $\delta(N)$ function such that $\lim_{N \rightarrow \infty} \delta(N) = 0$.
The $\epsilon$ parameters can then be obtained through the following transformation: $\epsilon_{-} = \delta(N)$ and $\epsilon_{+} = \frac{1}{\delta(N)}$.
This leads to a clipping range of $[\delta(N), \frac{1}{\delta(N)}]$, and in the limit: $\lim_{N \rightarrow \infty}$, it becomes: $[0,\infty]$.
In other words, as more data is gathered, the effect of \ac{PRPO} clipping eventually disappears, and the original objective is recovered.
The exact choice of $\delta(N)$ determines how quickly this happens.

\header{Gradient ascent with PRPO and possible extensions}
Finally, we consider how the \ac{PRPO} objective should be optimized. This turns out to be very straightforward when we look at its gradient.
The clipping function $f$ (Eq.~\ref{eq:clipping_function}) has a simpler gradient involving an indicator function on whether $x$ is inside the bounded range:
\begin{equation}
\nabla_{x} f(x, \epsilon_{-}, \epsilon_{+}, r)
=  \mathds{1}\big[
(r > 0 \land x \leq \epsilon_{+})
\lor
(r < 0 \land x \geq \epsilon_{-})
\big] r.
\end{equation}
Applying the chain rule to the \ac{PRPO} objective (Eq.~\ref{eq:prpo_obj}) reveals:
\begin{equation}
\nabla_{\!\pi} \hat{U}_{\text{PRPO}}(\pi) = \sum_{q,d \in \mathcal{D}}
	\underbrace{\!\!\!\!
	\Big[\nabla_{\!\pi} \frac{\omega(d | q)}{\omega_{0}(d | q)} \Big]
	}_\text{\hspace{-1cm}grad. for single doc.\hspace{-1cm}}
	\underbrace{\!\!
	\nabla_{\!\pi} f\bigg( \! \frac{\omega(d | q)}{\omega_{0}(d | q)}, \epsilon_{-}, \epsilon_{+}, r(d | q) \! \bigg)
	}_\text{\vphantom{g}indicator reward function}.
\end{equation}
Thus, the gradient of \ac{PRPO} simply takes the importance weighted metric gradient per document, and multiplies it with the indicator function and reward.
As a result, \ac{PRPO} is simple to combine with existing \ac{LTR} algorithms, especially ones that use policy-gradients~\cite{williams1992simple}, such as PL-Rank~\citep{oosterhuis2021computationally, oosterhuis2022learning} or StochasticRank~\citep{ustimenko2020stochasticrank}.
For methods in the family of LambdaRank~\citep{wang2018lambdaloss, burges2010ranknet, burges2006learning}, it is a matter of replacing the $|\Delta DCG|$ term with an equivalent for the PRPO bounded metric.

Lastly, we note that whilst we introduced \ac{PRPO} for \ac{DR} estimation, it can be extended to virtually any relevance estimation by choosing a different $r$;
e.g., one can easily adapt it for \ac{IPS}~\citep{oosterhuis2021unifying,joachims2017unbiased}, or relevance estimates from a click model~\citep{chuklin-click-2015}, etc.
In this sense, we argue \ac{PRPO} can be seen as a framework for robust safety in \ac{LTR}.

\section{Experimental Setup}

For our experiments, we follow the semi-synthetic experimental setup that is prevalent in the \ac{CLTR} literature~\citep{oosterhuis2022doubly,oosterhuis2021robust,vardasbi2020inverse,gupta2023safe}.
We make use of the three largest publicly available \ac{LTR} datasets: Yahoo!\ Webscope~\cite{chapelle2011yahoo}, MSLR-WEB30k~\citep{qin2013introducing}, and Istella~\citep{dato2016fast}.
The datasets consist of queries, a preselected list of documents per query, query-document feature vectors, and manually-graded relevance judgments for each query-document pair.

{\renewcommand{\arraystretch}{0.01}
\setlength{\tabcolsep}{0.04cm}
\begin{figure}[!t]
\centering
\begin{tabular}{c r r}
&
 \multicolumn{1}{c}{\small Yahoo! Webscope}
&
 \multicolumn{1}{c}{\small MSLR-WEB30k}
\\
\rotatebox[origin=lt]{90}{\hspace{1.4cm}\small NDCG@5} &
\includegraphics[scale=0.455]{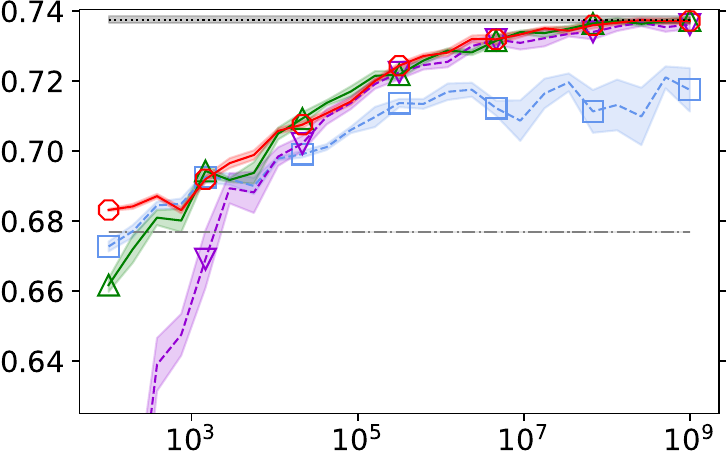} &
\includegraphics[scale=0.455]{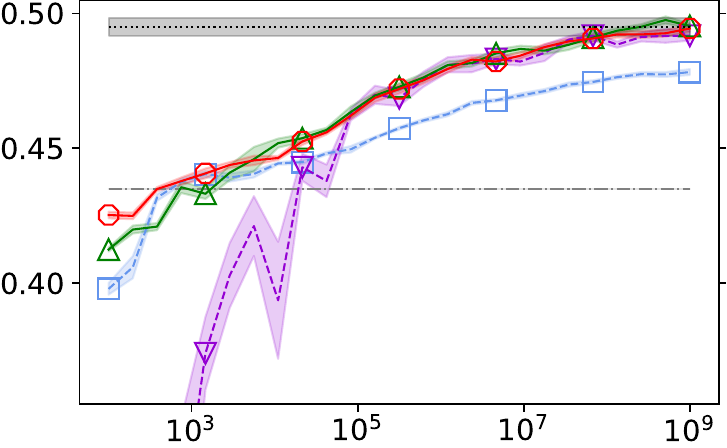}
\\
& \multicolumn{1}{c}{\small Number of interactions simulated ($N$)}
& \multicolumn{1}{c}{\small Number of interactions simulated ($N$)}
\\[2mm]
\end{tabular}

\vspace{0.3\baselineskip}
\includegraphics[scale=0.40]{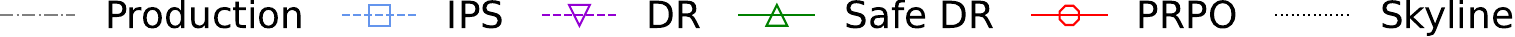}

\caption{
Performance in terms of NDCG@5 of the \ac{IPS}, \ac{DR} and proposed safe \ac{DR} ($\delta=0.95$) and \ac{PRPO} ($\delta(N)=\frac{100}{N}$) methods for \ac{CLTR} on Yahoo! Webscope and MSLR-WEB30k datasets.
The results are presented varying size of training data ($N$), with number of simulated queries varying from $10^2$ to $10^9$.
Results are averaged over 10 runs; the shaded areas indicate 80\% prediction intervals. 
}
\label{fig:mainresults_safe_part1}
\end{figure}
}
{\renewcommand{\arraystretch}{0.01}
\setlength{\tabcolsep}{0.04cm}
\begin{figure}
\centering
\begin{tabular}{c r}
&
 \multicolumn{1}{c}{\small Istella}
\\[2mm]
\rotatebox[origin=lt]{90}{\hspace{1.4cm}\small NDCG@5} &
\includegraphics[scale=0.455]{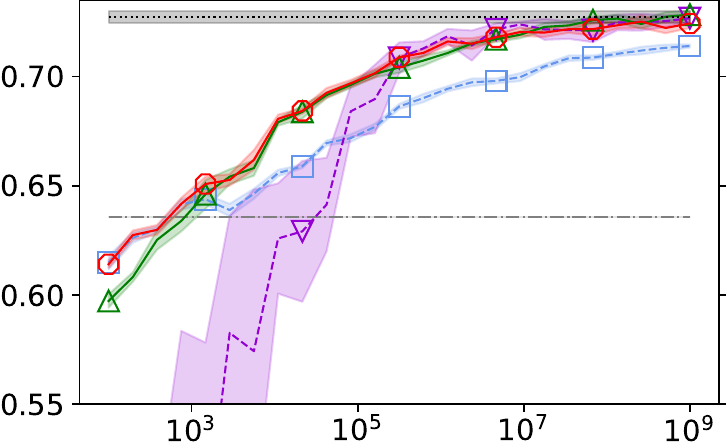}
\\
& \multicolumn{1}{c}{\small Number of interactions simulated ($N$)}
\\[2mm]
\end{tabular}

\vspace{0.3\baselineskip}
\includegraphics[scale=0.46]{05-safety_cikm/figures/legend1.pdf}

\caption{
Performance in terms of NDCG@5 of the \ac{IPS}, \ac{DR} and proposed safe \ac{DR} ($\delta=0.95$) and \ac{PRPO} ($\delta(N)=\frac{100}{N}$) methods for \ac{CLTR} on the Istella dataset.
The results are presented varying size of training data ($N$), with number of simulated queries varying from $10^2$ to $10^9$.
Results are averaged over 10 runs; the shaded areas indicate 80\% prediction intervals. 
}
\label{fig:mainresults_safe_part2}
\end{figure}
}

Following~\cite{oosterhuis2022doubly,vardasbi2020inverse,gupta2023safe}, we train a production ranker on a $3\%$ fraction of the training queries and their corresponding relevance judgments.
The goal is to simulate a real-world setting where a ranker trained on manual judgments is deployed in production and is used to collect click logs.
The collected click logs can then be used for \ac{LTR}. 
We assume the production ranker is safe, given that it would serve live traffic in a real-world setup. 

We simulate a top-$K$ ranking setup~\cite{oosterhuis2020policy} where only $K=5$ documents are displayed to the user for a given query, and any document beyond that gets zero exposure.
To get the relevance probability, we apply the following transformation: $P(R=1 \mid q, d) = 0.25 \cdot rel(q,d)$, where $rel(q,d) \in \{0,1,2,3,4\}$ is the relevance judgment for the given query-document pair.
We generate clicks based on the trust bias click model (Assumption~\ref{assumption:trustbias}):
\begin{equation}
    P(C=1 \mid q, d, k) = \alpha_k P(R=1 \mid q, d) + \beta_k.   
\label{click_simulation}
\end{equation}
The trust bias parameters are set based on the empirical observation in \citep{agarwal2019addressing}: $\alpha = [0.35, 0.53, 0.55, 0.54, 0.52]$, and $\beta = [0.65, 0.26, 0.15$, $0.11$, $0.08]$.
For \ac{CLTR} training, we only use the training and validation clicks generated via the click simulation process (Eq.~\ref{click_simulation}).
To test the robustness of the safe \ac{CLTR} methods in a setting where the click model assumptions do not hold,
we simulate an \emph{adversarial click model}, where the user clicks on the irrelevant document with a high probability and on a relevant document with a low click probability.
We define the adversarial click model as:
\begin{equation}
    P(C=1 \mid q, d, k) = 1 - \mleft( \alpha_k P(R=1 \mid q, d) + \beta_k \mright).   
\label{click_simulation_adv}
\end{equation}
Thereby, we simulate a maximally \emph{adversarial} user who clicks on documents with a click probability that is inversely correlated with the assumed trust bias model (Assumption~\ref{assumption:trustbias}).

Further, we assume that the logging propensities have to be estimated.
For the logging propensities $\rho_0$, and the logging metric weights ($\omega_0$), we use a simple Monte Carlo estimate~\citep{gupta2023safe}:
\begin{equation}
    \hat{\rho}_0(d ) = \frac{1}{N} \sum^N_{i=1: y_i \sim \pi_{0}\hspace{-1cm}}   \alpha_{k_{i}(d)},
    \quad
    \hat{\omega}_0(d ) = \frac{1}{N} \sum^N_{i=1: y_i \sim \pi_{0}\hspace{-0.8cm}} \mleft( \alpha_{k_{i}(d)} + \beta_{k_{i}(d)} \mright).
    \label{prop-estimate1}  
\end{equation}
For the learned policies ($\pi$), we optimize \ac{PL} ranking models~\citep{oosterhuis2021computationally} using the REINFORCE policy-gradient method~\cite{yadav2021policy,gupta2023safe}.
We perform clipping on the logging propensities (Eq.~\ref{policy-aware-exposure}) only for the training clicks and not for the validation set. 
Following previous work, we set the clipping parameter to $10 / \sqrt{N}$~\cite{gupta2023safe,oosterhuis2021unifying}. 
We do not apply the clipping operation for the logging metric weights (Eq.~\ref{eq:omega_logging}).
To prevent overfitting, we apply early stopping based on the validation clicks.
For variance reduction, we follow~\cite{yadav2021policy,gupta2023safe} and use the average reward per query as a control-variate.

As our evaluation metric, we compute the NDCG@5 metric using the relevance judgments on the test split of each dataset~\citep{jarvelin2002cumulated}. 
Finally, the following methods are included in our comparisons:
\begin{enumerate}[leftmargin=*,label=(\roman*)]
    \item  \emph{IPS}. The \ac{IPS} estimator with affine correction~\cite{vardasbi2020inverse,oosterhuis2021unifying} for \ac{CLTR} with trust bias (Eq.~\ref{cltr-obj-ips-positionbias}).
    \item  \emph{Doubly Robust}. The \ac{DR} estimator for \ac{CLTR} with trust bias (Eq.~\ref{cltr-obj-dr}). 
    This is the most important baseline for this chapter, given that the \ac{DR} estimator is the state-of-the-art \ac{CLTR} method~\cite{oosterhuis2022doubly}.
    \item  \emph{Safe \ac{DR}}. Our proposed safe \ac{DR} \ac{CLTR} method (Eq.~\ref{dr-objgenbound}), which relies on the trust bias assumption (Assumption~\ref{assumption:trustbias}).
     \item  \emph{\ac{PRPO}}. Our proposed \acfi{PRPO} method for safe \ac{DR} \ac{CLTR} (Eq.~\ref{eq:prpo_obj}).
     \item  \emph{Skyline.} \ac{LTR} method trained on the true relevance labels. Given that it is trained on the real relevance signal, the skyline performance is the upper bound on any \ac{CLTR} methods performance. 
\end{enumerate}

\section{Results and Discussion}
\begin{figure}[!th]
    \centering
    {\renewcommand{\arraystretch}{0.01}%
    \setlength{\tabcolsep}{0.04cm}%
    \begin{tabular}{c r r}
        &
         \multicolumn{1}{c}{\small Yahoo! Webscope}
        &
         \multicolumn{1}{c}{\small MSLR-WEB30k}
        \\[2mm]
        \rotatebox[origin=lt]{90}{\hspace{0.77cm}\small NDCG@5} &
        \includegraphics[scale=0.475]{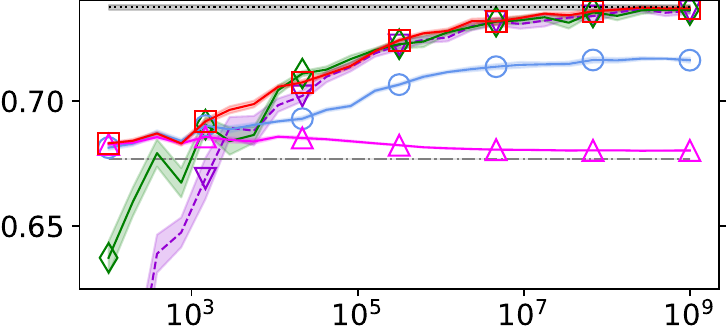} &
        \includegraphics[scale=0.475]{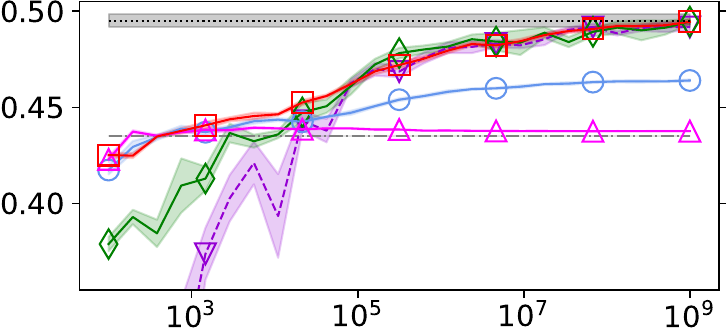}
        \\
        & \multicolumn{1}{c}{\small Number of interactions simulated ($N$)}
        & \multicolumn{1}{c}{\small Number of interactions simulated ($N$)}
        \\[2mm]
        & \multicolumn{2}{c}{\includegraphics[scale=0.5]{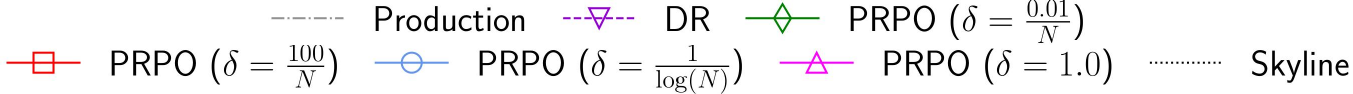}}
        \\[4mm]
        \rotatebox[origin=lt]{90}{\hspace{0.65cm} \small NDCG@5} &
        \includegraphics[scale=0.475]{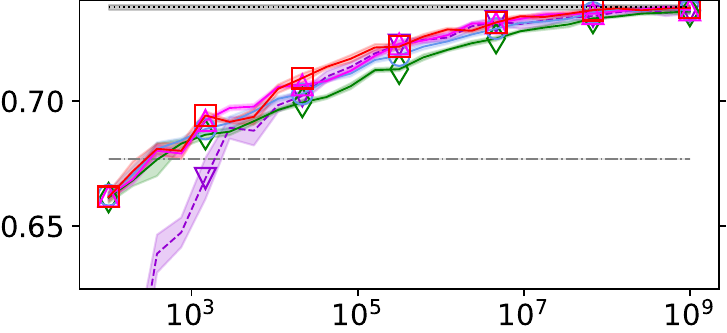} &
        \includegraphics[scale=0.475]{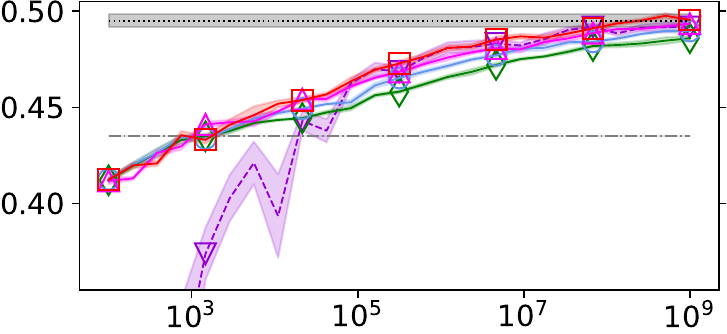}
        \\
        & \multicolumn{1}{c}{\small Number of interactions simulated ($N$)}
        & \multicolumn{1}{c}{\small Number of interactions simulated ($N$)}
        \\[2mm]
    \end{tabular}
    } %
    \vspace{0.3\baselineskip}
    \includegraphics[scale=0.5]{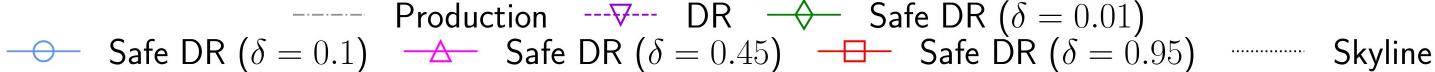}
    \caption{
        Performance of the safe \ac{DR} and \ac{PRPO} with varying safety parameter ($\delta$) on Yahoo! Webscope and MSLR-WEB30k datasets. 
        Top row: sensitivity analysis of \ac{PRPO} with varying clipping parameter ($\delta$) over varying dataset sizes $N$. 
        Bottom row: sensitivity analysis for the safe \ac{DR} method with varying safety confidence parameter ($\delta$). Results are averaged over 10 runs; shaded areas indicate $80\%$ prediction intervals.
    }
    \label{fig:ablationresults1}
\end{figure}

\noindent \textbf{Comparision with baseline methods.}
Figure~\ref{fig:mainresults_safe_part1} and \ref{fig:mainresults_safe_part2} present the main results with different \ac{CLTR} estimators with varying amounts of simulated click data.
Amongst the baselines, we see that the \ac{DR} estimator converges to the skyline much faster than the \ac{IPS} estimator. 
The \ac{IPS} estimator fails to reach the optimal performance even after training on $10^9$ clicks, suggesting that it suffers from a high-variance problem. 
This aligns with the findings in \citep{oosterhuis2022doubly}. 
As to safety, when the click data is limited ($N < 10^5$), the \ac{DR} estimator performs much worse than the logging policy, i.e., it exhibits unsafe behavior, which can lead to a negative user experience if deployed online. 
A likely explanation is that when click data is limited, the regression estimates ($\hat{R}(d)$, Eq.~\ref{cltr-obj-dr}) have high errors, resulting in a large performance degradation, compared to \ac{IPS}.

Our proposed safety methods, safe \ac{DR} and \ac{PRPO}, reach the performance of the logging policy within $\sim$500 queries on all datasets. 
For the safe \ac{DR} method, we set the confidence parameter $\delta=0.95$. For the \ac{PRPO} method, we set $\delta(N)=\frac{100}{N}$.  
On the MSLR and the ISTELLA dataset, we see that \ac{PRPO} reaches logging policy performance with almost $10^3$ fewer queries than the \ac{DR} method.  
Thus, our proposed methods, safe \ac{DR} and \ac{PRPO}, can be safely deployed, and avoid the initial period of bad performance of \ac{DR}, whilst providing the same state-of-the-art performance at convergence.

\begin{figure}[!t]
    \centering
    {\renewcommand{\arraystretch}{0.01}%
     \setlength{\tabcolsep}{0.04cm}%
     \begin{tabular}{c r}
       & \multicolumn{1}{c}{\small Istella} \\[2mm]
       \rotatebox[origin=lt]{90}{\hspace{0.77cm}\small NDCG@5} &
         \includegraphics[scale=0.475]{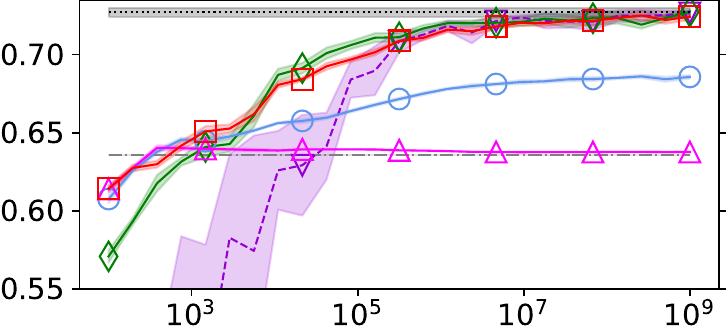} \\
       & \multicolumn{1}{c}{\small Number of interactions simulated ($N$)} \\[2mm]
     \end{tabular}
    }
    \vspace{0.3\baselineskip}
    \includegraphics[scale=0.47]{05-safety_cikm/figures/legend_prpo_main.pdf}
  \end{figure}
    \vspace{-0.1cm}
  \begin{figure}[!t]
    \centering
    {\renewcommand{\arraystretch}{0.01}%
     \setlength{\tabcolsep}{0.04cm}%
     \begin{tabular}{c r}
       \rotatebox[origin=lt]{90}{\hspace{0.65cm}\small NDCG@5} &
         \includegraphics[scale=0.475]{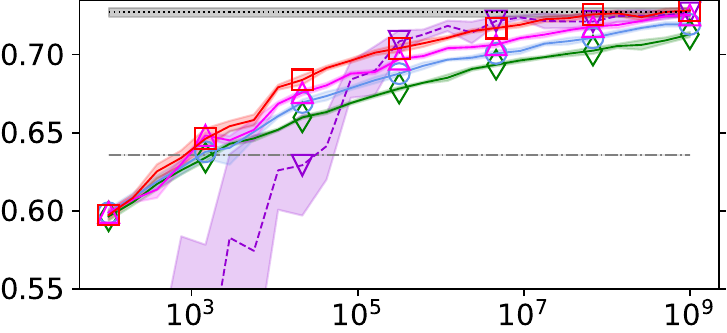} \\
       & \multicolumn{1}{c}{\small Number of interactions simulated ($N$)} \\[2mm]
     \end{tabular}
    }
    \vspace{0.3\baselineskip}
    \includegraphics[scale=0.47]{05-safety_cikm/figures/legend_risk_main.pdf}
    \caption{%
      Performance of the safe \ac{DR} and \ac{PRPO} with varying safety parameter ($\delta$) on ISTELLA dataset.
      Top row: sensitivity analysis of \ac{PRPO} with varying clipping parameter ($\delta$) over varying dataset sizes $N$.
      Bottom row: sensitivity analysis for the safe \ac{DR} method with varying safety confidence parameter ($\delta$).
      Results are averaged over 10 runs; shaded areas indicate $80\%$ prediction intervals.
    }
    \label{fig:ablationresults2}
\end{figure}

\begin{figure}[!t]
    \centering
    {\renewcommand{\arraystretch}{0.01}%
    \setlength{\tabcolsep}{0.04cm}%
    \begin{tabular}{c r r}
        &
        \multicolumn{1}{c}{\small Yahoo! Webscope}
        &
        \multicolumn{1}{c}{\small MSLR-WEB30k}
        \\
        \rotatebox[origin=lt]{90}{\hspace{0.77cm}\small NDCG@5} &
        \includegraphics[scale=0.475]{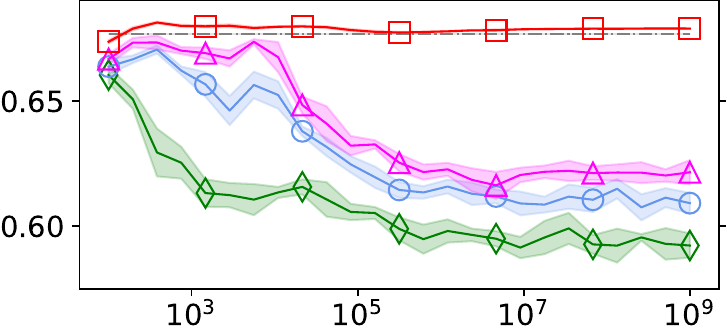} &
        \includegraphics[scale=0.475]{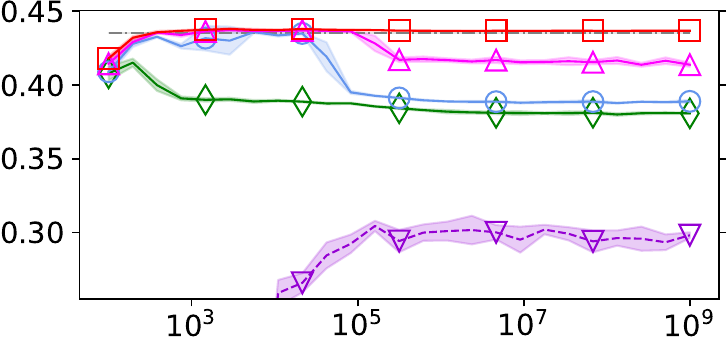}
        \\
        & \multicolumn{1}{c}{\small Number of interactions simulated ($N$)}
        & \multicolumn{1}{c}{\small Number of interactions simulated ($N$)}
        \\[4mm]
        & \multicolumn{2}{c}{\includegraphics[scale=0.5]{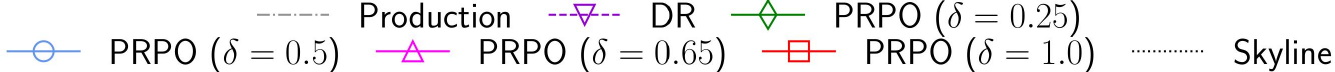}} \\[2mm]
        \rotatebox[origin=lt]{90}{\hspace{0.65cm} \small NDCG@5} &
        \includegraphics[scale=0.475]{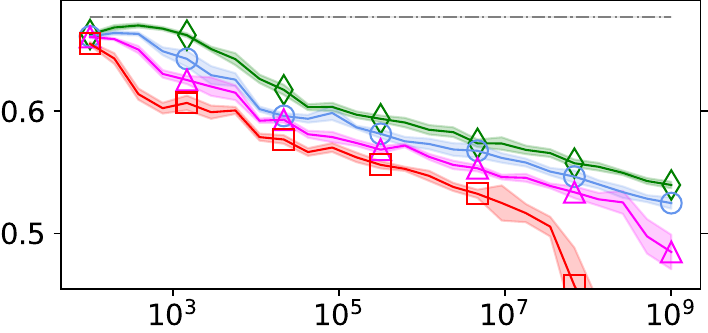} &
        \includegraphics[scale=0.475]{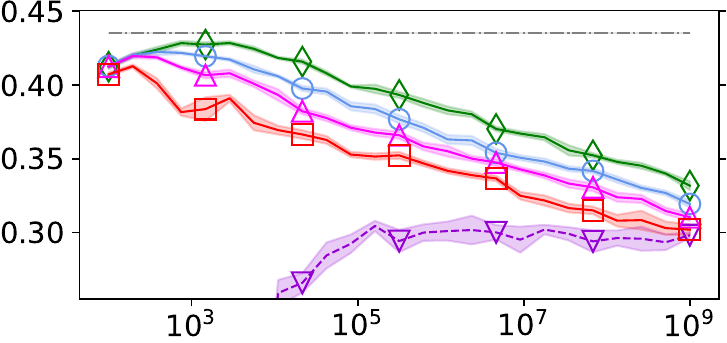}
        \\
        & \multicolumn{1}{c}{\small Number of interactions simulated ($N$)}
        & \multicolumn{1}{c}{\small Number of interactions simulated ($N$)}
        \\[2mm]
    \end{tabular}
    } %
    
    \vspace{0.3\baselineskip}
    \includegraphics[scale=0.5]{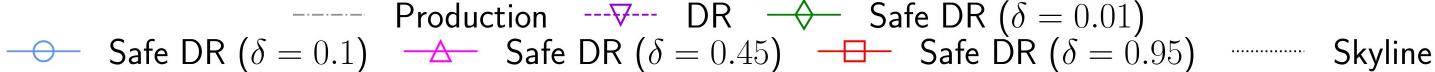}
    
    \caption{
        Performance of the proposed safe \ac{DR} and \ac{PRPO} with the adversarial click model on the Yahoo! and MSLR datasets.
        Top: sensitivity analysis results for the \ac{PRPO} method with varying clipping parameter ($\delta$).
        Bottom: sensitivity analysis for the safe \ac{DR} method with varying safety confidence parameter ($\delta$).
        Results are averaged over 10 independent runs; the shaded areas indicate $80\%$ prediction intervals.
    }
    \label{fig:ablationresults_adv1}
\end{figure}

\begin{figure}[!t]
    \centering
    {\renewcommand{\arraystretch}{0.01}%
    \setlength{\tabcolsep}{0.04cm}%
    \begin{tabular}{c r}
        & \multicolumn{1}{c}{\small Istella} \\[2mm]
        \rotatebox[origin=lt]{90}{\hspace{0.77cm}\small NDCG@5} &
        \includegraphics[scale=0.475]{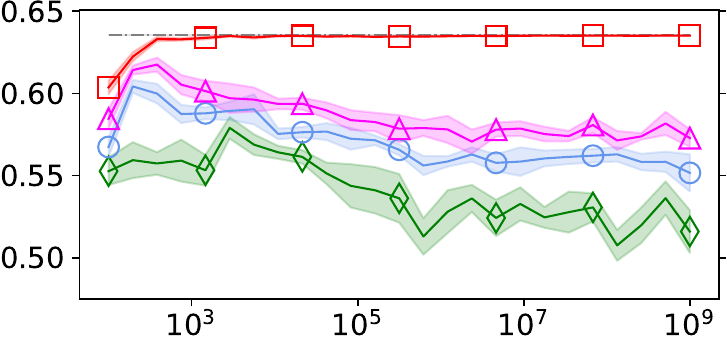} \\
        & \multicolumn{1}{c}{\small Number of interactions simulated ($N$)} \\[2mm]
    \end{tabular}
    }
    \vspace{0.3\baselineskip}
    \includegraphics[scale=0.5]{05-safety_cikm/figures/legend_abl_prpo.pdf}
    \par\bigskip
    {\renewcommand{\arraystretch}{0.01}%
    \setlength{\tabcolsep}{0.04cm}%
    \begin{tabular}{c r}
        \rotatebox[origin=lt]{90}{\hspace{0.65cm} \small NDCG@5} &
        \includegraphics[scale=0.475]{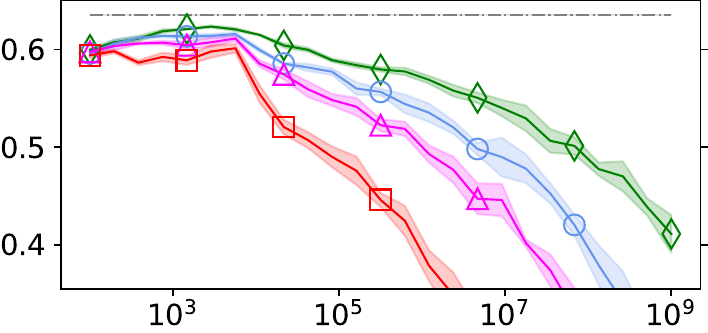} \\
        & \multicolumn{1}{c}{\small Number of interactions simulated ($N$)} \\[2mm]
    \end{tabular}
    }
    \vspace{0.3\baselineskip}
    \includegraphics[scale=0.5]{05-safety_cikm/figures/legend_abl_risk.pdf}
    \caption{
        Performance of the proposed safe \ac{DR} and \ac{PRPO} with the adversarial click model on the ISTELLA dataset.
        Top: sensitivity analysis results for the \ac{PRPO} method with varying clipping parameter ($\delta$).
        Bottom: sensitivity analysis for the safe \ac{DR} method with varying safety confidence parameter ($\delta$).
        Results are averaged over 10 independent runs; the shaded areas indicate $80\%$ prediction intervals.
    }
    \label{fig:ablationresults_adv2}
\end{figure}

\header{Sensitivity analysis of the safety parameter}
To understand the tradeoff between safety and utility, we performed a sensitivity analysis by varying the safety parameter ($\delta$) for the safe \ac{DR} method and \ac{PRPO}. 
The top rows of Figure~\ref{fig:ablationresults1} and \ref{fig:ablationresults2} show us the performance of the \ac{PRPO} method with different choices of the clipping parameter $\delta$ as a function of dataset size ($N$). 
We report results with the setting of the $\delta$ parameter, which results in different clipping widths. 
For the setting $\delta=\frac{0.01}{N}$ and $\delta = \frac{100}{N}$, the clipping range width grows linearly with the dataset size $N$. Hence, the resulting policy is safer at the start 
but converges to the \ac{DR} estimator when $N$ increases. 
With $\delta=\frac{0.01}{N}$, the clipping range is wider at the start. As a result, it is more unsafe than when $\delta=\frac{100}{N}$, which is the safest amongst all. 
For the case where the range grows logarithmically ($\delta=\frac{1}{\log(N)}$), the method is more conservative throughout, i.e., it is closer to the logging policy since the clipping window grows only logarithmically with $N$.
For the extreme case where the clipping range is a constant ($\delta=1$), \ac{PRPO} avoids any change w.r.t.\ the logging policy, and as a result, it sticks closely to the logging policy. 

The bottom rows of Figure~\ref{fig:ablationresults1} and \ref{fig:ablationresults2} show the performance of the safe \ac{DR} method with varying confidence parameter values ($\delta$). 
Due to the nature of the generalization bound (Eq.~\ref{dr-objgenbound}), the confidence parameter is restricted to: $0 \leq \delta \leq 1$.
We vary the confidence parameters in the range $\delta \in \{0.01,0.1,0.45,0.95\}$.
We note that a lower $\delta$ value results in higher safety, and vice-versa. 
Until $N < 10^5$, there is no noticeable difference in performance. 
For the Yahoo!\ Webscope dataset, almost all settings result in a similar performance. 
For the MSLR and ISTELLA datasets, when $N < 10^5$, a lower $\delta$ value results in a more conservative policy, i.e., a policy closer to the logging policy. 
However, the performance difference with different setups is less drastic than with the \ac{PRPO} method. 
Thus, we note that the safe \ac{DR} method is \emph{less flexible} in comparison to \ac{PRPO}.

Therefore, compared to our safe \ac{DR} method, we conclude that our \ac{PRPO} method provides practitioners with greater flexibility and control when deciding between safety and utility. 

\header{Robustness analysis using an adversarial click model}
To verify our initial claim that our proposed \ac{PRPO} method provides safety guarantees \emph{unconditionally}, we report results with clicks simulated via the adversarial click model (Eq.~\ref{click_simulation_adv}). 
With the adversarial click setup, the initial user behavior assumptions (Assumption~\ref{assumption:trustbias}) \emph{do not hold}. 
The top rows of Figure~\ref{fig:ablationresults_adv1} and \ref{fig:ablationresults_adv2} show the performance of the \ac{PRPO} method with different safety parameters when applied to the data collected via the adversarial click model. 
We vary the $\delta$ parameter for \ac{PRPO} in the range $\{0.25,0.5,0.65,1.0\}$, e.g.,  $\delta=0.5$ results in $\epsilon_{-}=0.5$ and $\epsilon_{+} = 2$.
With the constant clipping range ($\delta=1$), we notice that after $\sim$400 queries, the \ac{PRPO} methods performance never drops below the safe logging policy performance. 
For greater values of $\delta$, there are drops in performance but they are all bounded. 
For the Yahoo! Webscope dataset, the maximum drop in the performance is $\sim$12$\%$; for the MSLR30K dataset, the maximum performance drop is $\sim$10$\%$; and finally, for the Istella dataset, the maximum drop is $\sim$20$\%$.
Clearly, these observations show that \ac{PRPO} provides robust safety guarantees, that are reliable even when user behavior assumptions are wrong.

In contrast, the generalization bound of our safe \ac{DR} method (Theorem~\ref{CLTR-bound}) holds only when the user behavior assumptions are true. 
This is not the case in the bottom rows of Figure~\ref{fig:ablationresults_adv1} and \ref{fig:ablationresults_adv2}, which show the performance of the safe \ac{DR} method under the adversarial click model. 
Even with the setting where the safety parameters have a high weight ($\delta=0.01$), as the click data size increases, the performance drops drastically. 
Regardless of the exact choice of $\delta$, the effect of the regularization of safe \ac{DR} disappears as $N$ grows, thus in this adversarial setting, it is only a matter of time before the performance of safe \ac{DR} degrades dramatically.

\section{Conclusion}
In this chapter, we have introduced the first safe \ac{CLTR} method that uses state-of-the-art \ac{DR} estimation and corrects trust bias.
This is a significant extension of the existing safety method for \ac{CLTR} that was restricted to position bias and \ac{IPS} estimation.
However, in spite of the importance of this extended safe \ac{CLTR} approach, it heavily relies on user behavior assumptions.
We argue that this means it only provides a \emph{conditional} concept of safety, that may not apply to real-world settings.
To address this limitation, we have made a second contribution: the \acfi{PRPO} method.
\ac{PRPO} is the first \ac{LTR} method that provides \emph{unconditional} safety, that is applicable regardless of user behavior.
It does so by removing incentives to stray too far away from a safe ranking policy.
Our experimental results show that even in the extreme case of adversarial user behavior  \ac{PRPO} results in safe ranking behavior, unlike existing safe \ac{CLTR} approaches.

\ac{PRPO} easily works with existing \ac{LTR} algorithms and relevance estimation techniques.
We believe it provides a flexible and generic framework that enables practitioners to apply the state-of-the-art \ac{CLTR} method with strong and robust safety guarantees. 
Future work may apply the proposed safety methods to exposure-based ranking fairness~\cite{oosterhuis2021computationally,yadav2021policy} and to safe online \ac{LTR}~\cite{oosterhuis2021unifying}.

In this chapter, we answer the broad research question (\ref{rq:safe2}) in affirmative. 
We introduce \ac{PRPO}, a novel safe counterfactual \ac{LTR} method that does not rely on any user behavior assumptions with robust safety guarantees, even under adversarial conditions.

So far, in the first part of the thesis, we have discussed safe deployment strategies for contextual bandits with combinatorial action space -- for example, ranking for web search.
In the second part of the thesis, we will switch the discussion to contextual bandits for traditional single-action recommender systems.

\begin{subappendices}

\section{Appendix: Extended Safety Proof}\label{appendix-cikm}

\begin{lemma}
\label{cov-lemma}
Under the trust bias click model (Assumption~\ref{assumption:trustbias}), and
    given the trust bias parameter $\alpha_k, \beta_k$, the regression model estimates $\hat{R}_d$ and click indicator $c(d)$, the following holds:
    \begin{equation}
    \mathrm{Cov}_{y,c}\mleft[ c(d)  -  \beta_{k(d)}, \alpha_{k(d)}\hat{R}_d \mright] \geq 0.
    \end{equation}
\end{lemma}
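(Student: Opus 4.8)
The plan is to compute the covariance directly from the law of total covariance, conditioning on the displayed ranking $y$, and then read off the sign of each resulting factor. The key observation driving everything is that under Assumption~\ref{assumption:trustbias} the conditional click probability is affine in relevance with additive offset exactly $\beta_{k(d)}$, so subtracting $\beta_{k(d)}$ from $c(d)$ leaves something proportional to $\alpha_{k(d)}$ in conditional expectation.

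First I would fix what is random: the expectation $\mathrm{Cov}_{y,c}[\cdot,\cdot]$ is over the logged ranking $y\sim\pi_0$ and the clicks $c$ given $y$, whereas the regression prediction $\hat{R}_d$ and the true relevance $P(R=1\mid d)$ are constants with respect to this randomness. Hence $\hat{R}_d$ factors out, and it remains to study $\mathrm{Cov}_{y,c}\bigl[c(d)-\beta_{k(d)},\,\alpha_{k(d)}\bigr]$. Since $k(d)=\mathrm{rank}(d\mid y)$ is a deterministic function of $y$, both $\alpha_{k(d)}$ and $\beta_{k(d)}$ are measurable with respect to $y$. Applying the law of total covariance with conditioning variable $y$, the ``expected conditional covariance'' term vanishes (the second argument $\alpha_{k(d)}$ is $y$-measurable, so it has zero conditional variance), leaving only the covariance of the conditional expectations.

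For that surviving term I would invoke Assumption~\ref{assumption:trustbias}: $\mathbb{E}_{c}[c(d)\mid y]=\alpha_{k(d)}P(R=1\mid d)+\beta_{k(d)}$, hence $\mathbb{E}_{c}[c(d)-\beta_{k(d)}\mid y]=\alpha_{k(d)}P(R=1\mid d)$, while trivially $\mathbb{E}[\alpha_{k(d)}\mid y]=\alpha_{k(d)}$. Substituting yields
\[
\mathrm{Cov}_{y,c}\bigl[c(d)-\beta_{k(d)},\,\alpha_{k(d)}\hat{R}_d\bigr] \;=\; \hat{R}_d\,P(R=1\mid d)\,\mathrm{Var}_{y}\bigl[\alpha_{k(d)}\bigr],
\]
a product of three factors: $P(R=1\mid d)\ge 0$ by definition, a variance is non-negative, and $\hat{R}_d\ge 0$ under the standard doubly-robust assumption on the regression estimate (implied by the already-invoked condition $0\le\hat{R}_i(d)\le 2P(R=1\mid d,q_i)$). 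Thus the covariance is non-negative, which is the claim.

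The step requiring the most care is not any inequality but the bookkeeping of randomness: one must be explicit that $\hat{R}_d$ and $P(R=1\mid d)$ do not vary with $(y,c)$, that $\alpha_{k(d)}$ and $\beta_{k(d)}$ collapse to constants once $y$ is fixed, and that the one substantive hypothesis beyond the trust-bias model is the non-negativity of the regression estimate $\hat{R}_d$. Everything else is a single application of the total-covariance identity.
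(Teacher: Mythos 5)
Your proof is correct and arrives at the same closed form as the paper's, namely $\mathrm{Cov}_{y,c}\mleft[c(d)-\beta_{k(d)},\,\alpha_{k(d)}\hat{R}_d\mright]=\hat{R}_d\,R_d\,\mathrm{Var}_{y}\mleft[\alpha_{k(d)}\mright]$; the only difference is organizational — you invoke the law of total covariance conditioned on $y$ (so the conditional-covariance term dies because $\alpha_{k(d)}$ is $y$-measurable), whereas the paper expands the covariance definition directly and substitutes the click model into $\mathbb{E}_{y,c}[c(d)\alpha_{k(d)}]$. Your explicit flagging of the hypothesis $\hat{R}_d\geq 0$ is a genuine improvement in rigor: the paper's final line silently drops the $\hat{R}_d$ factor and concludes from $R_d\,\mathrm{Var}_y[\alpha_{k(d)}]\geq 0$ alone, whereas non-negativity of the regression estimate is indeed needed for the sign conclusion and is supplied by the standing doubly-robust condition $0\leq\hat{R}\leq 2P(R=1\mid d)$.
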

\begin{proof}
\vspace{-0.25\baselineskip}
    The covariance term can be rewritten as:
    \begin{align}
        &\mathrm{Cov}_{y,c}\mleft[ c(d) -  \beta_{k(d)}, \alpha_{k(d)}\hat{R}_d \mright] \nonumber \\
         &= \mathbb{E}_{y,c} \mleft[ (c(d)  -  \beta_{k(d)}) \alpha_{k(d)}\hat{R}_d  \mright] - \mathbb{E}_{y,c} \mleft[ c(d)  -  \beta_{k(d)} \mright] \mathbb{E}_{y} \mleft[ \alpha_{k(d)}\hat{R}_d  \mright] \nonumber \\    
         &= \hat{R}_d \Big(\mathbb{E}_{y,c} \mleft[ c(d) \alpha_{k(d)} \mright]   - \mathbb{E}_{y} \mleft[ \beta_{k(d)} \alpha_{k(d)}  \mright] - R_d \; \rho_{0}(d)^2  \Big),
         \label{cov-expand}
    \end{align}
    where use $\rho_{0}(d)=\mathbb{E}_{y,c} \mleft[ \alpha_{k(d)}  \mright]$ and $\mathbb{E}_{y,c}\mleft[ (c_i(d) - \beta_{k_i(d)} )/\rho_{0}(d) \mright]  = R_d$~\cite{oosterhuis2022doubly}.
    Expanding the first expectation term in the expression:
    \begin{align}
        &\underset{y,c}{\mathbb{E}} \mleft[ c(d) \alpha_{k(d)} \mright] = \sum_{y \in \pi_0} \pi_0(y) \alpha_{k(d)} P(C=1 \mid d,y) \nonumber \\
        &= \sum_{y \in \pi_0} \pi_0(y) \alpha_{k(d)} \cdot \mleft( \alpha_{k(d)} R_d + \beta_{k(d)} \mright) = R_d \mathbb{E}_{y} \mleft[ \alpha_{k(d)}^2 \mright] + \mathbb{E}_{y} \mleft[ \alpha_{k(d)} \beta_{k(d)} \mright] \nonumber \\
        &= \sum_{y \in \pi_0} \pi_0(y) \mleft[ \alpha_{k(d)}^2 R_d + \alpha_{k(d)} \beta_{k(d)} \mright],  
    \end{align}
    where we substitute click model equation $P(C=1 \mid d,y)$ (Eq.~\ref{cltr-obj-dr}). Substituting it back in Eq.~\ref{cov-expand}, we get:
    \begin{align}
        \mathrm{Cov}_{y,c}\mleft[ c(d) -  \beta_{k(d)}, \alpha_{k(d)}\hat{R}_d \mright] &= R_d \mathbb{E}_{y} \mleft[ \alpha_{k(d)}^2 \mright] - R_d \; \mathbb{E}_{y} \mleft[ \alpha_{k(d)}  \mright]^2  \nonumber \\[-1ex]
        R_d \Big(\mathbb{E}_{y} \mleft[ \alpha_{k(d)}^2 \mright] -  \mathbb{E}_{y} \mleft[ \alpha_{k(d)}  \mright]^2 \Big) &= R_d \mathrm{Var}_{y} \mleft[ \alpha_k(d) \mright] \geq 0.    \qedhere
    \end{align}
\end{proof}

\subsection{Proof of Theorem~\ref{CLTR-bound}}\label{sec:perfbound-proof}
\renewcommand{\thetheorem}{3.4.1}
\begin{theorem}
        Given the true utility $U(\pi)$ (Eq.~\ref{true-utility}) and its exposure-based \ac{DR} estimate $\hat{U}_{\text{DR}}(\pi)$ (Eq.~\ref{cltr-obj-dr}) of the ranking policy $\pi$ with the logging policy $\pi_{0}$ and the metric weights $\omega$ and $\omega_{0}$ (Eq.~\ref{eq:omega} and \ref{eq:omega_logging}), assuming the trust bias click model (Assumption~\ref{assumption:trustbias}),
        the following generalization bound holds with probability $1 - \delta$:\footnote{The following proof differs slightly from the original proof published in~\cite{gupta-2024-practical}, as we overlooked an additional constant term in the original paper.}
        \begin{equation}
            \begin{split}
              P\Bigg(
                  U(\pi)\;\ge\;
                  \hat{U}_{\text{DR}}(\pi)
                  \;-\;
                  \Bigl(1+\max_{k}\tfrac{\beta_k}{\alpha_k}\Bigr)
                  &\Bigl(
                    \sqrt{\frac{2Z}{N}\,
                      \Big(\frac{1-\delta}{\delta}\Big)\,
                             d_2(\omega\Vert\omega_0)}
                      \\
                      &+\sqrt{\frac{1}{N}\,
                      \Big(\frac{1-\delta}{\delta}\Big)}
                  \Bigr)
              \Bigg)
              \;\ge\;
              1-\delta .
            \end{split}
          \end{equation}
    \end{theorem}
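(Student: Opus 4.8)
The plan is to mirror the structure of the proof of Theorem~\ref{CLTR-bound1} (the position-bias/IPS case), but with two extra ingredients needed to handle trust bias and the DR estimator. The skeleton is: (i) prove a variance bound $\mathrm{Var}_{q,y,c}[\hat{U}_{\text{DR}}(\pi)] \le \frac{2Z}{N}\,(1+\max_k \beta_k/\alpha_k)^2\, d_2(\omega \Vert \omega_0) + \frac{1}{N}(1+\max_k\beta_k/\alpha_k)^2$ (up to the exact placement of the constant), and then (ii) feed this into Cantelli's inequality exactly as in the proof of Theorem~\ref{CLTR-bound1}, followed by a Cauchy--Schwarz split of the square root into the two displayed terms. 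Step (ii) is essentially identical to the earlier proof, so all the work is in step (i).

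For step (i), I would first apply the law of total variance to split into $\frac{1}{N}(\mathbb{E}_q[\mathrm{Var}_{y,c}[\hat{U}_{\text{DR}}(\pi)\mid q]] + \mathrm{Var}_q[\mathbb{E}_{y,c}[\hat{U}_{\text{DR}}(\pi)\mid q]])$, exactly as in Eq.~\ref{eq:vardecom1}. The second term is bounded by $1$ (up to the $(1+\max_k\beta_k/\alpha_k)^2$ factor) using unbiasedness of the DR estimator and the fact that $U(\pi)\in[0,1]$ when the metric is normalized; this is the source of the lone $\frac{1}{N}(\frac{1-\delta}{\delta})$ term after Cantelli. For the within-query variance, I would use the DR reformulation and write the per-document contribution as $\frac{\omega_i(d)}{\rho_{i,0}(d)}(c_i(d) - \alpha_{k_i(d)}\hat R_i(d) - \beta_{k_i(d)})$ plus the deterministic direct-method piece; by the rank-based examination assumption the documents are independent, so the variance is a sum over $d$. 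Here Lemma~\ref{cov-lemma} is the crucial tool: it lets me bound $\mathrm{Var}_{y,c}[c(d) - \alpha_{k(d)}\hat R_d - \beta_{k(d)}]$ by dropping the covariance term, reducing everything to $\mathrm{Var}_{y,c}[c(d)-\beta_{k(d)}] \le \mathbb{E}[(c(d)-\beta_{k(d)})^2] \le \mathbb{E}[c(d)]$ (using $\beta_k \in [0,1]$ and a careful sign argument), which under the trust bias model is $\le \alpha_k + \beta_k \le (1+\max_k\beta_k/\alpha_k)\alpha_k$.

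The $(1+\max_k\beta_k/\alpha_k)$ factor is where the trust-bias generalization genuinely differs from the IPS case, and tracking it correctly is the main obstacle. The DR estimator uses metric weights $\omega = \alpha + \beta$ but the propensity in the denominator is $\rho_0 = \mathbb{E}[\alpha]$, not $\mathbb{E}[\alpha+\beta]$; so when I convert $\frac{\omega_i(d)}{\rho_{i,0}(d)}$ into a ratio of \emph{normalized} weights $\frac{\omega'_i(d)}{\omega'_{i,0}(d)}$ (needed to produce $d_2(\omega\Vert\omega_0)$), there is a mismatch between the normalization constant $Z = \sum_k\alpha_k$ and the weight sum $\sum_k(\alpha_k+\beta_k)$. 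Bounding $\frac{\omega_i(d)}{\rho_{i,0}(d)} = \frac{\mathbb{E}[\alpha_{k(d)}+\beta_{k(d)}]}{\mathbb{E}[\alpha_{k(d)}]} \cdot \frac{\rho_{i,0}(d)}{\rho_{i,0}(d)}$-type quantities pointwise by $(1+\max_k\beta_k/\alpha_k)$ is what pulls that factor out front, and the extra factor of $2$ (as opposed to $1$ in Theorem~\ref{var-theorm}) comes from this weight-sum-versus-$Z$ discrepancy combined with the relevance bound $P(R=1\mid d,q)\le 1$. I would carry out this bounding carefully: $\omega_{i,0}(d) = \mathbb{E}_{y\sim\pi_0}[\alpha_{k(d)}+\beta_{k(d)}] \le (1+\max_k\tfrac{\beta_k}{\alpha_k})\,\mathbb{E}_{y\sim\pi_0}[\alpha_{k(d)}] = (1+\max_k\tfrac{\beta_k}{\alpha_k})\rho_{i,0}(d)$, so $\sum_d \omega_{i,0}(d) \le (1+\max_k\tfrac{\beta_k}{\alpha_k}) Z$, which supplies the needed normalization factor and the extra constant. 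Since the referenced proof in the appendix is marked as corrected relative to the published version precisely because of "an additional constant term," I expect the bookkeeping of exactly where the $\frac{1}{N}(\frac{1-\delta}{\delta})$-type term lands — inside versus outside the $d_2$ square root — to require the most care, and the Cauchy--Schwarz step $\sqrt{a+b}\le\sqrt a + \sqrt b$ is what finally separates them into the two summands shown in the statement.
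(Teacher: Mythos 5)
Your overall strategy is the paper's: law of total variance, bounding the across-query term by $1$ via unbiasedness, Cantelli's inequality, and a final $\sqrt{a+b}\le\sqrt{a}+\sqrt{b}$ split; you also correctly locate the $(1+\max_k\beta_k/\alpha_k)$ factor in the mismatch $\omega_0(d)/\rho_0(d)\le 1+\max_k\beta_k/\alpha_k$. However, your accounting of the within-query variance contains a concrete error. Lemma~\ref{cov-lemma} only lets you drop the cross term in
\[
\mathrm{Var}\mleft[(c(d)-\beta_{k(d)})-\alpha_{k(d)}\hat{R}(d)\mright]
= \mathrm{Var}\mleft[c(d)-\beta_{k(d)}\mright] + \mathrm{Var}\mleft[\alpha_{k(d)}\hat{R}(d)\mright] - 2\,\mathrm{Cov}\mleft[c(d)-\beta_{k(d)},\,\alpha_{k(d)}\hat{R}(d)\mright],
\]
so you are left with the \emph{sum} of two variances, not with $\mathrm{Var}[c(d)-\beta_{k(d)}]$ alone as you claim. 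The paper bounds $\mathrm{Var}[\alpha_{k(d)}\hat{R}(d)]\le\mathbb{E}_y[\alpha_{k(d)}]$ and $\mathrm{Var}[c(d)-\beta_{k(d)}]\le\mathbb{E}_y[\alpha_{k(d)}]+2\,\mathbb{E}_y[\beta_{k(d)}]$, and it is the sum of these two bounds, $2\omega_0(d)$, that is the true source of the factor $2$ in $\sqrt{2Z/N\cdots}$ — not the discrepancy between $Z=\sum_k\alpha_k$ and $\sum_k(\alpha_k+\beta_k)$ that you invoke. As written, your derivation would lose the $\mathrm{Var}[\alpha_{k(d)}\hat{R}(d)]$ contribution entirely and would not reproduce the stated constant.

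A smaller slip in the same step: your inequality $\mathbb{E}[(c(d)-\beta_{k(d)})^2]\le\mathbb{E}[c(d)]$ silently discards the $\beta_{k(d)}^2$ term; the correct bound (using $c^2=c$, $\beta_k^2\le\beta_k$, and $-2c\beta_k\le 0$) is $\mathbb{E}[c(d)]+\mathbb{E}[\beta_{k(d)}]$, which is what feeds the $2\,\mathbb{E}_y[\beta_{k(d)}]$ into $2\omega_0(d)$. With these two corrections your plan coincides with the paper's proof.
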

    \begin{proof} %
        As per Cantelli's inequality~\cite{ghosh2002probability}, the following inequality must hold with probability $1-\delta$:
        \begin{equation}
            U(\pi) \geq \hat{U}_{\text{DR}}(\pi) - \sqrt{ \frac{1-\delta}{\delta} \mathrm{Var}_{q,y,c}\mleft[\hat{U}_{\text{DR}}(\pi)\mright]}.
            \label{inequality_1} 
        \end{equation}
        Following a similar approach as previous works~\cite{gupta2023safe,wu2018variance}, we look for an upper-bound on the variance of the \ac{DR} estimator.
        From the definition of $\hat{U}_{\text{DR}}(\pi)$ (Eq.~\ref{cltr-obj}) and the assumption that queries $q$ are \ac{i.i.d}, the variance of the counterfactual estimator can be expanded by applying the law of total variance as follows:
        \begin{equation}
            \begin{aligned}
            \!\!\!\! \mathrm{Var}_{q,y,c}\mleft[\hat{U}_{\text{DR}}(\pi )\mright] \!\!=\!\! \frac{1}{N} \Big( \mathbb{E}_{q} \mleft[ \mathrm{Var}_{y,c}\mleft[ \hat{U}_{\text{DR}}(\pi) \mid q\mright] \mright] \!+\! \mathrm{Var}_{q} \mleft[ \mathbb{E}_{y,c}\mleft[ \hat{U}_{\text{DR}}(\pi) \mid q\mright] \mright] \Big) .
            \end{aligned}
            \label{eq:vardecom2}
        \end{equation}    
        The second term (variance over queries) can be expanded as follows:
        \begin{align}
            \mathrm{Var}_{q} \mleft[ \mathbb{E}_{y,c}\mleft[ \hat{U}_{\text{DR}}(\pi) \mid q\mright] \mright] \nonumber
            & = \mathbb{E}_{q} \mleft[ \mathbb{E}_{y,c}\mleft[ \hat{U}_{\text{DR}}(\pi) \mid q\mright] ^2 \mright] -  \mathbb{E}_{q} \mleft[ \mathbb{E}_{y,c}\mleft[ \hat{U}_{\text{DR}}(\pi) \mid q\mright] \mright] ^2  \nonumber  \\
            & \leq \mathbb{E}_{q} \mleft[ \mathbb{E}_{y,c}\mleft[ \hat{U}_{\text{DR}}(\pi) \mid q\mright] ^2 \mright] \\ 
            & = \mathbb{E}_{q} \mleft[ \mleft[ U(\pi) \mid q\mright] ^2 \mright] \\ \nonumber
            & \leq  1, \nonumber 
        \end{align}
        where in the second step, we use the unbiasedness property of the doubly robust estimator~\cite[Eq.~37]{oosterhuis2022doubly}, and used the fact that the true utility is non-zero, i.e. $U(\pi) \geq 0$.
        In the last step, we made use of the fact that the true utility is bounded, and is upper bounded by $1$ (safe to assume if the utility is normalized, for ex: normalized discounted cumulative gain, or click-through rate.), which results in the following bound for the doubly-robust variance:
        \begin{equation}
            \mathrm{Var}_{q,y,c}\mleft[\hat{U}_{\text{DR}}(\pi)\mright] \leq  \frac{1}{N} \Big( \mathbb{E}_{q} \mleft[ \mathrm{Var}_{y,c}\mleft[ \hat{U}_{\text{DR}}(\pi) \mid q\mright] \mright] + 1 \Big) . 
            \label{eq:var_new_dr}
        \end{equation}
        Now, focusing on the first part of the doubly-robust variance, from the definition of $\hat{U}_{\text{DR}}(\pi)$ (Eq.~\ref{cltr-obj-dr}), the variance of the \ac{DR} estimator (for a single query) can be expressed as the variance of the second term (Eq.~\ref{cltr-obj-dr}):
        \begin{equation}
            {\mathrm{Var}_{y,c}}\mleft[\hat{U}_{\text{DR}}(\pi )\mright] \!=\!  \frac{1}{N} \underset{y,c}{\mathrm{Var}}\mleft[ \sum_{d \in D}  \frac{\omega(d)}{\rho_{0}(d)} \big(c(d) \!-\! \alpha_{k(d)}\hat{R}(d) \!-\!  \beta_{k(d)} \big) \mright]. \!\mbox{}
            \label{eq:vardecom}
        \end{equation}
        Using Assumption~\ref{assumption:trustbias} and assuming that document examinations are independent from each other~\cite{gupta2023safe}, we rewrite further:
        \begin{equation}
        \begin{split}
            & N \cdot \underset{y,c}{\mathrm{Var}}\mleft[\hat{U}_{\text{DR}}(\pi )\mright] 
            =  \sum_{d \in D_{q}}^{} \underset{y,c}{\mathrm{Var}}\mleft[  \frac{\omega(d)}{\rho_{0}(d)} \big(c(d) - \alpha_{k(d)}\hat{R}(d) -  \beta_{k(d)} \big)  \mright] \\[-1.2ex]
            & \qquad\qquad =  \sum_{d \in D_{q}}^{} \mleft( \frac{\omega(d)}{\rho_{0}(d)} \mright)^2 \underset{y,c}{\mathrm{Var}}\mleft[ c(d)  -  \beta_{k(d)} - \alpha_{k(d)}\hat{R}(d)  \mright].
            \end{split}
            \label{var-dr}
        \end{equation}
       The total variance can be split into the following:
        \begin{align}
        &\mathrm{Var}_{y,c}\mleft[ c(d)  -  \beta_{k(d)} - \alpha_{k(d)}\hat{R}_{i}(d)  \mright]  = \mathrm{Var}_{y,c}\mleft[ \alpha_{k(d)}\hat{R}(d)  \mright]   \\
        &\;\;\;\;\;\;\;\; + \mathrm{Var}_{y,c}\mleft[ c(d)  -  \beta_{k(d)} \mright] - 2 \mathrm{Cov}_{y,c}\mleft[ c(d)  -  \beta_{k(d)}, \alpha_{k(d)}\hat{R}(d) \mright]. \nonumber
        \end{align}
        Using Lemma~\ref{cov-lemma}, we upper-bound the total variance term to:
        \begin{align}
            &\mathrm{Var}_{y,c}\mleft[ c(d)  -  \beta_{k(d)} - \alpha_{k(d)}\hat{R}(d)  \mright] \nonumber \\
            & \leq \mathrm{Var}_{y,c}\mleft[ \alpha_{k(d)}\hat{R}(d)  \mright] + \mathrm{Var}_{y,c}\mleft[ c(d)  -  \beta_{k(d)} \mright]. 
        \label{var-split}
        \end{align}
        Next, we consider the two variance terms separately; with the variance of the first term following:
    \begin{equation*}
        \underset{y,c}{\mathrm{Var}}\mleft[ \alpha_{k(d)}\hat{R}(d) \mright] = \underset{y,c}{\mathrm{Var}}\mleft[ \alpha_{k(d)} \mright] \hat{R}(d)^2 
         \leq \mathbb{E}_{y,c} \mleft[ \alpha_{k(d)}^2 \mright] \leq \mathbb{E}_{y} \mleft[ \alpha_{k(d)} \mright],
    \end{equation*}
       where we make use of the fact that $\hat{R}_d^2 \leq 1$, and $\alpha \in [0,1] \rightarrow \alpha_k^2 \leq \alpha_k$.
       Next, we consider the second term:
        \begin{align}
        &\mathrm{Var}_{y,c}\mleft[ c(d)  -  \beta_{k(d)} \mright] \leq  \mathbb{E}_{y,c} \mleft[ \big(c(d)  -  \beta_{k(d)} \big)^2 \mright] \\
        & \quad = \mathbb{E}_{y,c} \mleft[ c(d)^2  +  \beta_{k(d)}^2 - 2 c(d) \beta_{k(d)} \mright] 
        \leq \mathbb{E}_{y,c} \mleft[ c(d) \mright]   + \mathbb{E}_{y} \mleft[ \beta_{k(d)} \mright],  \nonumber
        \end{align}
       since $c(d)^2=c(d)$, $\beta_k^2 \leq \beta_k$, and $ \mathbb{E}_{y,c} \mleft[ c(d) \beta_{k(d)} \mright] \geq 0$.
        Substituting the click probabilities with Eq.~\ref{affine-click-model}, we get:
        \begin{align}
            & \mathbb{E}_{y,c} \mleft[ c(d) \mright]   + \mathbb{E}_{y,c} [ \beta_{k(d)} ] = \mathbb{E}_{y,c}[ \alpha_{k(d)}] P(R=1 |\, d) + 2 \, \mathbb{E}_{y,c} [ \beta_{k(d)}] \nonumber \\
            &\leq  \mathbb{E}_{y} \mleft[ \alpha_{k(d)} \mright] + 2 \, \mathbb{E}_{y}[ \beta_{k(d)}],
        \end{align}
        where we use the fact that $P(R=1 \mid d) \leq 1$.
        Putting together the bounds on both parts of Eq.~\ref{var-split}, we have:
        \begin{equation}
        \mathrm{Var}_{y,c}\mleft[ c(d)  -  \beta_{k(d)} - \alpha_{k(d)}\hat{R}(d)  \mright] \leq 2 \omega_0(d),
        \end{equation}
        where $\omega_0(d) = \mathbb{E}_{y} \mleft[ \alpha_{k(d)} \mright] + \mathbb{E}_{y} \mleft[ \beta_{k(d)} \mright]$. 
        Substituting the final variance upper bound in Eq.~\ref{var-dr}, we get:
        \begin{align}
             \underset{y,c}{\mathrm{Var}} & \mleft[ \sum_{d \in D}  \frac{\omega(d)}{\rho_{0}(d)} \big(c(d) - \alpha_{k(d)}\hat{R}(d) -  \beta_{k(d)} \big) \mright] \nonumber \\
            & \leq  2 \sum_{d \in D_{q}}^{} \! \mleft( \frac{\omega(d)}{\rho_{0}(d)} \mright)^2 \!\! \omega_0(d) \nonumber \\
            &=  2 \sum_{d \in D_{q}}^{} \mleft( \frac{\omega(d)}{\rho_{0}(d)} \mright)^2 \omega_0(d)  \mleft(\frac{\omega_0(d)}{\omega_0(d)}\mright)^2  \nonumber \\
            & =  2 \sum_{d \in D_{q}}^{} \mleft( \frac{\omega(d)}{\omega_{0}(d)} \mright)^2 \omega_0(d)  \mleft(\frac{\omega_0(d)}{\rho_0(d)}\mright)^2,
            \label{divergence}
        \end{align}
        where we multiply and divide by $\omega_0(d)^2$ in the third step.  
        Finally, we make use of the fact: $\frac{\omega_{0}(d)}{\rho_{0}(d)} \leq \max_{\pi_{0}} \frac{\omega_{0}(d)}{\rho_{0}(d)} \leq 1 + \max_{k} \frac{\beta_k}{\alpha_k}$,
        and
       put everything back together:
       \begin{align}
        N \cdot \mathrm{Var}_{y,c}\mleft[\hat{U}_{\text{DR}}(\pi )\mright] & \leq  2 Z \mleft( 1 + \max_{k} \frac{ \beta_k}{\alpha_k} \mright)^2 \sum_{d \in D_{q}}^{} \mleft( \frac{\omega'(d)}{\omega_{0}'(d)} \mright)^2 \omega_0'(d) \nonumber \\[-1.2ex]
        &= 2 Z \mleft( 1 + \max_{k} \frac{ \beta_k}{\alpha_k} \mright)^2 d_2(\omega \,\Vert\, \omega_0).
    \end{align}
    where $d_2(\omega \,\Vert\, \omega_0)$ is the Renyi divergence between the normalized expected exposure $\omega'(d)$ and $\omega_{0}'(d)$ (cf.\ Eq.~\ref{eq:actionbaseddiv}).
    Next, we replace the variance with the Renyi divergence-based term, and substituting back into the upper-bound on variance in Eq.~\ref{inequality_1} results in the following:
    \begin{equation}
        \!\!\! U(\pi)\ge \hat{U}_{\text{DR}}(\pi)
                -(1+\max_{k}\tfrac{\beta_k}{\alpha_k})
                \bigl(
                   \sqrt{\tfrac{2Z}{N}\left(\tfrac{1-\delta}{\delta}\right)d_2(\omega\Vert\omega_0)
                   +\tfrac{1}{N}\left(\tfrac{1-\delta}{\delta}\right)}
                \bigr)
                \!\!\;\ge\;\!\!
        1-\delta .
    \end{equation}
    By applying the Cauchy–Schwarz inequality, we get:
    \begin{equation}
     \!\!\! U(\pi)\ge \hat{U}_{\text{DR}}(\pi)
                -(1+\max_{k}\tfrac{\beta_k}{\alpha_k})
                \bigl(
                   \sqrt{\tfrac{2Z}{N}\left(\tfrac{1-\delta}{\delta}\right)d_2(\omega\Vert\omega_0)}
                   +\sqrt{\tfrac{1}{N}\left(\tfrac{1-\delta}{\delta}\right)}
                \bigr)
                \!\! \;\ge\;\!\!
        1-\delta .
        \end{equation}
    This completes the proof.
    \end{proof}

\subsection{Proof of Theorem~\ref{PRPO-proof}}\label{sec:prpo-proof}
    \begin{proof}
    Given a logging policy ranking $y_0$, a user defined metric weight $\omega$, and non-zero $r(d \mid q)$, for the choice of the clipping parameters $\epsilon_{-} = \epsilon_{+} = 1$, 
    the ranking $y^*(\epsilon_{-},\epsilon_{+})$ that maximizes the \ac{PRPO} objective (Eq.~\ref{eq:prpo_obj}) will be the same as the logging ranking $y_0$, i.e. $y^*(\epsilon_{-},\epsilon_{+})=y_0$.
    This is trivial to prove since any change in ranking can only lead in a decrease in the clipped ratio weights, and thus, a decrease in the \ac{PRPO} objective.
    Therefore, $y^*(\epsilon_{-}=1,\epsilon_{+}=1)=y_0$ when $\epsilon_{-} = \epsilon_{+} = 1$.
    Accordingly: $| U(y_0) -  U(y^*(\epsilon_{-}=1, \epsilon_{+}=1)) | = 0$ directly implies Eq.~\ref{eq:prpotheorem}.
    This completes our proof.
\end{proof}

\noindent Whilst the above proof is performed in the extreme case where $\epsilon_{-} = \epsilon_{+} = 1$ and the optimal ranking has the same utility as the logging policy ranking,
other choices of $\epsilon_{-}$ and $\epsilon_{+}$ bound the difference in utility to a lesser degree and allow for more deviation.
As our experimental results show, the power of PRPO is that it gives practitioners direct control over this maximum deviation.

\end{subappendices}

\part{Robust and Efficient Reinforcement Learning for Recommendation and Diffusion Models}

\chapter{Optimal Baseline Corrections for Off-policy Contextual Bandits}
\label{chapter:01-online-evaluation3}

\footnote[]{This chapter was published as~\citep{gupta-2024-optimal}.} 
So far, the first part of this thesis has examined contextual bandits with \textit{combinatorial} action spaces -- for example, web-search ranking and slate recommendation. 
The second part shifts attention to contextual bandits that select \textit{a single action}, such as top-1 recommendations or reinforcement-learning-based fine-tuning of foundation models. 
This chapter zooms in on the top-1 recommendation case.

Our aim is to increase sample efficiency in off-policy evaluation and learning from logged user interactions. 
Although inverse propensity scoring (IPS) is unbiased in expectation, it suffers from high variance~\cite{saito2021counterfactual,gupta2023safe}. 
Variance-reduction techniques -- most notably the doubly robust (DR) estimator~\cite{oosterhuis2022doubly} and self-normalized IPS (SNIPS)~\cite{Swaminathan2015} -- mitigate this problem through additive and multiplicative baseline corrections, respectively~\cite{Swaminathan2015,joachims2018deep}. 
Yet, the literature still lacks a unifying lens on these approaches, which motivates the following research question:

\begin{enumerate}[label=\textbf{RQ3},ref={RQ\arabic*},resume]
\item \acl{rq:recsys1}%
\end{enumerate}

\noindent To address \textbf{RQ3}, we introduce the $\beta$-IPS estimator, which places IPS, DR, and SNIPS inside a single baseline-correction framework. This, in turn, raises a second question:

\begin{enumerate}[label=\textbf{RQ4},ref={RQ\arabic*},resume]
\item \acl{rq:recsys}%
\end{enumerate}

\noindent Within the unified $\beta$-IPS framework, we ask whether a variance-optimal baseline $\beta^{*}$ can be derived analytically. 
In the latter part of this chapter we show that it can, presenting a closed-form expression for $\beta^{*}$ that minimizes variance for both off-policy evaluation and learning.

\section{Introduction \& Motivation}
Recommender systems have undergone a paradigm shift in the last few decades, moving their focus from \emph{rating} prediction in the days of the Netflix Prize~\cite{Bennet2007}, to \emph{item} prediction from implicit feedback~\cite{Rendle2022} and \emph{ranking} applications gaining practical importance~\cite{Steck2013,Jeunen2023_nDCG}.
Recently, work that applies ideas from the algorithmic \emph{decision-making} literature to recommendation problems has become more prominent~\cite{Vasile2020,Saito2021,gupta2024unbiased, jeunen2022consequences}.
While this line of research is not inherently new~\cite{Shani2002,li2010contextual}, methods based on contextual bandits (or reinforcement learning by extension) have now become widespread in the recommendation field~\cite{McInerney2018,Mehrotra2020,Bendada2020, Jeunen2021_TopK, Yi2023, Su2024, Briand2024}.
The \emph{off-policy} setting is particularly attractive for practitioners~\cite{vandenAkker2024}, as it allows models to be trained and evaluated in an offline manner~\cite{chen2019top, Dong2020, ma2020off, Jeunen2020, Jeunen2021_Pessimism, Chen2021, Liu2022, chen2022actorcritic, Jeunen2023,gupta2023safe,gupta2023ictir,gupta-2024-practical,gupta-2023-first-abstract}.
Indeed, methods exist to obtain unbiased \emph{offline} estimators of \emph{online} reward metrics, which can then be optimized directly~\cite{Jeunen2021_Thesis}.

Research at the forefront of this area typically aims to find Pareto-optimal solutions to the bias-variance trade-off that arises when choosing an estimator: reducing variance by accepting a small bias~\cite{Ionides2008,su2020doubly}, by introducing control variates~\cite{Dudik2014,Swaminathan2015}, or both~\cite{Su2019}.
Control variates are especially attractive as they (asymptotically) preserve the unbiasedness of the widespread inverse propensity scoring (IPS) estimator.
Additive control variates give rise to baseline corrections~\cite{greensmith2004variance}, regression adjustments~\cite{Freedman2008}, and doubly robust estimators~\cite{Dudik2014}.
Multiplicative control variates lead to self-normalised estimators~\cite{Kong1992,Swaminathan2015}.
Previous work has proven that for off-policy \emph{learning} tasks, the multiplicative control variates can be re-framed using an equivalent additive variate~\cite{joachims2018deep,Budylin2018}, enabling mini-batch optimization methods to be used.
We note that the self-normalised estimator is only \emph{asymptotically} unbiased: a clear disadvantage for evaluation with finite samples.
The common problem which most existing methods tackle is that of \emph{variance reduction} in offline value estimation, either for learning or for evaluation.
The common solution is the application of a control variate, either multiplicative or additive~\cite{Owen2013}.
However, to the best of our knowledge, there is no  work that attempts to unify these methods.
Our work in this chapter addresses this gap by presenting these methods in a unifying framework of baseline corrections which, in turn, allows us to find the optimal baseline correction for variance reduction.

In the context of off-policy learning, adding to the well-known equivalence between reward-translation and self-normalisation described by \citet{joachims2018deep}, we demonstrate that the equivalence extends to baseline corrections, regression adjustments, and doubly robust estimators with a constant reward model.
Further, we derive a novel baseline correction method for off-policy learning that minimizes the variance of the gradient of the (unbiased) estimator. We further show that the baseline correction can be estimated in a closed-form fashion, allowing for easy practical implementation.

In line with recent work on off-policy evaluation/learning for recommendation~\cite{Jeunen2021_TopK,Jeunen2023_AuctionGym,Jeunen2021_Pessimism,rohde2018recogym,Saito2021_OPE}, we adopt an off-policy simulation environment to emulate real-world recommendation scenarios, such as stochastic rewards, large action spaces, and controlled randomisation.
This choice also encourages future reproducibility~\cite{saito2020open}.
Our experimental results indicate that our proposed baseline correction for gradient variance reduction enables substantially faster convergence and lower gradient variance during learning.

In addition, we derive a closed-form solution to the optimal baseline correction for off-policy evaluation, i.e., the one that minimizes the variance of the estimator itself. 
Importantly, since our framework only considers unbiased estimators, the variance-optimality implies overall optimality.
Our experimental results show that this leads to lower errors in policy value estimation than widely used doubly-robust and SNIPS estimators~\cite{Dudik2014, Swaminathan2015}.

All source code to reproduce our experimental results is available at: \url{https://github.com/shashankg7/recsys2024_optimal_baseline}.

\section{Background and Related Work}\label{sec:sec1}
The goal of this section is to introduce common contextual bandit setups for recommendation, both on-policy and off-policy.

\subsection{On-policy contextual bandits}
\label{sec:general}

We address a general contextual bandit setup~\cite{saito2022counterfactual,joachims2016counterfactual} with contexts $X$, actions $A$, and rewards $R$.
The context typically describes \emph{user} features, actions are the \emph{items} to recommend, and rewards can be any type of \emph{interaction} logged by the platform.
A policy $\pi$ defines a conditional probability distribution over actions $x$: $\mathsf{P}(A=a\mid X=x,\Pi=\pi) \equiv \pi(a \mid x)$.
Its \emph{value} is the expected reward it yields:
\begin{equation}\label{eq:onpolicy_reward}
    V(\pi) = \mathop{\mathbb{E}}_{x\sim\mathsf{P}(X)}\Big[\mathop{\mathbb{E}}_{a \sim \pi(\cdot \mid x)}\mleft[ R \mright]\Big].
\end{equation}
When the policy $\pi$ is deployed, we can estimate this quantity by averaging the rewards we observe.
We denote the expected reward for action $a$ and context $r$ as $r(a,x) \coloneqq \mathbb{E}[R \mid X=x;A=a]$.

In the field of contextual bandits (and reinforcement learning (RL) by extension), one often wants to learn $\pi$ to maximise $V(\pi)$~\cite{sutton2018reinforcement,lattimore2020bandit}.
This is typically achieved through gradient ascent.
Assuming $\pi_{\theta}$ is parameterised by $\theta$, we iteratively update with learning rate $\eta$:
\begin{equation}
    \theta_{t+1} = \theta_{t} + \eta \nabla_{\theta}(V(\pi_{\theta})).
\end{equation}
Using the well-known REINFORCE ``log-trick''~\cite{Williams1992}, the above gradient can be formulated as an expectation over sampled actions, whereby tractable Monte Carlo estimation is made possible:
\begin{align}
    \nabla_{\theta}(V(\pi_{\theta})) &=  \nabla_{\theta}\mleft( \mathop{\mathbb{E}}_{x\sim\mathsf{P}(X)}\mleft[\mathop{\mathbb{E}}_{a \sim \pi_{\theta}(\cdot\mid x)}\mleft[ R \mright]\mright] \mright) \nonumber \\
    &= \nabla_{\theta}\mleft( \int\sum_{a \in \mathcal{A}} \pi_{\theta}(a \mid x) r(a,x) \mathsf{P}(X=x) \rm{d}x   \mright) \nonumber \\
    &=  \int\sum_{a \in \mathcal{A}} \nabla_{\theta}\mleft( \pi_{\theta}(a \mid x) r(a,x) \mright)\mathsf{P}(X=x)\rm{d}x \label{eq:mc_sample}   \\
    &=  \int\sum_{a \in \mathcal{A}} \pi_{\theta}(a \mid x) \nabla_{\theta}\mleft( \log(\pi_{\theta}(a \mid x)) r(a,x) \mright) \mathsf{P}(X=x)\rm{d}x \nonumber \\
    &=  \mathop{\mathbb{E}}_{x\sim\mathsf{P}(X)}\mleft[\mathop{\mathbb{E}}_{a \sim \pi_{\theta}(\cdot\mid x)}\mleft[ \nabla_{\theta}\mleft( \log(\pi_{\theta}(a\mid x)) R \mright) \mright]\mright]. \nonumber
\end{align}
This provides an unbiased estimate of the gradient of $V(\pi_{\theta})$. However, it may be subject to high variance due to the inherent variance of $R$.
Several techniques have been proposed in the literature that aim to alleviate this, mostly using additive \emph{control variates}.

Control variates are random variables with a known expectation~\cite[\S 8.9]{Owen2013}.
If the control variate is correlated with the original estimand -- in our case $V(\pi_{\theta})$ -- they can be used to reduce the estimator's variance.
A natural way to apply control variates to a sample average estimate for Eq.~\ref{eq:onpolicy_reward} is to estimate a model of the reward $\widehat{r}(a,x)\approx \mathbb{E}[R|X=x;A=a]$ and subtract it from the observed rewards~\cite{Freedman2008}.
This is at the heart of key RL techniques (i.a., generalised advantage estimation~\cite{Schulmanetal_ICLR2016}), and it underpins widely used methods to increase sensitivity in online controlled experiments~\cite{Deng2013,Poyarkov2016,Budylin2018,Baweja2024}.
As such, it applies to both \emph{evaluation} and \emph{learning} tasks.
We note that if the model $\widehat{r}(a,x)$ is biased, this bias propagates to the resulting estimator for $V(\pi_{\theta})$.

Alternatively, instead of focusing on reducing the variance of $ V(\pi_{\theta})$ directly, other often-used approaches tackle the variance of its gradient estimates $\nabla_{\theta}(V(\pi_{\theta}))$ instead.

Observe that $\mathop{\mathbb{E}}_{a \sim \pi_{\theta}(\cdot|x)}\mleft[ \nabla_{\theta}\mleft( \log(\pi_{\theta}(a\mid x))\mright) \mright]=0$~\cite[Eq. 12]{mohamed2020monte}.
This implies that a translation on the rewards in Eq.~\ref{eq:mc_sample} does not affect the unbiasedness of the gradient estimate.
Nevertheless, as such a translation can be framed as an additive control variate, it will affect its variance.
Indeed, ``\emph{baseline corrections}'' are a well-known variance reduction method for on-policy RL methods~\cite{greensmith2004variance}.
For a dataset consisting of logged contexts, actions and rewards $\mathcal{D} = \{(x_i,a_i,r_i)_{i=1}^{N}\}$, we apply a \emph{baseline} control variate $\beta$ to the estimate of the final gradient to obtain:
\begin{equation}
\begin{split}
    \nabla_{\theta}(V(\pi_{\theta}))
    &\approx
    \widehat{\nabla_{\theta}(V_{\beta}(\pi_{\theta}))} 
    \\
    &=
    \frac{1}{\mleft|\mathcal{D}\mright|}
    \sum_{(x,a,r) \in \mathcal{D}}   (r-\beta) \nabla_{\theta} \log \pi_{\theta}(a \mid x).
\end{split}    
    \label{eq:onpolicy_grad}
\end{equation}
\citet{Williams1988} originally proposed to use the average observed reward for $\beta$. Subsequent work has derived optimal baselines for general on-policy RL scenarios~\cite{Dayan1991, greensmith2004variance}.
However, to the best of our knowledge, \emph{optimal baselines for on-policy contextual bandits have not been considered in previous work}.

\noindent
\textbf{Optimal baseline for on-policy bandits.}
The optimal baseline $\beta$ for the on-policy gradient estimate in Eq.~\ref{eq:onpolicy_grad} is the one that minimizes the variance of the gradient estimate.
In accordance with earlier work~\cite{greensmith2004variance},
we define the variance of a vector random variable as the sum of the variance of its individual components.
Therefore, the optimal baseline is given by:
\begin{align}
    & \argmin_\beta \mathrm{Var} \mleft( \widehat{\nabla_{\theta}(V_{\beta}(\pi_{\theta}))} \mright) \nonumber \\
    &= \argmin_\beta  \frac{1}{\mleft|\mathcal{D}\mright|} \mathop{\mathrm{Var}}\mleft[ \nabla_{\theta}\mleft( \log(\pi_{\theta}(a \mid x)) \mleft( r - \beta \mright) \mright) \mright]  \\
    &= \argmin_\beta \frac{1}{\mleft|\mathcal{D}\mright|} \mathop{\mathbb{E}}\mleft[  \nabla_{\theta} \log(\pi_{\theta}(a \mid x))^{\top} \nabla_{\theta} \log(\pi_{\theta}(a \mid x))  \mleft( r - \beta \mright)^2 \mright] \hspace{-1em} \label{eq:step2} \\
    & \quad - \frac{1}{\mleft|\mathcal{D}\mright|}  \mathop{\mathbb{E}}\mleft[  \nabla_{\theta} \log(\pi_{\theta}(a \mid x)) \mleft( r - \beta \mright) \ \mright]^{\top} \mathop{\mathbb{E}}\mleft[  \nabla_{\theta} \log(\pi_{\theta}(a \mid x)) \mleft( r - \beta \mright) \mright] \nonumber  \\
    &= \argmin_\beta  \frac{1}{\mleft|\mathcal{D}\mright|} \mathop{\mathbb{E}}\mleft[ \| \nabla_{\theta} \log(\pi_{\theta}(a \mid x))\|^2_2 \mleft( r -\beta \mright)^2  \mright],
    \label{eq:onpolicy_optimal_baseline}
\end{align}
where we ignore the second term in Eq.~\ref{eq:step2}, since it is independent of $\beta$~\cite[Eq. 12]{mohamed2020monte}.
The result from this derivation (Eq.~\ref{eq:onpolicy_optimal_baseline}) reveals that the optimal baseline can be obtained by solving the following equation:
\begin{align}
    \mbox{}\hspace*{-2mm}
    \frac{\partial \mathrm{Var}\big(\widehat{\nabla_{\theta}(V_{\beta}(\pi_{\theta}))} \big)}{\partial \beta} &= 
    \frac{2}{\mleft|\mathcal{D}\mright|} \mathop{\mathbb{E}}\mleft[  |\nabla_{\theta} \log(\pi_{\theta}(a \mid x))|^2_2 \mleft( \beta -  r  \mright)  \mright] \!= 0,
    \hspace*{-1mm}\mbox{}
\end{align}
which results in the following optimal baseline correction:
\begin{equation}
    \beta^{*} =  \frac{\mathop{\mathbb{E}}\mleft[  |\nabla_{\theta} \log(\pi_{\theta}(a \mid x))|^2_2 r(a,x) \mright]}{\mathop{\mathbb{E}}\mleft[  |\nabla_{\theta} \log(\pi_{\theta}(a \mid x))|^2_2 \mright]},
\end{equation}
and the empirical estimate of the optimal baseline correction:
\begin{equation}
    \widehat{\beta^{*}} =  \frac{\sum_{(x,a,r) \in \mathcal{D}} \mleft[  |\nabla_{\theta} \log(\pi_{\theta}(a \mid x))|^2_2 r(a,x) \mright]}{\sum_{(x,a,r) \in \mathcal{D}}\mleft[  |\nabla_{\theta} \log(\pi_{\theta}(a \mid x))|^2_2 \mright]}.
\end{equation}
This derivation follows the more general derivation from \citet{greensmith2004variance} for partially observable Markov decision processes (POMDPs). We have not encountered its use in the existing bandit literature applied to recommendation problems. In Section~\ref{sec:grad_var}, we show that a similar line of reasoning can be applied to derive a variance-optimal gradient for the off-policy contextual bandit setup.

\subsection{Off-policy estimation for general bandits}
\label{sec:general_OPE}

Deploying $\pi$ is a costly prerequisite for estimating $V(\pi)$, that comes with the risk of deploying a possible poorly valued $\pi$.
Therefore, commonly in real-world model validation pipelines, practitioners wish to estimate $V(\pi)$ \emph{before} deployment.
Accordingly, we will address this \emph{counterfactual} evaluation scenario that falls inside the field of off-policy estimation (OPE)~\cite{Saito2021_OPE,Vasile2020}. 

The expectation $V(\pi)$ can be unbiasedly estimated using samples from a \emph{different} policy $\pi_{0}$ through \emph{importance sampling}, also known as inverse propensity score weighting (IPS)~\cite[\S 9]{Owen2013}:
\begin{equation}\label{eq:imp_sampl}
    \mathop{\mathbb{E}}_{x\sim\mathsf{P}(X)}\mleft[\mathop{\mathbb{E}}_{a \sim \pi(\cdot|x)}\mleft[ R \mright]\mright]
    = \mathop{\mathbb{E}}_{x\sim\mathsf{P}(X)}\mleft[\mathop{\mathbb{E}}_{a \sim \pi_{0}(\cdot|x)}\mleft[ \frac{\pi(a\mid x)}{\pi_{0}(a\mid x)} R \mright]\mright].
\end{equation}
To ensure that the so-called \emph{importance weights} $\frac{\pi(a\mid x)}{\pi_{0}(a\mid x)}$ are well-defined, we assume ``\emph{common support}'' by the logging policy: $\forall a \in \mathcal{A}, x \in \mathcal{X}: \pi(a\mid x) > 0 \implies \pi_{0}(a\mid x) > 0$.

From Eq.~\ref{eq:imp_sampl}, we can derive an unbiased estimator for $V(\pi)$ using contexts, actions and rewards logged under $\pi_{0}$, denoted by $\mathcal{D}$:
\begin{equation}\label{eq:IPS}
    \widehat{V}_{\rm IPS}(\pi,\mathcal{D}) = \frac{1}{\mleft|\mathcal{D}\mright|} \sum_{(x,a,r) \in \mathcal{D}} \frac{\pi(a\mid x)}{\pi_{0}(a\mid x)} r.
\end{equation}
To keep our notation brief, we suppress subscripts when they are clear from the context.
In the context of gradient-based optimization methods, we often refer to a minibatch $\mathcal{B} \subset \mathcal{D}$ instead of the whole dataset, as is typical for, e.g., stochastic gradient descent (SGD).

If we wish to learn a policy that maximises this estimator, we need to estimate its gradient for a batch $\mathcal{B}$.
Whilst some previous work has applied a REINFORCE estimator~\cite{chen2019top,ma2020off,chen2022actorcritic}, we use a straightforward Monte Carlo estimate for the gradient:
\begin{equation}\label{eq:MC_gradient_IPS}
    \nabla \widehat{V}_{\rm IPS}(\pi, \mathcal{B}) = \frac{1}{\mleft|\mathcal{B}\mright|} \sum_{(x,a,r) \in \mathcal{B}} \frac{\nabla\pi(a \mid x)}{\pi_{0}(a \mid x)} r.
\end{equation}
Importance sampling -- the bread and butter of unbiased off-policy estimation -- often leads to increased variance compared to on-policy estimators.
Several variance reduction techniques have been proposed specifically to combat the excessive variance of $\widehat{V}_{\rm IPS}$~\cite{Ionides2008,Dudik2014,Swaminathan2015}.
Within the scope of this chapter, we only consider techniques that reduce variance \emph{without} introducing bias.

\noindent
\textbf{Self-normalised importance sampling.}
The key idea behind \emph{self-normalisation}~\cite[\S 9.2]{Owen2013} is to use a \emph{multiplicative} control variate to rescale $\widehat{V}_{\rm IPS}(\pi,\mathcal{D})$.
An important observation for this approach is that for any policy $\pi$ and a dataset $\mathcal{D}$ logged under $\pi_{0}$, the expected average of importance weights should equal 1~\cite[\S 5]{Swaminathan2015}:
\begin{equation}\label{eq:control_variate}
   \mathop{\mathbb{E}}_{\mathcal{D} \sim \mathsf{P}(\mathcal{D})}\mleft[
   \frac{1}{\mleft|\mathcal{D}\mright|} \sum_{(x,a,r) \in \mathcal{D}}\frac{\pi(a\mid x)}{\pi_{0}(a\mid x)}
   \mright] = 1.
\end{equation}
Furthermore, as this random variable (Eq.~\ref{eq:control_variate}) is likely to be correlated with the IPS estimates, we can expect that its use as a control variate will lead to reduced variance (see~\cite[e.g.,][]{Kong1992}).
This gives rise to the asymptotically unbiased and parameter-free self-normalised IPS (SNIPS) estimator, with $ S \coloneqq  \frac{1}{D} \sum_{(x,a,r) \in \mathcal{D}} \frac{\pi(a \mid x)}{\pi_{0}(a \mid x)}$ as its normalization term:
\begin{equation}\label{eq:SNIPS}
    \widehat{V}_{\rm SNIPS}(\pi,\mathcal{D}) =  \frac{\sum_{(x,a,r) \in \mathcal{D}} \frac{\pi(a \mid x)}{\pi_{0}(a \mid x)} r}{\sum_{(x,a,r) \in \mathcal{D}} \frac{\pi(a \mid x)}{\pi_{0}(a \mid x)}}
    =
    \frac{\widehat{V}_{\rm IPS}(\pi,\mathcal{D})}{S}.
\end{equation}
Given the properties of being asymptotically unbiased and para\-meter-free, this estimator is often a go-to method for off-policy \emph{evaluation} use-cases~\cite{Saito2021_OPE}.
An additional advantage is that the SNIPS estimator is invariant to translations in the reward, which cannot be said for $\widehat{V}_{\rm IPS}$.
Whilst the formulation in Eq.~\ref{eq:SNIPS} is not obvious in this regard, it becomes clear when we consider its gradient:
\begin{align}\label{eq:SNIPS_gradient}
    &\nabla \widehat{V}_{\rm SNIPS}(\pi,\mathcal{D}) =  \nabla\mleft(\frac{\sum_{(x,a,r)} \frac{\pi(a \mid x)}{\pi_{0}(a \mid x)} r}{\sum_{(x,a)} \frac{\pi(a \mid x)}{\pi_{0}(a \mid x)}}\mright)  \nonumber \\ 
 &\qquad\qquad\qquad =\frac{\mleft( \sum_{(x,a,r)} \frac{\nabla \pi(a \mid x)}{\pi_{0}(a \mid x)} r  \mright) \mleft( \sum_{(x,a)} \frac{ \pi(a \mid x)}{\pi_{0}(a \mid x)}  \mright) }{\mleft({\sum_{(x,a)} \frac{ \pi(a \mid x)}{\pi_{0}(a \mid x)}}\mright)^2}  \nonumber \\
 & \quad  - \frac{\mleft( \sum_{(x,a,r)} \frac{ \pi(a \mid x)}{\pi_{0}(a \mid x)} r  \mright) \mleft( \sum_{(x,a)} \frac{\nabla \pi(a \mid x)}{\pi_{0}(a \mid x)}  \mright) }{\mleft({\sum_{(x,a)} \frac{  \pi(a \mid x)}{\pi_{0}(a \mid x)}}\mright)^2} \\
 &= \frac{\sum_{(x_{i},a_{i}, r_{i})} \sum_{(x_{j},a_{j}, r_{j})}\frac{\pi(a_{i}\mid x_{i})\nabla\pi(a_{j}\mid x_{j})}{\pi_{0}(a_{i}\mid x_{i})\pi_{0}(a_{j}\mid x_{j})}(r_{j}-r_{i}) }{\mleft({\sum_{(x,a)} \frac{ \pi(a \mid x)}{\pi_{0}(a \mid x)}}\mright)^2}\nonumber  \\ 
 &= \frac{\sum\limits_{(x_{i},a_{i}, r_{i})} \sum\limits_{(x_{j},a_{j}, r_{j})}\frac{\pi(a_{i} \mid x_{i})\pi(a_{j} \mid x_{j})}{\pi_{0}(a_{i} \mid x_{i})\pi_{0}(a_{j} \mid x_{j})} \nabla\log\pi(a_{j} \mid x_{j})(r_{j}-r_{i}) }{\mleft({\sum_{(x,a)} \frac{ \pi(a \mid x)}{\pi_{0}(a \mid x)}}\mright)^2}.\nonumber
\end{align}
Indeed, as the SNIPS gradient relies on the \emph{relative difference} in observed reward between two samples, a constant correction would not affect it (i.e., if $\overline{r} = r - \beta$, then $r_j-r_i \equiv \overline{r}_{j}-\overline{r}_{i}$).

\citet{Swaminathan2015} effectively apply the SNIPS estimator (with a variance regularisation term~\cite{swaminathan2015batch}) to off-policy \emph{learning} scenarios.
Note that while $\widehat{V}_{\rm IPS}$ neatly decomposes into a single sum over samples, $\widehat{V}_{\rm SNIPS}$ no longer does.
Whilst this may be clear from the gradient formulation in Eq.~\ref{eq:SNIPS_gradient}, a formal proof can be found in~\cite[App. C]{joachims2018deep}.
This implies that mini-batch optimization methods (which are often necessary to support learning from large datasets) are no longer directly applicable to $\widehat{V}_{\rm SNIPS}$.

\citet{joachims2018deep} solve this by re-framing the task of maximising $\widehat{V}_{\rm SNIPS}$ as an optimization problem on $\widehat{V}_{\rm IPS}$ with a constraint on the self-normalisation term.
That is, if we define:
\begin{equation}
\mbox{}\hspace*{-2mm}
    \pi^{\star} \!=\! \argmax_{\pi \in \Pi} \widehat{V}_{\rm SNIPS}(\pi, \mathcal{D}), \text{ with } S^{\star} \!=\! \frac{1}{\mleft|\mathcal{D}\mright|}
    \!\sum_{(x,a,r) \in \mathcal{D}} \!
    \frac{\pi^{\star}(a\mid x)}{\pi_{0}(a \mid x)},
\end{equation}
then, we can equivalently state this as:
\begin{equation}\label{eq:constr_opt}
    \pi^{\star} = \argmax_{\pi \in \Pi} \widehat{V}_{\rm IPS}(\pi, \mathcal{D}), \text{ s.th. } \frac{1}{\mleft|\mathcal{D}\mright|}\sum_{(x,a,r) \in \mathcal{D}} \frac{\pi(a \mid x)}{\pi_{0}(a \mid x)} = S^{\star}.
\end{equation}
\citet{joachims2018deep} show via the Lagrange multiplier method that this optimization problem can be solved by optimising for $\widehat{V}_{\rm IPS}$ with a translation on the reward:
\begin{equation}\label{eq:banditnet}
\begin{split}
    \pi^{\star} =  \argmax_{\pi \in \Pi} \widehat{V}_{\lambda^{\star}\text{-}{\rm IPS}}(\pi, \mathcal{D}), \text{where }\\ 
    \widehat{V}_{\lambda\text{-}{\rm IPS}}(\pi,\mathcal{D}) = \frac{1}{\mleft|\mathcal{D}\mright|} \sum_{(x,a,r) \in \mathcal{D}} \frac{\pi(a \mid x)}{\pi_{0}(a \mid x)} (r-\lambda).
\end{split}
\end{equation}
This approach is called BanditNet~\cite{joachims2018deep}.
Naturally, we do not know $\lambda^{\star}$ beforehand (because we do not know $S^{\star}$), but we do know that $S^{\star}$ should concentrate around 1 for large datasets (see Eq.~\ref{eq:control_variate}).
\citet{joachims2018deep} essentially propose to treat $\lambda$ as a hyper-parameter to be tuned in order to find $S^{\star}$.

\noindent
\textbf{Doubly robust estimation. }
Another way to reduce the variance of $\widehat{V}_{\rm IPS}$ is to use a model of the reward $\widehat{r}(a,x)\approx \mathbb{E}[R|X=x;A=a]$.
Including it as an additive control variate in Eq.~\ref{eq:IPS} gives rise to the doubly robust (DR) estimator, deriving its name from its unbiasedness if \emph{either} the logging propensities $\pi_{0}$ or the reward model $\widehat{r}$ is unbiased~\cite{Dudik2014}:
\begin{align}\label{eq:DR}
    &\widehat{V}_{\rm DR}(\pi,\mathcal{D}) = \\
    &\;\;\; \frac{1}{\mleft|\mathcal{D}\mright|} \sum_{(x,a,r) \in \mathcal{D}} \mleft(\frac{\pi(a\mid x)}{\pi_{0}(a\mid x)} (r-\widehat{r}(a,x)) + \sum_{a^{\prime} \in \mathcal{A}} \pi(a^{\prime}\mid x)\widehat{r}(a^{\prime},x)\mright). \nonumber
\end{align}
Several further extensions have been proposed in the literature: one can optimize the reward model $\widehat{r}(a,x)$ to minimize the resulting variance of $\widehat{V}_{\rm DR}$~\cite{Farajtabar2018}, further parameterise the trade-off relying on $\widehat{V}_{\rm IPS}$ or  $\widehat{r}(a,x)$~\cite{Su2019}, or shrink the IPS weights to minimize a bound on the MSE of the resulting estimator~\cite{su2020doubly}.
One disadvantage of this method, is that practitioners are required to fit the secondary reward model $\widehat{r}(a,x)$, which might be costly and sample inefficient.
Furthermore, variance reduction is generally not guaranteed, and stand-alone $\widehat{V}_{\rm IPS}$ can be empirically superior in some scenarios~\cite{Jeunen2020REVEAL}.

\section{Unifying Off-Policy Estimators}
Section~\ref{sec:sec1} provides an overview of (asymptotically) unbiased estimators for the value of a policy.
We have introduced the contextual bandit setting, detailing often used variance reduction techniques for both on-policy (i.e., regression adjustments and baseline corrections) and off-policy estimation (i.e., self-normalisation and doubly robust estimation).
In this section, we demonstrate that they perform equivalent optimization as baseline-corrected estimation.
Subsequently, we characterize the baseline corrections that either minimize the variance of the estimator, or that of its gradient.

\subsection{A unified off-policy estimator}

\textbf{Baseline corrections for $\nabla \widehat{V}_{{\rm IPS}}(\pi,\mathcal{D})$.}
Baseline corrections are common in on-policy estimation, but occur less often in the off-policy literature.
The estimator is obtained by removing a baseline control variate $\beta \in \mathbb{R}$ from the reward of each action, while also adding it to the estimator:
\begin{equation}    
    \widehat{V}_{\beta\text{-}{\rm IPS}} = \beta + \frac{1}{\mleft|\mathcal{D}\mright|}\sum_{(x,a,r) \in \mathcal{D}} \frac{\pi(a \mid x)}{\pi_{0}(a \mid x)} (r-\beta).
    \label{eq:beta_ips_ope}
\end{equation} 
Its unbiasedness is easily verified:
\begin{equation}
\begin{split}
    \mathop{\mathbb{E}} \mleft[\widehat{V}_{\beta\text{-}{\rm IPS}} \mright] &= \mathop{\mathbb{E}} \mleft[\beta \mright] + \mathop{\mathbb{E}} \mleft[\frac{\pi(a \mid x)}{\pi_{0}(a \mid x)} (r-\beta)  \mright] \\ 
    &= \beta + \mathop{\mathbb{E}} \mleft[\frac{\pi(a \mid x)}{\pi_{0}(a \mid x)} r  \mright] - \beta \qquad = V(\pi).
\end{split}
\label{eq:betaIPSunbiased}
\end{equation}
From an optimization perspective, we are mainly interested in the gradient of the $\widehat{V}_{{\rm \beta\text{-}IPS}}$ objective:
\begin{equation}\label{eq:MC_gradient_bIPS}
    \nabla \widehat{V}_{\beta\text{-}{\rm IPS}}(\pi,\mathcal{B}) = \frac{1}{\mleft|\mathcal{B}\mright|} \sum_{(x,a,r) \in \mathcal{B}}  \frac{ \nabla \pi(a \mid x)}{\pi_{0}(a \mid x)} \mleft(r - \beta \mright) .
\end{equation}
Our key insight is that SNIPS and certain doubly-robust estimators have an equivalent gradient to the proposed $\beta$-IPS estimator.
As a result, optimizing them is equivalent to optimizing $\widehat{V}_{\beta\text{-}{\rm IPS}}$ for a specific $\beta$ value.

\noindent\textbf{Self-normalisation through BanditNet and $\widehat{V}_{\lambda\text{-}{\rm IPS}}(\pi,\mathcal{D})$.}
If we consider the optimization problem for SNIPS that is solved by BanditNet in Eq.~\ref{eq:banditnet}~\cite{joachims2018deep}, we see that its gradient is given by:
\begin{equation}
    \label{eq:banditnet_grad}
    \nabla \widehat{V}_{\lambda\text{-}{\rm IPS}}(\pi,\mathcal{B}) = \frac{1}{\mleft|\mathcal{B}\mright|} \sum_{(x,a,r) \in \mathcal{B}}  \frac{ \nabla \pi(a \mid x)}{\pi_{0}(a \mid x)} \mleft(r - \lambda \mright).
\end{equation}

\noindent \textbf{Doubly robust estimation via $\widehat{V}_{\rm DR}(\pi,\mathcal{D})$}.
As mentioned, a nuisance of doubly robust estimators is the requirement of fitting a regression model $\widehat{r}(a,x)$.
Suppose that we instead treat $\widehat{r}$ as a single scalar hyper-parameter, akin to the BanditNet approach.
Then, the gradient of such an estimator would be given by:
\begin{equation}
    \nabla \widehat{V}_{\widehat{r}\text{-}{\rm DR}}(\pi,\mathcal{B}) = \frac{1}{\mleft|\mathcal{B}\mright|} \sum_{(x,a,r) \in \mathcal{B}}  \frac{ \nabla \pi(a \mid x)}{\pi_{0}(a \mid x)} \mleft(r - \widehat{r} \mright).
    \label{eq:dr_grad_const_rew}
\end{equation}

\noindent%
Importantly, these three approaches are motivated through entirely different lenses: minimizing gradient variance, applying a multiplicative control variate to reduce estimation variance, and applying an additive control variate to improve robustness.
But they result in equivalent gradients, and thus, in equivalent optima.
Specifically, for optimization, the estimators are equivalent when $\beta\equiv\lambda\equiv\widehat{r}$.

This equivalence implies that the choice between these three approaches is not important.
Since the simple baseline correction estimator $\widehat{V}_{\beta\text{-}{\rm IPS}}$ (Eq.~\ref{eq:beta_ips_ope}) has an equivalence with all SNIPS estimators and all doubly-robust estimators with a constant reward, we propose that $\widehat{V}_{\beta\text{-}{\rm IPS}}$ should be seen as an estimator that unifies all three approaches.
Accordingly, we argue that the real task is to find the optimal $\beta$ value for $\widehat{V}_{\beta\text{-}{\rm IPS}}$, since this results in an estimator that is at least as optimal as any estimator in the underlying families of estimators, and possibly superior to them.

The remainder of this section describes the optimal $\beta$ values for minimizing gradient variance and estimation value variance.

\subsection{Minimizing gradient variance}
\label{sec:grad_var}
Similar to the on-policy variant derived in Eq.~\ref{eq:onpolicy_optimal_baseline}, we can derive the optimal baseline in the off-policy case as the one which results in the minimum variance for the gradient estimate given by Eq.~\ref{eq:MC_gradient_IPS}:
\begin{align}
    \argmin_\beta \mathrm{Var} & \mleft( \nabla_{\theta} (\widehat{V}_{\beta\text{-}{\rm IPS}}(\pi_{\theta},\mathcal{B}))  \mright) \\
    &= \argmin_\beta  \frac{1}{\mleft|\mathcal{B}\mright|}  \mathop{\mathrm{Var}}\mleft[ \frac{ \nabla\pi(a \mid x)}{\pi_{0}(a \mid x)}  \mleft(r - \beta\mright)  \mright] \\
    &= \argmin_\beta  \frac{1}{\mleft|\mathcal{B}\mright|}  \mathop{\mathbb{E}}\mleft[  \| \nabla \pi(a \mid x))\|^2_2  \Big( \frac{r - \beta}{\pi_{0}(a \mid x)} \Big)^2 \ \mright] \label{eq:grad_var_step2} \\
    &\qquad\qquad\qquad - \frac{1}{\mleft|\mathcal{B}\mright|} \|  
 \mathop{\mathbb{E}}\mleft[  \frac{ \nabla\pi(a \mid x)}{\pi_{0}(a \mid x)}  \mleft(r - \beta\mright)  \mright]  \|^2_2 \nonumber \\
    &=\argmin_\beta  \frac{1}{\mleft|\mathcal{B}\mright|}  \mathop{\mathbb{E}}\mleft[  \frac{ \| \nabla \pi(a \mid x))\|^2_2}{\pi_{0}(a \mid x)^2}  \mleft( r -\beta \mright)^2  \ \mright] \label{eq:quadratic}, 
\end{align}
where we can ignore the second term of the variance in Eq.~\ref{eq:grad_var_step2}, since it is independent of $\beta$~\cite[Eq. 12]{mohamed2020monte}. The optimal baseline can be obtained by solving for:
\begin{equation}
    \frac{\partial \mathrm{Var}\mleft(\nabla (\widehat{V}_{\beta\text{-}{\rm IPS}}(\pi,\mathcal{B}))\mright)}{\partial \beta} = 
    \frac{2}{\mleft|\mathcal{B}\mright|}  \mathop{\mathbb{E}}\mleft[ \frac{ \| \nabla \pi(a \mid x))\|^2_2}{\pi_{0}(a \mid x)^2} \mleft( \beta -  r  \mright) \mright] = 0,
\label{eq:first_order_condition}
\end{equation}
which results in the following optimal baseline:
\begin{equation}\label{eq:optimal_offpolicy}
    \beta^{*} =  \frac{\mathop{\mathbb{E}}\limits_{x, a \sim \pi_{0}, r}\mleft[ \frac{ |\nabla \pi(a\mid x))\|^2_2}{\pi_{0}(a\mid x)^2} r(a,x) \mright]}{\mathop{\mathbb{E}}\limits_{x, a \sim \pi_{0}, r}\mleft[  \frac{ |\nabla \pi(a \mid x))|^2_2}{\pi_{0}(a\mid x)^2} \mright]},
\end{equation}
with its empirical estimate given by:
\begin{equation}
    \widehat{\beta^{*}} =  \frac{\sum_{(x,a,r) \in \mathcal{B}} \mleft[  \frac{ |\nabla \pi(a \mid x))|^2_2}{\pi_{0}(a\mid x)^2} r \mright]}{\sum_{(x,a,r) \in \mathcal{B}}\mleft[  \frac{ |\nabla \pi(a\mid x))\|^2_2}{\pi_{0}(a\mid x)^2} \mright]}.
\end{equation}

\noindent%
Note that this expectation is over actions sampled by the \emph{logging} policy.
As a result, we can obtain Monte Carlo estimates of the corresponding expectations.
The derivation has high similarity with the on-policy case (cf.\ Section~\ref{sec:general})~\cite{greensmith2004variance}.
Nevertheless, we are unaware of any work on off-policy learning that uses it.
\citet{joachims2018deep} refer to the on-policy variant with: ``\emph{we cannot sample new roll-outs from the current policy under consideration, which means we cannot use the standard variance-optimal estimator used in REINFORCE}.''
Since the expectation is over actions sampled by the \emph{logging} policy and not the \emph{target} policy, we have shown that we do not need new roll-outs.
Thereby, our estimation strategy is a novel off-policy approach that estimates the variance-optimal baseline.

\begin{theorem}
\label{thrm:min_grad_var}
Within the family of gradient estimators with a global additive control variate, i.e., $\beta$-IPS (Eq.~\ref{eq:MC_gradient_bIPS}), IPS (Eq.~\ref{eq:MC_gradient_IPS}), BanditNet (Eq.~\ref{eq:banditnet_grad}), and DR with a constant correction (Eq.~\ref{eq:dr_grad_const_rew}), $\beta$-IPS with our proposed choice of $\beta$ in Eq.~\ref{eq:optimal_offpolicy} has minimal gradient variance.
\end{theorem}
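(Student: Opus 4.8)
\textbf{Proof plan for Theorem~\ref{thrm:min_grad_var}.}
The plan is to establish that all four estimators in the stated family --- $\beta$-IPS, IPS, BanditNet, and DR with a constant reward model --- share the \emph{same} parametric form of gradient, namely Eq.~\ref{eq:MC_gradient_bIPS} with the scalar $\beta$ instantiated to $0$, $\lambda$, and $\widehat{r}$ respectively; then to observe that the variance of this gradient, viewed as a function of the scalar correction, is a strictly convex quadratic; and finally to invoke the first-order optimality condition already derived in Eq.~\ref{eq:first_order_condition}--\ref{eq:optimal_offpolicy} to conclude that the choice $\beta^{*}$ of Eq.~\ref{eq:optimal_offpolicy} attains the unique minimum. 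Since every member of the family is recovered by some value of the scalar correction, the minimum over the family is exactly the minimum over the scalar, which is achieved at $\beta^{*}$.

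First I would recall the gradient identities collected in the preceding section: $\nabla\widehat{V}_{\text{IPS}}$ (Eq.~\ref{eq:MC_gradient_IPS}) is $\nabla\widehat{V}_{\beta\text{-}{\rm IPS}}$ at $\beta=0$; $\nabla\widehat{V}_{\lambda\text{-}{\rm IPS}}$ (Eq.~\ref{eq:banditnet_grad}) is $\nabla\widehat{V}_{\beta\text{-}{\rm IPS}}$ at $\beta=\lambda$; and $\nabla\widehat{V}_{\widehat{r}\text{-}{\rm DR}}$ (Eq.~\ref{eq:dr_grad_const_rew}) is $\nabla\widehat{V}_{\beta\text{-}{\rm IPS}}$ at $\beta=\widehat{r}$. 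Hence the family of gradient estimators under consideration is precisely $\{\nabla\widehat{V}_{\beta\text{-}{\rm IPS}}(\pi_\theta,\mathcal{B}) : \beta \in \mathbb{R}\}$, and minimizing gradient variance over the family reduces to a one-dimensional minimization over $\beta$.

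Next I would argue that $\beta \mapsto \mathrm{Var}(\nabla_\theta \widehat{V}_{\beta\text{-}{\rm IPS}}(\pi_\theta,\mathcal{B}))$ is a strictly convex quadratic in $\beta$. From Eq.~\ref{eq:quadratic}, dropping the $\beta$-independent term, the objective is proportional to $\mathbb{E}\big[\|\nabla\pi(a\mid x)\|_2^2\,\pi_0(a\mid x)^{-2}(r-\beta)^2\big]$, which expands to $A\beta^2 - 2B\beta + C$ with $A = \mathbb{E}\big[\|\nabla\pi(a\mid x)\|_2^2\,\pi_0(a\mid x)^{-2}\big] > 0$ (strictly positive as long as the policy is not at a stationary point of every component, so the gradient is not almost surely zero). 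Strict convexity gives a unique stationary point, and the first-order condition $\partial/\partial\beta = 2(A\beta - B) = 0$ (Eq.~\ref{eq:first_order_condition}) yields $\beta^{*} = B/A$, which is exactly Eq.~\ref{eq:optimal_offpolicy}; the second-order condition $\partial^2/\partial\beta^2 = 2A > 0$ confirms it is the minimizer. Since this minimizes the variance over all scalar corrections, and every estimator in the family corresponds to some scalar correction, $\beta$-IPS with $\beta = \beta^{*}$ has minimal gradient variance within the family.

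The main obstacle is not the algebra --- which is the routine quadratic minimization already carried out in Eq.~\ref{eq:grad_var_step2}--\ref{eq:optimal_offpolicy} --- but rather pinning down precisely what ``variance of a vector-valued gradient'' means and making sure the reduction to a scalar quadratic is legitimate. I would address this by adopting the convention (already used in the on-policy derivation following \citet{greensmith2004variance}) that the variance of a vector random variable is the sum of the variances of its components, i.e. $\mathrm{Var}(Z) = \sum_j \mathrm{Var}(Z_j) = \mathbb{E}[\|Z\|_2^2] - \|\mathbb{E}[Z]\|_2^2$; under this convention the cross term $\|\mathbb{E}[\cdot]\|_2^2$ is indeed independent of $\beta$ because $\mathbb{E}[\nabla\pi(a\mid x)/\pi_0(a\mid x)] = \nabla\mathbb{E}_{a\sim\pi_0}[\pi(a\mid x)/\pi_0(a\mid x)] = \nabla 1 = 0$ (common support ensures the importance weights integrate to one for every $\pi$ in a neighborhood of $\pi_\theta$, so the gradient of the constant is zero), which is exactly the off-policy analogue of $\mathbb{E}_{a\sim\pi_\theta}[\nabla\log\pi_\theta(a\mid x)] = 0$ invoked in the on-policy case. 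A secondary subtlety worth a sentence is the edge case $A = 0$, which happens only when $\nabla\pi(a\mid x) = 0$ almost surely under $\pi_0$, in which case every $\beta$ gives zero gradient variance and the claim holds trivially; otherwise $A>0$ and the minimizer is unique.
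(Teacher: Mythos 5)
Your proposal is correct and follows essentially the same route as the paper's own proof: both reduce the family of estimators to the one-parameter family $\{\nabla\widehat{V}_{\beta\text{-}{\rm IPS}}:\beta\in\mathbb{R}\}$, observe that the gradient variance is a convex quadratic in $\beta$ (Eq.~\ref{eq:quadratic}), and conclude via the first-order condition (Eq.~\ref{eq:first_order_condition}) that $\beta^{*}$ of Eq.~\ref{eq:optimal_offpolicy} is the global minimizer. Your version is somewhat more careful than the paper's terse argument -- in particular the justification that the $\|\mathbb{E}[\cdot]\|_2^2$ term is $\beta$-independent and the treatment of the degenerate case $A=0$ -- but these are refinements of, not departures from, the published proof.
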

\begin{proof}
Eq.~\ref{eq:first_order_condition} shows that the $\beta$ value in Eq.~\ref{eq:optimal_offpolicy} attains a minimum.
Because the variance of the gradient estimate (Eq.~\ref{eq:grad_var_step2}) is a quadratic function of $\beta$, and hence a convex function (Eq.~\ref{eq:quadratic}), it must be the global minimum for the gradient variance. 
\end{proof}

\subsection{Minimizing estimation variance}
\label{sec:estimator_var}

Besides minimizing gradient variance, one can also aim to minimize the variance of estimation, i.e., the variance of the estimated value.
We note that the $\beta$ value for minimizing estimation does not need to be the same value that minimizes gradient variance.
Furthermore, since $\widehat{V}_{\beta\text{-}{\rm IPS}}$ is unbiased, any estimation error will entirely be driven by variance.
As a result, the value for $\beta$ that results in minimal variance will also result in minimal estimation error:
\begin{align}
    & \argmin_\beta \mathrm{Var} \mleft( \widehat{V}_{\beta\text{-}{\rm IPS}}(\pi, \mathcal{D}) \mright) \\
    &\qquad = \argmin_\beta  \frac{1}{\mleft|\mathcal{D}\mright|} \mathop{\mathrm{Var}}\mleft[ \frac{ \pi(a \mid x)}{\pi_{0}(a \mid x)}  \mleft(r - \beta\mright)  \mright] \\
    &\qquad = \argmin_\beta \frac{1}{\mleft|\mathcal{D}\mright|}  \mathop{\mathbb{E}}\mleft[ \Big( \frac{ \pi(a \mid x)}{\pi_{0}(a \mid x)}  \mleft(r - \beta\mright) \Big)^2 \ \mright]\\
    & \qquad\qquad\qquad\qquad\qquad - \frac{1}{\mleft|\mathcal{D}\mright|} \Big(  
 \mathop{\mathbb{E}}\mleft[  \frac{ \pi(a \mid x)}{\pi_{0}(a \mid x)}  \mleft(r - \beta\mright)  \mright]  \Big)^2 \nonumber \\
    &\qquad = \argmin_\beta \frac{1}{\mleft|\mathcal{D}\mright|}  \mathop{\mathbb{E}}\mleft[  \mleft(\frac{ \pi(a \mid x)}{\pi_{0}(a \mid x)}\mright) ^2 \mleft( r -\beta \mright)^2  \ \mright]\\
    & \qquad\qquad\qquad\qquad\qquad  - \frac{1}{\mleft|\mathcal{D}\mright|} \Big(  
 \mathop{\mathbb{E}}\mleft[  \frac{ \pi(a \mid x)}{\pi_{0}(a \mid x)}  r  \mright] - \beta  \Big)^2.  \nonumber %
\end{align}
The minimum is obtained by solving for the following equation:
\begin{align}
    &\frac{\partial \mleft(\mathrm{Var} \mleft( \widehat{V}_{\beta\text{-}{\rm IPS}}(\pi, \mathcal{D}) \mright)\mright)}{\partial \beta} \label{eq:gradientzeroMSE}  \\
    &\quad\; = \frac{2}{\mleft|\mathcal{D}\mright|}  \mathop{\mathbb{E}}\mleft[  \mleft(\frac{\pi(a \mid x)}{\pi_{0}(a \mid x)}\mright) ^2\mleft( \beta -  r  \mright)  \ \mright] - \frac{2}{\mleft|\mathcal{D}\mright|} \mleft( \beta - \mathop{\mathbb{E}}\mleft[  \frac{ \pi(a \mid x)}{\pi_{0}(a \mid x)}  r  \mright] \mright) = 0, \nonumber
\end{align}
which results in the following optimal baseline:
\begin{align}\label{eq:optimal_offpolicy_beta}
    \beta^{*} &=  \frac{\mathop{\mathbb{E}}\mleft[  \mleft(\mleft(\frac{ \pi(a \mid x)}{\pi_{0}(a \mid x)}\mright) ^2  - \frac{ \pi(a \mid x)}{\pi_{0}(a \mid x)}\mright) r(a,x)  \mright]}{\mathop{\mathbb{E}}\mleft[  \mleft(\frac{ \pi(a \mid x)}{\pi_{0}(a \mid x)}\mright) ^2 - \mleft(\frac{ \pi(a \mid x)}{\pi_{0}(a \mid x)}\mright) \mright]}.
\end{align}
We can estimate $\beta^{*}$ using logged data, resulting in a Monte Carlo estimate of the optimal baseline.
Such a sample estimate will not be unbiased (because it is a ratio of expectations), but the bias will vanish asymptotically (similar to the bias of the $\widehat{V}_{\rm SNIPS}$ estimator). 

Next, we formally prove that optimal estimator variance leads to overall optimality (in terms of the MSE of the estimator).

\begin{theorem}
Within the family of offline estimators with a global additive control variate, i.e., $\beta$-IPS (Eq.~\ref{eq:beta_ips_ope}), IPS (Eq.~\ref{eq:IPS}), and DR with a constant correction (Eq.~\ref{eq:dr_grad_const_rew}), $\beta$-IPS with our proposed $\beta$ in Eq.~\ref{eq:optimal_offpolicy_beta} has the minimum mean squared error (MSE): %
 \begin{equation}
     {\rm MSE}(\hat{V}(\pi)) = \mathbb{E}_{\mathcal{D}} \left[ (\hat{V}(\pi, \mathcal{D}) - V(\pi))^2 \right].
     \label{eq:estimator_mse}
\end{equation}
\end{theorem}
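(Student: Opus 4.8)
The plan is to leverage two facts already established in the chapter: (i) within the stated family of estimators, $\beta$-IPS, IPS, and DR-with-constant-correction all coincide with $\widehat{V}_{\beta\text{-}{\rm IPS}}$ for a suitable scalar $\beta$ (the unification argument around Eq.~\ref{eq:beta_ips_ope}--\ref{eq:dr_grad_const_rew}); and (ii) every member of this family is unbiased for $V(\pi)$, which is exactly Eq.~\ref{eq:betaIPSunbiased}. Given (ii), the bias--variance decomposition of the MSE in Eq.~\ref{eq:estimator_mse} collapses: for any unbiased estimator $\widehat{V}$, ${\rm MSE}(\widehat{V}(\pi)) = \mathrm{Var}(\widehat{V}(\pi,\mathcal{D})) + \big(\mathbb{E}_{\mathcal{D}}[\widehat{V}(\pi,\mathcal{D})] - V(\pi)\big)^2 = \mathrm{Var}(\widehat{V}(\pi,\mathcal{D}))$. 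Hence minimizing MSE over this family is equivalent to minimizing estimator variance over this family.

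First I would make the reduction explicit: any estimator in the family can be written as $\widehat{V}_{\beta\text{-}{\rm IPS}}(\pi,\mathcal{D})$ for some $\beta\in\mathbb{R}$ (IPS corresponds to $\beta=0$, DR-with-constant-correction to $\beta=\widehat{r}$, and $\beta$-IPS to its own $\beta$), so the family is exactly $\{\widehat{V}_{\beta\text{-}{\rm IPS}} : \beta \in \mathbb{R}\}$. Then I would invoke unbiasedness (Eq.~\ref{eq:betaIPSunbiased}) uniformly over $\beta$ to reduce ${\rm MSE}$ to variance, as above. Second, I would observe that $\mathrm{Var}(\widehat{V}_{\beta\text{-}{\rm IPS}}(\pi,\mathcal{D}))$ is a quadratic function of $\beta$ with positive leading coefficient: expanding as in Section~\ref{sec:estimator_var}, the coefficient of $\beta^2$ is $\tfrac{1}{|\mathcal{D}|}\big(\mathbb{E}[(\pi/\pi_0)^2] - (\mathbb{E}[\pi/\pi_0])^2\big) = \tfrac{1}{|\mathcal{D}|}\mathrm{Var}(\pi(a\mid x)/\pi_0(a\mid x)) \ge 0$, and it is strictly positive unless the importance weight is almost surely constant (a degenerate case one can exclude or treat trivially). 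A convex quadratic has a unique global minimizer, which is precisely the stationary point found by setting the derivative to zero in Eq.~\ref{eq:gradientzeroMSE}; that stationary point is the $\beta^\ast$ of Eq.~\ref{eq:optimal_offpolicy_beta}. Therefore $\widehat{V}_{\beta^\ast\text{-}{\rm IPS}}$ attains the minimum variance, and by the reduction, the minimum MSE, over the whole family.

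The one point that needs a little care — and the closest thing to an obstacle — is the handedness of "optimal variance implies optimal MSE across the family": this only works because \emph{every} estimator in the family is unbiased, so there is no bias term to trade off against variance. I would state this as an explicit lemma (unbiasedness of $\widehat{V}_{\beta\text{-}{\rm IPS}}$ for all $\beta$, citing Eq.~\ref{eq:betaIPSunbiased}) so the MSE-to-variance reduction is airtight. A secondary technical caveat is the degenerate case where $\mathrm{Var}(\pi/\pi_0)=0$, making the quadratic constant in $\beta$; there every choice of $\beta$ gives the same variance and hence the same (minimal) MSE, so $\beta^\ast$ is still optimal (just not uniquely so), and Eq.~\ref{eq:optimal_offpolicy_beta} should be read with the usual convention for the resulting $0/0$. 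With those two remarks in place, the proof is essentially: reduce MSE to variance via unbiasedness, then minimize the convex quadratic in $\beta$ exactly as in Section~\ref{sec:estimator_var}.

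\begin{proof}
By the unification argument (Eq.~\ref{eq:beta_ips_ope}--\ref{eq:dr_grad_const_rew}), every estimator in the family equals $\widehat{V}_{\beta\text{-}{\rm IPS}}(\pi,\mathcal{D})$ for some $\beta\in\mathbb{R}$: IPS is recovered with $\beta=0$, DR with a constant correction with $\beta=\widehat{r}$. By Eq.~\ref{eq:betaIPSunbiased}, $\mathbb{E}_{\mathcal{D}}[\widehat{V}_{\beta\text{-}{\rm IPS}}(\pi,\mathcal{D})] = V(\pi)$ for all $\beta$, so the bias term in the decomposition
\begin{equation}
{\rm MSE}(\widehat{V}_{\beta\text{-}{\rm IPS}}(\pi)) = \mathrm{Var}\big(\widehat{V}_{\beta\text{-}{\rm IPS}}(\pi,\mathcal{D})\big) + \big(\mathbb{E}_{\mathcal{D}}[\widehat{V}_{\beta\text{-}{\rm IPS}}(\pi,\mathcal{D})] - V(\pi)\big)^2
\end{equation}
vanishes, giving ${\rm MSE}(\widehat{V}_{\beta\text{-}{\rm IPS}}(\pi)) = \mathrm{Var}\big(\widehat{V}_{\beta\text{-}{\rm IPS}}(\pi,\mathcal{D})\big)$. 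Hence minimizing MSE over the family is equivalent to minimizing $\mathrm{Var}\big(\widehat{V}_{\beta\text{-}{\rm IPS}}(\pi,\mathcal{D})\big)$ over $\beta$. As shown in Section~\ref{sec:estimator_var}, this variance is a quadratic function of $\beta$ whose leading coefficient equals $\tfrac{1}{|\mathcal{D}|}\mathrm{Var}\big(\pi(a\mid x)/\pi_{0}(a\mid x)\big) \ge 0$, so the function is convex; its unique stationary point, obtained from Eq.~\ref{eq:gradientzeroMSE}, is the global minimizer and equals $\beta^{*}$ in Eq.~\ref{eq:optimal_offpolicy_beta}. (If $\mathrm{Var}(\pi/\pi_{0})=0$ the variance is constant in $\beta$ and every choice, including $\beta^{*}$, is optimal.) Therefore $\widehat{V}_{\beta^{*}\text{-}{\rm IPS}}$ has the minimum estimator variance, and consequently the minimum MSE, within the family.
\end{proof}
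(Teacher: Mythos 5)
Your proof is correct and follows essentially the same route as the paper's: use unbiasedness of $\widehat{V}_{\beta\text{-}{\rm IPS}}$ (Eq.~\ref{eq:betaIPSunbiased}) to collapse the MSE to the variance, then identify the minimizer with the stationary point from Eq.~\ref{eq:gradientzeroMSE}. You are in fact slightly more careful than the paper, which asserts that the stationary point is a minimum without checking convexity of the quadratic in $\beta$ or handling the degenerate constant-importance-weight case; those additions are welcome but do not change the argument.
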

\begin{proof}
The MSE of any off-policy estimator $\hat{V}(\pi, \mathcal{D})$ can be decomposed in terms of the bias and variance of the estimator~\cite{su2020doubly}:
 \begin{equation}
     \text{MSE}(\hat{V}(\pi)) = \text{Bias}(\hat{V}(\pi), \mathcal{D})^2 + \text{Variance}(\hat{V}(\pi), \mathcal{D}),
\end{equation}
where the bias of the estimator is defined as:
\begin{equation}
\text{Bias}(\hat{V}(\pi), \mathcal{D}) = \left| \mathbb{E}_{\mathcal{D}} \mleft[ \hat{V}(\pi, \mathcal{D}) - V(\pi)\mright]  \right|,
\end{equation}
and the variance of the estimator is defined previously (see Section~\ref{sec:estimator_var}).
Eq.~\ref{eq:betaIPSunbiased} proves that $\beta$-IPS is unbiased: $\text{Bias}(\hat{V}(\pi), \mathcal{D})=0$.
Thus, the minimum variance (Eq.~\ref{eq:gradientzeroMSE}) implies minimum MSE.
\end{proof}

\noindent%
We note that SNIPS is not covered by this theorem, as it is only asymptotically unbiased.
As a result, the variance reduction brought on by SNIPS might be higher than that by $\beta$-IPS, but as it introduces bias, its estimation error (MSE) is not guaranteed to be better.
Our experimental results below indicate that our method is always at least as good as SNIPS, and outperforms it in most cases, in both learning and evaluation tasks.

\begin{figure*}[t]
    \centering
    \includegraphics[width=\linewidth]{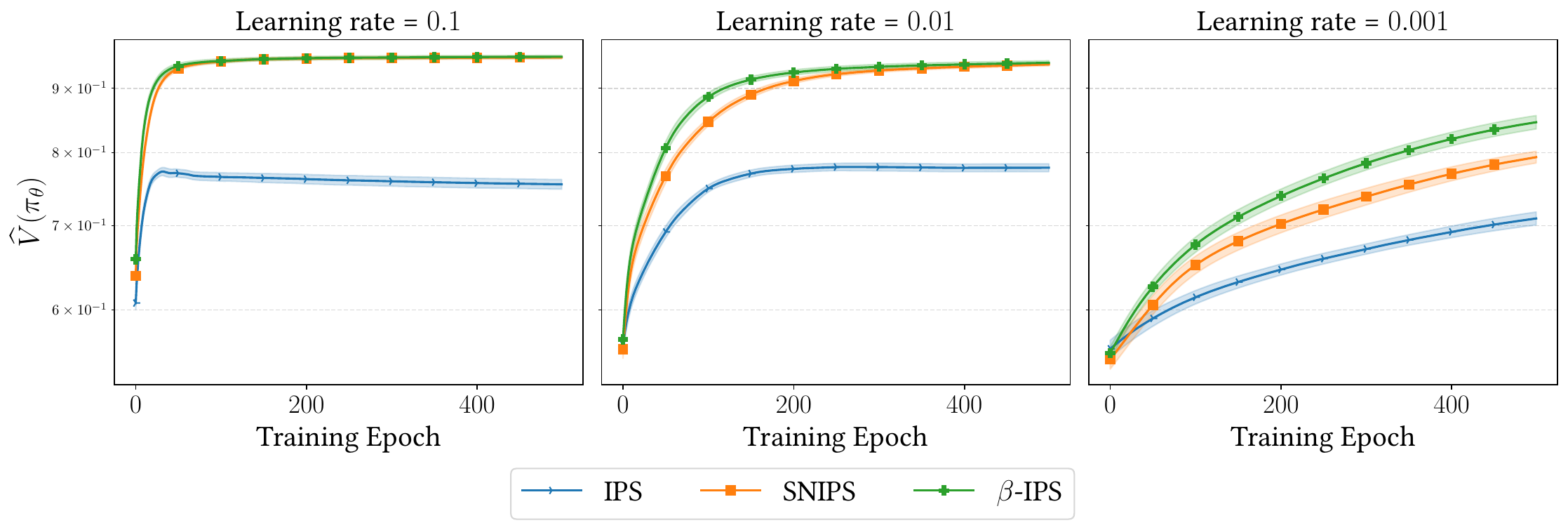}
    \caption{Performance of different off-policy learning methods trained in a full-batch gradient descent fashion in terms of the policy value on the test set. x-axis corresponds to the training epoch during the optimization (we use a maximum of 500 epochs for all methods), and y-axis corresponds to the policy value.
    A decaying learning rate is used.
    Reported results are averages over 32 independent runs with 95\% confidence interval. }
\label{fig:full_batch_val}
\end{figure*}
\begin{figure*}[th]
    \centering
    \includegraphics[width=\linewidth]{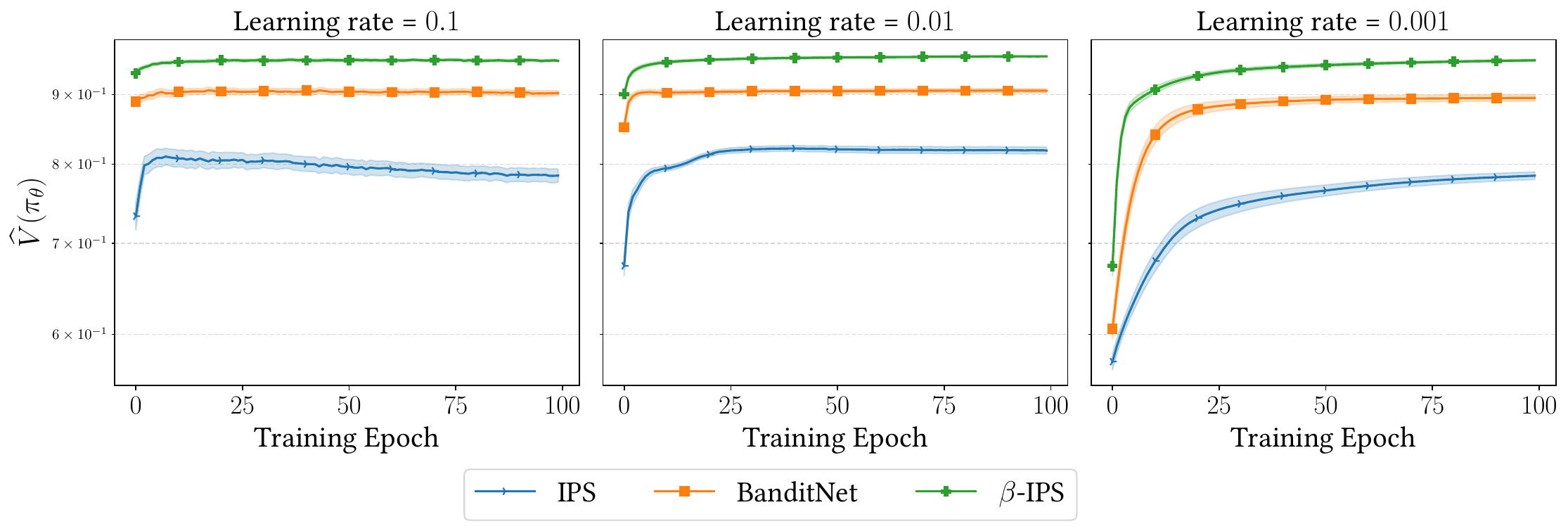}
    \caption{Performance of different off-policy learning methods trained in a mini-batch gradient descent fashion in terms of policy value on the test set. The axis labels are similar to Figure~\ref{fig:full_batch_val}.}
\label{fig:mini_batch_val}
\end{figure*}

\section{Experimental Setup}
In order to evaluate off-policy learning and evaluation methods, we need access to logged data sampled from a stochastic policy involving logging propensities (exact or estimated) along with the corresponding context and action pairs.
Recent work that focuses on off-policy learning or evaluation for contextual bandits in recommender systems follows a supervised-to-bandit conversion process to simulate a real-world bandit feedback dataset~\cite{Jeunen2021_TopK,Jeunen2021_Pessimism, Jeunen2023_AuctionGym, Saito2021_OPE,su2020doubly,Su2019}, or conducts a live experiment on actual user traffic to evaluate the policy in an \emph{on-policy} or \emph{online} fashion~\cite{chen2019top,chen2022actorcritic}.
In this work, we adopt the Open Bandit Pipeline (OBP) to simulate, in a reproducible manner, real-world recommendation setups with stochastic rewards, large action spaces, and controlled randomization~\cite{rohde2018recogym}.
Although the Open Bandit Pipeline simulates a generic offline contextual bandit setup, there is a strong correspondence to real-world recommendation setups where the environment context vector corresponds to the user context and the actions correspond to the items recommended to the user.
Finally, the reward corresponds to the user feedback received on the item (click, purchase, etc.).
As an added advantage, the simulator allows us to conduct experiments in a \emph{realistic} setting where the logging policy is sub-optimal to a controlled extent, the logged data size is limited, and the action space is large.
In addition, we conduct experiments with real-world recommendation logs from the OBP for off-policy evaluation.\footnote{\url{https://research.zozo.com/data.html}} 

The research questions we answer with our experimental results in this chapter are:
\begin{enumerate}[label=\textbf{RQC\arabic*},leftmargin=*]
    \item Does the proposed \emph{estimator-variance-minimizing} baseline correction (Eq.~\ref{eq:optimal_offpolicy_beta}) improve off-policy learning (OPL) in a full-batch setting?
    \item Does the proposed \emph{gradient-variance-minimizing} baseline correction (Eq.~\ref{eq:optimal_offpolicy}) improve OPL in a mini-batch setting? 
    \item How does the proposed \emph{gradient-variance-minimizing} baseline correction (Eq.\\ \ref{eq:optimal_offpolicy}) affect gradient variance during OPL? 
    \item Does the proposed \emph{estimator-variance-minimizing} baseline correction (Eq.~\ref{eq:optimal_offpolicy_beta}) improve off-policy evaluation (OPE) performance?
\end{enumerate}
\begin{figure*}
    \centering
    \includegraphics[width=\linewidth]{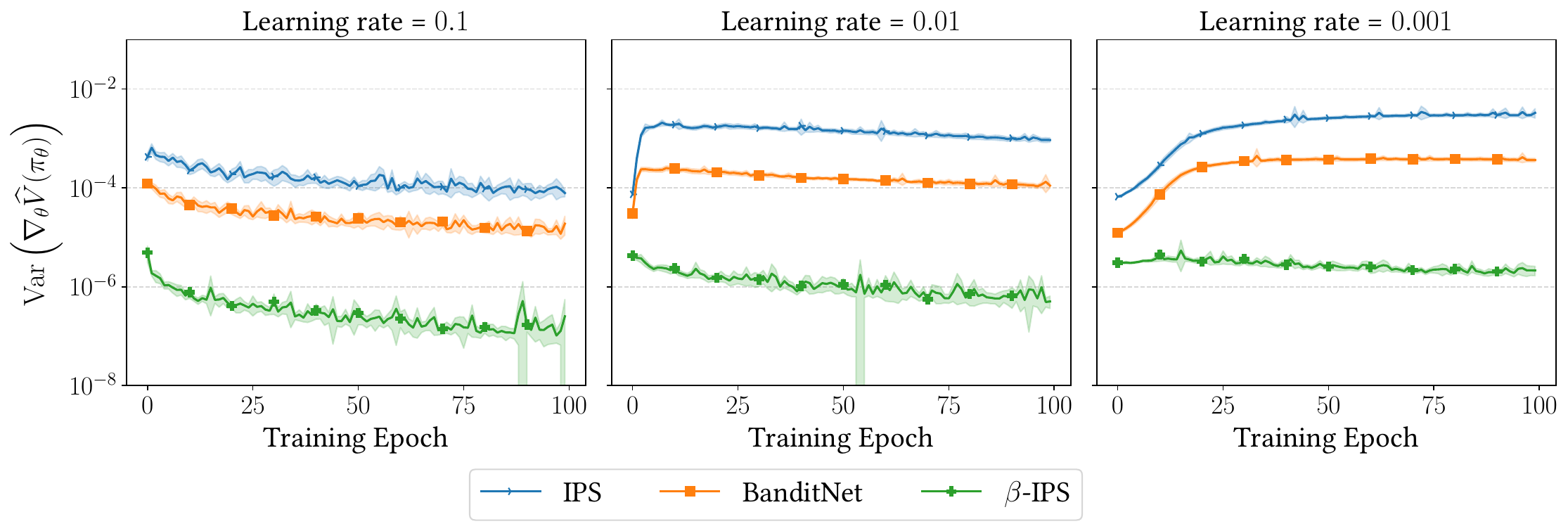}
    \caption{Empirical variance of the gradient of different off-policy learning estimators in a mini-batch optimization setup with varying learning rates (in title). We compute gradient variance for each mini-batch during training and then report the average value across all mini-batches in a training epoch. Results are averaged across 32 independent runs with 95\% confidence interval.}
\label{fig:grad_var}
\end{figure*}
\newpage

\section{Results and Discussion}
\label{recsys-results-disc}
\subsection{Off-policy learning performance (RQC1--3) }
To evaluate the performance of the proposed $\beta$-IPS method on an OPL task, we consider two learning setups:
\begin{enumerate}[leftmargin=*]
    \item  \emph{Full-batch}. In this setup, we  directly optimize the $\beta$-IPS policy value estimator (Eq.~\ref{eq:beta_ips_ope}) with the optimal baseline correction, which minimizes the variance of the value (Eq.~\ref{eq:optimal_offpolicy_beta}). Given that the optimal baseline correction involves a ratio of two expectations, optimizing the value function directly via a mini-batch stochastic optimization is not possible for the same reason as the SNIPS estimator, i.e., it is not possible to get an unbiased gradient estimate with a ratio function~\cite{joachims2018deep}. Therefore, for this particular setting,  we use a \emph{full-batch} gradient descent method for the optimization, where the gradient is computed over the entire training dataset. 
    \item  \emph{Mini-batch}. In this setup, we focus on optimizing the $\beta$-IPS policy value estimator with the baseline correction, which minimizes the gradient estimate (Eq.~\ref{eq:optimal_offpolicy}). This setup translates to a traditional machine learning training setup where the model is optimized in a stochastic mini-batch fashion. 
\end{enumerate}

\noindent \textbf{Full-batch.} The results for the full-batch training in terms of the policy value on the test set are reported in Figure~\ref{fig:full_batch_val}, over the number of training epochs. %
To minimize the impact of external factors, we use a linear model without bias, followed by a softmax to generate a distribution over all actions, given a context vector $x$ (this is a common setup, see~\cite[e.g.,][]{Jeunen2020,Jeunen2020REVEAL,Sakhi2020}).
We note that the goal of this chapter is not to get the maximum possible policy value on the test set but rather to evaluate the effect of baseline corrections on gradient and estimation variance. The simple model setup allows us to easily track the empirical gradient variance, given that we have only one parameter vector. 

An advantage of the full-batch setup is that we can compute the gradient of the SNIPS estimator directly~\cite{Swaminathan2015}.
SNIPS is a natural baseline method to consider, along with the traditional IPS estimator.
Because of practical concerns, we only consider 500 epochs of optimization.
Additionally, we use the state-of-the-art and widely used Adam optimizer~\cite{kingma2014adam}. 

The IPS method converges to a lower test policy value in comparison to the SNIPS and the proposed $\beta$-IPS methods, even after 500 epochs.
A likely reason is the high-variance of the IPS estimator~\cite{Dudik2014}, which can cause it to get stuck in  bad local minima. 

The methods with a control variate, i.e., SNIPS (with multiplicative control variate) and $\beta$-IPS (with additive control variate) converge to substantially better test policy values.
In terms of the convergence speed, $\beta$-IPS converges to the optimal value faster than the SNIPS estimator, most likely because it has lower estimator variance than SNIPS. %
With this, we can answer RQC1 as follows: in the full-batch setting, our proposed optimal baseline correction enables $\beta$-IPS to converge faster than SNIPS at similar performance.
 
\noindent \textbf{Mini-batch.} The results for mini-batch training in terms of the test policy value are reported in Figure~\ref{fig:mini_batch_val}.
Different from the full-batch setup, where the focus is on reducing the variance of the \emph{estimator} value (Section~\ref{sec:estimator_var}), in the mini-batch mode, the focus is on reducing the variance of the gradient estimate (Section~\ref{sec:grad_var}).
The model and training setup are similar to the full-batch mode, except that we fixed the batch size to 1024 for the mini-batch experiments. 
Preliminary results indicated that the batch size hyper-parameter has a limited effect.

Analogous to the full-batch setup, the IPS estimator results in a lower test policy value, most likely because of the high gradient variance which prevents convergence to high performance.
In contrast, due to their baseline corrections, BanditNet (Eq.~\ref{eq:banditnet_grad}) and $\beta$-IPS have a lower gradient variance.
Accordingly, they also converge to better performance~\cite{bottou2018optimization}, i.e., resulting in superior test policy values. 

Amongst these baseline-corrected gradient-based methods (BanditNet and $\beta$-IPS), our proposed $\beta$-IPS estimator outperforms BanditNet as it provides a policy with substantially higher value.
The differences are observed over different choices of learning rates.
Thus we answer RQC2 accordingly: in the mini-batch setting, our proposed gradient-minimizing baseline method results in considerably higher policy value compared to both IPS and BanditNet.

\begin{figure*}[!t]
    \centering
    \includegraphics[width=\linewidth]{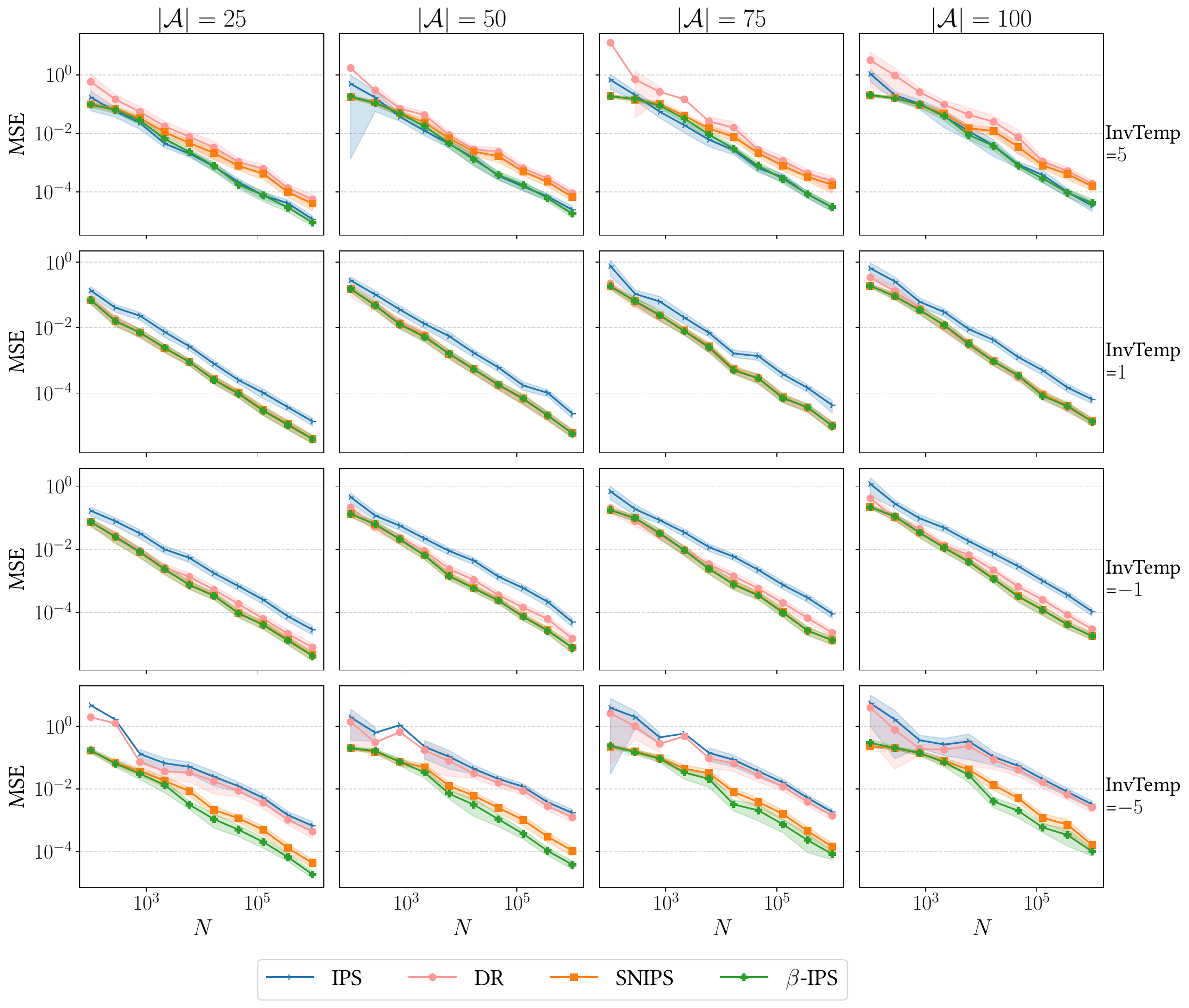}
    \caption{Mean Squared Error (MSE) of different off-policy estimators with varying action space (from left to right), and varying inverse temperature parameter of the softmax logging policy (from top to bottom). X-axis corresponds to the size of the logged data simulated (ranging from $10^2$ to $10^6$), and the y-axis corresponds to the MSE (evaluated over 100 independent samples of the synthetic data) along with 95\% confidence interval. Each row corresponds to a different setting of inverse temperature of the softmax logging policy. 
    We only consider unbiased (asymptotically or otherwise) estimators.}
    \label{fig:ope_mse}
\end{figure*}

Next, we directly consider the empirical gradient variance of different estimators;
Figure~\ref{fig:grad_var} reports the average mini-batch gradient variance per epoch.
As expected, the IPS estimator has the highest gradient variance by a large margin. %
For BanditNet, we observe a lower gradient variance, which is the desired result of the additive baseline it employs.
Finally, we observe that our proposed method $\beta$-IPS has the lowest gradient variance.
This result corroborates the theoretical claim (Theorem~\ref{thrm:min_grad_var}) that the $\beta$-IPS estimator has the lowest gradient variance amongst all global additive control variates (including IPS and BanditNet).
Our answer to RQC3 is thus clear: our proposed $\beta$-IPS results in considerably lower gradient variance compared to BanditNet and IPS.

\subsection{Off-policy evaluation performance (RQC4)}
To evaluate the performance of the proposed $\beta$-IPS method, which minimizes the estimated policy value (Eq.~\ref{eq:optimal_offpolicy_beta}), in an OPE task, results are presented in Figure~\ref{fig:ope_mse}. The target policy (to be evaluated) is a logistic regression model trained via the IPS objective (Eq.~\ref{eq:MC_gradient_IPS}) on logged data and evaluated on a separate full-information test set. 
We evaluate the MSE of the estimated policy value against the \textit{true} policy value (Eq.~\ref{eq:estimator_mse}). 
To evaluate the MSE of different estimators realistically, we report results with varying degrees of the optimality of the behavior policy (decided by the inverse temperature parameter of the softmax) and with a varying cardinality of the action space. A positive (and higher) inverse softmax temperature results in a increasingly optimal behavior policy (selects action with highest reward probability), and a negative (and lower) inverse softmax temperature parameter results in an increasingly sub-optimal behavior policy (selects actions with lowest reward probability). 
Our proposed $\beta$-IPS method has the lowest MSE in all simulated settings.
Interestingly, the proposed $\beta$-IPS has a lower MSE than the DR method, which has a regression model-based control variate, arguably more powerful than the constant control variate from the proposed $\beta$-IPS method.
Similar observations have been made in previous work, e.g., \citet{Jeunen2020REVEAL} reported that the DR estimator's performance heavily depends on the logging policy.

\begin{table}[t]
\caption{Comparison of different OPE methods on real-world recommender system logs of ZOZOTOWN from a campaign targeted towards men with a uniformly random production policy. We report the mean relative absolute error (with std).}
\label{tab:method_comparison}
\centering
\begin{tabular}{lc}
\toprule
\textbf{OPE estimator} & \textbf{Abs. relative error} $\downarrow$ \\
\midrule
IPS & 0.1277 (0.0142) \\
SNIPS & 0.1113 (0.0372) \\
DR & 0.1144 (0.0366) \\
$\beta$-IPS & \textbf{0.1078 (0.0383)} \\
\bottomrule
\end{tabular}
\end{table}

Depending on the setting, we see that $\beta$-IPS either has performance comparable to the SNIPS estimator, i.e., when inverse temperature $\in \{-1, 1\}$;
or noticeably higher performance than SNIPS, i.e., when inverse temperature $\in \{-5, 5\}$.

\noindent \textbf{Real-world evaluation.} To evaluate different estimators in a real-world recommender systems setup, we report the results of OPE from the production logs of a real-world recommender system in Table~\ref{tab:method_comparison}.
Similar to the simulation setup, the proposed $\beta$-IPS has the lowest absoluate relative error amongst all estimators in the comparison.
In conclusion, we answer RQC4:
our proposed policy-value variance minimizing baseline method results in substantially improved MSE, compared to IPS, SNIPS and DR, in offline evaluation tasks that are typical recommender system use-cases.

\section{Conclusion and Future Work}

In this chapter, we have proposed to unify different off-policy estimators as equivalent additive baseline corrections.
We look at off-policy evaluation and learning settings and propose baseline corrections that minimize the variance in the estimated policy value and the empirical gradient of the off-policy learning objective.
Extensive experimental comparisons on a synthetic benchmark with realistic settings show that our proposed methods improve performance in the off-policy estimation (OPE) and off-policy learning (OPL) tasks.

We believe our work in this chapter represents a significant step forward in the understanding and use of off-policy estimation methods (for both evaluation and learning use-cases), since we show that the prevalent SNIPS estimator can be improved upon with essentially no cost, as our proposed method is parameter-free and -- in contrast with SNIPS -- it retains the unbiasedness that comes with IPS.
Future work may apply a similar approach to offline reinforcement learning setups~\cite{levine2020offline}, or consider extensions of our approach for ranking applications~\cite{London2023}.

In this chapter, we answer the \ref{rq:recsys}, and \ref{rq:recsys1} in the affirmative.
First, we present a unifying framework for off-policy evaluation and learning tasks :$\beta$-IPS.
The new estimator $\beta$-IPS combines most commonly estimators such as: inverse propensity scoring (IPS), doubly robust (DR), and self-normalized IPS (SNIPS). 
Further, we presented a closed-form solution baseline correction term -- $\beta^{}$ that minimizes variance for both off-policy learning and evaluation tasks. 

Broadly, in this chapter, we explored off-policy evaluation and learning estimators derived from logged user interactions in recommender systems.
In the next chapter, we turn to contextual-bandit learning within a diffusion-model framework, aiming to optimize arbitrary user-defined objectives.%

\newpage

\begin{subappendices}
\section{Appendix: Off-policy Estimator Variance}

\begin{figure*}[!ht]
    \centering
    \includegraphics[width=\textwidth]{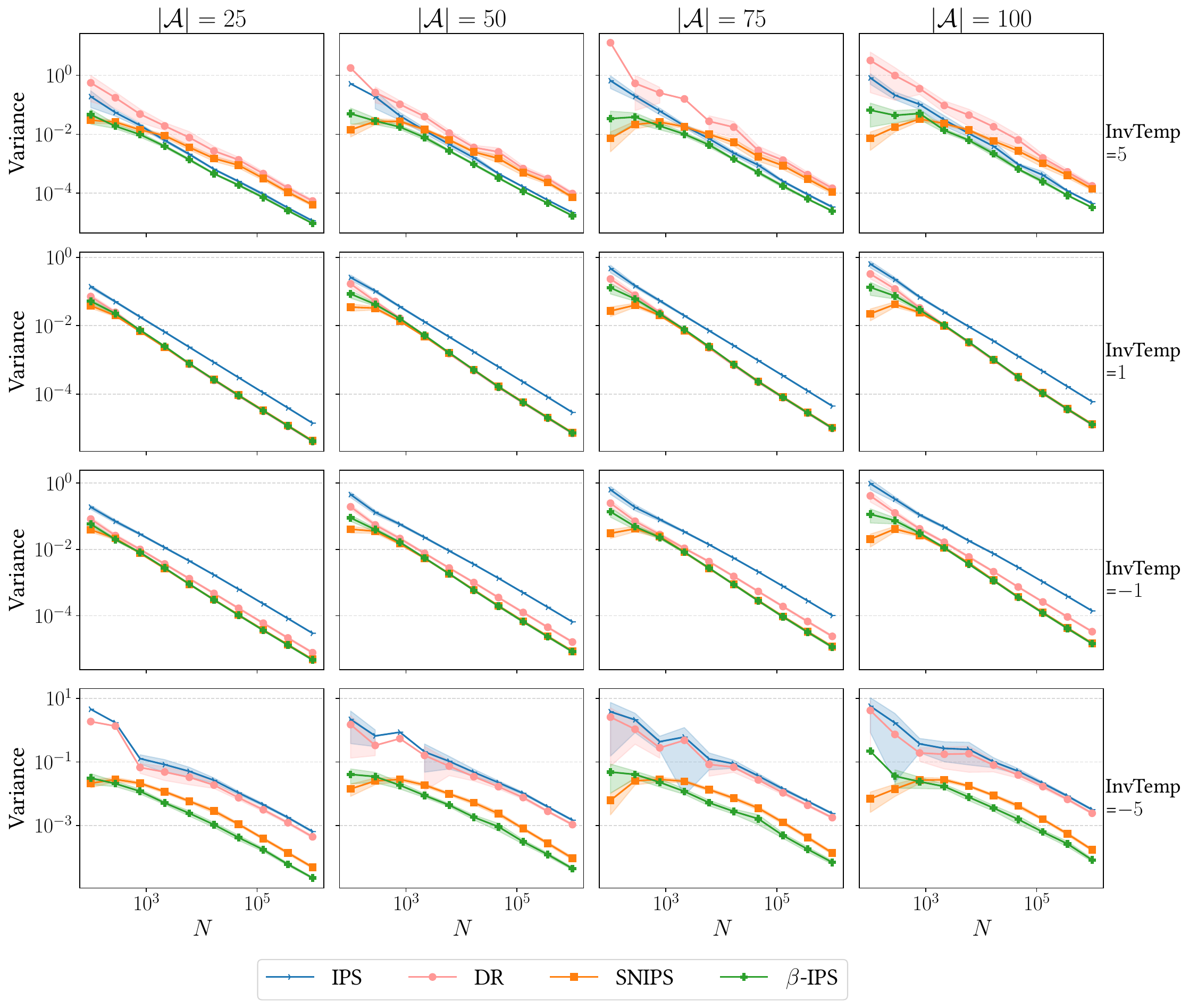}
    \caption{Empirical variance of different off-policy estimators with varying action space (from left to right), and varying sub-optimality of a temperature-based softmax behavior policy (from top to bottom). The x-axis corresponds to the size of the logged data simulated (ranging from $10^2$ to $10^6$), and the y-axis corresponds to the variance of different estimators (evaluated over 100 independent samples of the synthetic data) along with 95\% confidence interval. Each row corresponds to a different optimality level of the logging policy, decided by the inverse temperature parameter. We only consider unbiased (asymptotically or otherwise) estimators.}
    \label{fig:ope_sample_var}
\end{figure*}
\noindent In this chapter appendix, we report additional results from the experimental section (Section~\ref{recsys-results-disc}) from the main chapter), answering RQC4. Specifically, we look the the empirical variance of various offline estimators for the task of off-policy evaluation. The mean squared error (MSE) of different offline estimators are reported in Figure~\ref{fig:ope_mse}. In this appendix, we report the empirical variance of various offline estimators in Figure~\ref{fig:ope_sample_var}. 

From the figure, it is clear that our proposed $\beta$-IPS estimator with estimator variance minimizing $\beta$ value (Eq.~\ref{eq:optimal_offpolicy_beta}) results in the lowest empirical variance in most of the cases. It is interesting to note that when the logged data is limited ($N < 10^3$), sometimes the SNIPS estimator has lower estimator variance. We suspect that the reason could be a bias in the estimate of the variance-optimal $\beta$ estimate (Eq.~\ref{eq:optimal_offpolicy_beta}), when the dataset size is small, given that it is a ratio estimate of expectations. For practical settings, i.e., when $N > 10^3$, the proposed estimator $\beta$-IPS results in a minimum sample variance, thereby empirically validating the effectiveness of our proposed $\beta$-IPS estimator for the task of OPE. 
\end{subappendices}

\chapter{A Simple and Effective Reinforcement Learning Method for Text-to-Image Diffusion Models}
\chaptermark{A Reinforcement Learning Method for Text-to-Image Diffusion Models}
\label{chapter:01-online-evaluation4}

\footnote[]{This chapter was published as~\citep{gupta2025simple}.} 
So far in this thesis, we focused on learning from user interactions via contextual bandits within ranking or recommendation systems. 
However, the contextual bandit framework has recently also been effectively employed in fine-tuning foundation models, such as textual \ac{LLMs} and diffusion models.
The typical choice of method for model optimization is proximal policy optimization (PPO)~\cite{schulman2017proximal}. 
While effective, recent research has highlighted computational advantages of REINFORCE (policy gradient methods) over PPO for text-based \ac{LLMs}~\cite{ahmadian2024back}. 
Given PPO's ongoing challenges with variance and sample inefficiency, we consider improvements through our final research question:

\begin{enumerate}[label=\textbf{RQ5},ref={RQ\arabic*},resume]
\item \acl{rq:loop}
\end{enumerate}

\noindent In this chapter, we systematically compare PPO and REINFORCE for diffusion model fine-tuning. 
In the first part of this chapter, we demonstrate that REINFORCE exhibits inferior sample efficiency compared to PPO. 
Subsequently, we propose \ac{LOOP}, an enhancement to PPO achieving superior performance with the same number of input prompts by generating multiple actions per prompt.

\section{Introduction}
\label{sec:intro_icml}

Diffusion models have emerged as a powerful tool for generative modeling~\cite{sohl2015deep,ho2020denoising}, with a strong capacity to model complex data distributions from various modalities, like images~\cite{rombach2022high}, text~\cite{austin2021structured}, natural molecules~\cite{xu2023geometric}, and videos~\cite{blattmann2023align}.

Diffusion models are typically pre-trained on a large-scale dataset, such that they can subsequently generate samples from the same data distribution. The training objective typically involves maximizing the data distribution likelihood. This pre-training stage helps generate high-quality samples from the model. However, some applications might require optimizing a custom reward function, for example, optimizing for generating aesthetically pleasing images~\cite{xu2024imagereward}, semantic alignment of image-text pairs based on human feedback~\cite{schuhmann2022laion}, or generating molecules with specific properties~\cite{wang2024fine}. 

To optimize for such black-box objectives, \ac{RL}-based fine-tuning has been successfully applied to diffusion models~\cite{fan2024reinforcement,black2023training,wallace2024diffusion,li2024aligning,gu2024diffusion}. For \ac{RL}-based fine-tuning, the reverse diffusion process is treated as a \ac{MDP}, wherein prompts are treated as part of the input state, the generated image at each time-step is mapped to an action, which receives a reward from a fixed reward model (environment in standard \ac{MDP}), and finally the diffusion model is treated as a policy, which we optimize to maximize rewards. For optimization, typically \ac{PPO} is applied~\cite{fan2024reinforcement,black2023training}. In applications where getting a reward model is infeasible or undesirable, ``RL-free'' fine-tuning (typically offline) can also be applied~\cite{wallace2024diffusion}. For this chapter, we only focus on diffusion model fine-tuning using ``online'' \ac{RL} methods, specifically \ac{PPO}~\cite{schulman2017proximal}.

\begin{figure*}[!ht]
    \centering
    \rotatebox{90}{%
      \begin{minipage}{0.95\textheight}    %
        \centering
        \setlength{\tabcolsep}{1.2mm}
        \begin{tabular}{l@{~}ccccc}
          \raisebox{1.5cm}{SD v2} &
            \includegraphics[width=2.9cm]{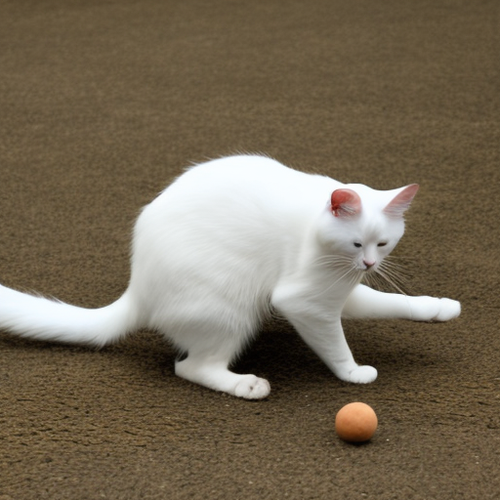} &
            \includegraphics[width=2.9cm]{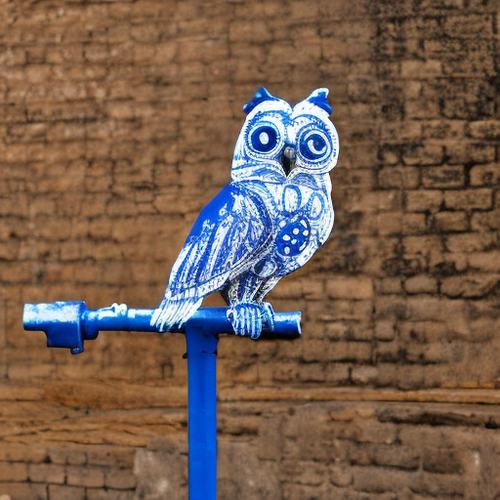} &
            \includegraphics[width=2.9cm]{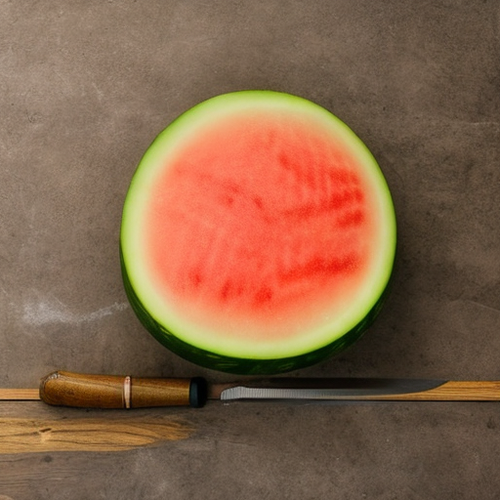} &
            \includegraphics[width=2.9cm]{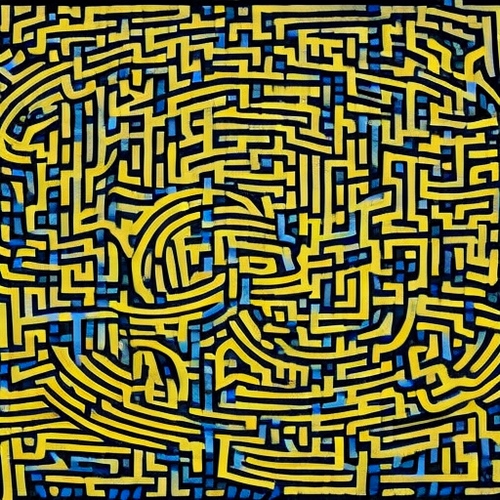} &
            \includegraphics[width=2.9cm]{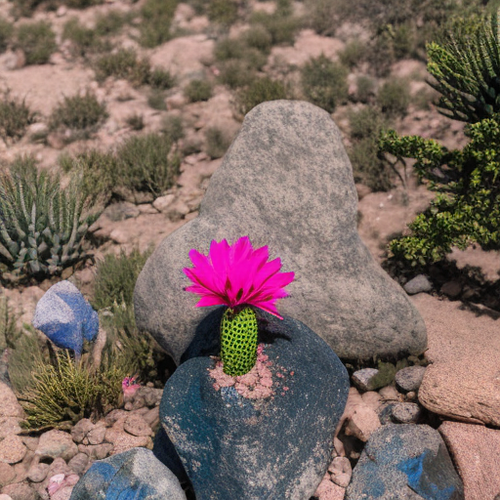}
          \\
          \raisebox{1.5cm}{PPO} &
            \includegraphics[width=2.9cm]{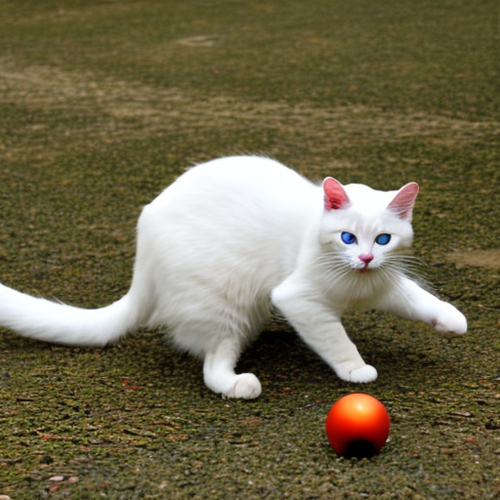} &
            \includegraphics[width=2.9cm]{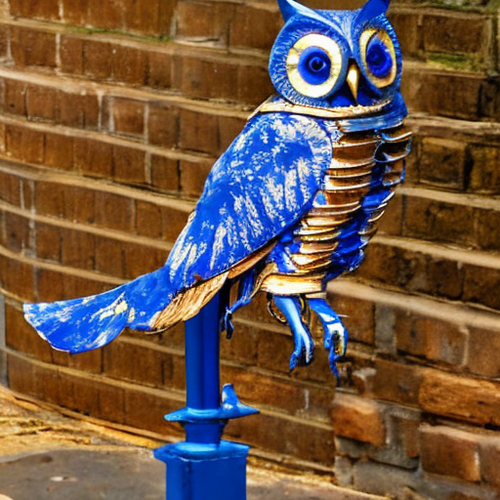} &
            \includegraphics[width=2.9cm]{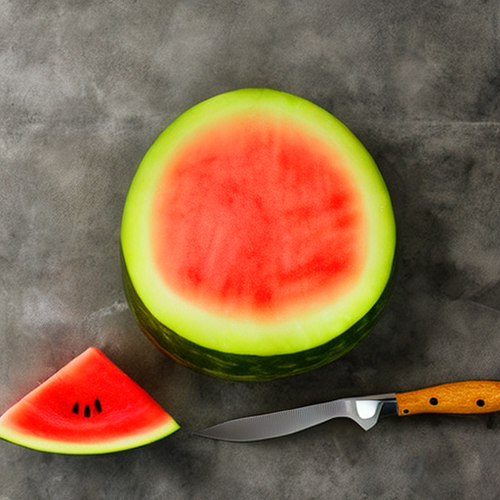} &
            \includegraphics[width=2.9cm]{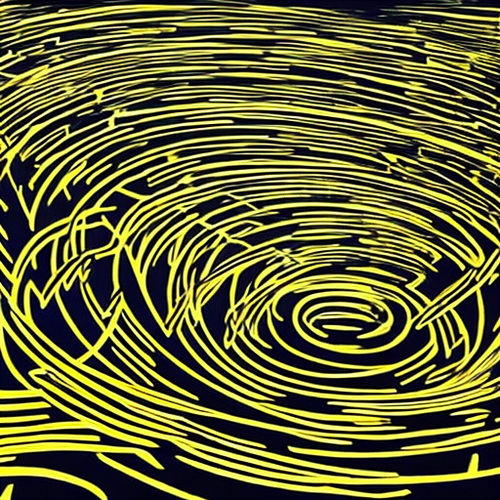} &
            \includegraphics[width=2.9cm]{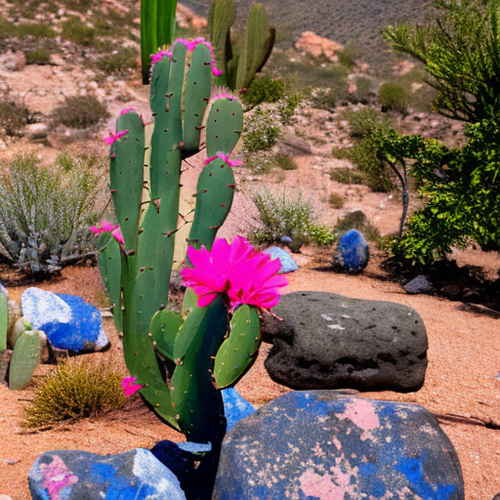}
          \\
          \raisebox{1.5cm}{LOOP} &
            \includegraphics[width=2.9cm]{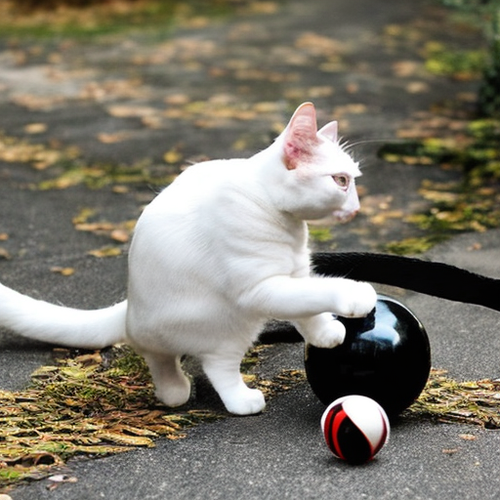} &
            \includegraphics[width=2.9cm]{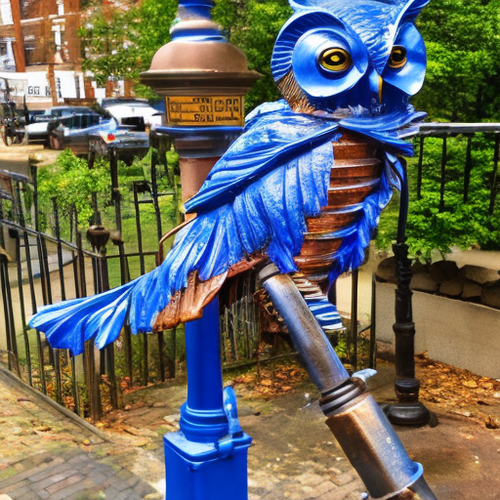} &
            \includegraphics[width=2.9cm]{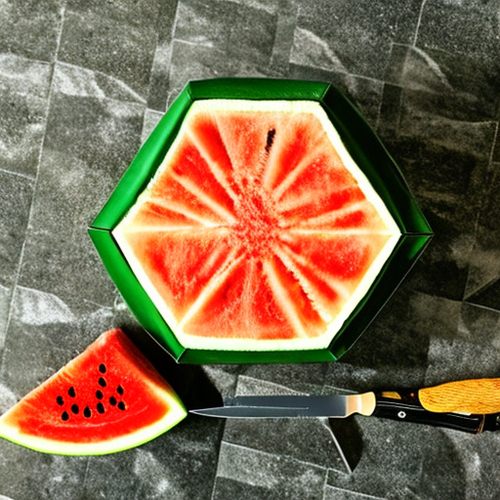} &
            \includegraphics[width=2.9cm]{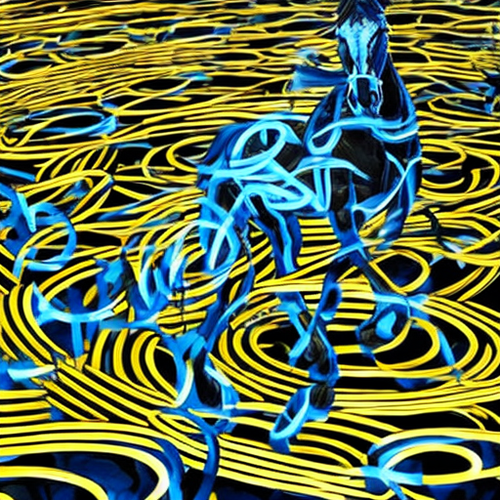} &
            \includegraphics[width=2.9cm]{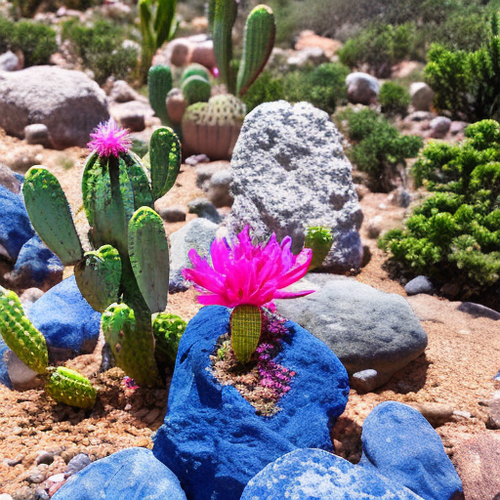}
          \\
          & \parbox{2.9cm}{\small ``White cat playing with black ball''\\ \\}
          & \parbox{2.9cm}{\small ``Mechanical owl with cobalt blue feathers on a rusted bronze lamppost''}
          & \parbox{2.9cm}{\small ``A hexagonal watermelon placed beside a knife''\\}
          & \parbox{3cm}{\small ``Black horse with glowing cyan patterns in maze of floating golden rings''}
          & \parbox{2.9cm}{\small ``Neon pink cactus growing on a cobalt blue rock''\\}
          \\
        \end{tabular}
        \vspace*{-0.5\baselineskip}
        \caption{\textbf{LOOP improves attribute binding}. Qualitative examples generated with SD 2.0 (first row), \ac{PPO} (second row) and \ac{LOOP} $k{=}4$ (third row).  
        In each prompt SD and \ac{PPO} miss at least one listed attribute, whereas \ac{LOOP} succeeds.}
        \label{fig:improve-attribute-binding}
      \end{minipage}%
    }%
  \end{figure*}

An advantage of \ac{PPO} is that it removes the incentive for the new policy to deviate too much from the previous reference policy, via importance sampling and clipping operation~\cite{schulman2017proximal}. While effective, \ac{PPO} can have a significant computational overhead. In practice, \ac{RL} fine-tuning for diffusion models via \ac{PPO} requires concurrently loading three models in memory: 
\begin{enumerate}[label=(\roman*)]
    \item The \textbf{reference policy}: The base policy, which is usually initialized with the pre-trained diffusion model.
    \item The \textbf{current policy}: The policy that is \ac{RL} fine-tuned, and also initialized with the pre-trained diffusion model.
    \item The \textbf{reward model}: Typically, a large vision-language model, trained via supervised fine-tuning objective~\cite{lee2023aligning}, which assigns a scalar reward to the final generated image during the online optimization stage. 
    \end{enumerate}
This can result in a considerable computational burden, given that each policy can potentially have millions of parameters.
In addition to its computational overhead, \ac{PPO} is also known to be sensitive to hyper-parameters~\cite{engstrom2019implementation,zheng2023delve,huang2024n+}.

Simpler approaches, like REINFORCE~\cite{williams1992simple} avoid such complexities, and could theoretically be more efficient. However, in practice, they often suffer from high variance and instability. Recently, a variant of REINFORCE: \ac{RLOO}~\citep{kool2019buy} was proposed which samples multiple sequences per input prompt, and a baseline correction term to reduce the variance, however, it still suffers from sample inefficiency.

This raises a fundamental question about the \textbf{efficiency-effectiveness} trade-off in \ac{RL}-based diffusion fine-tuning. In this chapter, first we systematically explore this trade-off between \textit{efficiency} -- a lower computational cost, and reduced implementation complexity (i.e., fewer hyper-parameters) -- and \textit{effectiveness} -- stable training, and final performance. 
We compare a simple REINFORCE approach with the standard \ac{PPO} framework, demonstrating that while REINFORCE greatly reduces computational complexity, it comes at the cost of reduced performance.

Motivated by this finding, we propose a novel \ac{RL} for diffusion fine-tuning method, \ac{LOOP}, which combines the best of the both worlds. To reduce the variance during policy optimization, \ac{LOOP} uses multiple actions (diffusion trajectories) and a (REINFORCE) baseline correction term per input prompt.
To maintain the stability and robustness of \ac{PPO}, \ac{LOOP} uses clipping and importance sampling.

We choose the text-to-image compositionality benchmark (T2I-CompBench; \citeauthor{huang2023t2i}, \citeyear{huang2023t2i}) as our primary evaluation benchmark. Text-to-image models often fail to satisfy an essential reasoning ability of attribute binding, i.e., the generated image often fails to \textit{bind} certain \textit{attributes} specified in the instruction prompt~\cite{huang2023t2i,ramesh2022hierarchical,fu2024enhancing}. 
As illustrated in Figure~\ref{fig:improve-attribute-binding}, LOOP outperforms previous diffusion methods on attribute binding.
As attribute binding is a key skill necessary for real-world applications, we choose the T2I-CompBench benchmark alongside two other common tasks: aesthetic image generation and image-text semantic alignment.   

To summarize, our main contributions are as follows:

\noindent \textbf{PPO vs.\ REINFORCE efficiency-effectiveness trade-off.} We systematically study how design elements like clipping, reference policy, value function in \ac{PPO} compare to a simple REINFORCE method, highlighting the efficiency-effectiveness trade-off in diffusion fine-tuning. To the best of our knowledge, we are the first ones to present such a systematic study, highlighting the trade-offs in diffusion fine-tuning.

\noindent \textbf{Introducing LOOP.} We propose \ac{LOOP}, a novel \ac{RL} for diffusion fine-tuning method combining the best of REINFORCE and \ac{PPO}. \ac{LOOP} uses multiple diffusion trajectories and a REINFORCE baseline correction term for variance reduction, as well as clipping and importance sampling from \ac{PPO} for robustness and sample efficiency.
    
\noindent \textbf{Empirical validation.} To validate our claims empirically, we conduct experiments on the T2I-CompBench benchmark image compositionality benchmark. The benchmark evaluates the attribute binding capabilities of the text-to-image generative models and shows that LOOP succeeds where previous text-to-image generative models often fail. We also evaluate LOOP on two common objectives from the \ac{RL} for diffusion literature: image aesthetics, and text-image semantic alignment~\cite{black2023training}.

\section{Background and Related Work}

\subsection{Diffusion models}
We focus on denoising diffusion probabilistic models (DDPM) as the base model for text-to-image generative modeling~\cite{ho2020denoising,sohl2015deep}. Briefly, given a conditioning context variable $\mathbf{c}$ (text prompt in our case), and the data sample $\mathbf{x}_0$, DDPM models $p(\mathbf{x}_0 \mid \mathbf{c})$ via a Markov chain of length $T$, with the following dynamics:
\begin{equation}
    p_{\theta}(\mathbf{x}_{0:T} \mid \mathbf{c}) = p(\mathbf{x}_{T} \mid \mathbf{c}) \prod_{t=1}^{T} p_{\theta}(\mathbf{x}_{t-1} \mid \mathbf{x}_{t}, \mathbf{c}).
    \label{eq:diffusion_mc}
\end{equation}
Image generation in diffusion model is achieved via the following ancestral sampling scheme, which is a reverse diffusion process:
\begin{equation}
\begin{aligned} 
\mathbf{x}_T {} &\sim \mathcal{N}(\mathbf{0}, \mathbf{I}), \\
\mathbf{x}_{t} &{}\sim N\left(\mathbf{x}_t \mid \mu_\theta(\mathbf{x}_t, \mathbf{c}, t), \sigma^2_\theta I\right),
 \forall t \in [0, T-1],
\end{aligned}
\end{equation}
where the distribution at time-step $t$ is assumed to be a multivariate normal distribution with the predicted mean $\mu_\theta(\mathbf{x}_t, \mathbf{c}, t)$, and a constant variance.

\subsection{Proximal policy optimization for RL}
\Ac{PPO} was introduced for optimizing a policy with the objective of maximizing the overall reward in the \ac{RL} paradigm~\cite{schulman2017proximal}. \ac{PPO} removes the incentive for the current policy $\pi_t$ to diverge from the previous policy $\pi_{t-1}$ outside the range $[ 1-\epsilon, 1+\epsilon]$, where $\epsilon$ is a hyper-parameter. As long as the subsequent policies are closer to each other in the action space, the monotonic policy improvement bound guarantees a monotonic improvement in the policy's performance as the optimization progresses. This property justifies the clipping term in the mathematical formulation of the \ac{PPO} objective function~\cite{schulman2015trust,achiam2017constrained,queeney2021generalized}. Formally, \ac{PPO} the objective function is:
\begin{equation}
\mbox{}\hspace*{-2mm}
J(\theta) \! = \!
\mathbb{E} \!\!\left[ \min \!\left( \!r_t(\theta) \hat{A}_t, \, \!\text{clip}\left(r_t(\theta), 1-\epsilon, 1+\epsilon\right)\!\hat{A}_t\! \right)\!\right]\!,
\!\mbox{}
\label{eq:ppo_obj1}
\end{equation}
where $r_t(\theta)=\frac{\pi_t(a \mid c)}{\pi_{t-1}(a \mid c)}$ is the importance sampling ratio between the current policy $\pi_{t}(a \mid c)$ and the previous reference policy $\pi_{t-1}(a \mid c)$, $\hat{A}_t$ is the advantage function~\cite{sutton2018reinforcement}, and the $\text{clip}$ operator restricts the importance sampling ratio in the range $[ 1-\epsilon, 1+\epsilon]$.

\subsection{RL for text-to-image diffusion models}
The diffusion process can be viewed as an \ac{MDP} $\left( \mathcal{S}, \mathcal{A}\right.$, $\left.\mathcal{P}, \mathcal{R}, \rho_0 \right)
$, where $\mathcal{S}$ is the state space, $\mathcal{A}$ is the action space, $\mathcal{P}$ is the state transition kernel, $\mathcal{R}$ is the reward function, and $\rho_0$ is the distribution of initial state $\mathbf{s_0}$. In the context of text-to-image diffusion models, the \ac{MDP} is defined as:
\begin{equation}
\begin{split}
  \mathbf{s_t} = (\mathbf{c}, t, \mathbf{x_t}), & \;\; \pi_{\theta}(\mathbf{a_t} \mid \mathbf{s_t}) = p_\theta(\mathbf{x_{t-1}} \mid \mathbf{x_t}, \mathbf{c}),  \\ 
\mathcal{P}(\mathbf{s_{t+1}} \mid \mathbf{s_t}, \mathbf{a_t}) &=  \delta \big( \mathbf{c}, \mathbf{a_{t}} \big), \quad \mathbf{a_t} = \mathbf{x_{t-1}},  \\
\rho_0(\mathbf{s_0}) &= \big(p(\mathbf{c}), \delta_T, \mathcal{N}(0, \mathbf{I})\big),  \\
\mathcal{R}(\mathbf{s_t}, \mathbf{a_t}) &= 
    \begin{cases}
        r(\mathbf{x_0}, \mathbf{c}) & \text{if } t = 0, \\
        0 & \text{otherwise.}
    \end{cases}
\label{eq:mdp_formal}
\end{split}
\end{equation}
The input state $\mathbf{s_t}$ is defined in terms of the context (prompt features), sampled image at the given time-step $t$. The policy $\pi_{\theta}$ is the diffusion model itself. The state transition kernel is a dirac delta function $\delta$ with the current sampled action $\mathbf{x}_{t}$ as the input. The reward is assigned only at the last step in the reverse diffusion process, when the final image is generated. The initial state $\rho_0$ corresponds to the last state in the forward diffusion process: $\mathbf{x}_T$. 

\subsection{PPO for diffusion fine-tuning}
The objective function of \ac{RL} fine-tuning for a diffusion policy $\pi_{\theta}$ can be defined as follows:
\begin{equation}
\begin{split}
J_{\theta}(\pi) &= \mathbb{E}_{\tau\sim p(\tau\mid\pi_{\theta})}\left[\sum_{t=0}^{T} \mathcal{R}(\mathbf{s}_t,\mathbf{a}_t)\right] \\
&= \mathbb{E}_{\tau\sim p(\tau\mid\pi_{\theta})}\left[r(\mathbf{x}_0, \mathbf{c})\right],
\end{split}
\label{eq:objective}
\end{equation}
where the trajectory $\mathbf{\tau} =\{\mathbf{x}_T, \mathbf{x}_{T-1},\ldots,\mathbf{x}_0\}$ refers to the reverse diffusion process (Eq.~\ref{eq:diffusion_mc}), and the total reward of the trajectory is the reward of the final generated image $\mathbf{x}_0$ (Eq.~\ref{eq:mdp_formal}). We ignore the KL-regularized version of the equation, which is commonly applied in the RLHF for LLM literature~\cite{zhong2024dpo,zeng2024token,rafailov2024direct}, and proposed by \citet{fan2024reinforcement} in the context of RL for diffusion models. As shown by~\citet{black2023training}, adding the KL-regularization term makes no empirical difference in terms of the final performance. 
The \ac{PPO} objective is given as:
\begin{align}
 J_{\theta}^{\mathrm{PPO}}(\pi) = \mathbb{E} \bigg[\sum_{t=0}^{T} \! 
\text{clip}\!\left(\!\frac{\pi_{\theta}(\mathbf{x}_{t-1} \mid \mathbf{x}_t, \mathbf{c})}{\pi_\text{old}(\mathbf{x}_{t-1} \mid \mathbf{x}_t, \mathbf{c})}, 1\!-\!\epsilon, 1\!+\!\epsilon\!\right) 
    r(\mathbf{x}_0, \mathbf{c})\bigg],
\label{eq:ppo_objective}
\end{align}
where the clipping operation removes the incentive for the new policy $\pi_{\theta}$ to differ from the previous round policy $\pi_{\text{old}}$~\cite{schulman2017proximal,black2023training}.

\section{REINFORCE vs.\ PPO: An~Efficiency-Effectiveness Trade-Off}
\label{sect:revist_rl}
In this section, we explore the efficiency-effectiveness trade-off between two prominent reinforcement learning methods for diffusion fine-tuning: REINFORCE and PPO. Understanding this trade-off is crucial for selecting the appropriate algorithm given constraints on computational resources and desired performance outcomes.

In the context of text-to-image diffusion models, we aim to optimize the policy $\pi$ to maximize the expected reward $ \mathcal{R}(x_{0:T}, c) = r(x_{0}, c)$. Our objective function is defined as:
\begin{equation}
J_{\theta}(\pi) = \mathbb{E}_{c \sim p(C), x_{0:T} \sim p_{\theta}(x_{0:T}\mid c)}\left[ r(x_{0}, c)\right].
\label{eq:cb_objective}
\end{equation}
\noindent \textbf{REINFORCE for gradient calculation.}
For optimizing this objective, the REINFORCE policy gradient (also known as \ac{SF})~\cite{williams1992simple} provides the following gradient estimate:
\begin{equation}
\begin{split}
\nabla_{\theta} J_{\theta}^{\mathrm{SF}}(\pi) &{}=\mathbb{E}_{\mathbf{x}_{0:T}}\left[ \nabla_{\theta} \log \mleft( \prod_{t=1}^{T} p_{\theta}\left(\mathbf{x}_{t-1} \mid \mathbf{x}_{t}, \mathbf{c}\right)\mright) r\left(\mathbf{x}_{0}, \mathbf{c}\right)\right] \\
&{}= \mathbb{E}_{\mathbf{x}_{0:T}}\left[\sum_{t=0}^{T} \nabla_{\theta} \log p_{\theta}\left(\mathbf{x}_{t-1} \mid \mathbf{x}_{t}, \mathbf{c}\right) r\left(\mathbf{x}_{0}, \mathbf{c}\right)\right],
\end{split}
\label{eq:cb_pg}
\end{equation}
where the second step follows from the reverse diffusion policy decomposition (Eq.~\ref{eq:diffusion_mc}).

In practice, a batch of trajectories is sampled from the reverse diffusion distribution, i.e., $\mathbf{x}_{0:T} \sim p_{\theta}(\mathbf{x}_{0:T})$, and a Monte Carlo estimate of the REINFORCE policy gradient (Eq.~\ref{eq:cb_pg}) is calculated for the model update. 

\noindent  \textbf{REINFORCE with baseline correction.}
To reduce variance of the REINFORCE estimator, a common trick is to subtract a constant baseline correction term from the reward function~\cite{greensmith2004variance,mohamed2020monte}:
\begin{equation}
\begin{split}
\nabla_{\theta} J_{\theta}^{\mathrm{SFB}}(\pi) = \mathbb{E}\mleft[\sum_{t=0}^{T} \nabla_{\theta} \log p_{\theta}\mleft(\mathbf{x}_{t-1} \mid \mathbf{x}_{t}, \mathbf{c}\mright) \mleft(r\mleft(\mathbf{x}_{0}, \mathbf{c}\mright)-b_t\mright)\mright].
\end{split}
\label{eq:cb_pg_b}
\end{equation}

\noindent  \textbf{REINFORCE Leave-one-out (RLOO).}
To further reduce the variance of the REINFORCE estimator, \ac{RLOO} samples $K$ diffusion trajectories per prompt ($\{\mathbf{x}^{k}_{0:T}\} \sim \pi(. \mid \mathbf{c}))$, for a better Monte Carlo estimate of the expectation~\cite{kool2019buy,ahmadian2024back}. The \ac{RLOO} estimator is:
\begin{align}
 \nabla_{\theta} J_{\theta}^{\mathrm{RLOO}}(\pi) = \mathbb{E}\mleft[\!K^{-1}\sum_{k=0}^{K} \sum_{t=0}^{T} \nabla_{\theta} \log p_{\theta}\mleft(\mathbf{x}^{k}_{t-1} \mid \mathbf{x}^{k}_{t}, \mathbf{c}\mright) \mleft(r\mleft(\mathbf{x}^{k}_{0}, \mathbf{c}\mright)-b_t\mright)\!\mright]\!.
\label{eq:rloo}
\end{align}

\noindent However, REINFORCE-based estimators have a significant disadvantage: they do not allow sample reuse (i.e., reusing trajectories collected from previous policies) due to a distribution shift between policy gradient updates during training. Sampled trajectories can only be used once, prohibiting mini-batch updates. This makes it \textit{sample inefficient}. 

To allow for sample reuse, the \ac{IS} trick can be applied~\cite{schulman2015trust,Owen2013}:
\begin{equation}
    J_{\theta}^{\textrm{IS}}(\pi) = \mathbb{E}_{c_t \sim p(C), a_t \sim \pi_{\text{old}}(a_t\mid c_t)}\left[\frac{\pi_{\theta}(a_t \mid c_t)}{\pi_{\text{old}}(a_t \mid c_t)} \mathcal{R}_t \right],
    \label{eq:is_obj}
\end{equation}
where $\pi_{\theta}$ is the \textit{current} policy to be optimized, and $\pi_{\text{old}}$ is the policy from the previous update round. With the \ac{IS} trick, we can sample trajectories from the current policy in a batch, store it in a temporary buffer, and re-use them to apply mini-batch optimization~\cite{schulman2017proximal}.

\noindent \textbf{Motivation for \ac{PPO}.} With the \ac{IS} trick, the samples from the old policy can be used to estimate the policy gradient under the current policy $\pi_{\theta}$ (Eq.~\ref{eq:cb_pg}) in a statistically unbiased fashion~\cite{Owen2013}, i.e., in expectation the \ac{IS} and REINFORCE gradients are equivalent (Eq.~\ref{eq:is_obj}, Eq.~\ref{eq:cb_pg}).
Thus, potentially, we can improve the sample efficiency of REINFORCE gradient estimation with \ac{IS}. 

While unbiased, the \ac{IS} estimator can exhibit high variance~\cite{Owen2013}. This high variance may lead to unstable training dynamics. Additionally, significant divergence between the current policy $ \pi_{\theta} $ and the previous policy $ \pi_{\textrm{old}} $ can result in the updated diffusion policy performing worse than the previous one~\cite{schulman2015trust,achiam2017constrained}. Next, we will prove this formally. We note that this result has previously been established by \cite{achiam2017constrained} for the more general \ac{RL} setting. In this chapter, we extend this finding to the context of diffusion model fine-tuning.  

A key component of the proof relies on the distribution of states under the current policy, i.e., $d^{\pi}(s)$. 
In the case of diffusion models, the state transition kernel $P(s_{t+1} \mid s_t, a_t)$ is deterministic, because the next state consists of the action sampled from the previous state (Eq.~\ref{eq:mdp_formal}), i.e., $ P(s_{t+1} \mid s_t, a_t)=1$.
While the state transition kernel is deterministic, the distribution of states is stochastic, given that it depends on the action at time $t$, which is sampled from the policy (Eq.~\ref{eq:mdp_formal}).
We define the state distribution as: 
\begin{definition}
Given the distribution over contexts $\mathbf{c} \sim p(\mathbf{C})$, the (deterministic) distribution over time $t=\delta(t)$, and the diffusion policy $\pi$, the state distribution at time $t$ is:
\begin{equation*}
    p(\mathbf{s}_t \mid \pi) = 
    p(\mathbf{c}) \delta(t) \!\!\int_{\mathbf{x}_{t+1}} \hspace{-0.55cm} \pi(\mathbf{x}_t \mid \mathbf{x}_{t+1}, \mathbf{c}, t) \pi(\mathbf{x}_{t+1} \mid \mathbf{c}, t) \,\mathbf{dx}_{t+1}.     
\end{equation*}
\end{definition}
\noindent Subsequently, the normalized discounted state visitation distribution can be defined as: 
\begin{equation}
   d^{\pi}(\mathbf{s})= (1 - \gamma) \sum_{t=0}^\infty \gamma^t p(\mathbf{s}_t = \mathbf{s} \mid  \pi).
\end{equation}
The advantage function is defined as: $A^{\pi_{k}}(\mathbf{s},\mathbf{a}) = Q^{\pi_{k}}(\mathbf{s}, \mathbf{a}) - V^{\pi_{k}}(\mathbf{s})$~\cite{sutton2018reinforcement}. 
Given this, 
the monotonic policy improvement bound can be derived:
\begin{theorem}
\label{theorem:1}
Consider a current policy $\pi_{k}$. For any future policy $\pi$, we have:
\begin{align*}
J(\pi) - J(\pi_{k}) &\geq \frac{1}{1-\gamma} \mathbb{E}_{(s,a)\sim d^{\pi_{k}}} \left[ \frac{\pi(a\mid s)}{\pi_{k}(a\mid s)} A^{\pi_{k}}(s,a) \right] \\
&- \frac{2\gamma C^{\pi,\pi_{k}}}{(1-\gamma)^{2}} \mathbb{E}_{s\sim d^{\pi_{k}}} \left[ \mathrm{TV}(\pi(\cdot\mid s),\pi_{k}(\cdot\mid s)) \right],
\end{align*}
where $C^{\pi,\pi_{k}} = \max_{s\in S}  |\mathbb{E}_{a\sim\pi(\cdot\mid s)} \left[ A^{\pi_{k}}(s,a) \right]|$ and $\mathrm{TV}(\pi(\cdot\mid s),\pi_{k}(\cdot\mid s))$ represents the total variation distance between the policies $\pi(\cdot\mid s)$ and $\pi_{k}(\cdot\mid s)$~\cite{achiam2017constrained}.
\end{theorem}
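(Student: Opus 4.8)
\textbf{Proof plan for Theorem~\ref{theorem:1}.}
The plan is to adapt the standard monotonic-policy-improvement argument of \citet{achiam2017constrained} to the diffusion-as-MDP setting described in Eq.~\ref{eq:mdp_formal}, taking care that the deterministic state-transition kernel does not alter the essential steps. First I would invoke the \emph{performance difference lemma}: for any two policies $\pi$ and $\pi_k$,
\begin{equation*}
J(\pi) - J(\pi_k) = \frac{1}{1-\gamma}\,\mathbb{E}_{s\sim d^{\pi}}\mleft[\mathbb{E}_{a\sim\pi(\cdot\mid s)}\mleft[A^{\pi_k}(s,a)\mright]\mright],
\end{equation*}
which holds for any MDP, including ours, since its proof only uses the definition of $A^{\pi_k}$ and a telescoping of the value function along trajectories; the determinism of $\mathcal{P}$ (Eq.~\ref{eq:mdp_formal}) plays no special role here. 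The catch is that the expectation is taken under the \emph{new} policy's state distribution $d^{\pi}$, which we cannot sample from when collecting data under $\pi_k$.

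The second step is therefore to replace $d^{\pi}$ by $d^{\pi_k}$ and control the error. I would write
\begin{equation*}
J(\pi) - J(\pi_k) = \frac{1}{1-\gamma}\,\mathbb{E}_{s\sim d^{\pi_k}}\mleft[\mathbb{E}_{a\sim\pi(\cdot\mid s)}\mleft[A^{\pi_k}(s,a)\mright]\mright] + \frac{1}{1-\gamma}\,\Bigl(\mathbb{E}_{s\sim d^{\pi}} - \mathbb{E}_{s\sim d^{\pi_k}}\Bigr)\mleft[g^{\pi}(s)\mright],
\end{equation*}
where $g^{\pi}(s) = \mathbb{E}_{a\sim\pi(\cdot\mid s)}[A^{\pi_k}(s,a)]$. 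The correction term is bounded by $\|d^{\pi} - d^{\pi_k}\|_1 \cdot \max_s |g^{\pi}(s)| = \|d^{\pi}-d^{\pi_k}\|_1\, C^{\pi,\pi_k}$, using the definition of $C^{\pi,\pi_k}$ given in the statement. Then I would apply the standard bound on the discounted-visitation divergence, $\|d^{\pi}-d^{\pi_k}\|_1 \le \frac{2\gamma}{1-\gamma}\,\mathbb{E}_{s\sim d^{\pi_k}}[\mathrm{TV}(\pi(\cdot\mid s),\pi_k(\cdot\mid s))]$, which follows from a coupling / telescoping argument over the (here deterministic) transition kernel and the initial-state distribution $\rho_0$ of Eq.~\ref{eq:mdp_formal}. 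Combining these two facts yields exactly the claimed lower bound. Finally, the leading term is rewritten as an importance-weighted expectation, $\mathbb{E}_{s\sim d^{\pi_k}}\mathbb{E}_{a\sim\pi(\cdot\mid s)}[A^{\pi_k}(s,a)] = \mathbb{E}_{(s,a)\sim d^{\pi_k}}\mleft[\tfrac{\pi(a\mid s)}{\pi_k(a\mid s)}A^{\pi_k}(s,a)\mright]$, using that $\pi_k$ has full support over the actions it samples (a harmless assumption for a diffusion policy with Gaussian conditionals).

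The main obstacle, and the step I would spend the most care on, is verifying the visitation-divergence inequality $\|d^{\pi}-d^{\pi_k}\|_1 \le \frac{2\gamma}{1-\gamma}\mathbb{E}_{s\sim d^{\pi_k}}[\mathrm{TV}(\pi,\pi_k)]$ in the diffusion MDP. The subtlety is that, although the transition kernel $\mathcal{P}(\mathbf{s}_{t+1}\mid\mathbf{s}_t,\mathbf{a}_t)$ is a Dirac delta, the state distribution $p(\mathbf{s}_t\mid\pi)$ is still stochastic because the action is drawn from the policy (as made explicit in the Definition preceding the theorem), so the randomness simply moves one step forward rather than disappearing. I would make this precise by unrolling $p(\mathbf{s}_{t}\mid\pi)$ versus $p(\mathbf{s}_t\mid\pi_k)$ step by step, bounding the per-step total-variation increment by $\mathbb{E}[\mathrm{TV}(\pi(\cdot\mid s),\pi_k(\cdot\mid s))]$ at the appropriate state marginal, and then summing the geometric series in $\gamma$. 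Everything else — the performance difference lemma, the definition-chasing for $C^{\pi,\pi_k}$, and the importance-sampling rewrite — is routine once this bound is in hand.
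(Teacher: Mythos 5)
Your plan is correct and follows exactly the route the paper relies on: the paper gives no independent proof of this theorem but defers to \citet{achiam2017constrained}, whose argument is precisely the performance difference lemma plus the visitation-divergence bound $\|d^{\pi}-d^{\pi_k}\|_1 \le \frac{2\gamma}{1-\gamma}\mathbb{E}_{s\sim d^{\pi_k}}[\mathrm{TV}(\pi,\pi_k)]$ that you reconstruct. Your care about the deterministic transition kernel versus the stochastic state distribution is also the one diffusion-specific point the paper itself flags in the definition preceding the theorem, so there is no gap.
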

\noindent A direct consequence of this theorem is that when optimizing a policy with the \ac{IS} objective (Eq.~\ref{eq:is_obj}), to guarantee that the new policy will improve upon the previous policy, the policies should not diverge too much. Therefore, we need to apply a constraint on the current policy. This can be achieved by applying the clipping operator in the \ac{PPO} objective (Eq.~\ref{eq:ppo_objective})~\cite{queeney2021generalized,achiam2017constrained,schulman2017proximal,gupta-2024-practical}. 

This gives rise to an \textit{efficiency-effectiveness trade-off} between REINFORCE and \ac{PPO}. REINFORCE offers greater computational and implementation efficiency due to its simplicity, but it comes at the cost of lower sample efficiency and potential suboptimal performance. In contrast, \ac{PPO} is more computationally demanding and involves more complex hyper-parameter tuning, yet it achieves higher performance and reliable policy improvements during training. 

We note that a similar trade-off analysis was performed in the context of \ac{RL} fine-tuning for large language models (LLM)~\cite{ahmadian2024back}.
However, their analysis was limited to an empirical study, whereas we present a theoretical analysis in addition to the empirical analysis. To the best of our knowledge, we are the first to conduct such a study for diffusion methods.

\section{Method: Leave-One-Out PPO (LOOP) for~Diffusion~Fine-tuning}
\label{sec:PLOO_method}
We demonstrated the importance of \ac{PPO} in enhancing sample efficiency and achieving stable improvements during training for diffusion fine-tuning. Additionally, we showcased the RLOO method's effectiveness in reducing the variance of the REINFORCE method.
In this section, we introduce our proposed method, \textbf{\ac{LOOP}}, a novel \ac{RL} for diffusion fine-tuning method. We start with highlighting the potential high-variance in the \ac{PPO} objective.

The expectation in the \ac{PPO} loss (Eq.~\ref{eq:ppo_objective}) is typically estimated by sampling a single trajectory for a given prompt~$c$: 
\begin{equation} 
\mbox{}\hspace*{-1mm}
 \sum_{t=0}^{T} \text{clip}\mleft(\frac{\pi_{\theta}(\mathbf{x}_{t-1} \mid \mathbf{x}_t, \mathbf{c})}{\pi_\text{old}(\mathbf{x}_{t-1} \mid \mathbf{x}_t, \mathbf{c})}, 1-\epsilon, 1+\epsilon\!\mright) r(\mathbf{x}_0, \mathbf{c}),\!
\label{eq:ppo_estimator}
\end{equation}
where $\mathbf{x}_{0:T} \sim \pi_{old}$.
Even though the single sample estimate is an unbiased Monte-Carlo approximation of the expectation, it has high-variance~\cite{Owen2013}. Additionally, the \ac{IS} term ($\frac{\pi_{\theta}(\mathbf{x}_{t-1} \mid \mathbf{x}_t, \mathbf{c})}{\pi_\text{old}(\mathbf{x}_{t-1} \mid \mathbf{x}_t, \mathbf{c})}$) can also contribute to high-variance of the \ac{PPO} objective~\cite{Swaminathan2015,xie2023dropout}. Both factors combined, can lead to high-variance, and unstable training of the \ac{PPO}.

Taking inspiration from \ac{RLOO} (Eq.~\ref{eq:rloo}), we sample $K$ independent trajectories from the previous policy for a given prompt $c$, and apply a baseline correction term from each trajectories reward, to reduce the variance of the estimator:
\begin{equation}
    \begin{aligned}
    \!\!\hat{J}_{\theta}^{\mathrm{LOOP}}(\pi) = \frac{1}{K} \!\sum_{i=1}^{K} \!\sum_{t=0}^{T} \text{clip}\!\left(\frac{\pi_{\theta}(\mathbf{x}^{i}_{t-1} \mid \mathbf{x}^{i}_t, c)}{\pi_\text{old}(\mathbf{x}^{i}_{t-1} \mid \mathbf{x}^{i}_t, c)}, 1-\!\epsilon, 1+\!\epsilon\right)\!\cdot\!\left(r(\mathbf{x}^{i}_0, \mathbf{c}) - b^{i}\right),
\end{aligned}
    \label{eq:loop_estimator}
\end{equation}
where $\mathbf{x}^{i}_{0:T} \sim \pi_{old}, \forall i \in [1, K]$.
The baseline correction term $b^{i}$ reduces the variance of the gradient estimate, while being unbiased in expectation~\cite{gupta-2024-optimal,mohamed2020monte}. A simple choice of baseline correction can be the average reward across the $K$ trajectories, i.e.,
\begin{equation}
    b^{i} = \frac{1}{k} \sum_{i=0}^{K} r(\mathbf{x}^{i}_{0}).
\end{equation}
However, we choose the leave-one-out average baseline, with average taken across all samples in the trajectory, except the current sample $i$, i.e.,
\begin{equation}
    b^{i} = \frac{1}{k-1} \sum_{j \neq i} r(\mathbf{x}^{j}_{0}).
\end{equation}
\noindent Originally \ac{RLOO} sampling and baseline corrections were proposed in the context of REINFORCE, with a focus on on-policy optimization~\cite{ahmadian2024back,kool2019buy}, whereas we are applying these in the off-policy step of \ac{PPO}. We call this method \acfi{LOOP}.
Provenly, \ac{LOOP} has lower variance than \ac{PPO}:

\begin{proposition}
\label{prop:1}
The \ac{LOOP} estimator $\hat{J}_{\theta}^{\mathrm{LOOP}}(\pi)$ (Eq.~\ref{eq:loop_estimator}) has lower variance than the \ac{PPO} estimator $\hat{J}_{\theta}^{\mathrm{PPO}}(\pi)$ (Eq.~\ref{eq:ppo_estimator}):
\begin{equation}
    \mathrm{Var}\left[ \hat{J}_{\theta}^{\mathrm{LOOP}}(\pi) \right] < \mathrm{Var}\left[ \hat{J}_{\theta}^{\mathrm{PPO}}(\pi) \right].
\end{equation}
\end{proposition}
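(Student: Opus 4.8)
\textbf{Proof proposal for Proposition~\ref{prop:1}.}

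The plan is to express both estimators as sample averages of a common per-trajectory random quantity plus a variance-reducing correction, and then invoke the standard fact that averaging $K$ i.i.d.\ copies scales variance by $1/K$ while an appropriate baseline further reduces it. First I would fix the prompt $\mathbf{c}$ and introduce, for a trajectory $\mathbf{x}_{0:T}\sim\pi_\text{old}$, the shorthand
\begin{equation}
    W(\mathbf{x}_{0:T}) = \sum_{t=0}^{T}\text{clip}\!\left(\frac{\pi_\theta(\mathbf{x}_{t-1}\mid\mathbf{x}_t,\mathbf{c})}{\pi_\text{old}(\mathbf{x}_{t-1}\mid\mathbf{x}_t,\mathbf{c})},1-\epsilon,1+\epsilon\right),
\end{equation}
so that the single-sample PPO estimator is $\hat J_\theta^{\mathrm{PPO}} = W(\mathbf{x}_{0:T})\,r(\mathbf{x}_0,\mathbf{c})$ and the LOOP estimator is $\hat J_\theta^{\mathrm{LOOP}} = \tfrac{1}{K}\sum_{i=1}^K W(\mathbf{x}^i_{0:T})\bigl(r(\mathbf{x}^i_0,\mathbf{c})-b^i\bigr)$ with $b^i = \tfrac{1}{K-1}\sum_{j\neq i} r(\mathbf{x}^j_0,\mathbf{c})$. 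The key structural observations are: (i) the $K$ trajectories are drawn i.i.d.\ from $\pi_\text{old}$; (ii) $\mathbb{E}[b^i]=\mathbb{E}[r(\mathbf{x}^i_0,\mathbf{c})]$, so the leave-one-out baseline is unbiased and $\mathbb{E}[\hat J_\theta^{\mathrm{LOOP}}]=\mathbb{E}[\hat J_\theta^{\mathrm{PPO}}]$, matching the baseline-correction arguments already used for RLOO and for the $\beta$-IPS estimator in Chapter~\ref{chapter:01-online-evaluation3}.

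Next I would decompose the variance. Write $\hat J_\theta^{\mathrm{LOOP}} = \tfrac1K\sum_i Z_i$ with $Z_i = W(\mathbf{x}^i_{0:T})(r(\mathbf{x}^i_0,\mathbf{c})-b^i)$. Because the $Z_i$ are identically distributed (though not independent, due to the shared baseline), $\mathrm{Var}[\hat J_\theta^{\mathrm{LOOP}}] = \tfrac1K\mathrm{Var}[Z_1] + \tfrac{K-1}{K}\mathrm{Cov}[Z_1,Z_2]$. The plan is to show that $\mathrm{Var}[Z_1]\le \mathrm{Var}[\hat J_\theta^{\mathrm{PPO}}]$ — this is the classical statement that subtracting the (conditionally mean-matched) leave-one-out baseline cannot increase the per-sample variance, in fact strictly decreases it whenever $W$ and $r$ are not degenerate — and that the cross-covariance term contributes a strictly smaller amount than $\tfrac1K\mathrm{Var}[Z_1]$ saves, so that the $1/K$ averaging combined with the baseline yields a net strict decrease. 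Concretely, one uses $\mathrm{Var}[\hat J_\theta^{\mathrm{PPO}}] = \mathbb{E}[W^2 r^2] - \mathbb{E}[Wr]^2$ and expands $\mathrm{Var}[Z_1]$ and $\mathrm{Cov}[Z_1,Z_2]$ by conditioning on $\mathbf{x}^1_{0:T}$ and using independence of the remaining trajectories; the baseline variance $\mathrm{Var}[b^1]=\mathrm{Var}[r]/(K-1)$ and the cross terms collapse to expressions in $\mathbb{E}[W]$, $\mathbb{E}[Wr]$, $\mathbb{E}[r]$, and $\mathrm{Var}[r]$. Assembling these gives $\mathrm{Var}[\hat J_\theta^{\mathrm{LOOP}}] < \mathrm{Var}[\hat J_\theta^{\mathrm{PPO}}]$ as long as $K\ge 2$ and the reward is not a.s.\ constant on the support.

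The main obstacle I anticipate is handling the dependence introduced by the shared leave-one-out baseline correctly: unlike the fully-independent average $\tfrac1K\sum_i W(\mathbf{x}^i)r(\mathbf{x}^i)$, whose variance is trivially $\tfrac1K$ times the single-sample variance, the $Z_i$ are correlated through $b^i$, so the naive ``$1/K$ scaling plus non-negative baseline gain'' argument needs the covariance bookkeeping above to be carried out carefully to be sure the correlations do not overwhelm the gain. A secondary subtlety is that the clipping weight $W$ depends on $\theta$ and is a.s.\ bounded (each clipped ratio lies in $[1-\epsilon,1+\epsilon]$ and there are $T{+}1$ terms), which guarantees all the relevant second moments are finite so the decomposition is valid; I would state this boundedness explicitly as the integrability hypothesis. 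A cleaner alternative, which I would present if the covariance algebra gets unwieldy, is to first prove the i.i.d.\ no-baseline average has variance $\tfrac1K\mathrm{Var}[\hat J_\theta^{\mathrm{PPO}}] \le \mathrm{Var}[\hat J_\theta^{\mathrm{PPO}}]$ (strict for $K\ge2$), and then invoke the known optimality-of-baseline result (analogous to Theorem~\ref{thrm:min_grad_var}) that the leave-one-out baseline only further reduces variance relative to the no-baseline average, chaining the two inequalities.
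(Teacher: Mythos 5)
Your proposal diverges substantially from the paper's argument, and the key step you rely on does not hold. The paper's own proof is a one-line computation: it treats $\hat J_\theta^{\mathrm{LOOP}}$ as an average of $K$ independent copies of the single-sample PPO estimator and concludes $\mathrm{Var}[\hat J_\theta^{\mathrm{LOOP}}] = \tfrac{1}{K^2}\mathrm{Var}[\hat J_\theta^{\mathrm{PPO}}] < \mathrm{Var}[\hat J_\theta^{\mathrm{PPO}}]$, ignoring the leave-one-out baseline entirely. You are right to flag that the shared baseline $b^i$ couples the $K$ summands --- that is a real issue the paper glosses over. The problem is with how you propose to handle it.

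The lemma you lean on --- that subtracting the mean-matched leave-one-out baseline ``cannot increase the per-sample variance'' --- is a fact about \emph{score-function gradient} estimators, and it holds there only because $\mathbb{E}[\nabla_\theta\log\pi_\theta]=0$, so the baseline term has zero mean and acts as a pure control variate. In Eq.~\ref{eq:loop_estimator} the multiplier is $W=\sum_t \mathrm{clip}(\cdot)$, a sum of clipped ratios each lying in $[1-\epsilon,1+\epsilon]$, hence $\mathbb{E}[W]\geq (T+1)(1-\epsilon)>0$. Consequently (i) $\mathbb{E}[\hat J_\theta^{\mathrm{LOOP}}]=\mathbb{E}[Wr]-\mathbb{E}[W]\,\mathbb{E}[r]\neq\mathbb{E}[\hat J_\theta^{\mathrm{PPO}}]$ in general, contrary to your unbiasedness remark, and (ii) the per-sample variance can go \emph{up}: taking $W$ constant gives $\mathrm{Var}[Z_1]=W^2\sigma_r^2\bigl(1+\tfrac{1}{K-1}\bigr)>W^2\sigma_r^2=\mathrm{Var}[\hat J_\theta^{\mathrm{PPO}}]$. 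The same issue defeats your ``cleaner alternative'' of chaining the $1/K$ scaling with a baseline-optimality result analogous to Theorem~\ref{thrm:min_grad_var}, since that theorem again concerns gradients, where the score has zero mean. Moreover, carrying out the covariance bookkeeping you defer does not rescue the claim: for $K=2$ one gets $\hat J_\theta^{\mathrm{LOOP}}=\tfrac12(W_1-W_2)(r_1-r_2)$, and the sign of $\mathrm{Var}[\hat J_\theta^{\mathrm{LOOP}}]-\mathrm{Var}[\hat J_\theta^{\mathrm{PPO}}]$ then depends on the joint law of $(W,r)$ --- if $Wr$ is (nearly) constant while $W$ and $r$ individually vary, the PPO estimator has (nearly) zero variance while LOOP does not. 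So the unconditional strict inequality cannot be reached by this route; you would need an additional assumption (e.g., dropping the baseline as the paper implicitly does, or positivity/independence conditions on $(W,r)$) to close the argument.
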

\begin{proof}
Since the sampled trajectories are independent:
\begin{align}
\mathrm{Var}\!\mleft[ \hat{J}_{\theta}^{\mathrm{LOOP}}\!(\pi) \mright]
\!=\! 
 \frac{1}{K^2} \mathrm{Var}\!\mleft[ \hat{J}_{\theta}^{\mathrm{PPO}}\!(\pi) \mright]
 \!<\!
 \mathrm{Var}\!\mleft[ \hat{J}_{\theta}^{\mathrm{PPO}}\!(\pi) \mright].
 \quad\;
\qedhere
\end{align}
\end{proof}

\section{Experimental Setup}

\textbf{Benchmark.} Text-to-image diffusion and language models often fail to satisfy an essential reasoning skill of attribute binding. Attribute binding reasoning capability refers to the ability of a model to generate images with attributes such as color, shape, texture, spatial alignment, (and others) specified in the input prompt.
In other words, generated images often fail to \textit{bind} certain \textit{attributes} specified in the instruction prompt~\cite{huang2023t2i,ramesh2022hierarchical,fu2024enhancing}. 
Since attribute binding seems to be a basic requirement for useful real-world applications, we choose the T2I-CompBench benchmark~\cite{huang2023t2i}, which contains multiple attribute binding/image compositionality tasks, and its corresponding reward metric to benchmark text-to-image generative models. We also select two common tasks from \ac{RL} for diffusion works: improving aesthetic quality of generation, and image-text semantic alignment~\cite{black2023training,fan2024reinforcement}. To summarize, we choose the following tasks for the \ac{RL} optimization:
\begin{inparaenum}[(\roman{enumi})]
    \item Color, 
    \item Shape,
    \item Texture,
    \item 2D Spatial,
    \item Numeracy,
    \item Aesthetic, and
    \item Image-text Alignment.
\end{inparaenum}
For all tasks, the prompts are split into training/validation prompts. We report the average reward on both training and validation split.

\textbf{Model.} As the base diffusion model, we use Stable diffusion V2~\cite{rombach2022high}, which is a latent diffusion model. For optimization, we fully update the UNet model, with a learning rate of $1e^{-5}$. We also tried LORA fine-tuning~\cite{hu2021lora}, but the results were not satisfactory, so we update the entire model instead.
The hyper-parameters are reported in Appendix~\ref{sec:hyper_params}.

\section{Results and Discussion}

\subsection{REINFORCE vs.\ PPO efficiency-effectiveness trade-off} 
\label{sec:revist_res}

We discuss our empirical results for the REINFORCE vs.\ \ac{PPO} efficiency-effectiveness trade-off.
Our empirical validation of the trade-off compares the following methods:\\

\noindent  \textbf{REINFORCE.} The REINFORCE policy gradient for diffusion fine-tuning (Eq.~\ref{eq:cb_pg}). \\
\noindent \textbf{REINFORCE with baseline correction.} We compare the REINFORCE policy gradient with a baseline correction (BC) term (Eq.~\ref{eq:cb_pg_b}). For the baseline term, we choose the average reward for the given prompt~\cite{black2023training}.
\\
\noindent \textbf{PPO.} The \ac{PPO} objective for diffusion fine-tuning with importance sampling and clipping (Eq.~\ref{eq:ppo_objective}).

\begin{figure*}[!t]
    \centering
    \renewcommand{\arraystretch}{0.9}
    \setlength{\tabcolsep}{0.06cm}
    \begin{tabular}{c c c c c}
        & \small Color 
        & \small Shape 
        & \small Texture 
        & \small Aesthetic \\
        
        \raisebox{0.9cm}{\rotatebox[origin=c]{90}{\small Reward}}
        & \includegraphics[width=0.24\textwidth]{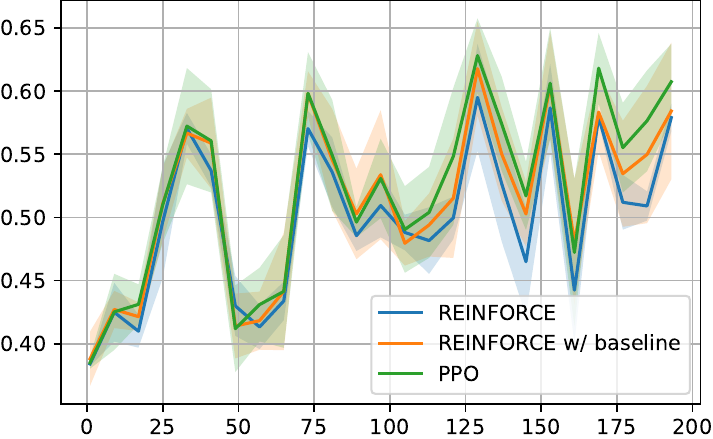}
        & \includegraphics[width=0.24\textwidth]{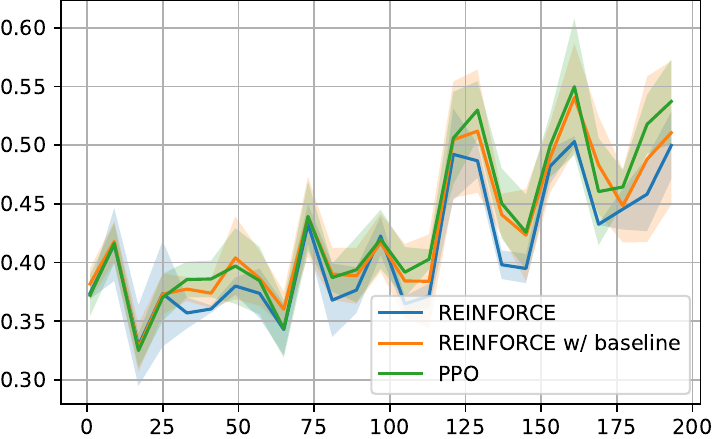}
        & \includegraphics[width=0.24\textwidth]{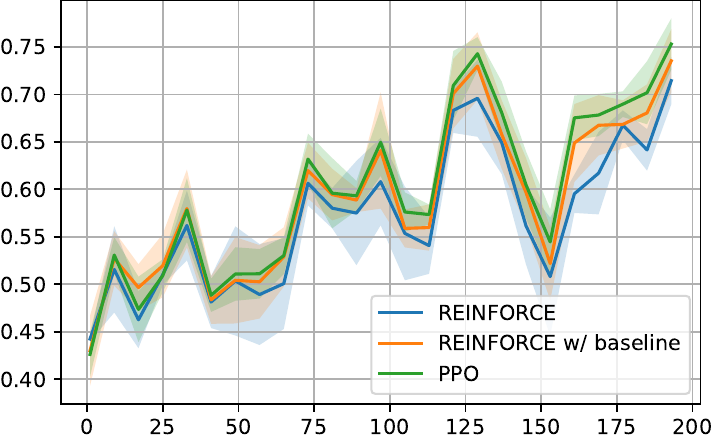}
        & \includegraphics[width=0.24\textwidth]{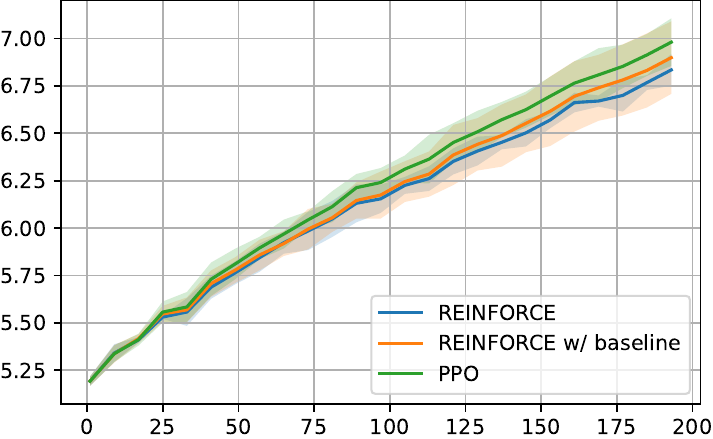} \\
        
        & \small Epochs 
        & \small Epochs 
        & \small Epochs 
        & \small Epochs \\
    \end{tabular}
    \caption{
    Evaluating REINFORCE vs.\ PPO trade-off by comparing: REINFORCE (Eq.~\ref{eq:cb_pg}), REINFORCE with baseline correction term (Eq.~\ref{eq:cb_pg_b}), and PPO (Eq.~\ref{eq:ppo_objective}). We evaluate on the T2I-CompBench benchmark over three image attributes: Color, Shape, and Texture. We also compare on the aesthetic task. Y-axis corresponds to the training reward, x-axis corresponds to the training epoch. Results are averaged over 3 runs; shaded areas indicate 80\% prediction intervals.}
    \label{fig:revisit_rl_results}
\end{figure*}

Figure~\ref{fig:revisit_rl_results} shows the training reward over epochs for the attributes: Color, Shape, and Texture from the T2I-CompBench benchmark, and training reward from optimizing the aesthetic model. It is clear that REINFORCE policy gradient is not effective in terms of performance, as compared to other variants. Adding a baseline correction term indeed improves the training performance, validating the effectiveness of baseline in terms of training performance, possibly because of reduced variance. \ac{PPO} achieves the highest training reward, validating the effectiveness of importance sampling and clipping for diffusion fine-tuning. 
 
\begin{table}[!t]
    \centering
    \caption{Comparing REINFORCE with PPO on the T2I-CompBench benchmark over three image attributes: Color, Shape, and Texture. The metrics in this table are average
reward on an unseen test set (higher is better). For each prompt, average rewards over 10 independent generated images are calculated.}
    \label{tab:rl_basics}
    \setlength{\tabcolsep}{1.1mm}
    \begin{tabular}{l ccc}
        \toprule
        \textbf{Method} & \textbf{Color} $\uparrow$
 & \textbf{Shape} $\uparrow$ & \textbf{Texture} $\uparrow$ \\
        \midrule
        REINFORCE & 0.6438 & 0.5330 & 0.6359  \\
        REINFORCE w/ BC & 0.6351 & 0.5347 & 0.6656  \\
        PPO & \textbf{0.6821} & \textbf{0.5655} & \textbf{0.6909}  \\
        \bottomrule
    \end{tabular}
\end{table}

\begin{table*}[!t]
\centering
\caption{Comparing the performance of the proposed LOOP method with state-of-the-art baselines on the T2I-CompBench benchmark over image attributes such as Color, Shape, Texture, Spatial relation, and Numeracy. The metrics in this table are average reward on an unseen test set (higher is better). For each prompt we generate and average rewards across 10 different generated images.}
\label{tab:main_results}
\setlength{\tabcolsep}{2.0mm}
\begin{tabular}{lccccc}
\toprule
\textbf{Model} & \textbf{Color} $\uparrow$ & \textbf{Shape} $\uparrow$ & \textbf{Texture} $\uparrow$ & \textbf{Spatial} $\uparrow$ & \textbf{Numeracy} $\uparrow$ \\
\midrule
Stable v1.4 \cite{rombach2022high} & 0.3765 & 0.3576 & 0.4156 & 0.1246 & 0.4461 \\
Stable v2 \cite{rombach2022high} & 0.5065 & 0.4221 & 0.4922 & 0.1342 & 0.4579 \\
Composable v2 \cite{liu2022compositional} & 0.4063 & 0.3299 & 0.3645 & 0.0800 & 0.4261 \\
Structured v2 \cite{feng2022training} & 0.4990 & 0.4218 & 0.4900 & 0.1386 & 0.4550 \\
Attn-Exct v2 \cite{chefer2023attend} & 0.6400 & 0.4517 & 0.5963 & 0.1455 & 0.4767 \\
\midrule
GORS unbiased \cite{huang2023t2i} & 0.6414 & 0.4546 & 0.6025 & 0.1725 & -- \\
GORS \cite{huang2023t2i} & 0.6603 & 0.4785 & 0.6287 & 0.1815 & 0.4841 \\
\midrule
PPO \cite{black2023training} & 0.6821 & 0.5655 & 0.6909 & 0.1961 & 0.5102 \\
LOOP ($k=2$) & 0.6785 & 0.5746 & 0.6937 & 0.1800 & 0.5072 \\
LOOP ($k=3$) & 0.7515 & 0.6220 & 0.7353 & 0.1966 & 0.5242 \\
LOOP ($k=4$) & \textbf{0.7859} & \textbf{0.6676} & \textbf{0.7518} & \textbf{0.2136} & \textbf{0.5422} \\
\bottomrule
\end{tabular}
\end{table*}

\begin{table}[!t]
    \centering
    \caption{Comparing the performance of LOOP with \ac{PPO} on the aesthetic and image-text alignment tasks. Higher values are better.}
    \label{tab:method_attributes}
    \setlength{\tabcolsep}{1.5mm}
    \begin{tabular}{l cc}
        \toprule
        \textbf{Method} & \textbf{Aesthetic} $\uparrow$ & \textbf{Image Alignment} $\uparrow$ \\
        \midrule
        PPO \cite{black2023training} & 6.8135 & 20.466 \\
        LOOP ($k=2$) & 6.8617 & 20.788 \\
        LOOP ($k=3$) & 7.0772 & 20.619 \\
        LOOP ($k=4$) & \textbf{7.8606} & \textbf{20.909} \\
        \bottomrule
    \end{tabular}
\end{table}

\begin{figure*}[th!]
    \centering
    \rotatebox{90}{%
      \begin{minipage}{0.95\textheight}   %
        \centering
        \setlength{\tabcolsep}{0.1em}  %
        {\renewcommand{\arraystretch}{0.40}
          \begin{tabular}{c r r r r}
            &
            \multicolumn{1}{c}{\hspace{-0.5cm} Color} &
            \multicolumn{1}{c}{Shape} &
            \multicolumn{1}{c}{Texture} &
            \multicolumn{1}{c}{Numeracy} \\
            \rotatebox[origin=lt]{90}{\hspace{0.65cm}\small Reward} &
            \includegraphics[scale=0.35]{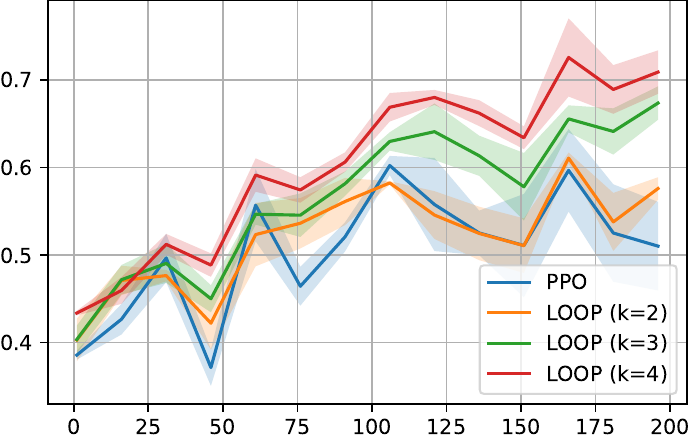} &
            \includegraphics[scale=0.35]{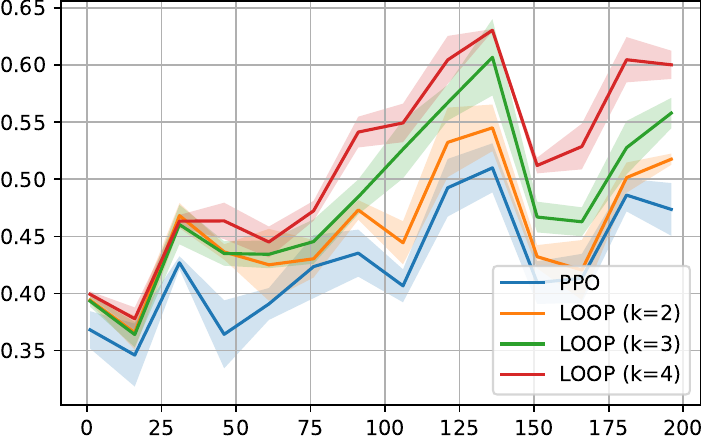} &
            \includegraphics[scale=0.35]{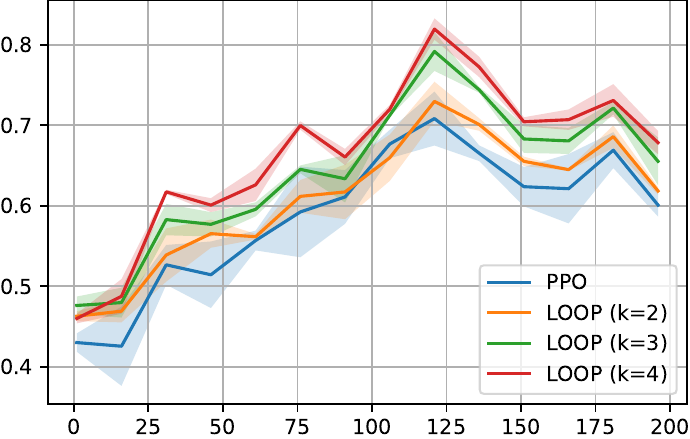} &
            \includegraphics[scale=0.35]{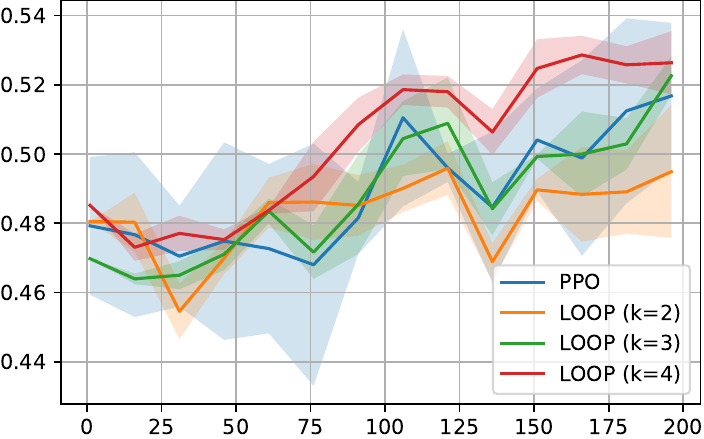}
          \end{tabular}
          \hspace{1em}%
          \begin{tabular}{c r r r}
            &
            \multicolumn{1}{c}{\hspace{0.5cm} Spatial} &
            \multicolumn{1}{c}{\hspace{0.5cm} Image-Text Alignment} &
            \multicolumn{1}{c}{\hspace{0.5cm} Aesthetic} \\
            \rotatebox[origin=lt]{90}{\hspace{0.65cm}\small Reward} &
            \includegraphics[scale=0.35]{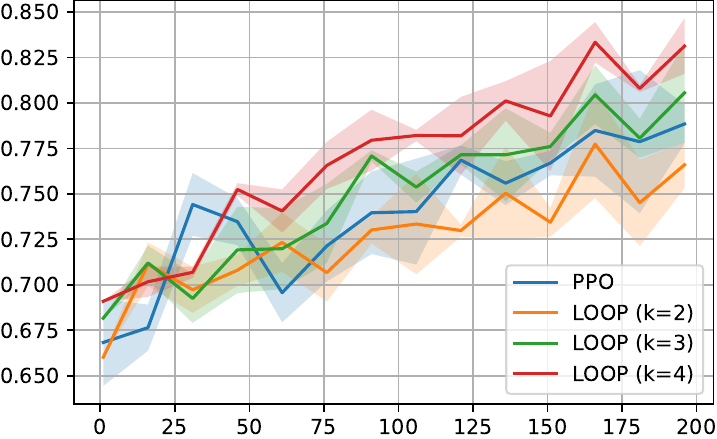} &
            \includegraphics[scale=0.35]{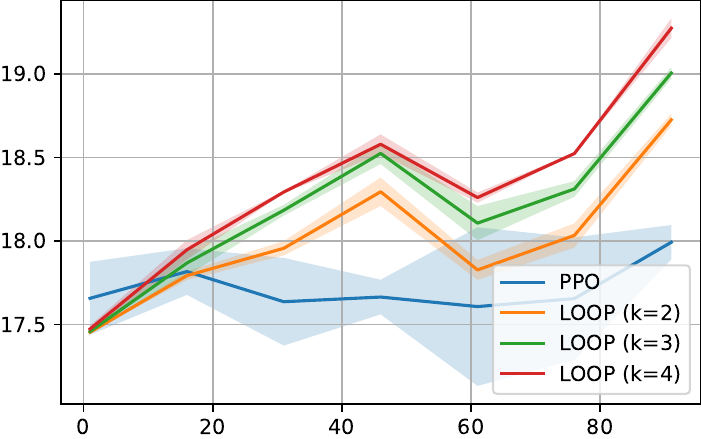} &
            \includegraphics[scale=0.35]{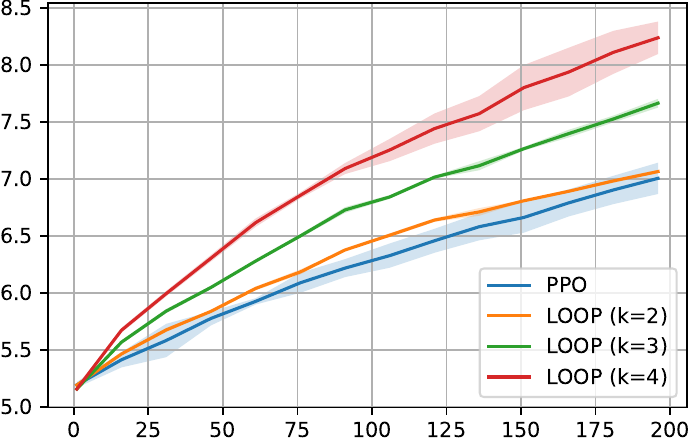}
          \end{tabular}
        }
        \vspace*{-0.5\baselineskip}
        \caption{Comparing PPO with proposed \ac{LOOP} on the T2I-CompBench benchmark with respect to image attributes: Color, Shape, Texture, Numeracy, and Spatial relationship. We also compare against aesthetic and image–text alignment tasks~\cite{black2023training}. The y-axis is the training reward; the x-axis is the training epoch. Results are averaged over three independent runs; shaded areas denote 80 \% prediction intervals.}
        \label{fig:main_results}
      \end{minipage}%
    }%
  \end{figure*}

{\renewcommand{\arraystretch}{0.9}

\setlength{\tabcolsep}{0.06cm}

\begin{figure}[ht!]

\centering

\begin{tabular}{c c c}

    & \small \hspace{0.5cm} Color
    & \small \hspace{0.5cm} Aesthetic \\

    \rotatebox[origin=lt]{90}{\hspace{0.7cm}\small Variance}
    & \includegraphics[width=0.48\columnwidth]{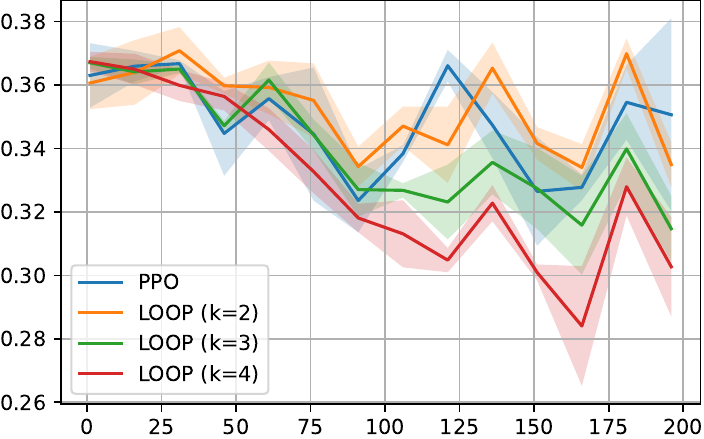}
    & \includegraphics[width=0.48\columnwidth]{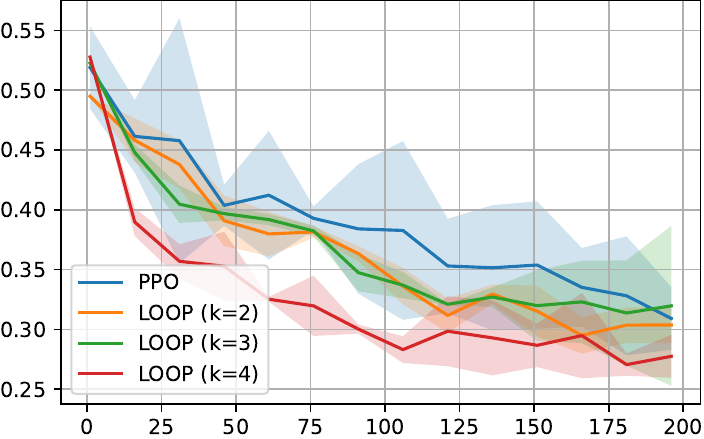} \\

    & \small \hspace{1.75em} Epochs
    & \small \hspace{1.75em} Epochs \\

\end{tabular}

\vspace*{-\baselineskip}

\caption{
Training reward variance for the color, and aesthetic task.
The y-axis corresponds to the training reward variance; the x-axis indicates the number of training epochs. Results are averaged over 3 runs; shaded areas indicate 80\% prediction intervals. We observe that higher values of samples reused (i.e., $k$) produce lower reward variance during training.
\vspace{-5mm}}

\label{fig:var_results}

\end{figure}
}

We also evaluate the performance on a separate validation set. For each validation prompt, we generate 10 independent images from the diffusion policy, and average the reward, finally averaging over all evaluation prompts. The validation results are reported in Table~\ref{tab:rl_basics}. The results are consistent with the pattern observed with the training rewards, i.e., REINFORCE with baseline provides a better performance than plain REINFORCE, suggesting that baseline correction indeed helps with the final performance.
Nevertheless, \ac{PPO} still performs better than REINFORCE. 

We now have empirical evidence supporting the \textit{efficiency-effectiveness trade-off} discussed in Section~\ref{sect:revist_rl}.
From these results, we can conclude that fine-tuning text-to-image diffusion models is more effective with \ac{IS} and clipping from \ac{PPO}, or baseline corrections from REINFORCE.
This bolsters our motivation for proposing \ac{LOOP} as an approach to effectively combine these methods.

\subsection{Evaluating LOOP}

Next we discuss the results from our proposed \ac{RL} for diffusion fine-tuning method, \ac{LOOP}.

\noindent \textbf{Performance during training.} Figure~\ref{fig:main_results} shows the training reward curves for different tasks, against number of epochs. \ac{LOOP} outperforms \ac{PPO} across all seven tasks consistently throughout training. This establishes the effectiveness of sampling multiple diffusion trajectories per input prompt, and the leave-one-out baseline correction term (Eq.~\ref{eq:rloo}) during training. The training reward curve is smoother for the aesthetic task, as compared to tasks from the T2I-CompBench benchmark. We hypothesise that improving the attribute binding property of the diffusion model is a harder task than improving the aesthetic quality of generated images.
 
Table~\ref{tab:main_results} reports the validation rewards across different tasks from the T2I-Comp\-Bench benchmark. \ac{LOOP} outperforms \ac{PPO} and other strong supervised learning based baseline significantly across all tasks. It shows that \ac{PPO} improves the attribute-binding reasoning ability of the diffusion model compared to other supervised learning based methods. 

For the aesthetic and image-text alignment objectives, the validation rewards are reported in Table~\ref{tab:method_attributes}. \ac{LOOP} results in a $\textbf{15.37}\%$ relative improvement over PPO for the aesthetic task, and a $\textbf{2.16}\%$ improvement over \ac{PPO} for the image-text alignment task.  

\noindent \textbf{Impact of the number of independent trajectories ($k$).} The LOOP variant with the number of independent trajectories where $K$ set to 4 performs the best across all tasks, followed by the variant $K=3$. This is intuitive given that Monte-Carlo estimates get better with more number of samples~\cite{Owen2013}. Surprisingly, the performance of the variant with $K=2$ is comparable to \ac{PPO}. 

\noindent \textbf{Impact on training variance.} We evaluate whether \ac{LOOP} results in a lower empirical variance than \ac{PPO}, as proved theoretically in Lemma~\ref{prop:1}. Figure~\ref{fig:var_results} reports the empirical reward variance during training for the color attribute and aesthetic objective. \ac{LOOP} results in a lower empirical variance than \ac{PPO}, thereby empirically validating our claim that \ac{LOOP} has lower variance than \ac{PPO}.

\noindent \textbf{Qualitative results.} For a qualitative evaluation of the attribute-binding reasoning ability, we present some  example image generations from SD, \ac{PPO}, and \ac{LOOP} in Figure~\ref{fig:improve-attribute-binding}. In the first example, the input prompt specifies a black colored ball with a white cat. Stable diffusion and \ac{PPO} fail to bind the color black with the generated ball, whereas \ac{LOOP} successfully binds that attribute. Similarly, in the third example, SD and \ac{PPO} fail to bind the hexagon shape attribute to the watermelon, whereas \ac{LOOP} manages to do that. In the fourth example, SD and \ac{PPO} fail to add the  horse object itself, whereas \ac{LOOP} adds the horse with the specified black color, and flowing cyan patterns.

\section{Conclusion}
We have studied the \textbf{efficiency-effectiveness} trade-off between two fundamental \ac{RL} for diffusion methods: REINFORCE, and \ac{PPO}. REINFORCE, while  computationally efficient and easier to implement, is subpar to \ac{PPO} in terms of sample efficiency and performance. Building on these insights, we have introduced a simple and effective \ac{RL} for diffusion method, \ac{LOOP}, which builds on the variance reduction techniques from REINFORCE and the effectiveness and robustness of \ac{PPO}. We have found that LOOP improves over diffusion models on multiple black-box objectives.
A limitation of LOOP is that sampling multiple diffusion trajectories per prompt can lead to more computational overhead and an increase in training time. A potential future direction would be to keep the effectiveness of \ac{LOOP} while maintaining the computational complexity of \ac{PPO}.

In this chapter, we answer the broad research question (\ref{rq:loop}) in the affirmative.
We systematically compare PPO and REINFORCE for diffusion model fine-tuning, where we demonstrate that REINFORCE exhibits inferior sample efficiency compared to PPO. 
Building on top of PPO, we propose \ac{LOOP}, which achieves superior performance with the same number of input prompts by generating multiple actions per prompt.

\begin{subappendices}
\section{Hyperparameter and Implementation Details}
\label{sec:hyper_params}
For REINFORCE (including REINFORCE with  baseline correction term), \ac{PPO}, and \ac{LOOP} the number of denoising steps ($T$) is set to 50. The diffusion guidance weight is set to 5.0. For optimization, we use AdamW~\cite{Loshchilov2017DecoupledWD} with a learning rate of $1e^{-5}$, and the weight decay of $1e{-4}$, with other parameters kept at the default value. We clip the gradient norm to $1.0$. We train all models using 8 A100 GPUs with a batch size of 4 per GPU. The clipping parameter $\epsilon$ for \ac{PPO}, and \ac{LOOP} is set to $1e^{-4}$.

\section{Additional Qualitative Examples}
We present some additional qualitative examples in this section.

\begin{figure*}[!ht]
  \centering
  \rotatebox{90}{%
    \begin{minipage}{\textheight}   %
      \centering
      \setlength{\tabcolsep}{1mm}
      \begin{tabular}{lccccc}
        \raisebox{1.5cm}{SD v2} &
          \includegraphics[width=2.9cm]{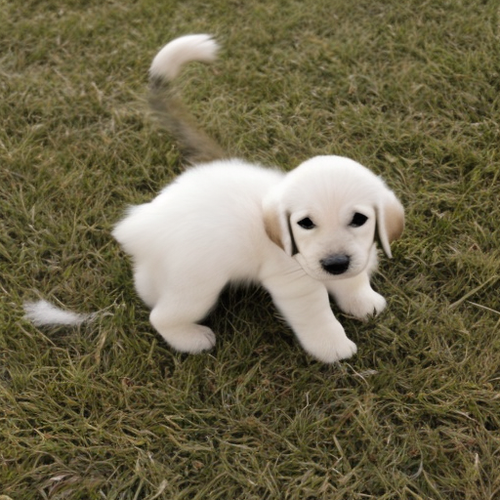} &
          \includegraphics[width=2.9cm]{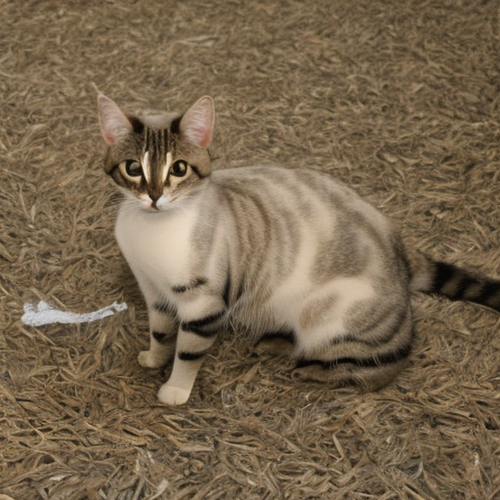} &
          \includegraphics[width=2.9cm]{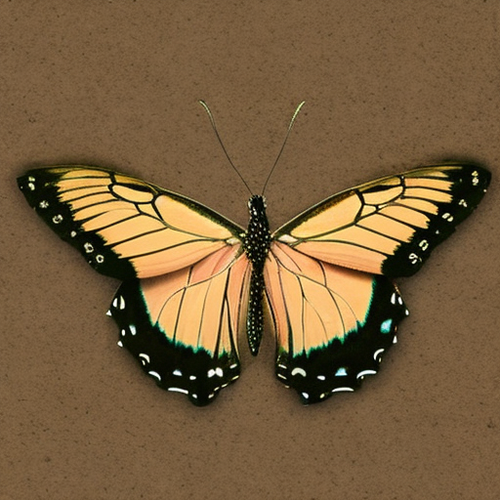} &
          \includegraphics[width=2.9cm]{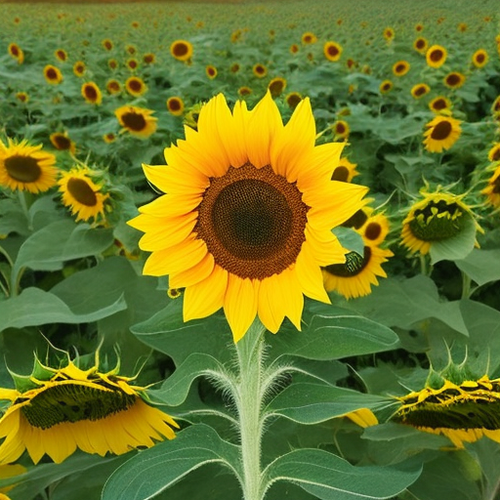} &
          \includegraphics[width=2.9cm]{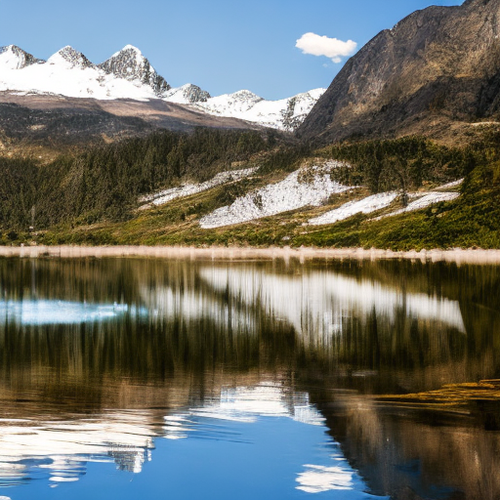}
        \\
        \raisebox{1.5cm}{PPO} &
          \includegraphics[width=2.9cm]{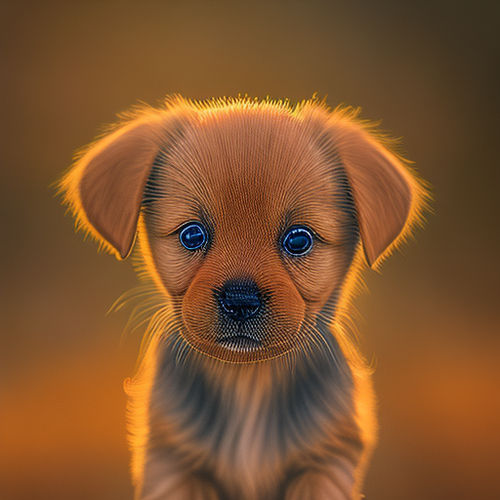} &
          \includegraphics[width=2.9cm]{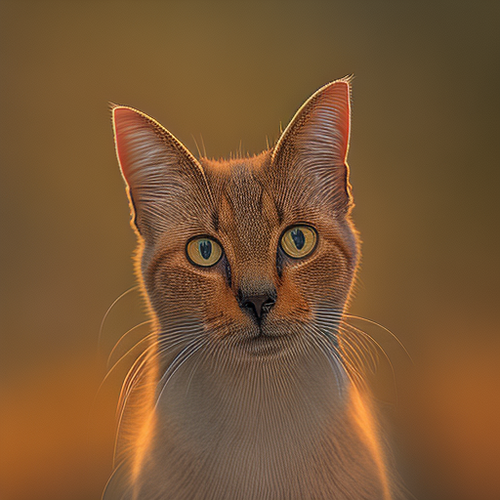} &
          \includegraphics[width=2.9cm]{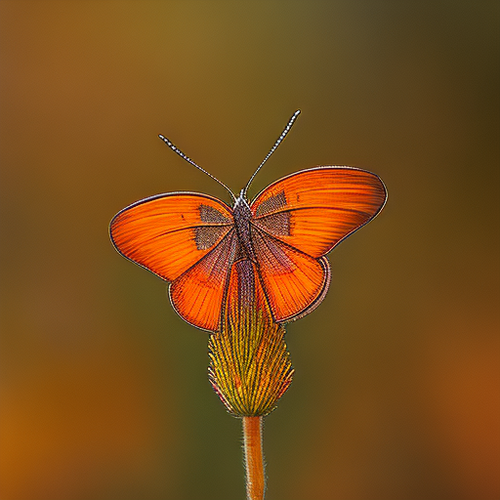} &
          \includegraphics[width=2.9cm]{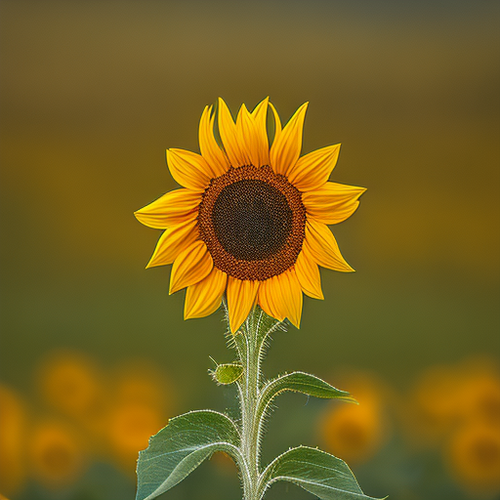} &
          \includegraphics[width=2.9cm]{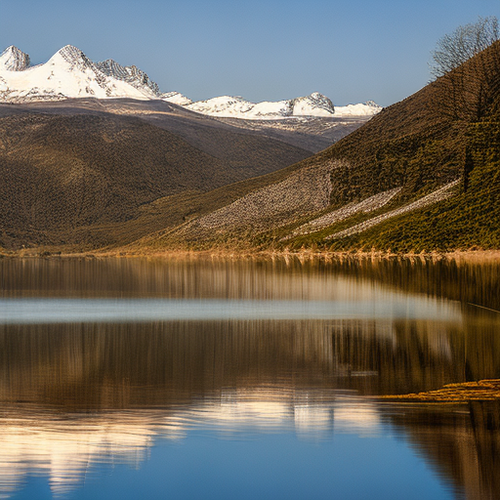}
        \\
        \raisebox{1.5cm}{LOOP} &
          \includegraphics[width=2.9cm]{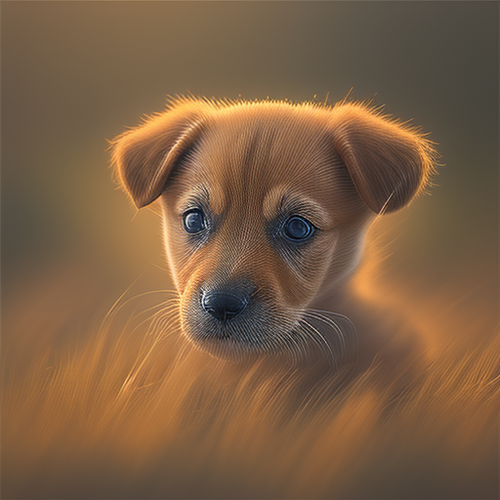} &
          \includegraphics[width=2.9cm]{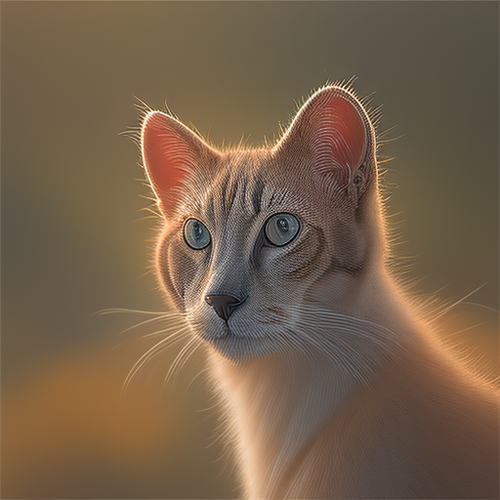} &
          \includegraphics[width=2.9cm]{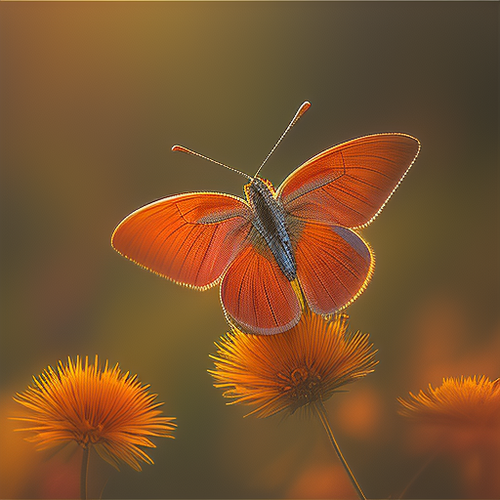} &
          \includegraphics[width=2.9cm]{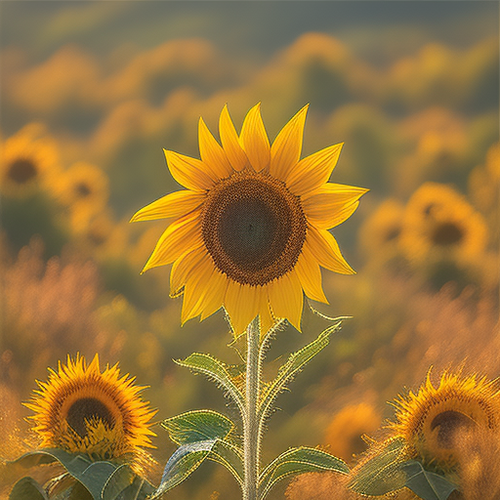} &
          \includegraphics[width=2.9cm]{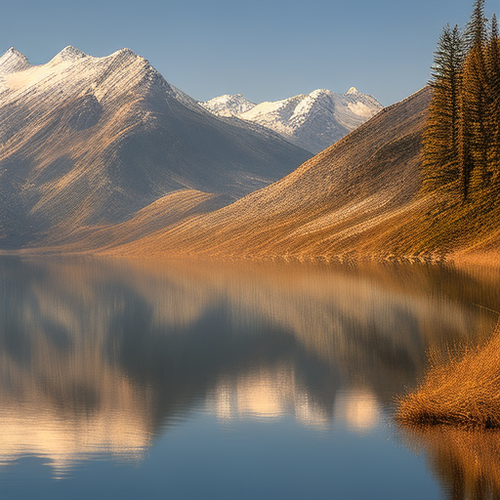}
        \\
        & \parbox{2.9cm}{\small ``A puppy dog''\\ \\}
        & \parbox{2.9cm}{\small ``A cat''\\ \\}
        & \parbox{2.9cm}{\small ``Butterfly''\\ \\}
        & \parbox{3cm}{\small ``Bright yellow sunflower in a green field''}
        & \parbox{2.9cm}{\small ``Crystal clear mountain lake reflecting snow-capped peaks''}
        \\
      \end{tabular}
      \vspace*{-0.5\baselineskip}
      \caption{\textbf{LOOP improves aesthetic quality}. Qualitative examples are presented from images generated via Stable Diffusion 2.0 (first row), \ac{PPO} (second row), and \ac{LOOP} $k=4$ (third row). \ac{LOOP} consistently generates more aesthetic images than \ac{PPO} and SD.}
      \label{tab:images_appendix1}
    \end{minipage}%
  }%
\end{figure*}

\newpage

\begin{figure*}[!h]
  \centering
  \rotatebox{90}{%
    \begin{minipage}{\textheight}   %
      \centering
      \setlength{\tabcolsep}{1mm}
      \begin{tabular}{lccccc}
        \raisebox{1.5cm}{SD v2} &
          \includegraphics[width=2.9cm]{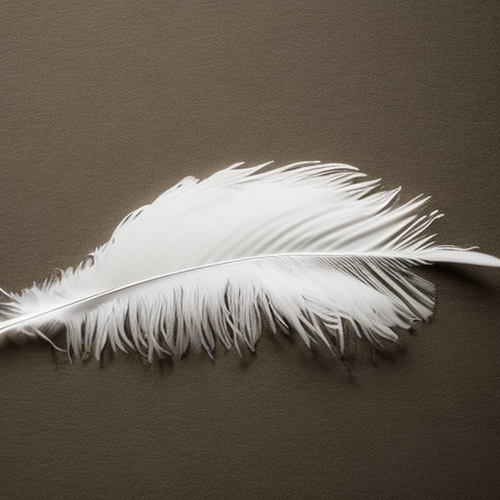} &
          \includegraphics[width=2.9cm]{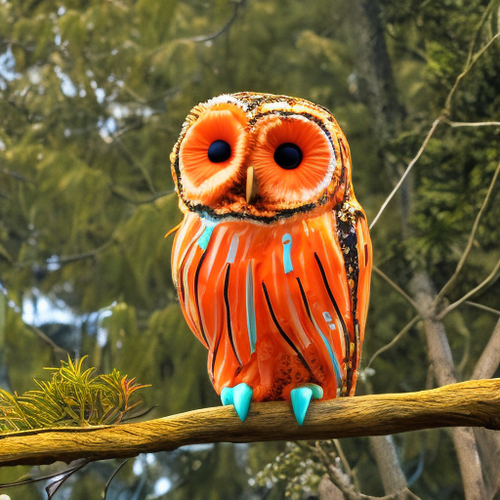} &
          \includegraphics[width=2.9cm]{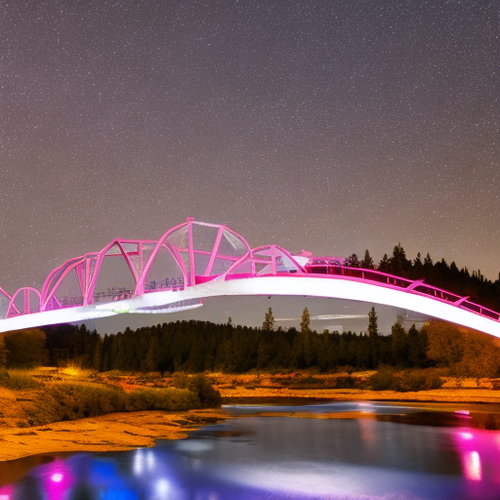} &
          \includegraphics[width=2.9cm]{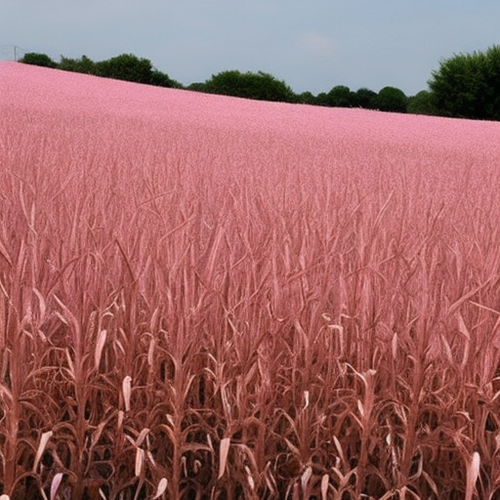} &
          \includegraphics[width=2.9cm]{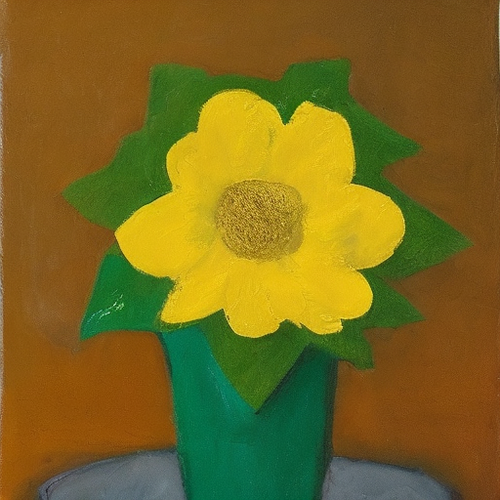}
        \\
        \raisebox{1.5cm}{PPO} &
          \includegraphics[width=2.9cm]{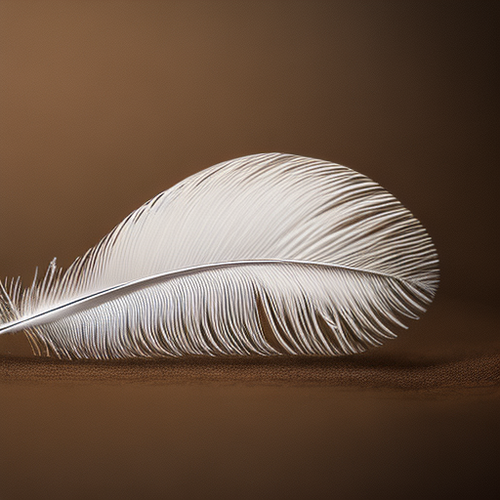} &
          \includegraphics[width=2.9cm]{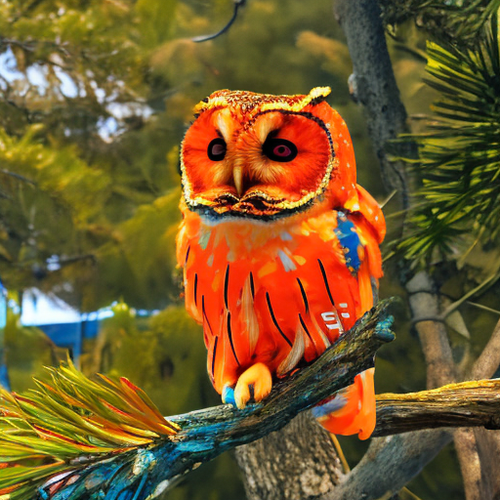} &
          \includegraphics[width=2.9cm]{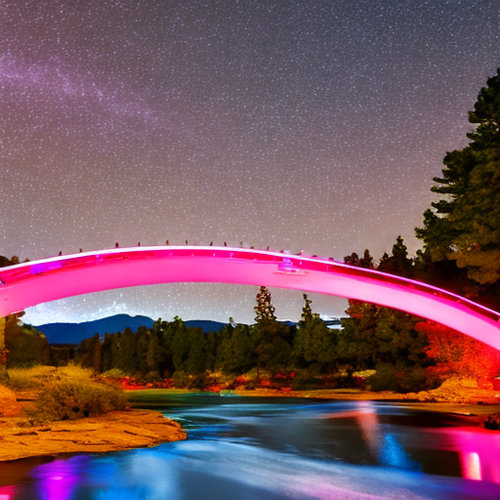} &
          \includegraphics[width=2.9cm]{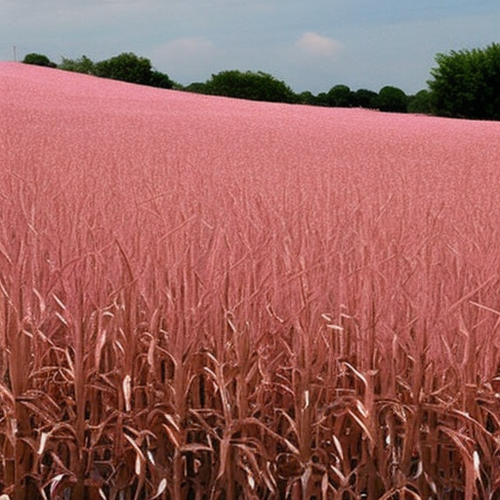} &
          \includegraphics[width=2.9cm]{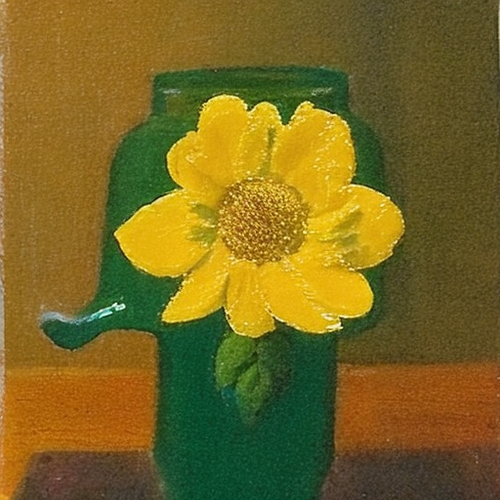}
        \\
        \raisebox{1.5cm}{LOOP} &
          \includegraphics[width=2.9cm]{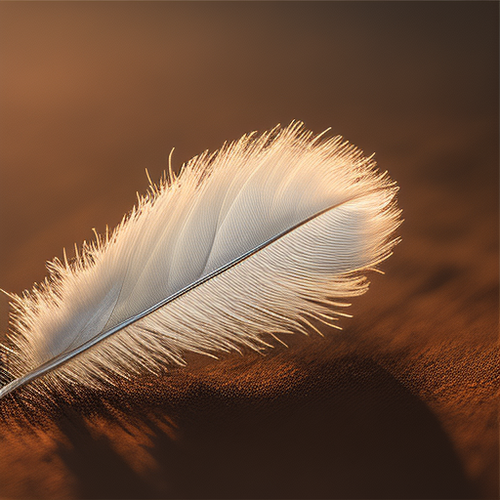} &
          \includegraphics[width=2.9cm]{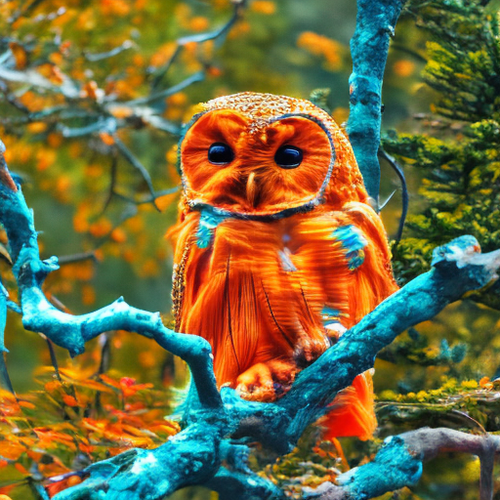} &
          \includegraphics[width=2.9cm]{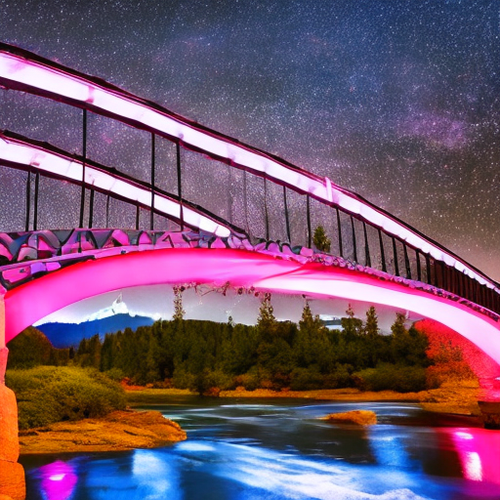} &
          \includegraphics[width=2.9cm]{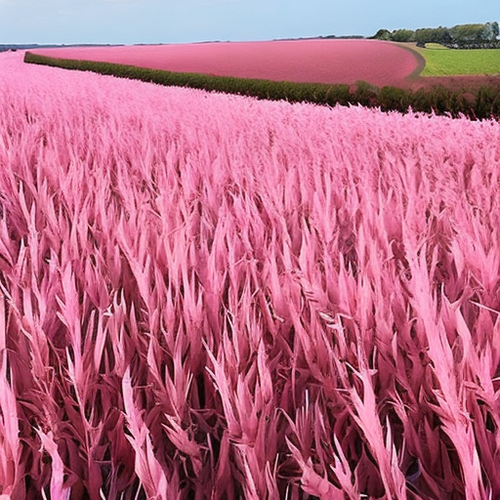} &
          \includegraphics[width=2.9cm]{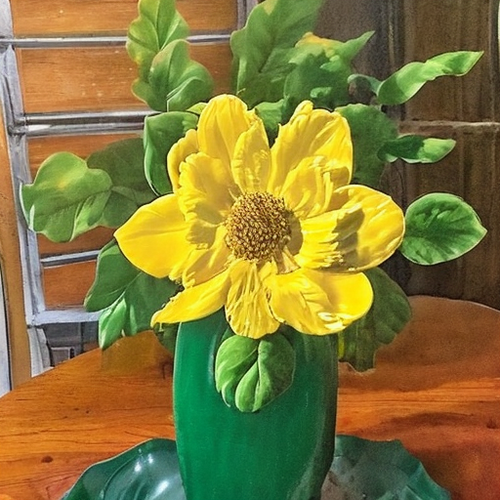}
        \\
        & \parbox{2.9cm}{\small ``A white feather on a black velvet surface''}
        & \parbox{2.9cm}{\small ``A neon orange owl sitting on a teal branch''}
        & \parbox{2.9cm}{\small ``Pink bridge over a glowing blue river''}
        & \parbox{3cm}{\small ``A pink cornfield''}
        & \parbox{2.9cm}{\small ``A yellow flower in a green vase''}
        \\
      \end{tabular}
      \vspace*{-0.5\baselineskip}
      \caption{Additional qualitative examples presented from images generated via Stable Diffusion 2.0 (first row), \ac{PPO} (second row), and \ac{LOOP} $k=4$ (third row). \ac{LOOP} consistently generates more aesthetic images than \ac{PPO} and SD (first, third, and fifth prompt). \ac{LOOP} also binds the color attribute (teal branch in the second example and pink cornfield in the fourth) where SD and \ac{PPO} fail.}
      \label{tab:images_appendix}
    \end{minipage}%
  }%
\end{figure*}

\end{subappendices}

\bookmarksetup{startatroot} %
\addtocontents{toc}{\bigskip} %

\chapter{Conclusions}
\label{chapter:conclusions}

\acresetall

In this thesis, we have investigated approaches to develop safe, robust and efficient reinforcement learning methods for real-world applications. 
Specifically, in the four chapters preceding this conclusion, we have demonstrated:
\begin{enumerate}[label=(\roman*)]
    \item A safe counterfactual LTR method which guarantees that the new ranking policy will be at least as good as the production/logging policy, presented in Chapter~\ref{chapter:01-online-evaluation1}.
    \item A robust safe counterfactual \ac{LTR} method where the safety guarantees are agnostic to the user behavior model, and hold even under adversarial click behavior settings, presented in Chapter~\ref{chapter:01-online-evaluation2}. 
    \item A closed-form baseline correction method for off-policy evaluation and learning for contextual bandits with guaranteed minimum variance, presented in Chapter~\ref{chapter:01-online-evaluation3}.
    \item An efficient reinforcement learning method for text-to-image diffusion fine-tuning, based on a simple and practical extension of the popular Proximal Policy Optimization (PPO) algorithm, with significantly improved performance, presented in Chapter~\ref{chapter:01-online-evaluation4}.
\end{enumerate} 

\section{Main Findings}
In this section we revisit the research questions presented in Chapter~\ref{chapter:introduction} followed by a summary of the most important findings.

\begin{enumerate}[label=\textbf{RQ\arabic*},ref={RQ\arabic*}]
    \item \acl{rq:safe1}
    \end{enumerate}
    
    \noindent The answer to this question is in the affirmative. In Chapter~\ref{chapter:01-online-evaluation1}, we derive a generalization bound for the counterfactual \ac{LTR} estimator, establishing a lower bound on the true ranking utility, the ideal target metric for optimization. 
    We demonstrated that optimizing this lower bound ensures a ranking policy no worse than the current production policy. 
    This property proves especially valuable when click data is scarce, mitigating the risk of deploying potentially harmful policies, thereby ensuring safe deployment.
    
    The broader implication of this work is in the practical \emph{deployment} stage for all modern search and recommender ranking systems.
    Using the presented safety techniques, search and recommendation teams can reliably deploy a ranking policy without risking deploying a policy with sub-optimal user experience.
    An example use-case is deploying a ranking policy in a new geographic region, with limited user interactions. 
    The safety method presented will help ensure that the new policy is never worse than the safe production policy. 

    A limitation of our work is that we assume the ranking policy to be stochastic in nature, i.e., for a given query context, the ranking policy generates different ranked lists at each time. 
    In certain real-world applications, stochastic rankings might be unfeasible, or not preferred because of external reasons.

    While \ref{rq:safe1} provides a probabilistic safety guarantee by optimizing the lower bound, these guarantees depend critically on assumptions regarding user behavior (click model). 
    Deviations from these assumptions invalidate the guarantees, which motivated the second research question:
    
    \begin{enumerate}[label=\textbf{RQ\arabic*},ref={RQ\arabic*},resume]
    \item \acl{rq:safe2}
    \end{enumerate}
    
    \noindent The answer to this question is in the affirmative; in Chapter~\ref{chapter:01-online-evaluation2}, we introduced \ac{PRPO}, a method ensuring safety for counterfactual \ac{LTR} without reliance on user behavior assumptions, guaranteeing robust safety even under adversarial conditions.
    
    The broader implication of this work is in providing a robust safe deployment framework. 
    In practice, the proposed method in this chapter provides reliability in the wild.
    Search engines or recommender systems that serve heterogeneous markets (or fast-shifting verticals like news) have shifting user preferences, and relying on a single user behavior assumption can have detrimental effects. 
    The robust safety method presented in this chapter can ensure safe deployment even under such dynamic heterogeneous markets. 

    Similar to the previous chapter, a limitation of this work is that we assume the ranking policy to be stochastic in nature, i.e., for a given query context, the ranking policy generates different ranked lists at each time. 
    A robust safety method for deterministic ranking policy might be preferred.

    In the context of off-policy evaluation and learning with single action contextual bandits, standard methods like IPS are unbiased but suffer from high variance. 
    Alternative methods, including doubly robust (DR) estimators and self-normalized IPS (SNIPS), reduce variance using additive and multiplicative baseline corrections respectively, yet lack a unifying framework. 
    This motivated our third research question:
    
    \begin{enumerate}[label=\textbf{RQ\arabic*},ref={RQ\arabic*},resume]
    \item \acl{rq:recsys1}
    \end{enumerate}
    
    \noindent The answer to this question is in the affirmative; in Chapter 4, we proposed the $\beta$-IPS estimator, integrating IPS, doubly robust methods, and Self-Normalized IPS under a unified baseline correction framework.
    
    \begin{enumerate}[label=\textbf{RQ\arabic*},ref={RQ\arabic*},resume]
    \item \acl{rq:recsys}
    \end{enumerate}
    
    \noindent The answer to this question is in the affirmative; in Chapter~\ref{chapter:01-online-evaluation3}, we presented a closed-form solution for $\beta^{}$ that minimizes variance for both learning and evaluation tasks. 
    Empirical evidence under different scenarios validates the effectiveness of our approach.
    
    A broader implication of this work is in providing a \emph{unified vocabulary/framework} for off-policy evaluation and learning tasks. 
    A common framework for off-policy evaluation and learning tasks reduces the burden of choosing an estimator in practice for practitioners. 
    Further, a closed-form solution for the baseline correction term makes the practical implementation of the estimator easier. 

    Contextual bandit theory previously discussed emphasizes user interactions within ranking or recommendation systems. 
    However, the framework has also been effectively employed in fine-tuning foundation models, such as large language models (LLMs) and diffusion models, typically using proximal policy optimization (PPO). Recent research highlights computational advantages of REINFORCE (policy gradient methods) over PPO for LLMs~\cite{ahmadian2024back}. Given PPO's ongoing challenges with variance and sample inefficiency, we consider improvements through our final research question:
    
    \begin{enumerate}[label=\textbf{RQ\arabic*},ref={RQ\arabic*},resume]
    \item \acl{rq:loop}
    \end{enumerate}
    
    \noindent The answer to this question is in the affirmative; in Chapter~\ref{chapter:01-online-evaluation4}, we systematically compare PPO and REINFORCE for diffusion model fine-tuning. Initially, we demonstrate that REINFORCE exhibits inferior sample efficiency compared to PPO. Subsequently, we propose \ac{LOOP}, an enhancement to PPO achieving superior performance with the same number of input prompts by generating multiple actions per prompt.

\section{Future Work}

Finally, this section addresses some limitations with the existing work and potential future directions of the research presented in this thesis. 

\subsection{Safety with real-world constraints}
First, the safe counterfactual \ac{LTR} methods presented in Chapter~\ref{chapter:01-online-evaluation1} and Chapter~\ref{chapter:01-online-evaluation2} are designed with a stochastic ranking policy in mind. 
In many real-world applications, deploying a stochastic policy might not be feasible, necessitating safety methods for deterministic ranking policies~\cite{guo2023inference}. 
A future direction along this line would be to add safety regularization to the top-K LambdaLoss \ac{LTR} method with deterministic ranking policy~\cite{oosterhuis2020policy}.

Regarding experiments, all of our evaluations are based on semi-synthetic simulations with click signal derived from the manual relevance judgments. 
Real-world experiments are typically conducted via A/B tests on actual users. 
As part of future work, applying the proposed safety methods to real-world user interaction data, followed by comprehensive A/B testing, would be particularly valuable.

Modern recommendation systems increasingly include a LLM that both selects content and generates natural-language explanations. 
Extending safe counterfactual LTR to such LLM-based ranking policies will require handling high-dimensional textual actions, possibly by constraining the language model output with exposure-based bounds handing the high-dimensional nature of action space.

\subsection{Extending optimal baseline corrections to reinforcement learning}
Next, the optimal-baseline correction method presented in Chapter~\ref{chapter:01-online-evaluation3} reduces the variance of existing off-policy evaluation and learning estimators for contextual bandits.
Extending the proposed optimal variance baseline to offline RL scenarios would be an interesting future research direction~\cite{levine2020offline}.

In moving from contextual bandits to full reinforcement learning settings, where decisions involve trajectories (sequences of actions), it would be interesting to derive an optimal scalar (or state-dependent) baseline that minimises variance while preserving unbiasedness.

Further, the optimal baseline correction presented for off-policy learning involves calculating and storing gradients of all parameters for each example separately. 
With large language models involving billions of parameters, storing and calculating gradients separately for each example could be practically challenging, presenting another potential avenue for future exploration. 
Compression techniques such as low-rank adapters, or selective checkpointing could bring the memory cost, opening up an interesting practical future direction.

\subsection{RL-based diffusion fine-tuning}
The diffusion fine-tuning setup presented in Chapter~\ref{chapter:01-online-evaluation4} is treated as a contextual bandit framework, with the entire reverse diffusion process treated as a single action.
A scalar reward is generated for the final image in the generation. 
Extending the model to a more traditional reinforcement learning setup, where we have a scalar reward at each step of the reverse diffusion process, could be an interesting future direction. 

Further, the reward signal for diffusion fine-tuning comes from an external reward model, which roughly represents the average population score for the corresponding task, and is not personalized. 
An interesting future direction would be to fine-tune foundational models directly with user interaction data to enable personalized generative models.
An example would be a personalized email writing assistant, which learns to generate tailored email text based on user interactions, such as edits, binary feedback, etc. 
In the context of image generation, an example use-case could be personalized music playlist cover generation, based on the user's interactions and preferences.

\subsection{Personalised generative models}
Finally, the reward model used for diffusion fine-tuning in Chapter~\ref{chapter:01-online-evaluation4} reflects an average user preference. 
Personalising generation to individual users raises new challenges: privacy, fairness and extreme data sparsity.
Promising future research directions in this area are:
\begin{itemize}
\item \textit{Privacy-preserving on-device fine-tuning}. Employ federated RL or secure aggregation to learn user-specific adapters without sharing raw images or prompts.
\item \textit{Meta-learning reward models}. Train a global model that can be rapidly adapted to a new user with a handful of interaction signals (edits, binary feedback).
\item \textit{Fairness and calibration}. Ensure that personalised models do not amplify sensitive-attribute biases by incorporating fairness constraints into the safe-bandit objective.
\end{itemize}

\backmatter

\renewcommand{\bibsection}{\chapter{Bibliography}}
\renewcommand{\bibname}{Bibliography}
\markboth{Bibliography}{Bibliography}
\renewcommand{\bibfont}{\footnotesize}
\setlength{\bibsep}{0pt}

\bibliographystyle{abbrvnat}
\bibliography{thesis}

\chapter{Summary}

This dissertation investigates how reinforcement learning methods can be made simultaneously safe, sample-efficient, and robust when trained only from logged user interactions. 
Under the unifying lens of contextual-bandit reinforcement learning, the work spans two application families: web-search ranking/recommendation and text-to-image diffusion models. 
In this thesis, we pair new theory with practical algorithms that (i) guarantee safe deployment against the production system, (ii) extract more signal from limited logs in an off-policy evaluation and learning setup, and (iii) scale to modern large-scale generative models.

In the first part of the thesis, we derive an exposure-based generalisation bound that upper-bounds the true ranking utility. 
Optimising the bound yields a counterfactual risk-minimisation (CRM) objective whose solution is provably no worse than the logging policy even with few clicks, resulting in safe deployment.
Further, we proposed a robust safe deployment method that extends safety to doubly-robust estimators, and retains guarantees under adversarial or mis-specified behaviour models.
The proposed method offers practitioners direct control over the maximum allowed utility drop.

In the second part of the thesis, shifting to single-action bandits, our contribution unifies IPS, self-normalised IPS and doubly robust estimators inside an unifying baseline-correction framework.
We propose a closed-form optimal baseline term that is proved to minimise both evaluation and policy-gradient variance.

In the final chapter we revisit the efficiency-effectiveness trade-off in a generative reinforcement learning setup. 
A systematic PPO-vs-REINFORCE study reveals an “efficiency–effectiveness” trade-off, inspiring leave-one-out PPO (LOOP). 
LOOP generates several diffusion trajectories per prompt and inserts a REINFORCE-style baseline inside PPO’s clipped objective, matching PPO quality while binding textual attributes more faithfully on text-to-image diffusion benchmark.

Finally, the thesis gives the following answers: (i) safety can be guaranteed for ranking --  with or without click-model assumptions; (ii) a single baseline parameter can unify and optimise bandit variance reduction; and (iii) lightly modified reinforcement learning algorithms can fine-tune large diffusion models efficiently. 
Together these advances demonstrate a path toward reliable, safe, and data-efficient reinforcement learning pipelines for real-world information access and generative AI and open avenues for extending safe-bandit theory to multitask and multi-objective foundation models.

\chapter{Samenvatting}

Dit proefschrift onderzoekt hoe \emph{reinforcement learning}-methoden tegelijkertijd veilig, sample-efficiënt en robuust kunnen worden gemaakt wanneer ze uitsluitend worden getraind op basis van geregistreerde gebruikersinteracties.

Onder de overkoepelende lens van \emph{contextual-bandit} reinforcement learning omvat het werk twee families van toepassingen: webzoekrangschikking/-aanbeveling en tekst-naar-afbeelding diffusiemodellen.

In dit proefschrift combineren we nieuwe theorie met praktische algoritmen die (i) veilige implementatie tegen het productiesysteem garanderen, (ii) meer signaal extraheren uit beperkte logs in een off-policy evaluatie- en leeropstelling, en (iii) opschalen naar moderne grootschalige generatieve modellen.

In het eerste deel van het proefschrift leiden we een \emph{exposure}gebaseerde generalisatiegrens af die de werkelijke bruikbaarheid van de rangschikking begrenst.
Het optimaliseren van de grens levert een contrafactische risicominimalisatiedoelstelling op waarvan de oplossing aantoonbaar niet slechter is dan het logbeleid, zelfs met weinig klikken, wat resulteert in een veilige implementatie. Verder hebben we een robuuste, veilige implementatiemethode voorgesteld die de veiligheid uitbreidt naar dubbelrobuuste schatters en garanties behoudt onder vijandige of verkeerd gespecificeerde gedragsmodellen.
De voorgestelde methode biedt professionals directe controle over de maximaal toegestane utiliteitsdaling.

In het tweede deel van het proefschrift, overgaand op \emph{single-action} bandits, verenigt onze bijdrage IPS, zelfgenormaliseerde IPS en dubbelrobuuste schatters binnen een uniform basislijncorrectiekader.
We stellen een gesloten, optimale basislijnterm voor waarvan bewezen is dat deze zowel de variantie in evaluatie als in beleidsgradiënt minimaliseert.

In het laatste hoofdstuk bekijken we de afweging tussen efficiëntie en effectiviteit in een generatieve reinforcement learning-opzet opnieuw. Een systematische PPO-versus-REINFORCE-studie onthult een afweging tussen efficiëntie en effectiviteit, wat inspireert tot leave-one-out PPO (LOOP). LOOP genereert meerdere diffusietrajecten per prompt en voegt een REINFORCE-achtige basislijn in binnen de afgeknipte doelstelling van PPO, die overeenkomt met de PPO-kwaliteit en tegelijkertijd tekstuele kenmerken getrouwer koppelt aan de tekst-naar-afbeelding diffusiebenchmark.

Ten slotte geeft het proefschrift de volgende antwoorden: (i) veiligheid kan worden gegarandeerd voor rangschikking – met of zonder aannames voor het klikmodel; (ii) één basislijnparameter kan de reductie van banditvariantie verenigen en optimaliseren; (iii) licht aangepaste \emph{reinforcement learning} algoritmen kunnen grote diffusiemodellen efficiënt verfijnen.

Samen tonen deze ontwikkelingen een pad naar betrouwbare, veilige en data-efficiënte \emph{reinforcement learning} pipelines voor real-world informatietoegang en generatieve AI, en openen ze mogelijkheden om de safe-bandittheorie uit te breiden naar multitask- en multi-objectieve basismodellen.

\end{document}